\DeclarePairedDelimiterX{\norm}[1]{\lVert}{\rVert}{#1}
\DeclareMathOperator*{\argmax}{argmax}
\newcommand{\para}[1]{\noindent{\textbf{#1}}}
\newif\ifcomments
\newcommand{\new}[1]{#1}
    \newcommand{\amit}[1]{\textcolor{red}{#1 --amit}}
    \newcommand{\chenhao}[1]{\textcolor{magenta}{#1 --chenhao}}
    \newcommand{\abhinav}[1]{\textcolor{blue}{#1 --abhinav}}
    \newcommand{\amit}[1]{}
    \newcommand{\chenhao}[1]{}
    \newcommand{\abhinav}[1]{}
\title{
Probing Classifiers are Unreliable for Concept Removal and Detection
}
\author{%
  Abhinav Kumar \\
  Microsoft Research\\
  \texttt{t-abkumar@microsoft.com} \\
   \And
   Chenhao Tan \\
   University of Chicago \\
   \texttt{chenhao@uchicago.edu} \\
   \And
   Amit Sharma \\
   Microsoft Research \\
   \texttt{amshar@microsoft.com} \\
}
\begin{document}
\maketitle
\newcommand{\refalg}[1]{Alg.~\ref{#1}}
\newcommand{\refeqn}[1]{Eq.~\ref{#1}}
\newcommand{\reffig}[1]{Fig.~\ref{#1}}
\newcommand{\reftbl}[1]{Table~\ref{#1}}
\newcommand{\refsec}[1]{\S\ref{#1}}
\newcommand{\refsubsec}[1]{\S\ref{#1}}
\newcommand{\method}[1]{\mbox{\textsc{#1}}}
\newcommand{\reflemma}[1]{Lemma~\ref{#1}}
\newcommand{\reftheorem}[1]{Theorem~\ref{#1}}
\newcommand{\refdef}[1]{Def~\ref{#1}}
\newcommand{\refassm}[1]{Assm~\ref{#1}}
\newcommand{\refalgo}[1]{Alg.~\ref{#1}}
\newcommand{\refexp}[1]{Ex.~\ref{#1}}
\newcommand{\refappendix}[1]{\S\ref{#1}}
\newcommand{\refline}[1]{Line~\ref{#1}}
\newcommand{\refprop}[1]{Prop~\ref{#1}}
\newcommand{\refpara}[1]{Para~\ref{#1}}


\newcommand{\reminder}[1]{}

\newcommand{\system}{C++ }
\newcommand{\systemfull}{Checklist++ }
\newcommand{\INLP}{INLP\xspace}
\newcommand{\inlp}{Iterative-Null-Space-Removal }
\newcommand{\stageOne}{\emph{Stage 1} }
\newcommand{\stageTwo}{\emph{Stage 2} }
\newcommand{\algoOne}{DST }
\newcommand{\algoTwo}{DP }

\newcommand{\fullprop}{spurious feature\xspace}
\newcommand{\prop}{concept\xspace}
\newcommand{\Prop}{Concept\xspace}
\newcommand{\props}{concepts\xspace}
\newcommand{\feature}{SF\xspace}
\newcommand{\features}{SFs\xspace}
\newcommand{\sensitive}{sensitive\xspace}
\newcommand{\nsr}{NSR\xspace}
\newcommand{\ar}{AR\xspace}
\newcommand{\mnli}{MultiNLI\xspace}
\newcommand{\pan}{Twitter-PAN16\xspace}
\newcommand{\aae}{Twitter-AAE\xspace}
\newcommand{\syn}{Synthetic-Text\xspace}

\newtheoremstyle{tightstyle}
  {2pt} 
  {2pt} 
  {\itshape} 
  {} 
  {\bfseries} 
  {.} 
  {.5em} 
  {} 

\theoremstyle{tightstyle} \newtheorem{theorem}{Theorem}[section]
\newtheorem{claim}[theorem]{Claim}
\newtheorem{lemma}[theorem]{Lemma}
\newtheorem{corollary}{Corollary}[theorem]
\theoremstyle{tightstyle} \newtheorem{definition}{Definition}[section]
\theoremstyle{tightstyle} \newtheorem{assumption}{Assumption}[section]
\newtheorem{example}{Example}[section]
\newtheorem{proposition}{Proposition}[section]
\newtheorem*{remark}{Remark}

\begin{abstract}
      Neural network models trained on text data have been found to encode undesirable linguistic or sensitive concepts in their representation. Removing such concepts is non-trivial because of a complex relationship between the concept, text input, and the learnt representation. Recent \new{work has} proposed post-hoc and adversarial methods to remove such unwanted concepts from a model's representation. 
      Through an extensive theoretical and empirical analysis, we show that these methods can be counter-productive: they are  unable to remove the concepts entirely, and in 
      \new{some cases may fail severely by destroying all task-relevant features.}
      The reason is the methods' reliance on a \textit{probing} classifier as a proxy for the concept. 
      Even under the most favorable conditions for learning a probing classifier when a concept's relevant features in representation space
      alone can provide 100\% accuracy, we prove that a probing classifier is likely to use non-concept features and thus post-hoc or adversarial methods will fail to  remove the concept correctly. 
      These theoretical implications are confirmed by 
      experiments on models trained on synthetic, Multi-NLI, and Twitter datasets. 
      For sensitive applications of concept removal such as fairness, we recommend caution against using these methods and propose a spuriousness metric to gauge the quality of the final classifier.
      
\end{abstract}

\section{Introduction}
\label{sec:introduction}

Neural models in text classification have been shown to learn spuriously correlated features~\cite{GururanganMNLINegationArtifact,McCoyMNLIArtifact} or embed sensitive attributes like gender or race~\cite{conneau-etal-2018-cram,DebiaWordEmbProjectionBolukbasi,aaeDataset} in their representation layer. Classifiers that use such sensitive or spurious concepts (henceforth \textit{\props}) raise concerns of model unfairness and 
out-of-distribution generalization failure~\cite{GroupDRO,IRM:2020,AdvDomAdapGanin}. Removing the influence of these \props is non-trivial because the classifiers are based on hard-to-interpret deep neural networks. Moreover, since many \props cannot be modified at the input tokens level, removal methods that work at the representation layer have been proposed: \textbf{1) }post-hoc removal~\cite{DebiaWordEmbProjectionBolukbasi,PostHocRemovalXu,PostHocDuvenaud} on a pre-trained model (e.g., null space projection~\cite{NullItOut:2020}), and \textbf{2)} adversarial removal~\cite{AdvDomAdapGanin,AdvRemNeubig,AdvRemYoav} by jointly training the main task classifier with an (adversarial) classifier for the \prop. 

In this paper, we theoretically show that both these classes of methods can be counter-productive in real-world settings  where the main task label is often correlated with the \prop. Examples include natural language inference (\textit{main} task) where the presence of negation 
(spurious \textit{concept}) may be correlated with the ``contradicts'' label; or tweet sentiment classification (\textit{main} task) where the author's gender (sensitive \textit{concept}) maybe correlated with the sentiment label.    Our key result is based on the observation that both these methods internally use an auxiliary (or probing) classifier~\cite{ProbeYoavICLR17,shi-etal-2016-stringProbing} that aims to predict the spurious concept based on the  representation learnt by the main classifier. 

We show that an auxiliary classifier cannot be a reliable signal on whether the representation includes features that are causally derived from the \prop. 
As previous work has argued~\cite{ProbingSurvey,PartialPredictiveInvDimitris,OverparamExcerbatesSpSagawa,IRM:2020}, if the representation features causally derived from the \prop are not predictive enough, the probing classifier for the \prop can be expected to rely on correlated features to obtain a higher accuracy. 
However, we show a stronger result: this behavior holds even when there is no potential accuracy gain and the \prop's features are easily learnable. 
Specifically, even when the \prop's causally-related features alone can provide 100\% accuracy and are linearly separable with respect to a binary probing task label,
the probing classifier may still learn non-zero weights for the correlated main-task relevant features.  Based on this result, under some simplifying assumptions,  we prove that both post-hoc and adversarial training methods can fail to remove the undesired \prop, remove useful task-relevant features in addition to the undesired \prop, or do both. 
As a \new{severe failure mode}, we show that post-hoc removal methods can  lead to a \textit{random-guess} main-task classifier by removing all task-relevant information from the representation.

Empirical results on four datasets---natural language inference, sentiment analysis, tweet-mention detection, and a synthetic task---confirm our claims. 
Across all datasets, as the correlation between  the main task and the \prop increases, post-hoc removal using  null space projection removes a higher amount of the main-task features, eventually leading to a random-guess classifier. In particular, for a pre-trained classifier that does not use the \prop at all, the method modifies the representation to yield a \new{classifier that either uses the \prop or has} lower main-task accuracy, irrespective of the correlation between the main task and the \prop. 
Similarly, for the adversarial removal method, we find that it does not remove all \prop-related features. For most datasets, the \prop features left within an classifier's representation are comparable to that for a standard main-task classifier.

Our theoretical analysis  complements past empirical critiques of adversarial methods for concept removal~\cite{AdvRemYoav}. More generally, we extend the literature on probing classifiers and their unreliability~\cite{ProbingSurvey}.  
Adding to known limitations of \new{explainability} methods~\cite{hewitt-liang-2019-control,ravichander-etal-2021-probing} based on the accuracy of a probing classifier, our results show that recent causally-inspired methods like amnesic probing~\cite{AmnesicProbing} are also flawed because they  depend on access to a good quality \prop classifier. Our contributions include:

\begin{itemize}[leftmargin=*,topsep=0pt]{}
    \item Theoretical analysis of null space and adversarial removal methods showing that they fail to remove an undesirable \prop from a model's representation, even under favorable  conditions.
    \item Empirical results on four datasets showing that the methods are unable to remove a spurious \prop's  features fully and end up unnecessarily removing task-relevant features. 
    \item A practical spuriousness score for evaluating the output of \new{\prop} removal methods.
\end{itemize}

\section{Concept removal: Background and problem statement}
\label{section:related_works}

For a classification task, let $(\bm{x}^{i},y^{i}_{m})_{i=1}^{n}$ be set of examples in the dataset $\mathcal{D}_{m}$, where $\bm{x}^{i}\in \bm{X}$ are the input features  and $y^{i}_{m}\in Y_{m}$ the label. We call this the \textit{main task} and label $y^{i}_{m}$ the main task label. The main task classifier can be written as $c_m(h(\bm{x}))$ where $h:\bm{X}\rightarrow \bm{Z}$ is an encoder mapping the input $\bm{\bm{x}}$ to a latent representation $\bm{z}:=h(\bm{x})$ and  $c_{m}:\bm{Z}\rightarrow Y_{m}$ is the classifier on top of  the representation~$\bm{Z}$. Additionally, we are given labels for a spurious or sensitive \prop,   $y_{p}\in Y_{p}$, i.e., $(\bm{x}^{i},y^{i}_{p})_{i=1}^{n'}$ in a dataset $\mathcal{D}_p$, and our goal is to ensure that the representation $h(\bm{x})$ learnt by the main classifier does not include features causally derived from the \prop. Below we define what it means to be ``causally derived'': the representation should not change under an intervention on \prop.

\begin{definition}
\label{def:causally_derived_feature}
(\textbf{Concept-causal feature}) 
A feature $Z_j \in \bm{Z}$ (jth dimension of $h(\bm{x})$) at the representation layer is defined to be causally derived from a \prop (\prop-causal for short) if upon changing the value of the \prop, the corresponding change in the input's value $\bm{x}$ will lead to a change in the feature's ($Z_j$) value. 
\end{definition}

For simplicity, we assume that the non-concept-causal features are the \textit{main task} features. Often, the main task and the concept label are correlated; hence  the learnt representation $h(\bm{x})$ for the main task may include concept-causal features too.  A \prop removal algorithm is said to be successful if it produces a \textit{clean} representation $h'(\bm{x})$ to be used by the main-classifier that has no \prop-causal features and it does not corrupt or removes the main-task features. If the representation does not contain such features, the main classifier cannot use them \cite{AdvRemYoav}. In practice, it is okay if the \prop-causal features are not completely removed, but our key criterion is that the removal process should not remove the correlated main task features. 

\textbf{Existing \prop removal methods. }
When the text input can be changed based on changing the value of concept label, 
methods like data augmentation~\cite{kaushik2021learningthediff,ryanCAD2019,cadSen2022} have been proposed for concept removal. However, for most sensitive or spurious concepts, it is not possible to know the correct change to apply at the input level corresponding to a change in the concept's value. 

Instead, methods based on the representation layer have been proposed. To determine whether 
features in a representation are causally derived by the \prop, these methods train an auxiliary, probing classifier $c_{p}:\bm{Z}\rightarrow Y_{p}$ where $y_{p}\in Y_{p}$ is the label of the \prop we want to remove from the latent space $\bm{z}\in \bm{Z}$. The accuracy of the classifier indicates the predictive information about the \prop embedded in the representation. This probing classifier is then used to remove the sensitive \prop from the latent representation which will ensure that the main-task classifier cannot use them. Two kinds of feature removal methods have been proposed:
1) \textit{post-hoc} methods such as null space removal \cite{NullItOut:2020,AmnesicProbing,ruslanINLP,INLPinterpret}, with removal after the main-task classifier is trained; 2) \textit{adversarial} methods that jointly train the main task with the probing classifier as the adversary \cite{AdvDomAdapGanin,AdvRemNeubig,LinearConceptErasure,KernelConceptErasure}. 

For adversarial removal, recent empirical results cast doubt on the method's capability to fully remove the sensitive \prop from the model's representation~\cite{AdvRemYoav}. We extend those results with a rigorous theoretical analysis and provide experiments for both adversarial and post-hoc removal methods. 

\vspace{-0.5em}
\section{Attribute removal using probing classifier can be counter-productive}
\label{sec:feature_removal_problem}


As mentioned above, both removal methods internally use a probing classifier as a proxy for  the \prop's features. 
In \refsec{subsec:clean_classifier_problem}, we start off by showing that for any classification task be it probing or main-task classification, it is difficult to learn a \emph{clean} classifier which doesn't use any spuriously correlated feature (\reflemma{lemma:sufficient_condition} and \reflemma{lemma:sufficient_condition_main_task}). Hence the key assumption driving the use of predictive classifiers within both removal methods is incorrect.
Next in \refsec{subsec:null_space_problem} and \ref{subsec:adv_rem_problem}, we will show how these individual components' failure leads to the failure of both removal methods. 
Finally, in \refsec{subsec:spurriousness_score}, we propose a practical \emph{spuriousness score} to assess the output classifier from any of the removal methods. Throughout this section, we assume that both the main task label $y_{m}$ and probing task label $y_{p}$ are binary ($\in \{-1,1\}$) and  there is a basic, fixed encoder $h$ converting the text input to features in the representation space (e.g., a pre-trained model like BERT \cite{devlin-etal-2019-bert}). 

\subsection{Fundamental limits to learning a \emph{clean} classifier: Probing and Main Classifier}
\label{subsec:clean_classifier_problem}
Given $\bm{z}=h(\bm{x})$ and the \prop label $y_p$, the goal of the probing task is to learn a classifier $c_p(\bm{z})$ such that it only uses the \prop-causal features and the accuracy for $y_p$ is maximized. 
We assume that the main task and concept labels are correlated, so it can be beneficial to use main-task features to maximize accuracy for $y_p$. 
As argued in the probing literature~\cite{hewitt-liang-2019-control,ProbingSurvey}, if there are features in $\bm{z}$ outside \prop-causal that help improve the accuracy of the classifier, a classifier trained on standard losses such as cross-entropy or max-margin is expected to use those features too. Below we show a stronger result: even when there is no accuracy benefit of using non \prop-causal features, we find that a probing classifier may still use those features.

\para{Creating a favorable setup for the probing classifier.} Specifically, we create a setting that is the most favorable for a probing classifier to use only \prop-causal features: \textbf{1)} no accuracy gain on using features outside of \prop-causal because
\new{\prop-causal features are linearly separable for concept labels ,}
and \textbf{2)} disentangled representation so that no further representation learning is required. Yet we find that a trained probing classifier would use non-\prop-causal features. 

\begin{assumption}[Disentangled Latent Representation]
\label{assm:disentagled_latent}
The latent representation $\bm{z}$ is disentangled and is of form $[\bm{z}_{m},\bm{z}_{p}]$, where  $\bm{z}_{p}\in \mathbb{R}^{d_{p}}$  are the \prop-causal features
and $\bm{z}_{m}\in \mathbb{R}^{d_{m}}$ are the main task features. Here $d_{m}$ and $d_{p}$ are the dimensions of $\bm{z}_{m}$ and $\bm{z}_{p}$ respectively.  
\end{assumption}

\begin{assumption}[\Prop-causal Feature Linear Separability]
\label{assm:fully_pred_inv}
The \prop-causal features ($\bm{z}_{p}$) of the latent representation ($\bm{z}$) are linearly separable/fully predictive for the \prop  labels $y_p$, i.e., $y_{p}^{i}\cdot (\hat{\bm{\epsilon}}_{p}\cdot \bm{z}_{p}^{i} + b_{p}) > 0 $, $\forall(\bm{x}^{i},y_{p}^{i})$ in training dataset $\mathcal{D}_{p}$ 
for some $\hat{\bm{\epsilon}}_{p}\in \mathbb{R}^{d_{p}}$ and $b_{p}\in \mathbb{R}$.  
\end{assumption}

\para{The effect of spurious correlation between concept and label. }
Now we are ready to state the key lemma which will show that if there is a \textit{spurious correlation} between the main task and concept labels such that the main-task features $\bm{z}_m$ are predictive of the concept label for only a \emph{few special} points, 
then the probing  classifier $c_p(\bm{z})$ will use those  features. We operationalize spurious correlation as, 
\begin{assumption}[Spurious Correlation]
\label{assm:spurious_linear_separably}
For a subset of training points $\mathcal{S} \subset \mathcal{D}_p$ in the training dataset for a probing classifier, $\bm{z}_m$ is linearly-separable 
with respect to \prop label $y_p$, i.e., $y_p^{i}\cdot(\hat{\bm{\epsilon}}_{m}\cdot \bm{z}^i_{m} + b_{m})> 0 \ \forall i \in \mathcal{S}$  , where  $\hat{\bm{\epsilon}}_{m}\in \mathbb{R}^{d_{m}}$ and $b_{m}\in \mathbb{R}$. 
\end{assumption}
For simplicity, we assume that the  encoder $h(\cdot)$ which maps the input $\bm{X}$ to latent representation $\bm{Z}$ is frozen or non-trainable. Following \cite{FailureModeOODGen}, we assume 
max-margin as training loss; under some mild conditions on separable data, a classifier trained using logistic/exponential loss converges to max-margin classifier given infinite training time~\cite{GradDescentMaxMargin,MaxMarginJustificationJi}.



\begin{restatable}{lemma}{probthm}
\label{lemma:sufficient_condition}
Let the latent representation be frozen and disentangled such that $\bm{z}=[\bm{z}_{m},\bm{z}_{p}]$ (\refassm{assm:disentagled_latent}), and \prop-causal features $\bm{z}_{p}$ are fully predictive for the concept label $y_p$  (\refassm{assm:fully_pred_inv}). Let $c_{p}^{*}(\bm{z})=\bm{w}_{p}\cdot \bm{z}_{p}$ where $\bm{w}_{p}\in \mathbb{R}^{d_{p}}$ be the desired \emph{clean} linear classifier trained using the max-margin objective (\refappendix{subsec:app_max_margin_setup}) that only uses $\bm{z}_p$ for its prediction. Let $\bm{z}_{m}$ be the main task features, spuriously correlated s.t.  $\bm{z}_{m}$ are linearly-separable w.r.t. probing task label $y_{p}$ for the margin points of $c_{p}^{*}(\bm{z})$ (\refassm{assm:spurious_linear_separably}).  Then, assuming a zero-centered latent space \new{($b_{p}=0$)}, a concept-probing classifier $c_p$ trained using the max-margin objective will use spurious features, i.e., $c_p(\bm{z})=\bm{w}_{p}\cdot \bm{z}_{p} + \bm{w}_{m}\cdot \bm{z}_{m}$ where $\bm{w}_{m}\neq \bm{0}$ and $\bm{w}_{m}\in \mathbb{R}^{d_{m}}$.    
\end{restatable}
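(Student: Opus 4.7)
The plan is to set up the max-margin program on the full representation and show by a strict-improvement argument that the restricted optimum $c_p^*$ (with $\bm{w}_m=\bm{0}$) cannot be the unrestricted optimum. With $b_p=0$ the full-representation program reads
\begin{equation*}
\min_{\bm{w}_p,\bm{w}_m}\; \tfrac12\bigl(\|\bm{w}_p\|^2+\|\bm{w}_m\|^2\bigr) \quad \text{s.t.} \quad y_p^i\bigl(\bm{w}_p\cdot \bm{z}_p^i+\bm{w}_m\cdot \bm{z}_m^i\bigr)\ge 1 \;\forall i.
\end{equation*}
By Assumption~\ref{assm:fully_pred_inv} the feasible set is non-empty and, restricting to $\bm{w}_m=\bm{0}$, its optimum coincides with the clean classifier $\bm{w}_p^*$ from the lemma, with margin-attaining set $\mathcal{M}\subseteq\mathcal{D}_p$ where $y_p^i\bm{w}_p^*\cdot \bm{z}_p^i=1$. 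The strategy is to exhibit a feasible perturbation that strictly lowers the squared-norm objective, forcing $(\bm{w}_p^*,\bm{0})$ to be suboptimal.

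\textbf{Constructing the perturbation.} First I would invoke Assumption~\ref{assm:spurious_linear_separably} on a subset $\mathcal{S}\supseteq\mathcal{M}$ (the lemma places the margin points inside $\mathcal{S}$; in the homogeneous setting consistent with $b_p=0$ one may either take $b_m=0$ or augment $\bm{z}_m$ with a constant coordinate absorbing $b_m$) to extract a direction $\hat{\bm{\epsilon}}_m$ and a strictly positive margin $\gamma_m := \min_{i\in\mathcal{M}} y_p^i\hat{\bm{\epsilon}}_m\cdot \bm{z}_m^i > 0$. Then for small $\delta>0$ set $\bm{w}_p(\delta)=(1-\delta)\bm{w}_p^*$ and $\bm{w}_m(\delta)=(\delta/\gamma_m)\hat{\bm{\epsilon}}_m$. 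On $\mathcal{M}$ the constraint attains $(1-\delta)+(\delta/\gamma_m)\gamma_m\ge 1$ by construction; on $\mathcal{D}_p\setminus\mathcal{M}$ a uniform original slack $\gamma'>0$ (finite training set) absorbs the bounded perturbation $(\delta/\gamma_m)\,y_p^i\hat{\bm{\epsilon}}_m\cdot \bm{z}_m^i$ for all sufficiently small $\delta$.

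\textbf{Strict improvement and conclusion.} The objective along the perturbation is $\Phi(\delta)=\tfrac12(1-\delta)^2\|\bm{w}_p^*\|^2+\tfrac12(\delta/\gamma_m)^2\|\hat{\bm{\epsilon}}_m\|^2$ with $\Phi'(0)=-\|\bm{w}_p^*\|^2<0$. Hence for some $\delta>0$ the perturbed pair is feasible and has strictly smaller norm than $(\bm{w}_p^*,\bm{0})$. By strong convexity the max-margin classifier on the full representation is unique and therefore must satisfy $\bm{w}_m\neq\bm{0}$, giving the claimed form $c_p(\bm{z})=\bm{w}_p\cdot \bm{z}_p+\bm{w}_m\cdot \bm{z}_m$ with $\bm{w}_m\neq \bm{0}$.

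\textbf{Main obstacle.} The delicate piece is feasibility on $\mathcal{D}_p\setminus\mathcal{M}$: $\hat{\bm{\epsilon}}_m$ is only promised to separate points in $\mathcal{S}$, so the quantity $y_p^i\hat{\bm{\epsilon}}_m\cdot \bm{z}_m^i$ can be negative elsewhere. One must use finiteness of $\mathcal{D}_p$ to obtain both a uniform strict slack $\gamma'>0$ and a uniform bound on $|\hat{\bm{\epsilon}}_m\cdot \bm{z}_m^i|$ that together make the admissible $\delta$-interval non-empty. A secondary issue is reconciling the possibly affine separator in Assumption~\ref{assm:spurious_linear_separably} with the homogeneous ($b_p=0$) classifier in the lemma's conclusion, which is handled cleanly by the constant-coordinate augmentation noted above.
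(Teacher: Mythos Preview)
Your proposal is correct and follows essentially the same strategy as the paper: perturb the clean classifier by adding a component along the spurious direction $\hat{\bm{\epsilon}}_m$, use Assumption~\ref{assm:spurious_linear_separably} to preserve feasibility on the margin points of $c_p^*$, use the strict slack on non-margin points (finite dataset) to preserve feasibility there for small enough perturbation, and conclude that the pure-invariant optimum is not the global max-margin solution. The only cosmetic difference is that the paper keeps the norm fixed via a spherical parameterization $(\alpha\,\bm{w}_p^*,\ \|\bm{w}_p^*\|\sqrt{1-\alpha^2}\,\hat{\bm{\epsilon}}_m)$ and shows the minimum margin strictly exceeds $1$, whereas you keep the constraints $\ge 1$ and show the squared norm strictly decreases via $\Phi'(0)<0$; these are the two equivalent faces of the same MM formulation.
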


\textit{Proof Sketch.}
Starting from $c_{p}^{*}(\bm{z})$, we show that there always exists a perturbed classifier which uses the main task features and has a bigger margin than $c_{p}^{*}(\bm{z})$. 
Within some range of perturbation, for all margin points of $c_{p}^{*}$, using the main task features increases the margin by \refassm{assm:spurious_linear_separably}, and does not reduce the margin for non-margin points s.t. \new{it becomes the} same as the margin of $c_{p}^{*}$.
Proof in \refappendix{subsec:app_proof_sufficient_condition}.

Our result shows that not just accuracy, even geometric skews in the dataset can yield an incorrect probing classifier. In \refappendix{subsec:app_proof_necessary_condition} we prove that the assumptions for  \reflemma{lemma:sufficient_condition} are both sufficient and necessary for a classifier to use non-concept-features $\bm{z}_{m}$ when  $\bm{z}_{p}$ is 1-dimensional. \reflemma{lemma:sufficient_condition}  generalizes a result from ~\cite{FailureModeOODGen} by using fewer assumptions (we do not restrict $\bm{z}_m$ to be binary, do not assume that $\bm{z}_m$ and $\bm{z}_p$ are conditionally independent given $y$, and do not assume monotonicity of classifier norm with dataset size). We present a similar result for the main task classifier: under spurious correlation of concept and main task labels, the main task classifier would use concept-causal features even when 100\% accuracy can be achieved using only main task features  (\reflemma{lemma:sufficient_condition_main_task}, \refappendix{subsec:app_generalized_probing_result}).

\subsection{Failure mode of post-hoc removal methods: Null-space removal (\INLP)}
\label{subsec:null_space_problem}

\begin{figure}[t]
\centering
    \begin{subfigure}[h]{.32\textwidth}
        \captionsetup{justification=centering}
        \centering
            \includegraphics[width=\linewidth,height=0.8\linewidth]{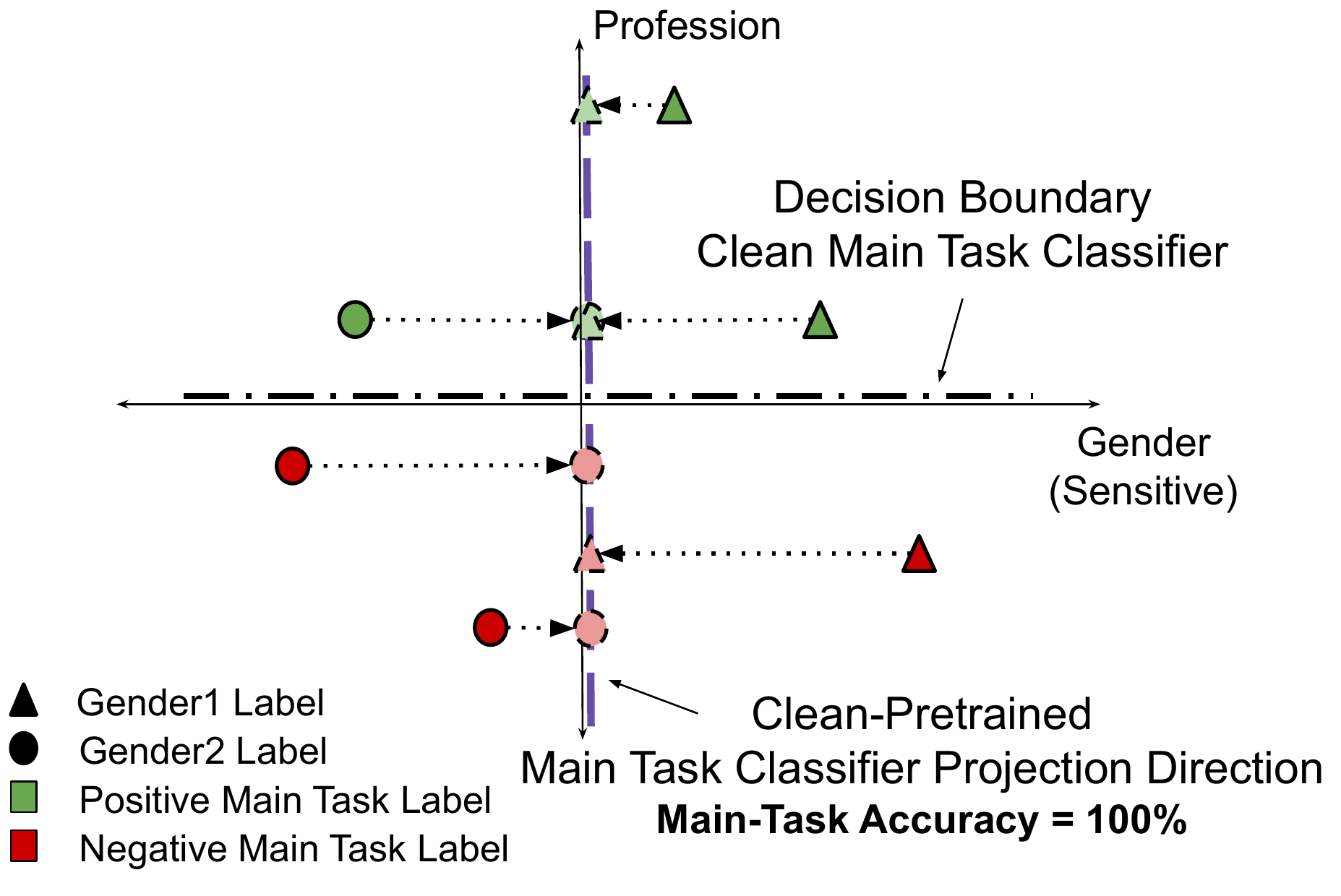}
    		\caption[]%
            {{\small Clean pretrained main task (\textit{Profession}) classifier.}}    
    		\label{fig:inlp_pretrained_main}
    \end{subfigure}
\hfill 
  \begin{subfigure}[h]{.32\textwidth}
  \captionsetup{justification=centering}
    \centering
    \includegraphics[width=\linewidth,height=0.8\linewidth]{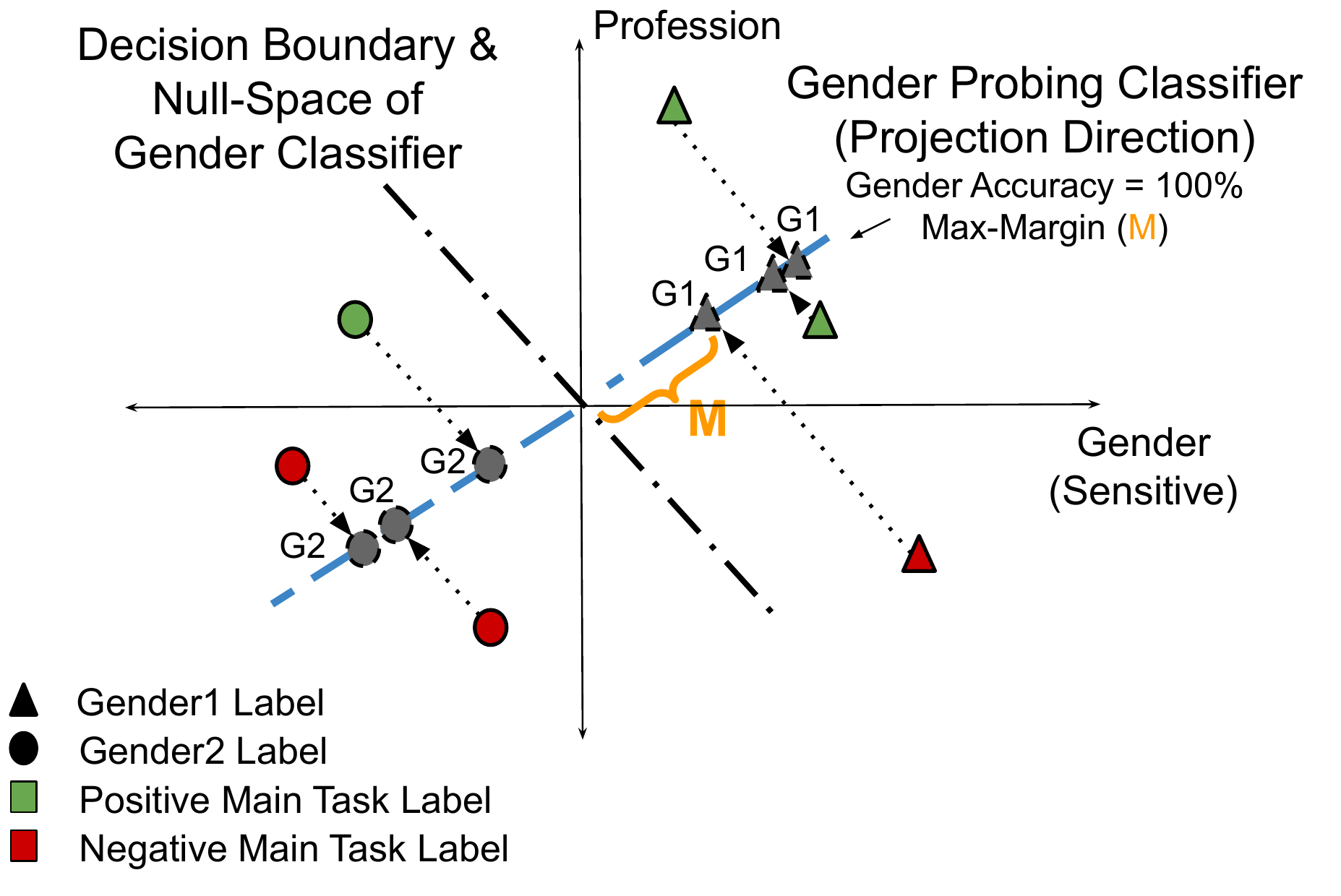}
		\caption[]%
        {{\small Probing (\textit{Gender}) classifier with a slanted projection direction.}}    
		\label{fig:inlp_prob_classifier} 
  \end{subfigure}
\hfill
  \begin{subfigure}[h]{.32\textwidth}
    \captionsetup{justification=centering}
    \centering
    \includegraphics[width=\linewidth,height=0.8\linewidth]{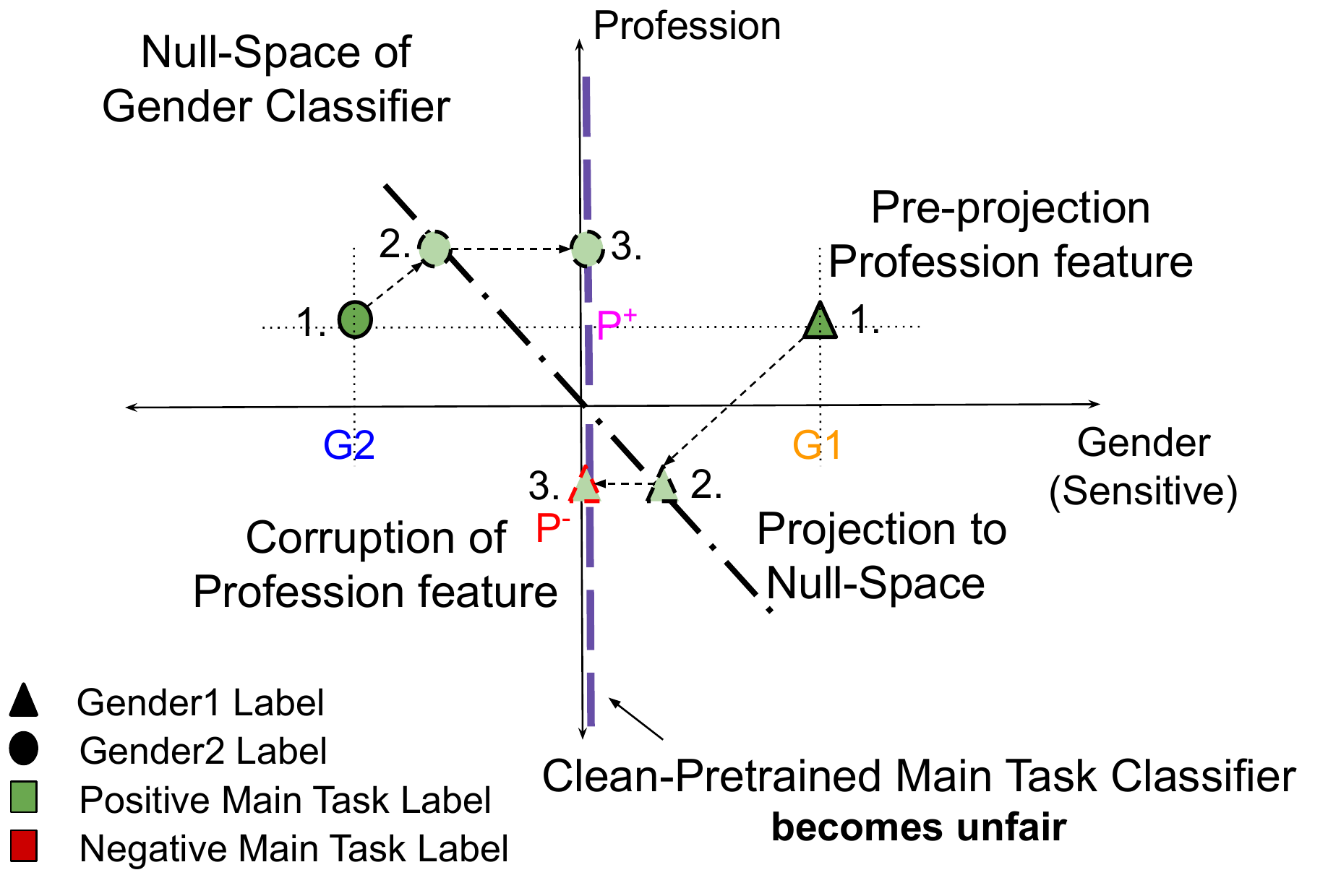}
		\caption[]%
        {{\small Main classifier becomes unfair after null space projection.}
        }    
		\label{fig:inlp_unfair_classifier} 
  \end{subfigure}

\caption{\textbf{Failure mode of null space removal. } Consider a main task (\textit{Profession}) classifier where \textit{Gender} is the spurious concept to be removed.  
Assume a 2-dimensional latent representation $\bm{z}$, where one dimension corresponds to profession and the other to the gender feature.  \textbf{(a)} A ``clean'' (fair) main task classifier that only uses the Profession feature, shown by its vertical projection direction, that is input to \INLP for concept removal. Its decision boundary is orthogonal to the projection direction. \textbf{(b)} From \reflemma{lemma:sufficient_condition}, \INLP trains a probing classifier for gender with a slanted  projection direction (ideal gender \new{projection direction} would \new{be} horizontal). \textbf{(c)} For two points having the same profession but different gender features (marked \textit{`1'}), projection to the null-space (\textit{`2'}) has their profession feature reversed (\textit{`3'}), thus  making the fair pretrained classifier become unfair 
(also see \refsec{subsec:null_space_problem}). 
} 
\label{fig:inlp_problem_illustration}
\end{figure}

The null space method~\cite{NullItOut:2020,AmnesicProbing}, henceforth referred as \textit{\INLP}, removes a concept from latent space  by projecting the latent space to a subspace that is not discriminative of that concept. First, it  estimates the subspace in the latent space discriminative of the \prop we want to remove  by training a probing classifier $c_{p}:\bm{Z}\rightarrow Y_{p}$, where $Y_{p}$ is the \prop label. 
Then the projection is done onto the null-space of this probing classifier which is expected to be non-discriminative of the \prop. For instance, \cite{NullItOut:2020} use a linear probing classifier $c_{p}(\bm{z})$ to  ensure that the any linear classifier cannot recover the removed \prop from modified latent representation $\bm{z}'$ and hence the main task classifier ($c_{m}(\bm{z}')$)  becomes invariant to removed \prop. Also, they recommend running this removal step for multiple iterations to ensure the unwanted \prop is removed completely (details are in~\refappendix{subsec:app_inlp_setup}).  
Below we state the failure of the null-space method using $\bm{z}^{i(k)}$ to denote the representation \new{$\bm{z}^i$} after $k$ steps of \INLP.

\begin{restatable}{theorem}{inlpthm}
\label{theorem:null_space_failure}
Let $\bm{c}_{m}(\bm{z})$ be a pre-trained main-task classifier where the latent representation $\bm{z}=[\bm{z}_m, \bm{z}_p]$ satisfies \refassm{assm:disentagled_latent} and \ref{assm:fully_pred_inv}. Let $\bm{c}_{p}(\bm{z})$ be the probing classifier used by \INLP to remove the unwanted features $\bm{z}_{p}$ from the latent representation. 
Under \refassm{assm:spurious_linear_separably}, \reflemma{lemma:sufficient_condition} is satisfied for the probing classifier $c_{p}(\bm{z})$ such that $c_{p}(\bm{z})=\bm{w}_{p}\cdot\bm{z}_{p} + \bm{w}_{m}\cdot \bm{z}_{m}$ and $\bm{w}_{m}\neq \bm{0}$. Then,
\begin{enumerate}[leftmargin=*,topsep=0pt]{}
    \item \textbf{Damage in the first step of INLP.} The first step of \new{linear}-\INLP will corrupt the main-task features and this corruption is non-invertible with subsequent projection steps of \INLP.
    \begin{enumerate}
        \item \textbf{Mixing:} If $\bm{w}_{p}\neq \bm{0}$, the main task $\bm{z}_m$ and concept-causal features $\bm{z}_p$  will get mixed such that $\bm{z}^{i(1)}=[g(\bm{z}^{i}_{m},\bm{z}^{i}_{p}), f(\bm{z}^{i}_{p},\bm{z}^{i}_{m})] \neq [g(\bm{z}^i_{m}),f(\bm{z}^i_{p})]$ for some function ``$f$'' and ``$g$''.  Thus, the latent representation is no longer disentangled and removal of \prop-causal features will also lead to removal of main task features. 
        
        \item \textbf{Removal:} If $\bm{w}_{p}=\bm{0}$, then the first projection step of INLP will do opposite of what is intended, i.e., damage the main task features $\bm{z}_{m}$ (in case $\bm{z}_{m}\in \mathbb{R}$, it  will completely remove $\bm{z}_{m}$) but have no effect on the \prop-causal features $\bm{z}_{p}$. 
    \end{enumerate}
    
    \item \textbf{Removal in the long term:}  The L2-norm of the latent representation $\bm{z}$ decreases with every projection step as long as the parameters of probing classifier ($\bm{w}^{k}$) at a step $``k"$  does not lie completely in the space spanned by parameters of previous probing classifiers, i.e., span($\bm{w}^{1},\ldots,\bm{w}^{k-1}$), \new{$\bm{z}^{i(k-1)}$}, $\bm{z}^{i(0)}$ and $\bm{z}^{i(0)}$ in direction of $\bm{w}^{k}$ is not trivially zero. Thus, after sufficiently many steps, INLP can destroy all information in the representation \new{s.t.}  $\bm{z}^{i(\infty)}=[\bm{0},\bm{0}]$.
\end{enumerate}
\end{restatable}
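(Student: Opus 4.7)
The plan is to view one step of linear \INLP as applying the projection matrix $P = I - \bm{w}\bm{w}^{T}/\lVert \bm{w}\rVert^{2}$ to each $\bm{z}^{i}$, where by \reflemma{lemma:sufficient_condition} the probing direction has block form $\bm{w} = [\bm{w}_{m}, \bm{w}_{p}]$ with $\bm{w}_{m}\neq \bm{0}$. I would first expand the block structure of $P$ applied to $\bm{z}^{i}=[\bm{z}^{i}_{m},\bm{z}^{i}_{p}]$, giving
\[
\bm{z}^{i(1)} \;=\; \Bigl[\bm{z}^{i}_{m} - \tfrac{\bm{w}_{m}(\bm{w}_{m}^{T}\bm{z}^{i}_{m}+\bm{w}_{p}^{T}\bm{z}^{i}_{p})}{\lVert \bm{w}\rVert^{2}},\; \bm{z}^{i}_{p} - \tfrac{\bm{w}_{p}(\bm{w}_{m}^{T}\bm{z}^{i}_{m}+\bm{w}_{p}^{T}\bm{z}^{i}_{p})}{\lVert \bm{w}\rVert^{2}}\Bigr].
\]
For case \textbf{(1a)}, I would read off from this expression that whenever $\bm{w}_{p}\neq\bm{0}$, the new $\bm{z}_{p}$-block acquires a $\bm{w}_{p}\bm{w}_{m}^{T}\bm{z}_{m}$ term and the new $\bm{z}_{m}$-block acquires a $\bm{w}_{m}\bm{w}_{p}^{T}\bm{z}_{p}$ term, so neither block is a function of a single original block; I will spell out a function $f,g$ on the lines of the formula above to witness the claim. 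For case \textbf{(1b)}, substituting $\bm{w}_{p}=\bm{0}$ into the same expression collapses $P$ to $I-\bm{w}_{m}\bm{w}_{m}^{T}/\lVert \bm{w}_{m}\rVert^{2}$ acting only on $\bm{z}^{i}_{m}$; this removes the component of $\bm{z}^{i}_{m}$ along $\bm{w}_{m}$ (and erases it entirely when $d_{m}=1$) and leaves $\bm{z}^{i}_{p}$ untouched.

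To show non-invertibility, I will use that each INLP step is an orthogonal projection, hence idempotent and rank-decreasing, and that the composition of projections is itself a projection onto the intersection of null spaces. Any information already projected out of the span of the $\bm{w}^{k}$'s can never be reintroduced by adding more projections, since subsequent $P_{k}$ only further contract the image. Concretely, if a coordinate axis $e_{m}$ (say in the $\bm{z}_{m}$-block) lies in the range of the already-removed direction $\bm{w}$ after step $1$, then $P_{2}\cdots P_{k}P_{1}\bm{z}$ cannot have a nonzero inner product with $e_{m}$ for any $k\geq 1$, establishing that the main-task damage from step $1$ is permanent.

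For part \textbf{(2)}, I would use the Pythagorean identity $\lVert P\bm{z}\rVert^{2} = \lVert \bm{z}\rVert^{2} - (\bm{w}^{T}\bm{z}/\lVert \bm{w}\rVert)^{2}$. Iterating this across $k$ steps gives
\begin{equation*}
\lVert \bm{z}^{i(k)}\rVert^{2} \;=\; \lVert \bm{z}^{i(0)}\rVert^{2} - \sum_{j=1}^{k}\bigl((\bm{w}^{j})^{T}\bm{z}^{i(j-1)}/\lVert \bm{w}^{j}\rVert\bigr)^{2}.
\end{equation*}
Whenever $\bm{w}^{k}$ is not in $\mathrm{span}(\bm{w}^{1},\dots,\bm{w}^{k-1})$ and $\bm{z}^{i(k-1)}$ has nontrivial component along $\bm{w}^{k}$, the $k$-th summand is strictly positive, so the norm strictly decreases. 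Since the representation lives in a finite-dimensional space, the $\bm{w}^{k}$ can only introduce finitely many independent new directions; combining this with the stated non-triviality hypothesis yields $\lVert \bm{z}^{i(k)}\rVert \to 0$, i.e. $\bm{z}^{i(\infty)}=[\bm{0},\bm{0}]$.

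\textbf{Main obstacle.} The routine cases are the block-matrix calculation in Part 1 and the Pythagorean contraction in Part 2. The subtler step is arguing non-invertibility crisply: one must rule out that a later step's $P_{k}$ could, by acting on a mixed latent, effectively restore a coordinate that an earlier $P_{j}$ had destroyed. I plan to handle this by working in the basis aligned with $\mathrm{span}(\bm{w}^{1},\ldots,\bm{w}^{k})^{\perp}$ and noting that the image of $P_{k}\cdots P_{1}$ is exactly this orthogonal complement, so any coordinate whose axis has nonzero projection onto $\mathrm{span}(\bm{w}^{1},\ldots,\bm{w}^{k})$ is irrecoverable.
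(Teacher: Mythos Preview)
Your overall plan matches the paper's proof closely: the block expansion of $P=I-\bm{w}\bm{w}^{T}/\lVert\bm{w}\rVert^{2}$ for Parts~1(a) and~1(b) is exactly what the paper does, and your Pythagorean identity for Part~2 is equivalent to (and slightly cleaner than) the paper's change-of-basis computation in its Lemma on norm decrease.

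There is, however, a genuine technical error in your non-invertibility argument. The claim that ``the composition of projections is itself a projection onto the intersection of null spaces'' is false unless the projections commute, and likewise the image of $P_{k}\cdots P_{1}$ is \emph{not} in general $\mathrm{span}(\bm{w}^{1},\ldots,\bm{w}^{k})^{\perp}$. A two-dimensional counterexample: with $\bm{w}^{1}=e_{1}$ and $\bm{w}^{2}=(e_{1}+e_{2})/\sqrt{2}$, one has $\mathrm{span}(\bm{w}^{1},\bm{w}^{2})^{\perp}=\{0\}$, yet $P_{2}P_{1}$ has rank~$1$. So the basis-alignment plan you sketch in the ``main obstacle'' paragraph will not work as stated.

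The paper avoids this entirely with a much simpler argument, which you should adopt: $P_{1}$ has $\bm{w}^{1}$ in its kernel, hence is singular. Therefore no linear map $Q$ (in particular no product of later projections) can satisfy $Q\,P_{1}=I$, so the first-step corruption can never be undone by subsequent \INLP{} steps. Concretely, the paper diagonalizes $A=\hat{\bm{w}}\hat{\bm{w}}^{T}$, observes it has eigenvalue~$1$, so $I-A$ has eigenvalue~$0$ and is non-invertible. That is all that is needed; the more refined claims about images and intersections are both unnecessary and incorrect.
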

\textit{Proof Sketch.}
From \reflemma{lemma:sufficient_condition}, in the first step, probing classifier for $\bm{z}_{p}$  will use $\bm{z}_{m}$ in addition to $\bm{z}_{p}$. Consequently, the projection matrix for \INLP  based on the probing classifier will be incorrect, hence corrupting the main task features $\bm{z}_{m}$ with $\bm{z}_{p}$ \textbf{(1a)} or damage $\bm{z}_{m}$ without any effect on $\bm{z}_{p}$ \textbf{(1b)}. 
Next, we show that each step of the projection operation reduces the norm of latent representation $\bm{z}$; thus the latent representation can go to $\bm{0}$ as the number of steps increases \textbf{(2)}. Proof in \refappendix{sec:app_inlp_setup_proof}. 

\textbf{Failure Mode:} \reffig{fig:inlp_pretrained_main}-\ref{fig:inlp_unfair_classifier} demonstrate the \textit{mixing} problem stated in \reftheorem{theorem:null_space_failure}, where a fair classifier becomes unfair after the first step of projection. Note that after first step the main task classifier's accuracy will drop because of this mixing of features, affecting \INLP-based probing methods like Amnesic Probing~\cite{AmnesicProbing} that interpret a drop in the main classifier's accuracy after \INLP projection as evidence that the main classifier was using the sensitive \prop.


\subsection{Failure mode of adversarial removal methods}
\label{subsec:adv_rem_problem}
To remove the unwanted features $\bm{z}_{p}$ from the latent representation,  adversarial removal methods jointly train the main classifier $c_{m}:\bm{Z}\rightarrow Y_{m}$ and the probing classifier $c_{p}:\bm{Z}\rightarrow Y_{p}$ by specifying $c_p$'s loss as an adversarial loss. 
For details refer to~\refappendix{subsec:app_adv_setup}.



\begin{figure}[t]
\centering
\hfill
  \begin{subfigure}[h]{.32\textwidth}
    \captionsetup{justification=centering}
    \centering
    \includegraphics[width=\linewidth,height=0.8\linewidth]{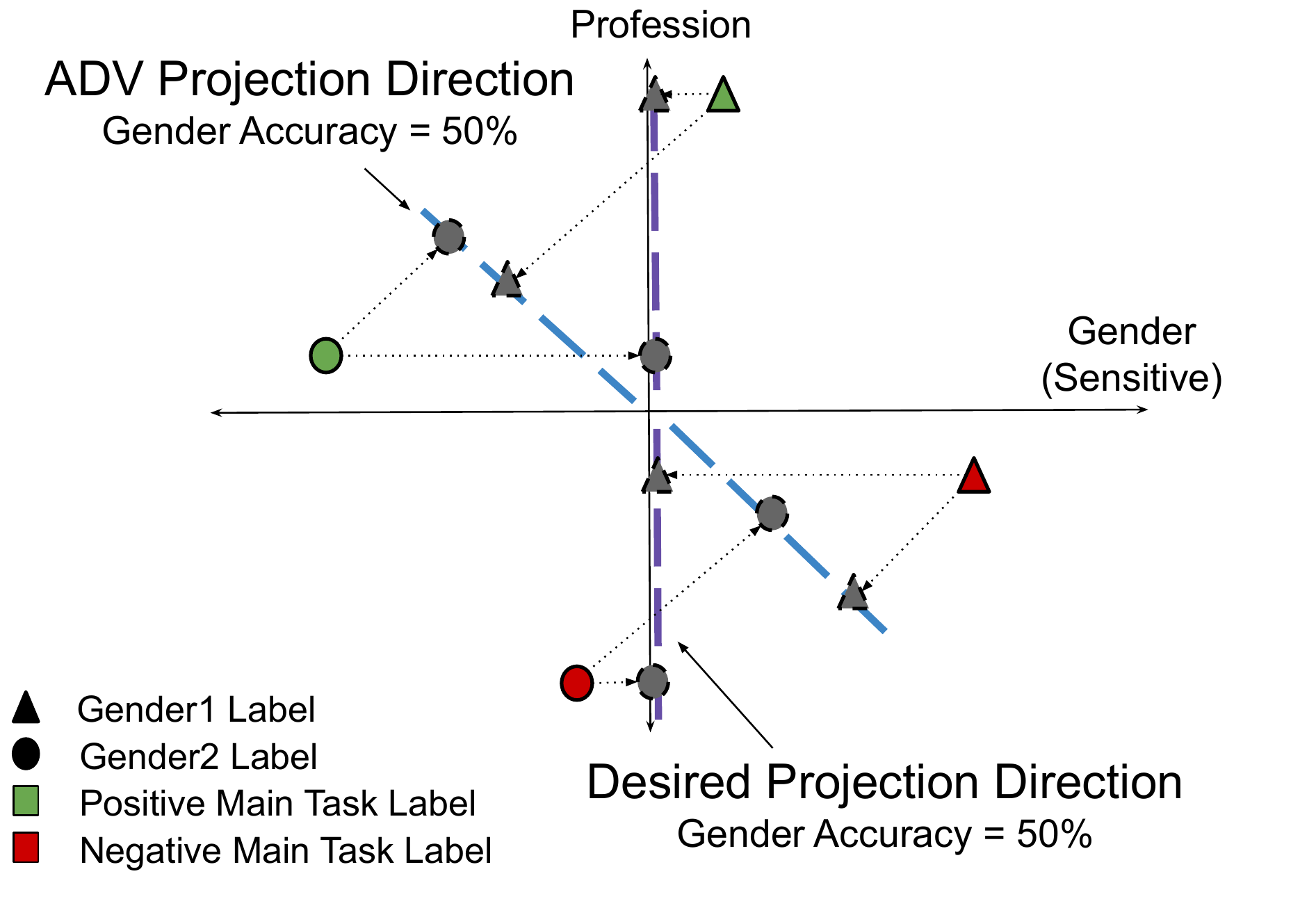}
		\caption[]%
        {{\small Probing (Gender) Accuracy}
        }    
		\label{fig:adv_gender_accuracy} 
  \end{subfigure}
\hfill
  \begin{subfigure}[h]{.32\textwidth}
    \captionsetup{justification=centering}
    \centering
    \includegraphics[width=\linewidth,height=0.8\linewidth]{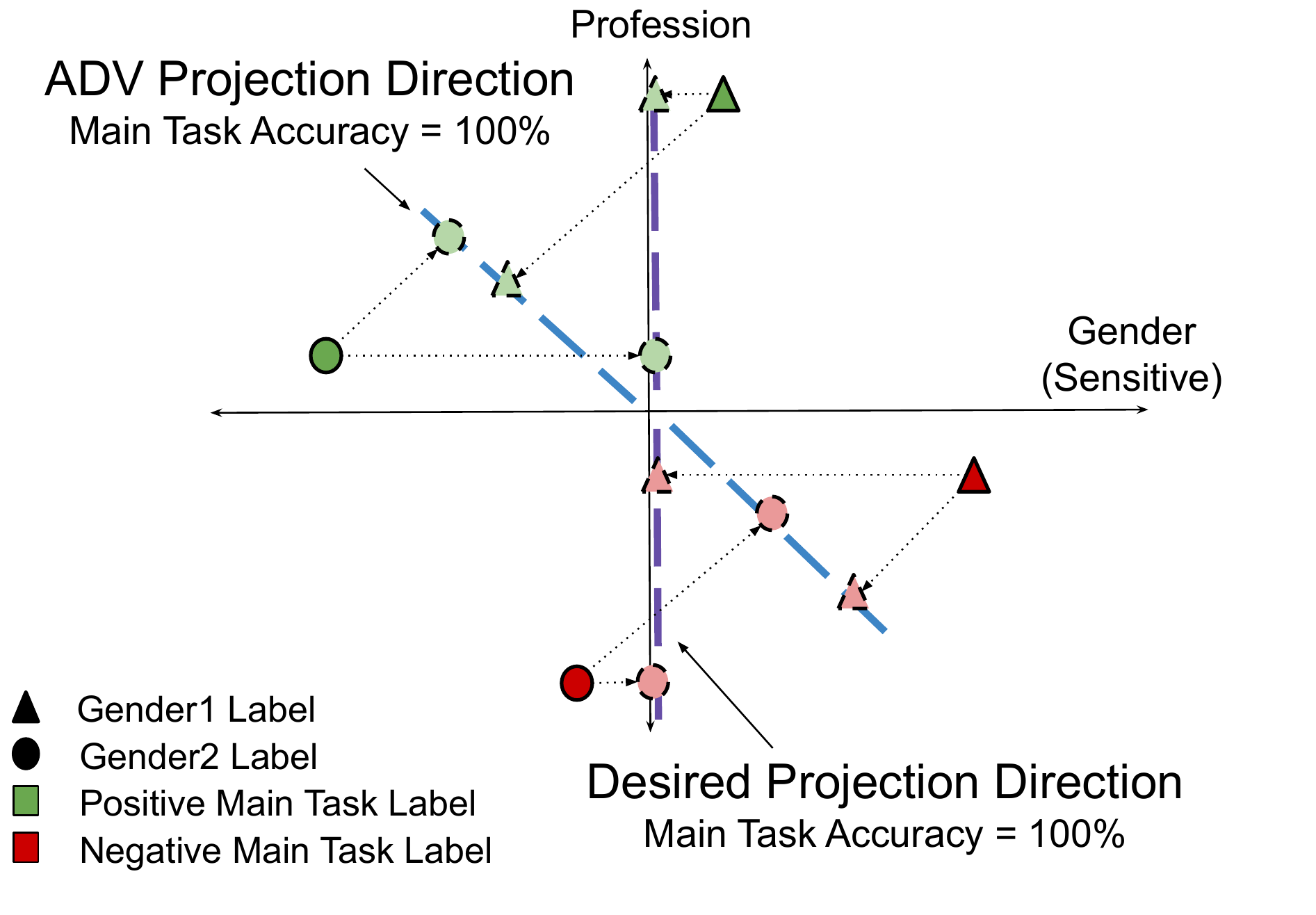}
		\caption[]%
        {{\small Main-Task (Profession) Accuracy}
        }    
		\label{fig:adv_profession_accuracy} 
  \end{subfigure}
\hfill
  \begin{subfigure}[h]{.32\textwidth}
    \captionsetup{justification=centering}
    \centering
    \includegraphics[width=\linewidth,height=0.8\linewidth]{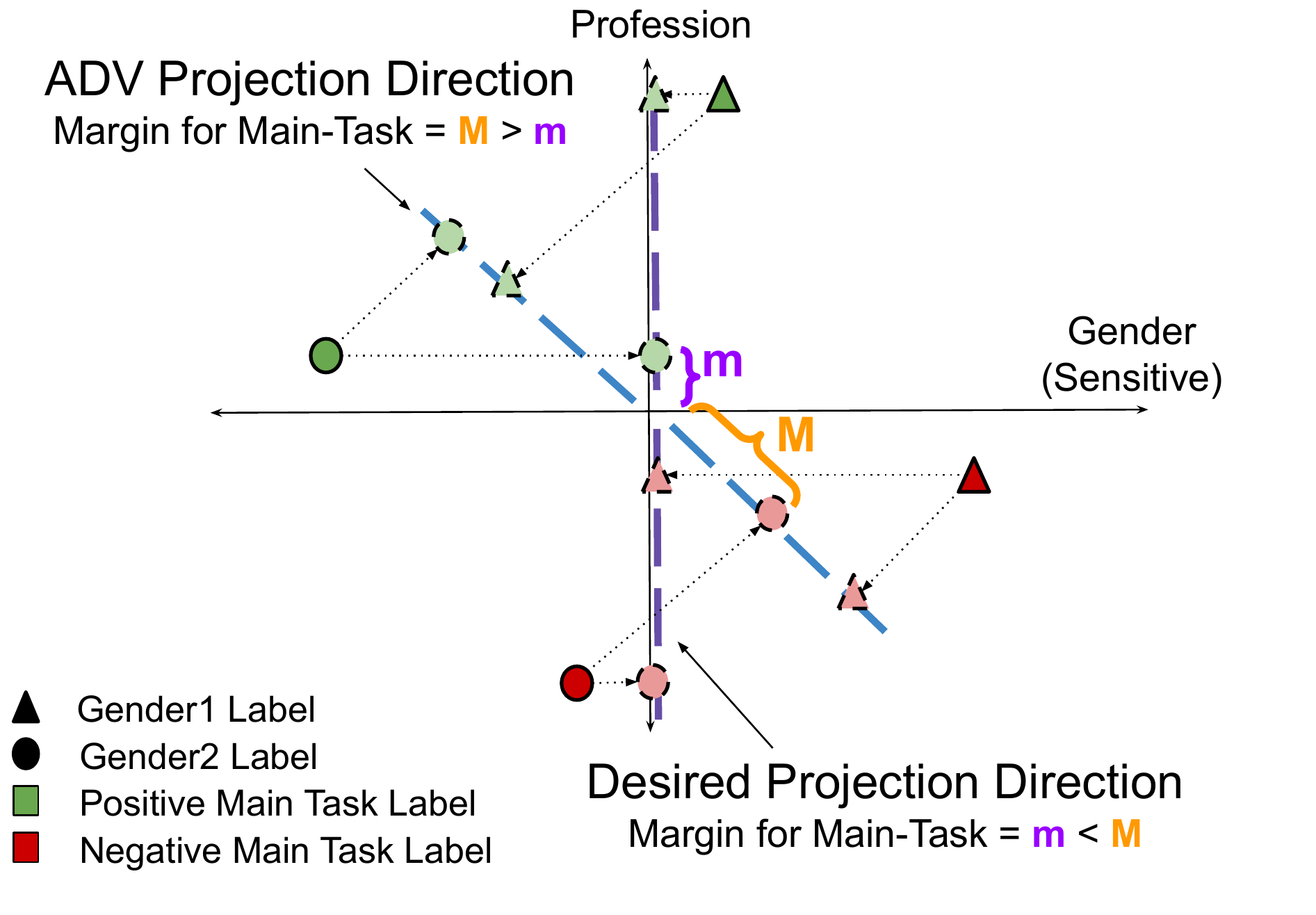}
		\caption[]%
        {{\small Main-Task (Profession) Margin}
        }    
		\label{fig:adv_profession_margin} 
  \end{subfigure}

\caption{\textbf{Failure mode of adversarial removal.}  As in \reffig{fig:inlp_problem_illustration}, the main task label is \textit{Profession} and \textit{Gender} is the spurious concept, each corresponding to one of the dimensions of the 2-dimensional  feature representation $\bm{z}$.  
Assume that the shared representation is a scalar value obtained by projecting the two features in some direction. The adversarial goal is to find a projection direction such that the concept (gender) classifier obtains a random-guess accuracy of $50\%$ but has good accuracy on the main task label (profession).  \textbf{(a)} Two projection directions, shown by vertical and slanted lines, that yield  \textit{random-guess} 50\% accuracy on gender prediction,  and \textbf{(b)} have the same 100\% accuracy for profession prediction. \textbf{(c)} However, the slanted projection direction has a bigger margin for the main task and will be preferred, thus leading to a final classifier that uses the gender concept (see \refsec{subsec:adv_rem_problem}).
} 
\label{fig:adv_problem_illustration}
\end{figure}

As in \reflemma{lemma:sufficient_condition},  we assume that the encoder $h:\bm{X}\rightarrow \bm{Z}$ mapping the input to the latent representation $\bm{Z}$ is frozen. To allow 
\new{for the removal of the unwanted features $\bm{z}_{p}$}
, we introduce additional representation layers after it. For simplicity in the proof, we assume a linear transformation to the latent representation $h_{2}:\bm{Z}\rightarrow \bm{\zeta}$. This layer is followed by the  linear main-task classifier $c_{m}:\bm{\zeta}\rightarrow Y_{m}$, as before. The  probing classifier $c_{p}:\bm{\zeta}\rightarrow Y_{p}$ is trained adversarially to remove $\bm{z}_{p}$ from the latent representation $\bm{\zeta}$. Thus, the goal of the adversarial method can be stated as removing the information of $\bm{z}_{p}$ from $\bm{\zeta}$. 
Let the main-task classifier satisfy assumptions of the generalized version of \reflemma{lemma:sufficient_condition} (\reflemma{lemma:sufficient_condition_main_task}, \refappendix{subsec:app_generalized_probing_result}). We also need an additional assumptions on the hard-to-classify margin points to ensure that main-task labels and \prop labels are correlated on the margin points of a \emph{clean} main-task classifier. Proof of the \reftheorem{theorem:adv_removal_failure} stated below is in \refappendix{sec:app_adv_setup_proof}.

\begin{assumption}[Label Correlation on Margin Points]
\label{assm:label_correlation}
For the margin points of a \emph{clean} classifier for the main task, the adversarial-probing labels $y_{p}$ and the main task labels $y_{m}$ are correlated, i.e., w.l.o.g., $y_{m}^{i}=y_{p}^{i}$ for all  margins points of the clean main task classifier.
\end{assumption}


\begin{restatable}{theorem}{advthm}
\label{theorem:adv_removal_failure}
Let the latent representation $\bm{z}$ satisfy \refassm{assm:disentagled_latent} and be frozen, $h_{2}(\bm{z})$ be a linear transformation over $\bm{Z}$ s.t. $h_{2}:\bm{Z}\rightarrow \bm{\zeta}$, the main-task classifier be $c_{m}(\bm{\zeta})=\bm{w}_{c_{m}}\cdot \bm{\zeta}$, and the adversarial \new{probing} classifier be $c_{p}(\bm{\zeta})=\bm{w}_{c_{p}}\cdot \bm{\zeta}$. Let all the assumptions of \reflemma{lemma:sufficient_condition_main_task} be satisfied for main-classifier $c_{m}(\cdot)$ when using $\bm{z}$  directly as input and \refassm{assm:fully_pred_inv} be satisfied on $\bm{z}$ w.r.t. the adversarial task. Let $h_{2}^{*}(\bm{z})$ be the desired encoder which is successful in removing $\bm{z}_{p}$ from $\bm{\zeta}$. Then there exists an undesired/incorrect encoder $h^{\alpha}_{2}(\bm{z})$ s.t. $h^{\alpha}_{2}(\bm{z})$ is dependent on $\bm{z}_{p}$ and the main-task classifier $c_{m}(h_{2}^{\alpha}(\bm{z}))$ has bigger margin than $c_{m}(h_{2}^{*}(\bm{z}))$  and has,

\begin{enumerate}[leftmargin=*,topsep=0pt]{}
    \item $Accuracy(c_{p}(h^{\alpha}_{2}(\bm{z})),y_{p}) = Accuracy(c_{p}(h^{*}_{2}(\bm{z})),y_{p})$;  when adversarial probing classifier $c_{p}(\cdot)$ is trained using any learning objective like max-margin or cross-entropy loss. Thus, the undesired encoder $h^{\alpha}_{2}(\bm{z})$ is indistinguishable from desired encoder $h^{*}_{2}(\bm{z})$ in terms of adversarial task prediction accuracy but better for main-task in terms of max-margin objective.
    
    \item $L_{h_{2}}\big(c_{m}(h_{2}^{\alpha}(\bm{z})),c_{p}(h_{2}^{\alpha}(\bm{z}))\big) < L_{h_2}\big(c_{m}(h_{2}^{*}(\bm{z})),c_{p}(h_{2}^{*}(\bm{z}))\big)$;  when \refassm{assm:label_correlation} is satisfied and \prop-causal features $\bm{z}_{p}^{M}$ of any margin point $\bm{z}^{M}$ of $c_{m}(h_{2}^{*}(\bm{z}))$ are more predictive of the main task label than $\bm{z}_{p}^{P}$ of any margin point $\bm{z}^{P}$ of $c_{p}(h_{2}^{*}(\bm{z}))$ is predictive for the probing label  (\refassm{assm:correlation_strength}). 
    Thus,  undesired encoder $h^{\alpha}_{2}(\bm{z})$ is preferable over desired encoder $h^{*}_{2}(\bm{z})$ for both main and combined adversarial objective. 
    Here $L_{h_2}=L(c_{m}(\cdot))-L(c_{p}(\cdot))$ is the combined adversarial loss w.r.t. to $h_{2}$ and $L(c(\cdot))$ is the max-margin loss for a classifier ``c'' (see \refsec{subsec:app_adv_setup}). 
\end{enumerate}
\end{restatable}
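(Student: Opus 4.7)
The proof proceeds in direct analogy with \reflemma{lemma:sufficient_condition}: I would construct a small, $\bm{z}_p$-dependent perturbation of the desired encoder $h_2^*$ that strictly improves the main-task max-margin objective while leaving the trained adversarial classifier's 0-1 accuracy unchanged. Concretely, writing $h_2^*$ as the linear map that projects out the $\bm{z}_p$ subspace, I would define $h_2^{\alpha}(\bm{z}) := h_2^*(\bm{z}) + \alpha\, \bm{w}_{c_m}\,(\hat{\bm{\epsilon}}_p \cdot \bm{z}_p)$, with $\bm{w}_{c_m}$ the weight of the clean main-task classifier in $\bm{\zeta}$-space and $\alpha > 0$ a parameter to be bounded later. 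By construction $h_2^{\alpha}$ is a linear map (admissible since $\bm{z}$ is disentangled, \refassm{assm:disentagled_latent}) that depends on $\bm{z}_p$, so it is an ``incorrect'' encoder.

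For Part 1, the accuracy invariance follows from a continuity argument. Under any continuous training objective (max-margin or cross-entropy), the trained $c_p$ varies continuously with $\alpha$, so the signed prediction $c_p(h_2^{\alpha}(\bm{z}^i))$ is continuous in $\alpha$ for every training point. Since 0-1 accuracy is a piecewise-constant function of the signed predictions and the dataset is finite, there exists $\alpha_0 > 0$ such that for every $\alpha \in (0, \alpha_0)$ no point's predicted label flips, so the accuracy equals the value attained at $\alpha = 0$, namely that of $c_p \circ h_2^*$. For the main-task margin, I would invoke \refassm{assm:label_correlation}: every margin point of $c_m \circ h_2^*$ satisfies $y_m = y_p$, so the extra contribution $\alpha\,\|\bm{w}_{c_m}\|^2\, y_m (\hat{\bm{\epsilon}}_p \cdot \bm{z}_p)$ to $y_m \cdot c_m(h_2^{\alpha}(\bm{z}))$ at those points has sign $y_p \cdot (\hat{\bm{\epsilon}}_p \cdot \bm{z}_p) > 0$ by \refassm{assm:fully_pred_inv}, strictly increasing the main-task margin; non-margin points retain enough slack for small $\alpha$, so the overall margin $\min_i y_m^i c_m(h_2^{\alpha}(\bm{z}^i))$ strictly exceeds that of $c_m \circ h_2^*$.

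For Part 2 I would write $L(c(\cdot)) = -\mathrm{margin}(c)$ so that $L_{h_2}^{\alpha} - L_{h_2}^* = \Delta\mathrm{margin}(c_p) - \Delta\mathrm{margin}(c_m)$. A direct calculation gives the main-task margin gain as $\alpha\,\|\bm{w}_{c_m}\|^2\, y_m^M (\hat{\bm{\epsilon}}_p \cdot \bm{z}_p^M)$ at the main-task margin point $\bm{z}^M$, and the adversarial margin gain as $\alpha\,(\bm{w}_{c_p}\cdot \bm{w}_{c_m})\, y_p^P (\hat{\bm{\epsilon}}_p \cdot \bm{z}_p^P)$ at the adversarial margin point $\bm{z}^P$. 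After normalizing the max-margin weights to unit norm so that $|\bm{w}_{c_p}\cdot \bm{w}_{c_m}| \leq \|\bm{w}_{c_m}\|^2 = 1$, \refassm{assm:correlation_strength} guarantees $y_p^M (\hat{\bm{\epsilon}}_p \cdot \bm{z}_p^M) > y_p^P (\hat{\bm{\epsilon}}_p \cdot \bm{z}_p^P)$ (using $y_m^M = y_p^M$ from \refassm{assm:label_correlation}), so the main-task margin gain strictly dominates the adversarial one and $L_{h_2}^{\alpha} < L_{h_2}^*$, proving that the combined adversarial objective strictly prefers the incorrect encoder.

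The hard part will be the accuracy invariance in Part 1: formalizing that the trained $c_p$ has locally constant 0-1 accuracy even though 0-1 accuracy is only upper semi-continuous in $\alpha$. This requires the training objective to select a $c_p$ whose parameters vary continuously with $\alpha$ (valid for max-margin with strictly positive margin, and for cross-entropy with standard regularization), together with a uniform positive lower bound on each training point's distance from the learned decision boundary so that $\alpha_0$ can be chosen independently of direction. A secondary subtlety is choosing the perturbation direction $\bm{w}_{c_m}$ so that the added $\bm{z}_p$ component is genuinely used by $c_m$ in its max-margin solution, which follows from \reflemma{lemma:sufficient_condition_main_task} applied to the main classifier trained on $\bm{\zeta}$.
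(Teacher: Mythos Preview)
Your approach has several genuine gaps relative to the paper's proof.

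\textbf{Norm constraint on $h_2$.} Your additive perturbation $h_2^{\alpha}=h_2^{*}+\alpha\,\bm{w}_{c_m}(\hat{\bm{\epsilon}}_p\cdot\bm{z}_p)$ does not preserve the norm of the encoder's parameters. The max-margin objective for $h_2$ is taken in the MM-Numerator formulation (\refeqn{eq:max_margin_numerator}), which fixes $\|\bm{w}\|$; without that constraint, scaling $h_2$ rescales both margins and the comparison $L_{h_2}^{\alpha}<L_{h_2}^{*}$ is ill-posed. The paper instead uses the norm-preserving parameterization
\[
h_2^{\alpha}(\bm{z})=\alpha\,\|\bm{w}_m^{*}\|(\hat{\bm{w}}_m^{*}\cdot\bm{z}_m)+\sqrt{1-\alpha^{2}}\,\|\bm{w}_m^{*}\|(\hat{\bm{w}}_p\cdot\bm{z}_p),
\]
with $\alpha$ close to $1$ (not $0$), exactly mirroring the construction in \reflemma{lemma:sufficient_condition_main_task}. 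Your perturbation would have to be followed by a renormalization, which costs a first-order term in the $\bm{z}_m$ margin that you have not accounted for.

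\textbf{Accuracy invariance.} Your continuity argument (trained $c_p$ continuous in $\alpha$, hence piecewise-constant $0$--$1$ accuracy is locally constant) is fragile: max-margin solutions need not vary continuously when the support set changes, and unregularized cross-entropy on separable data does not converge. The paper sidesteps this entirely by first reducing, without loss of generality, to a scalar output $\zeta=h_2(\bm{z})\in\mathbb{R}$ with $c_m(\zeta)=\zeta$ and $c_p(\zeta)=\beta\zeta$. The key observation is that both $h_2^{*}$ and $h_2^{\alpha}$ satisfy the MM constraint $y_m\cdot h_2(\bm{z})\geq 1$, so $\mathrm{sign}(h_2^{\alpha}(\bm{z}))=\mathrm{sign}(h_2^{*}(\bm{z}))=y_m$ on every training point; hence $\mathrm{sign}(\beta\,h_2^{\alpha}(\bm{z}))=\mathrm{sign}(\beta\,h_2^{*}(\bm{z}))$ for \emph{any} $\beta$, giving exact accuracy equality with no smallness or continuity assumption.

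\textbf{Assumption bookkeeping in Part 1.} You invoke \refassm{assm:label_correlation} to obtain $y_m(\hat{\bm{\epsilon}}_p\cdot\bm{z}_p)>0$ at main-task margin points, but the theorem only assumes \refassm{assm:label_correlation} for Part 2. For Part 1 you must instead use the hypotheses of \reflemma{lemma:sufficient_condition_main_task}, which already include \refassm{assm:spurious_linear_separably_main_task} giving $y_m(\hat{\bm{\epsilon}}_p\cdot\bm{z}_p)>0$ at those margin points directly.

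\textbf{Part 2: retraining of $c_p$ and margin-point stability.} You treat $\bm{w}_{c_p}$ as fixed when computing the adversarial margin gain, but $c_p$ is retrained for each $h_2^{\alpha}$ (it is the inner argmax in \refeqn{eq:adv_prob_loss_theoryfmm}). The paper handles this by a case split on $\mathrm{sign}(\beta)$ in the scalar setup, together with a separate margin-point stability lemma (\reflemma{lemma:same_margin_point}) showing that for $\alpha$ sufficiently close to $1$ the margin points of both $c_m$ and $c_p$ under $h_2^{\alpha}$ lie among those under $h_2^{*}$; only then can the comparison via \refassm{assm:correlation_strength} be made pointwise as you attempt.
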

\textit{Proof Sketch.}
\textbf{(1)} The proof is by construction. Using \reflemma{lemma:sufficient_condition_main_task}, we show that there exists $h^{\alpha}_{2}$ s.t. $L(c_m(h_{2}^{\alpha}))<L(c_m(h_{2}^{*}))$, and that accuracy of the probing classifier remains the same when using either encoder. 
\textbf{(2)} Compared to $h^*_2$, we show that the improvement in main task loss when using $\bm{z}_p$ features is larger than the improvement in the probing loss for $h_{2}^{\alpha}$,  thus preferred by overall objective. 

\vspace{-1em}
\subsection{Implications for real-world data: A metric for quantifying degree of spuriousness}
\label{subsec:spurriousness_score}
Our theoretical analysis  shows that probing-based removal methods fail to make the main task classifier invariant to unwanted concepts.  However,  to verify whether the final classifier is using the \prop or not, the theorem statements require knowledge of the \prop's features $\bm{z}_p$. For practical usage,   we propose a metric that quantifies the degree of failure or  \emph{spuriousness} for both the main and probing classifier. For simplicity, we define it assuming that both main and \prop labels are binary.

 Let $\mathcal{D}_{m,p}$ be the dataset where for every input $\bm{x}^{i}$ we have both the main task label $y_{m}$ and the \prop label $y_{p}$. 
 We define $2\times2$ groups, one for each combination of $(y_{m},y_{p})$. Without loss of generality, assume that the main-task label $y_{m}=1$ is spuriously correlated with \prop label $y_{p}=1$ and similarly $y_{m}=0$ is correlated with $y_{p}=0$. Thus, 
 $(y_{m}=1,y_{p}=1)$ and $(y_{m}=0,y_{p}=0)$ are the majority group $S_{maj}$ while 
 groups $(y_{m}=1,y_{p}=0)$ and $(y_{m}=0,y_{p}=1)$ make up the minority group $S_{min}$. We expect the main classifier to exploit this correlation and hence perform badly on $S_{min}$ where the correlation breaks. Following \cite{GroupDRO}, we posit that minority group accuracy i.e $Acc(S_{min})$ can be a good metric to evaluate the degree of \emph{spuriousness}. We bound the metric by comparing it with the accuracy on $S_{min}$ of a ``clean'' classifier that does not use the \prop features. 

\begin{definition}[Spuriousness Score]
\label{def:spurriousness_score}
Given a dataset, $\mathcal{D}_{m,p}=S_{min} \cup S_{maj}$ with  binary task label and binary \prop, let $Acc^{f}(S_{min})$ be the minority group accuracy of a given main task classifier ($f$) and $Acc^{*}(S_{min})$ be the minority group accuracy of a \emph{clean} main task classifier that does not use the spurious \prop. Then  spuriousness score of $f$ is:
$    \psi(f) = {|}1 - Acc^{f}(S_{min})/Acc^{*}(S_{min}){|}$.
\end{definition}

To estimate $Acc^{*}(S_{min})$, we subsample the dataset such that $y_p$ takes a single value in the sample and train the main classifier on it, as in~\cite{ravichander-etal-2021-probing}. Here the probing label $y_{p}$ no longer is correlated with the main task label $y_{m}$. 
The spuriousness score of a \textit{probing} classifier can be defined analogously to \refdef{def:spurriousness_score}, by swapping the task and concept label (see \refdef{def:spurriousness_score_probe}). 
For creating a clean probing classifier, we subsample the dataset such that $y_{m}$ takes a single value and train the probing classifier. 
    

\section{Experimental Results}
\label{sec:feature_removal_expt_main}
Theorems~\ref{theorem:null_space_failure} and \ref{theorem:adv_removal_failure} show the failure of \prop removal methods under a simplified setup and max-margin loss. But current deep-learning models are not trained using max-margin objective and might not satisfy the required assumptions (\refassm{assm:disentagled_latent},\ref{assm:fully_pred_inv},\ref{assm:spurious_linear_separably},\ref{assm:label_correlation}). Thus,  we now verify the failure modes on three real-world datasets and one synthetic dataset, without making any restrictive assumptions. 
We use RoBERTa \cite{roberta19} as default encoder and fine-tune it over each real-world dataset. For \syn dataset  we use the sum of pre-trained GloVe embeddings \cite{pennington2014glove} of words in a sentence as the default encoder. For details on the experimental setup,  refer \refappendix{sec:app_expt_setup}.

\vspace{-0.5em}
\subsection{Datasets: Main task and spurious/sensitive \prop}
\label{subsec:exp_dataset_desc}
\vspace{-0.5em}
\textbf{Real-world data.} We use three datasets: \mnli \cite{mnliDataset}, \pan \cite{pan16Dataset} and \aae \cite{aaeDataset}. In \mnli, given two sentences---premise and hypothesis---the main task is to predict whether hypothesis \emph{entails}, \emph{contradicts} or is \emph{neutral} with respect to premise. We simplify to a binary task of predicting whether a hypothesis \emph{contradicts} the premise or not. Since negation words like \emph{nobody,no,never} and \emph{nothing} have been reported to be spuriously correlated with the \textit{contradiction} label~\cite{GururanganMNLINegationArtifact}, we create a `negation' \prop denoting the presence of these words. The goal is to remove the negation \prop from an NLI model's representation space. In \pan, the main task is to detect whether a tweet mentions another user or not, as in \cite{AdvRemYoav}. The dataset contains \emph{gender} label for each tweet, which we consider as the sensitive \prop to be 
removed from the main model's representation. In \aae, again following \cite{AdvRemYoav}, the main-task is to predict binary sentiment labels from a tweet's text. The tweets are associated with \emph{race} of the author, the sensitive \prop to be removed from the main model's representation. 

\textbf{Synthetic-Text.} To understand the reasons for failure, we introduce a Synthetic-Text dataset where it is possible to change the input text based on a change in \prop (thus implementing Def.~\ref{def:causally_derived_feature}). Here we can directly evaluate whether the \prop is being used by the main-task classifier by  intervening on the \prop (adding or removing) and observing the change in model's prediction. The main-task is to predict whether a sentence contains a numbered word (e.g., \emph{one, fifteen, etc.}). We introduce a spurious concept (length) by increasing the length of sentences that contain \emph{numbered} words.

\textbf{Predictive correlation.}
To assess robustness of removal methods, we create multiple datasets with different \textit{predictive} correlation between the two labels $y_{m}$ and $y_{p}$. The predictive correlation $(\kappa)$ is a practical
measure for the \textit{spurious correlation} defined in \refassm{assm:spurious_linear_separably}, that does not require access to $\bm{z}_m$ features. It
 measures how informative one label is for predicting the other, 
    $\kappa = Pr(y_{m}\cdot y_{p}>0) = \frac{\sum_{i=1}^{N}\mathbf{1}[y_{m} \cdot y_{p}>0]}{N}$, 
where $N$ is the size of dataset and $\mathbf{1}[\cdot]$ is the indicator function that  is $1$ if the argument is true otherwise $0$. Predictive-correlation lies in $\kappa\in[0.5,1]$ where $\kappa=0.5$ indicates no correlation and $\kappa=1$ indicates that the attributes are fully correlated. For more details on the datasets and how we vary the predictive-correlation, refer to \refappendix{sec:app_expt_setup}; and for additional results see \refappendix{sec:app_additional_results}.

\textbf{Measuring spuriousness of a classifier.} We use the \textit{Spuriousness Score} (\refdef{def:spurriousness_score}) to measure the degree of reliance of the main task classifier on the spurious concept, and vice-versa for the probing classifier. 
In addition, for the Synthetic-Text dataset, we use a metric $\Delta$Prob that exactly implements \refdef{def:causally_derived_feature} for estimating a model's reliance on the spurious concept. Since we can modify the concept directly in input space for \syn, $\Delta$Prob changes the parts of a input sentence corresponding to spurious concept and measures the change in the main task classifier's prediction probability. As a sanity check, on the \syn dataset, $\Delta$Prob and Spuriousness Score are highly correlated (Pearson correlation=0.83, \refsec{sec:app_comp_sp_score}).  For all real-world datasets,  we use the Spuriousness Score. 



\vspace{-0.7em}
\subsection{Results: Null space removal}
\label{subsec:expt_null_space_failure}
\vspace{-0.7em}

In general, for any  model given as input to \INLP, it may be difficult to verify whether \INLP removed the correct features. Hence, we construct a benchmark where the input classifier is \textit{clean}, i.e., it does not use the \prop at all. 
We do so by training on a subset of data with one particular value of spurious \prop label, as in ~\cite{ravichander-etal-2021-probing}.
Since the input classifier does not use the \prop-causal features, we expect that \INLP should not have any effect on the main task classifier.
\new{Note that we keep the main task classifier frozen in all the experiments described below. For the setting where the main task classifier is retrained after every projection step of \INLP, refer \refsec{subsec:app_extended_null_space_results} and \reffig{fig:app_inlp-head_retrain_mix}}.

\begin{figure*}[t]
\centering

\begin{subfigure}[h]{.28\textwidth}
    \centering
    \includegraphics[width=\linewidth,height=0.7\linewidth]{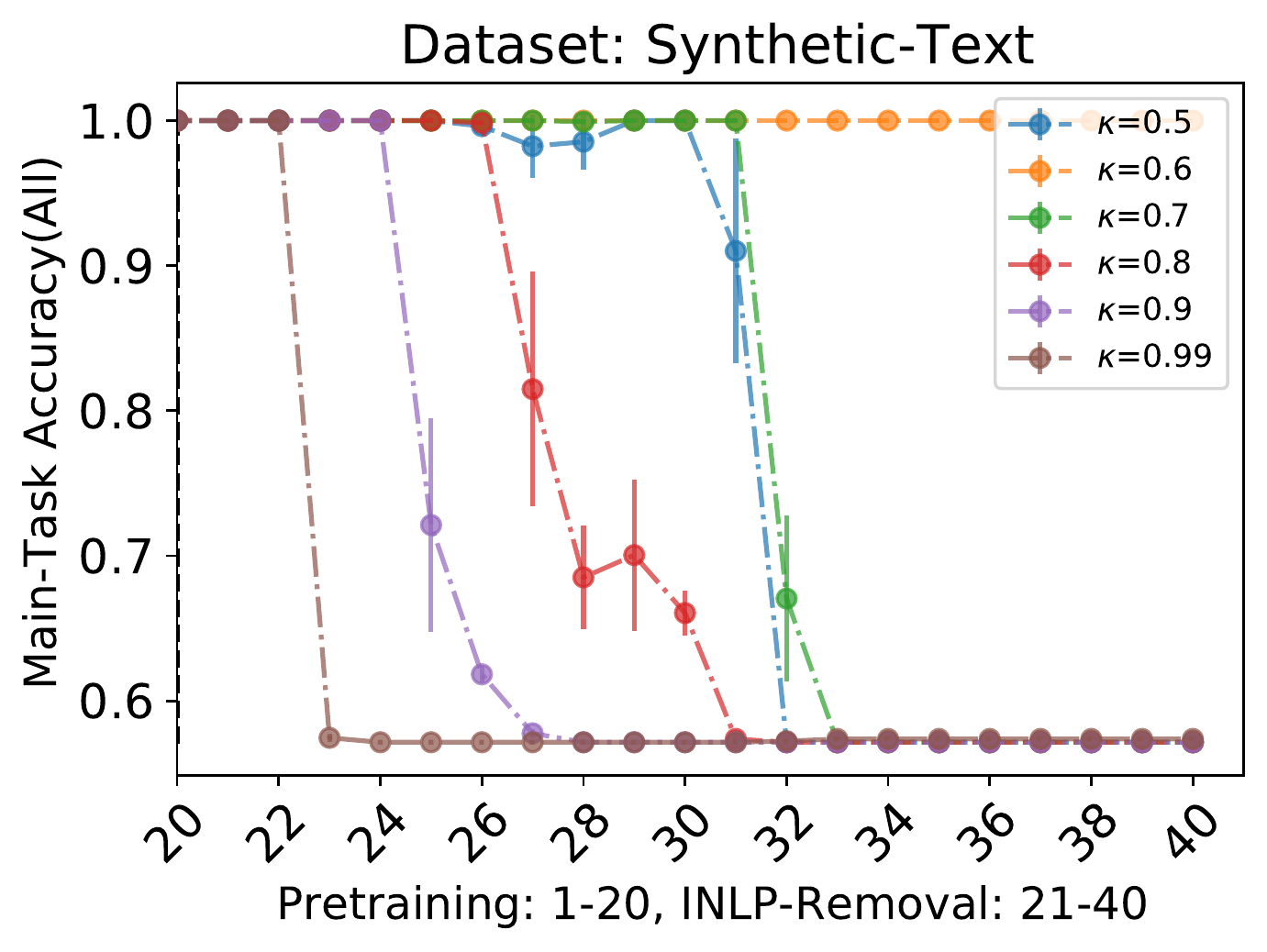}
    	\caption[]%
        {{\small  Main-Task Acc(All)}}    
    	\label{fig:inlp_syn_main_acc} 
    \end{subfigure}
\hfill 
\begin{subfigure}[h]{.28\textwidth}
    \centering
    \includegraphics[width=\linewidth,height=0.7\linewidth]{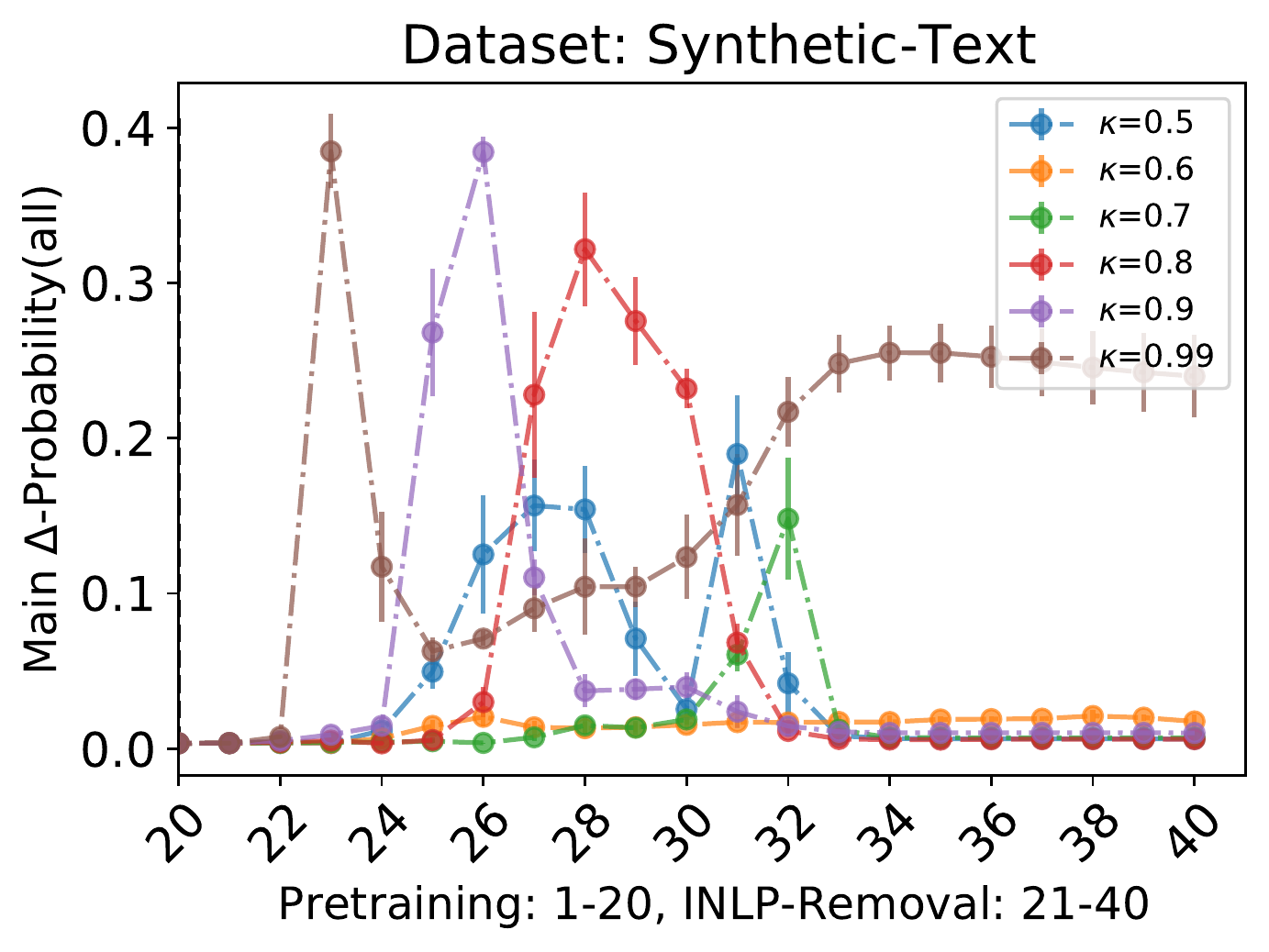}
    	\caption[]%
        {{\small Main-Task $\Delta$ Prob(All)}}    
    	\label{fig:inlp_syn_pdelta_main} 
    \end{subfigure}
\hfill
\begin{subfigure}[h]{.28\textwidth}
    \centering
    \includegraphics[width=\linewidth,height=0.7\linewidth]{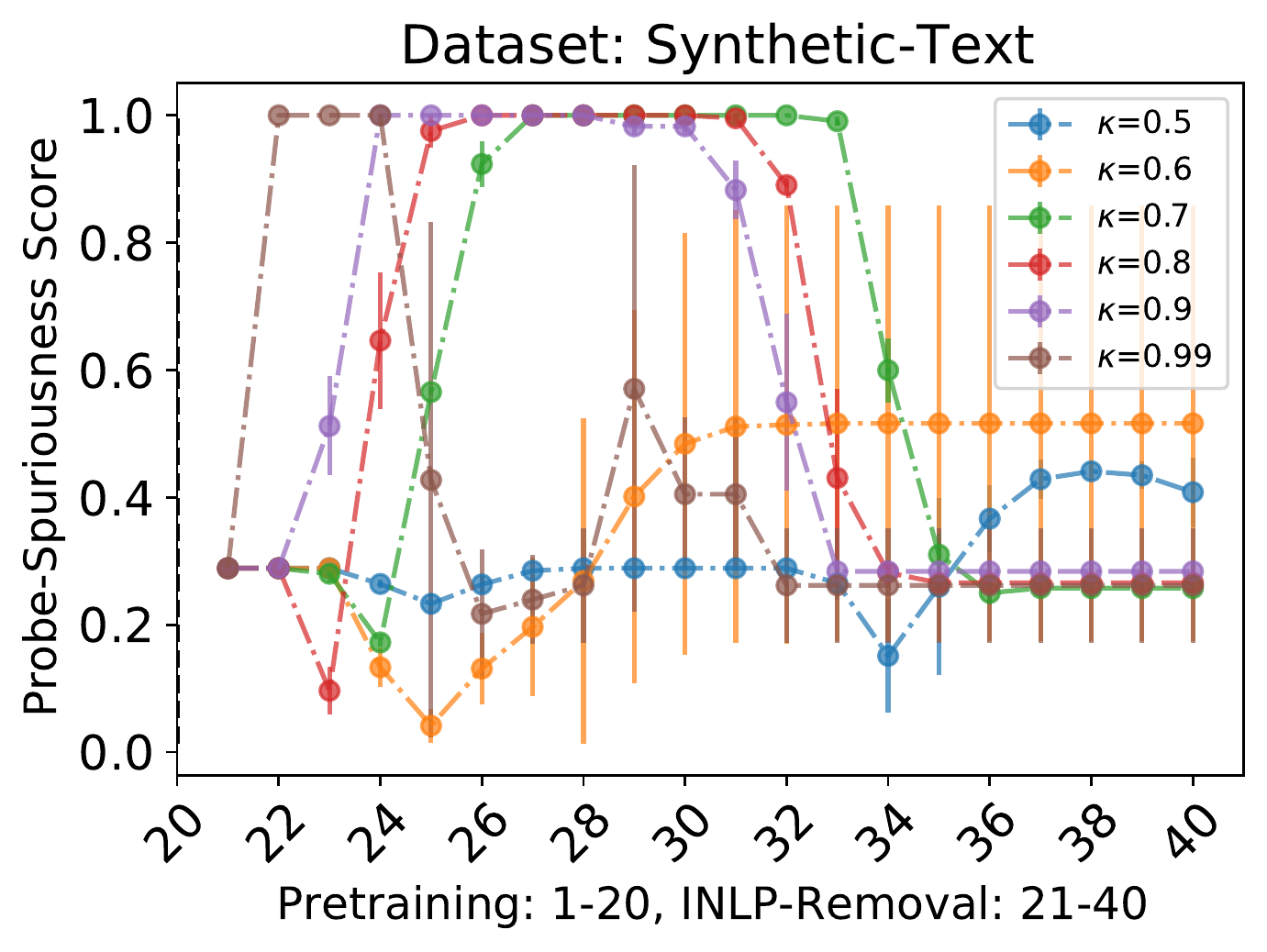}
    	\caption[]%
        {{\small Probe Spuriousness Score}}    
    	\label{fig:inlp_syn_probe_sp_score} 
    \end{subfigure}
\medskip
\begin{subfigure}[h]{.28\textwidth}
    \centering
    \includegraphics[width=\linewidth,height=0.7\linewidth]{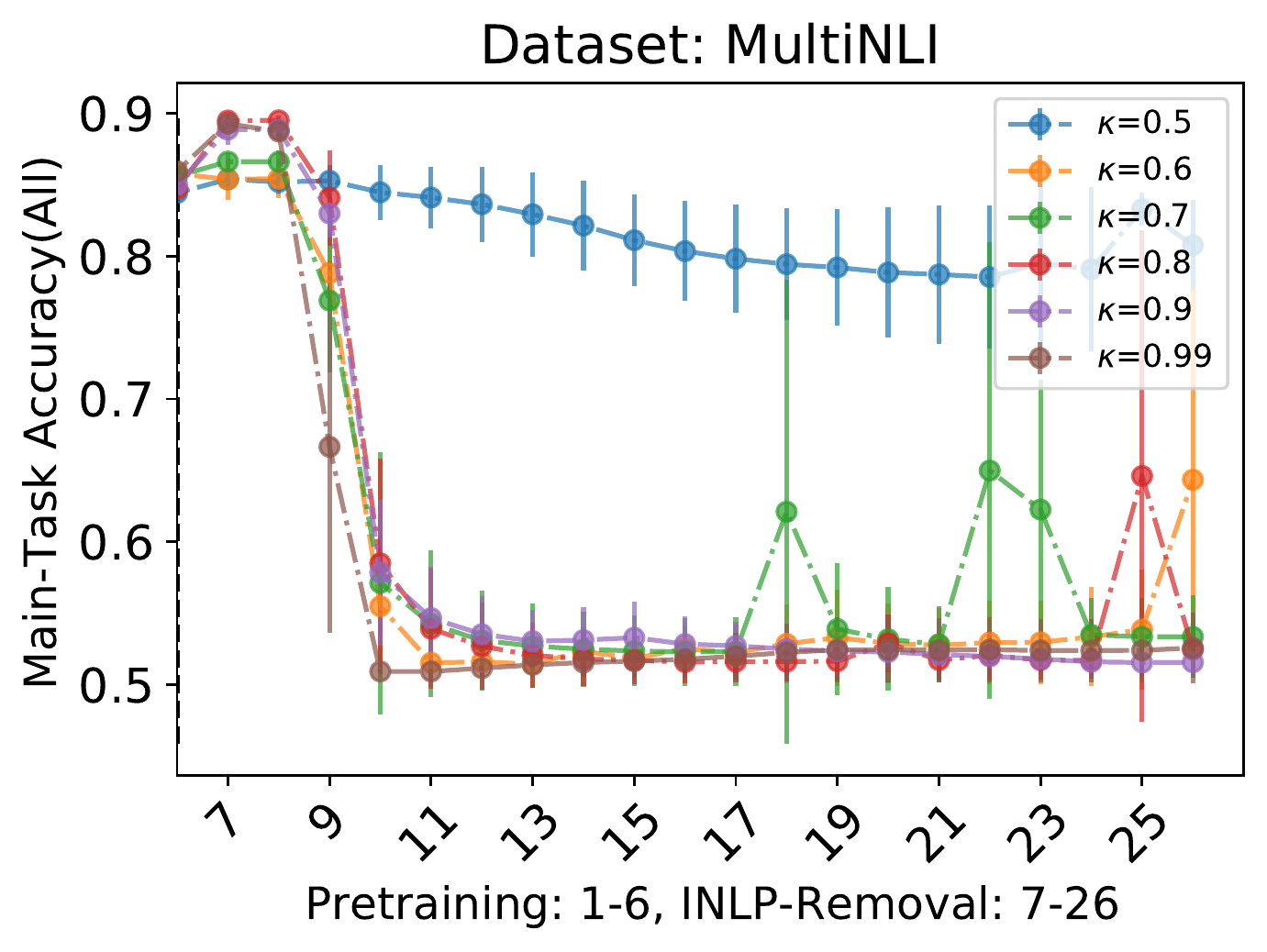}
    	\caption[]%
        {{\small Main-Task Acc(All)}}    
    	\label{fig:inlp_mnli_main_acc} 
    \end{subfigure}
\hfill
\begin{subfigure}[h]{.28\textwidth}
    \centering
    \includegraphics[width=\linewidth,height=0.7\linewidth]{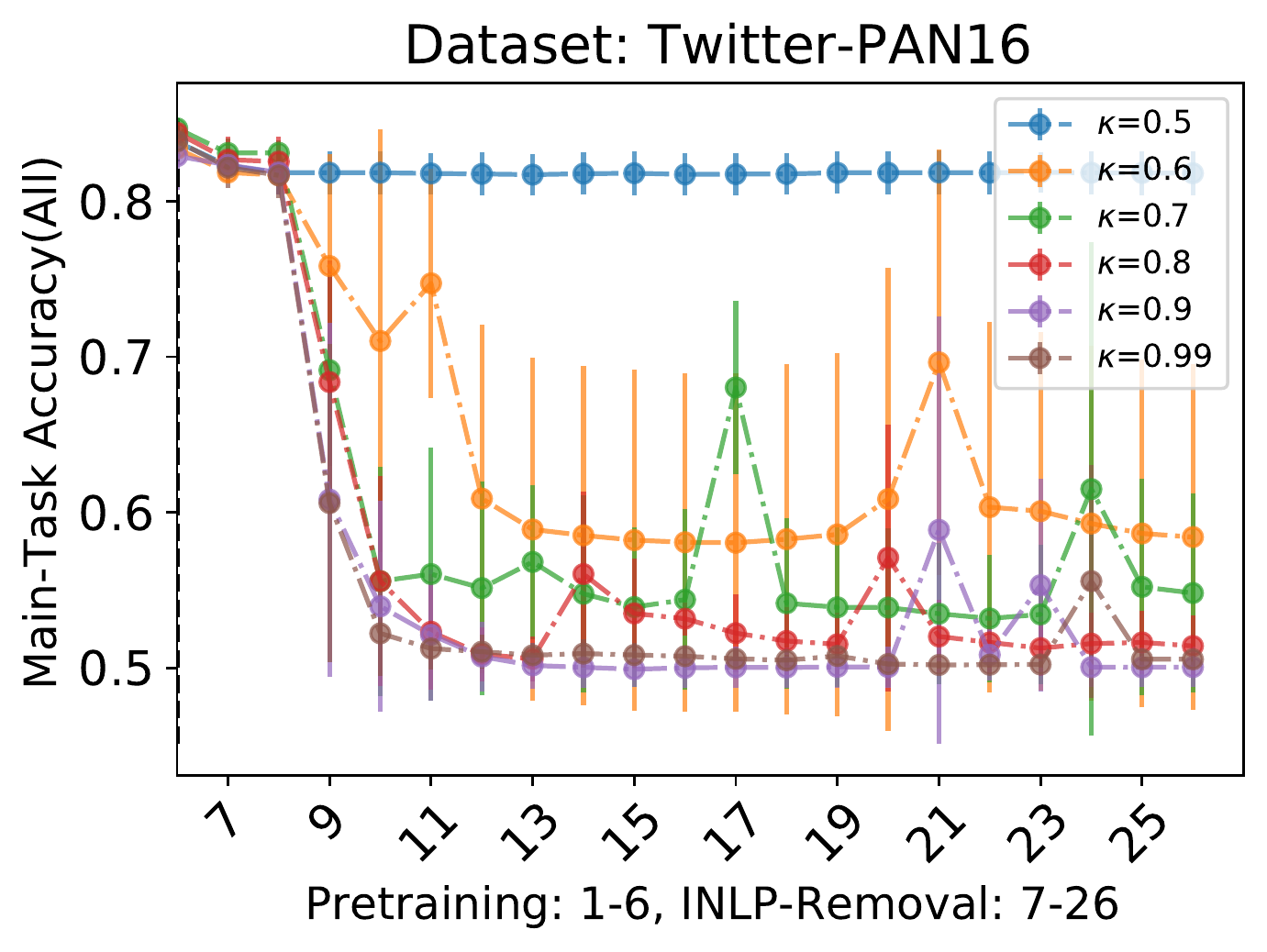}
    	\caption[]%
        {{\small Main-Task Acc(All)}}    
    	\label{fig:inlp_pan_main_acc} 
    \end{subfigure}
\hfill
\begin{subfigure}[h]{.28\textwidth}
    \centering
    \includegraphics[width=\linewidth,height=0.7\linewidth]{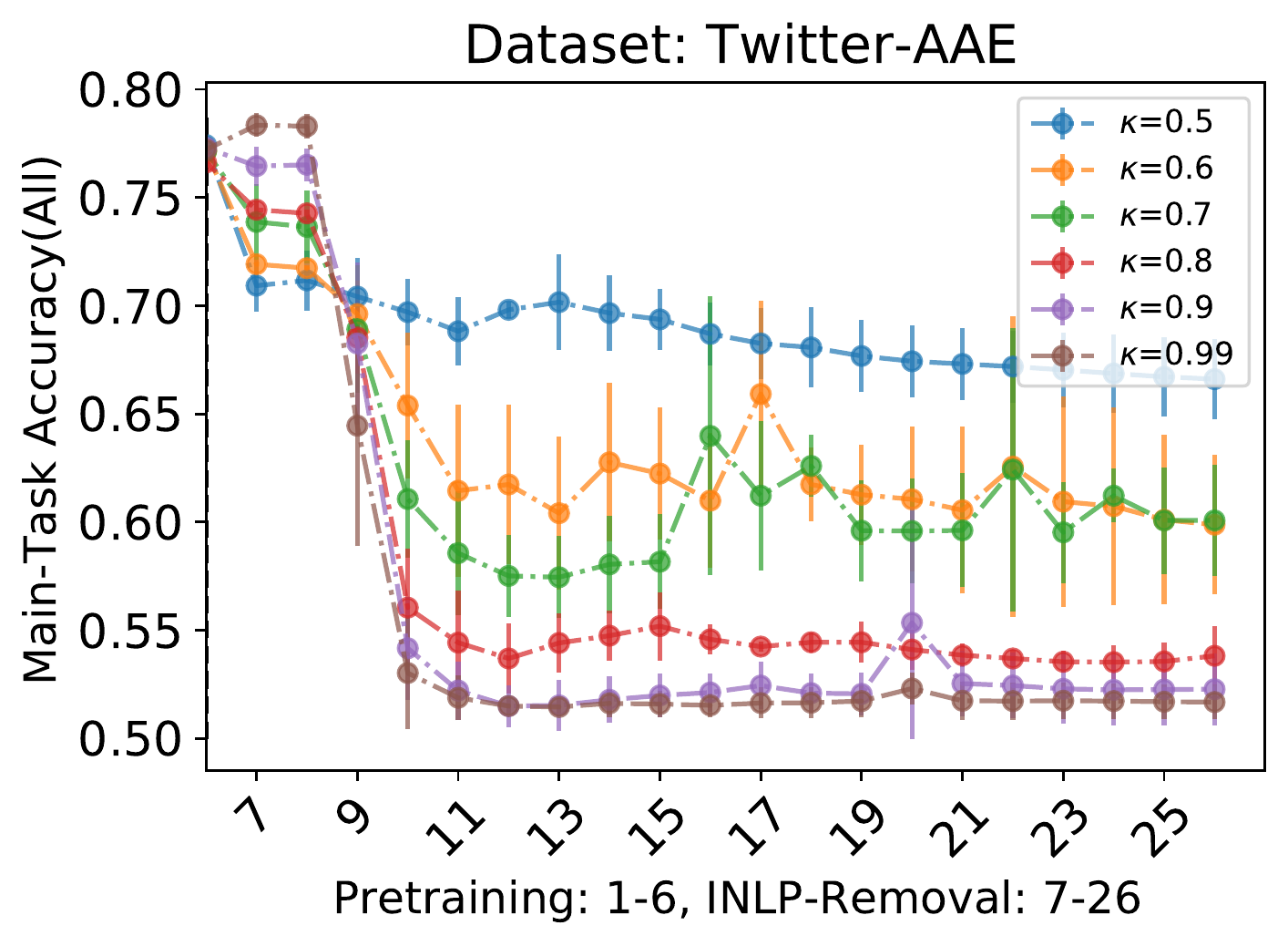}
    	\caption[]%
        {{\small 
        Main-Task Acc(All)}}    
    	\label{fig:inlp_aae_main_acc}
    \end{subfigure}
\hfill

\caption{\textbf{Null space removal failure. } Top row corresponds to the \syn dataset and bottom row shows the failure on three real-world datasets. In each figure, the x-axis shows the \INLP iteration and y-axis shows different evaluation metrics. Colored lines correspond to the different levels of predictive correlation ($\kappa$) in the datasets used by \INLP.  \textbf{(a), (d), (e), (f)} show that as \INLP removal progresses, main-task classifier is getting corrupted which leads to drop in its accuracy \new{(see \refsec{subsec:expt_null_space_failure})}.  
} 
\label{fig:expt_null_space_failure}
\end{figure*}

\textbf{Eventually all task-relevant features are destroyed. } We start with the \syn dataset by training a clean classifier on the main-task and inputting it to \INLP for removing the spurious concept. To keep the conditions favorable for \INLP, both the main task and concept-probing task can achieve 100\% accuracy using their causally derived features respectively. In \reffig{fig:inlp_syn_main_acc}, colored lines show  datasets with different levels of predictive correlation $\kappa$ that are provided to \INLP and iterations 21-40 show individual steps of null-space removal. Since, the given pre-trained classifier was \emph{clean}, i.e.,  not using the concept features, null-space removal shouldn't have any effect on it.  We observe that for all values of $\kappa$, the main-task classifier's accuracy eventually goes to 50\% random guess accuracy implying that the main-task related attribute has been removed by \INLP, as predicted by  \reftheorem{theorem:null_space_failure}. Higher the value of correlation $\kappa$, faster the removal of main-task attribute happens. We obtain a similar pattern over real-world datasets. 
\reffig{fig:inlp_mnli_main_acc},\ref{fig:inlp_pan_main_acc} and \ref{fig:inlp_aae_main_acc} show a decrease in the main-task accuracy  even when the input  classifier for each dataset is ensured to be \emph{clean}: except for $\kappa=0.5$ (no correlation),  all values of $\kappa$ yield a  random-guess classifier after applying \INLP on \mnli while they yield classifiers with less than 60\% accuracy for \pan and \aae. 

\textbf{Early stopping increases the reliance on spurious features.}
To avoid full collapse of the main-task features, a stopping criterion in \INLP is to stop when the main-task classifier's performance drops~\cite{NullItOut:2020}. In \reffig{fig:inlp_syn_pdelta_main} we measure spuriousness, sensitivity of  the \syn main task classifier w.r.t. to the spurious concept, using $\Delta$Prob (see \refsubsec{subsec:exp_dataset_desc} and \ref{subsec:app_metric_desc}). At lower iterations of \INLP, 
$\Delta$Prob  is higher than that of the input classifier. For example, for $\kappa=0.8$, when the main-task classifier's performance drops at iteration 27, the classifier has a high $\Delta$Prob $\approx 25\%$, higher than the input classifier (\reffig{fig:inlp_syn_pdelta_main}). Hence it is possible that stopping prematurely will lead us to a classifier that is more reliant on the spurious concept than it was before, consistent with the statement 1(a) in \reftheorem{theorem:null_space_failure}. 
The reason is the mixing of the main task and \prop-causal features in each iteration, as shown in \reffig{fig:inlp_syn_probe_sp_score} using the spuriousness score of the probing classifier  (\refdef{def:spurriousness_score}). At lower iterations, the spuriousness score of probing classifier increases to a very high value (close to max value 1), for all values of $\kappa$.

\textbf{Failure of causally-inspired probing. } Amnesic Probing~\cite{AmnesicProbing} declares that a \sensitive concept is being used by the model if, after removal of  the \prop from 
from the latent representation using \INLP,  there is a drop in the main-task performance. But \reffig{fig:inlp_syn_main_acc}, \ref{fig:inlp_mnli_main_acc}, \ref{fig:inlp_pan_main_acc} and \ref{fig:inlp_aae_main_acc} show that even when the  input classifier for its corresponding main task is clean,  i.e., does not use the \sensitive concept, \INLP leads to drop in performance of the main-classifier. Hence,  removal-based methods like Amnesic probing will falsely conclude that the \sensitive concept is being used.

\vspace{-1em}
\subsection{Results: Adversarial removal}
\label{subsec:expt_adv_failure}
\vspace{-1em}

We now demonstrate failure of the adversarial-removal method (\ar) in removing the spurious \prop from the main classifier. We train a separate main-task classifier without any adversarial objective with standard cross-entropy loss objective (referred as \textit{ERM}). Then we compare  standard ERM training of the main classifier with the \ar method over the same number of epochs (20). 
We follow the training procedure of ~\cite{AdvRemYoav}
and conduct a hyper-parameter sweep on the adversarial training strength to select the value which is most effective in removing the concept. For details, refer to \refappendix{sec:app_expt_setup}.

\begin{figure*}[t]
\centering

\begin{subfigure}[h]{.28\textwidth}
    \centering
    \includegraphics[width=\linewidth,height=0.7\linewidth]{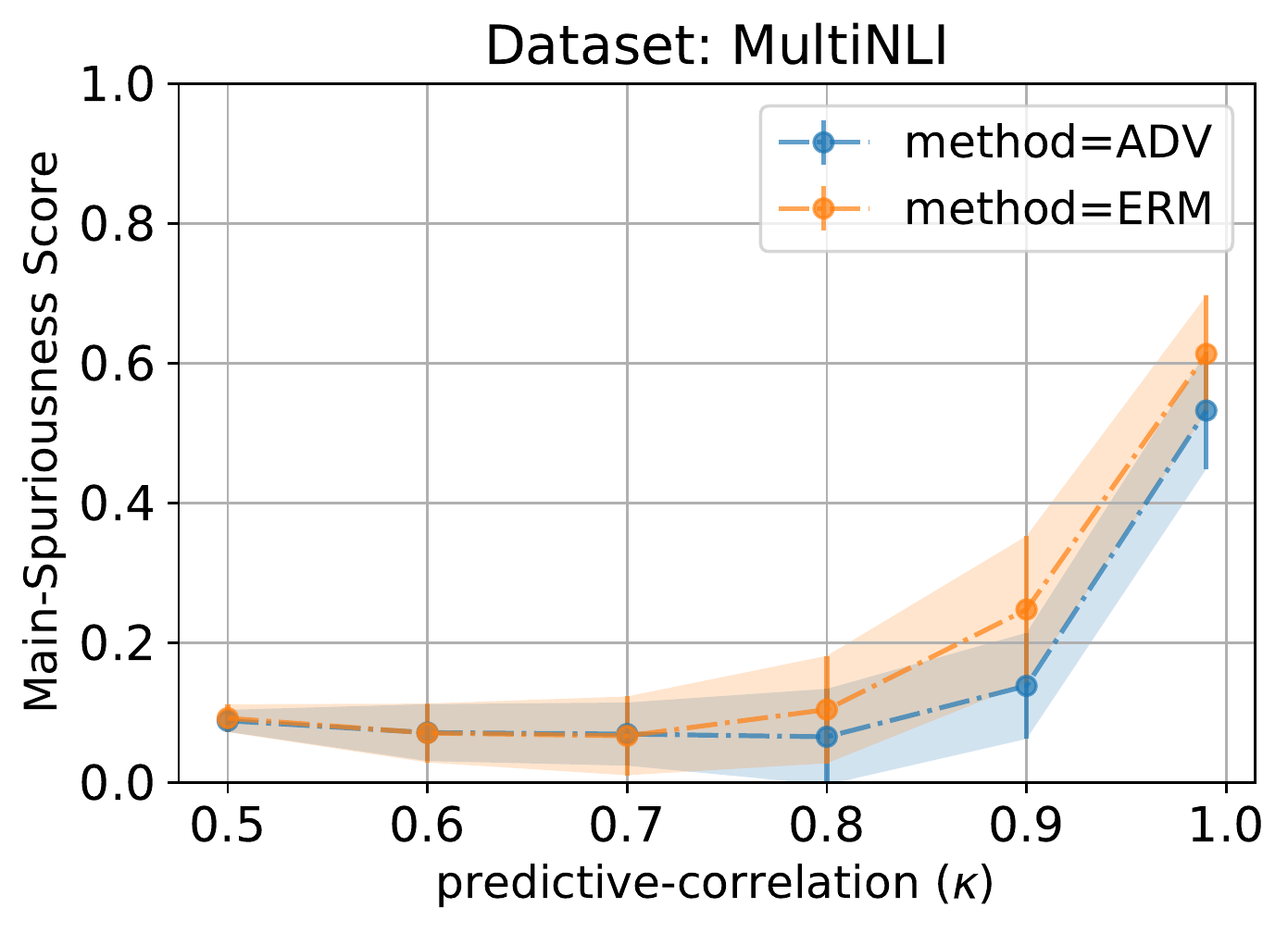}
    	\caption[]%
        {{\small Main Spuriousness Score}}    
    	\label{fig:adv_mnli_sp_score} 
    \end{subfigure}
\hfill
\begin{subfigure}[h]{.28\textwidth}
    \centering
    \includegraphics[width=\linewidth,height=0.7\linewidth]{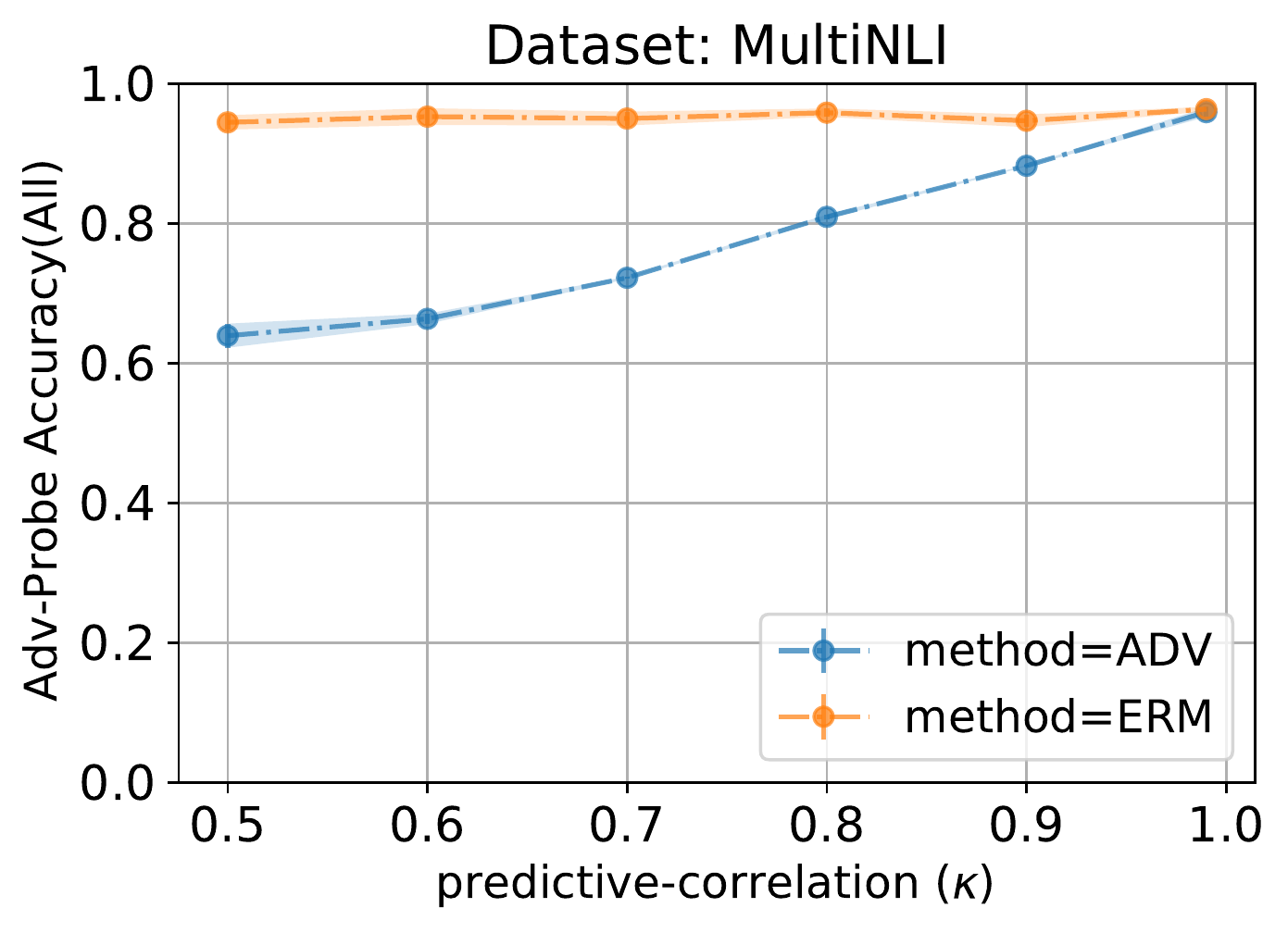}
    	\caption[]%
        {{\small Adv-Probe Acc(All)}}    
    	\label{fig:adv_mnli_probe_acc} 
    \end{subfigure}
\hfill
\begin{subfigure}[h]{.28\textwidth}
    \centering
    \includegraphics[width=\linewidth,height=0.7\linewidth]{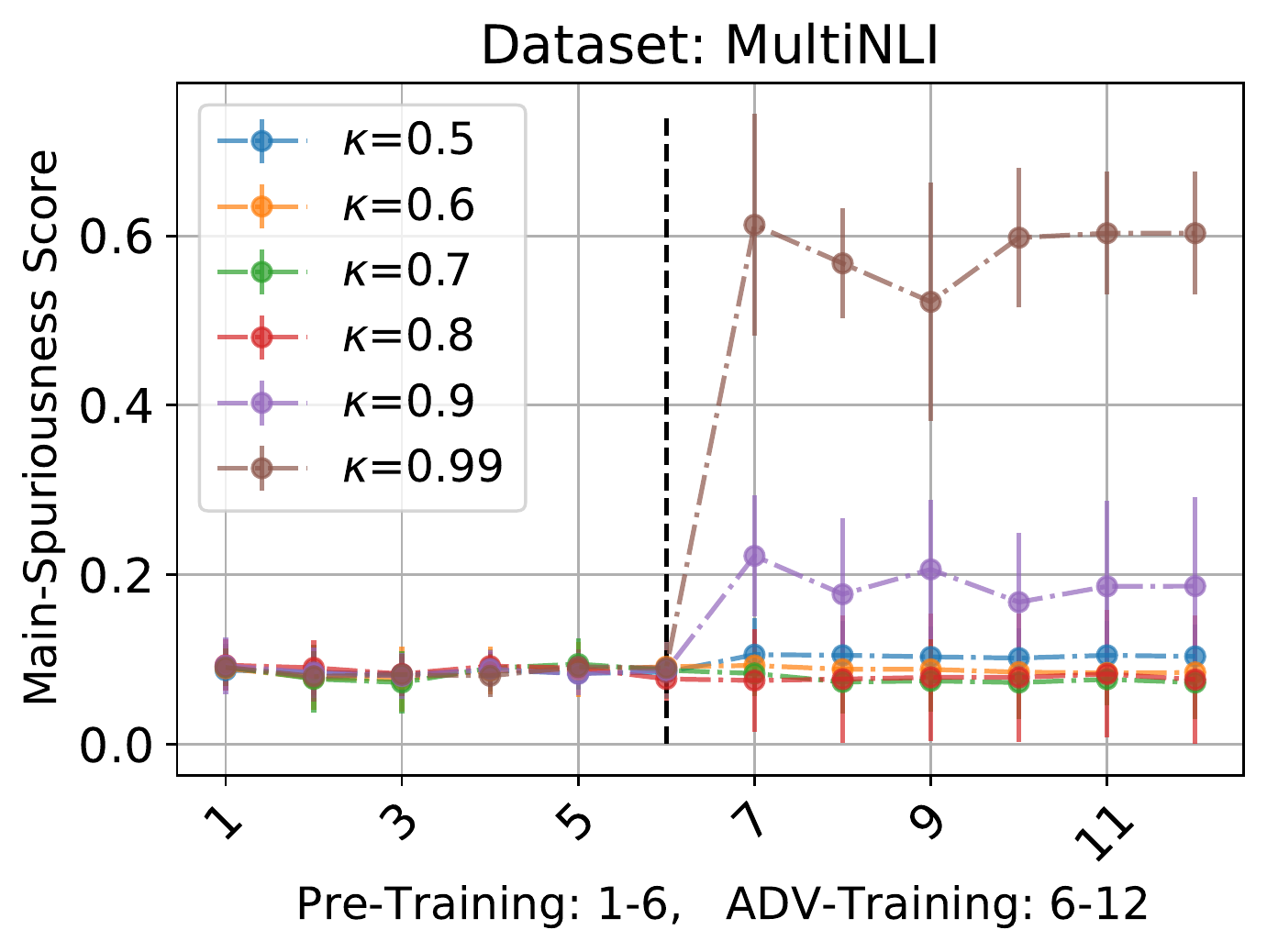}
    	\caption[]%
        {{\small \ar - Clean to Spurious}}    
    	\label{fig:adv_mnli_fairness_sp_score} 
    \end{subfigure}
\medskip
\begin{subfigure}[h]{.28\textwidth}
    \centering
    \includegraphics[width=\linewidth,height=0.7\linewidth]{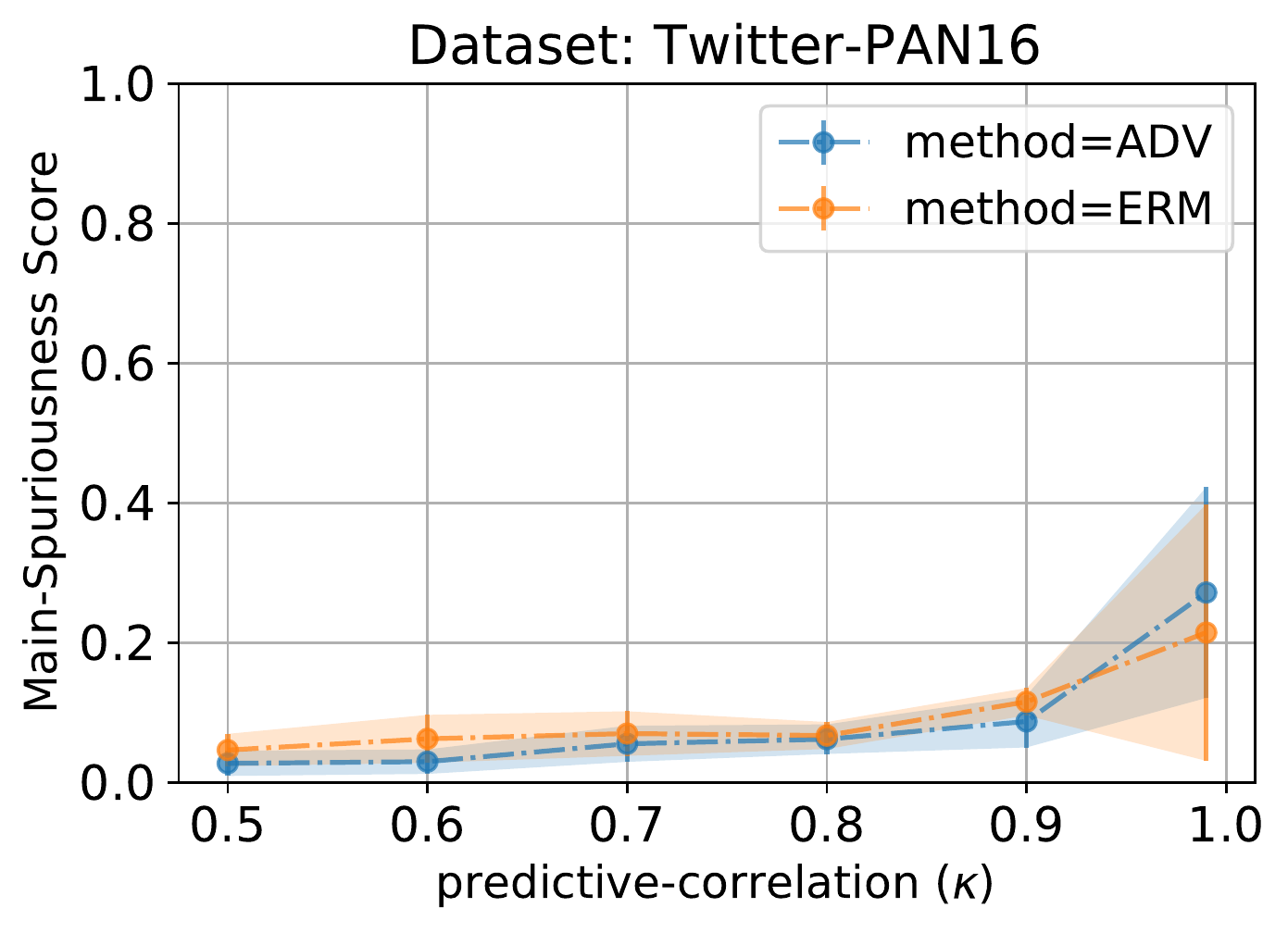}
    	\caption[]%
        {{\small Main Spuriousness Score}}    
    	\label{fig:adv_pan_sp_score} 
    \end{subfigure}
\hfill
\begin{subfigure}[h]{.28\textwidth}
    \centering
    \includegraphics[width=\linewidth,height=0.7\linewidth]{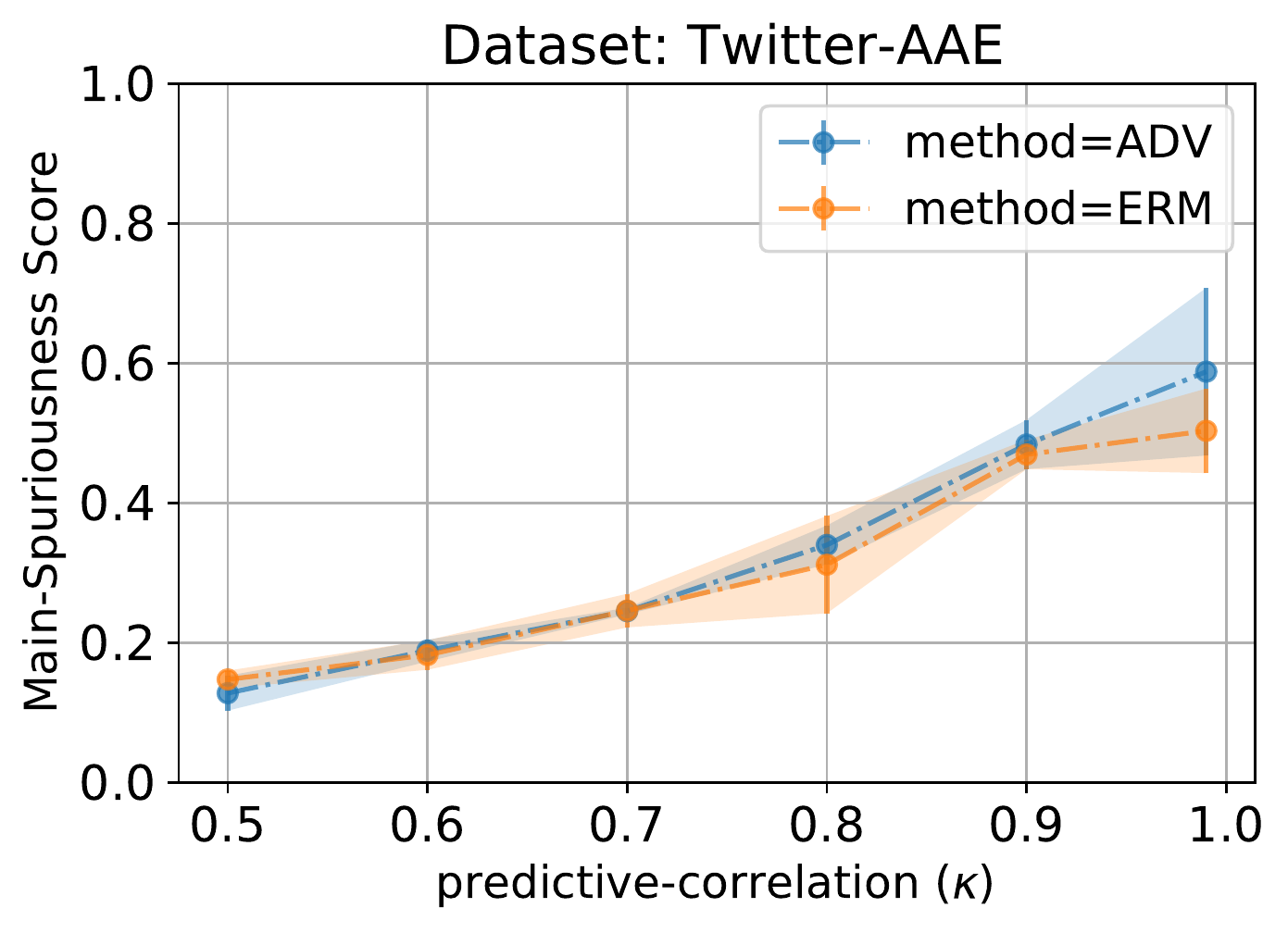}
    	\caption[]%
        {{\small Main Spuriousness Score}}    
    	\label{fig:adv_aae_sp_score} 
    \end{subfigure}
\hfill
\begin{subfigure}[h]{.28\textwidth}
    \centering
    \includegraphics[width=\linewidth,height=0.7\linewidth]{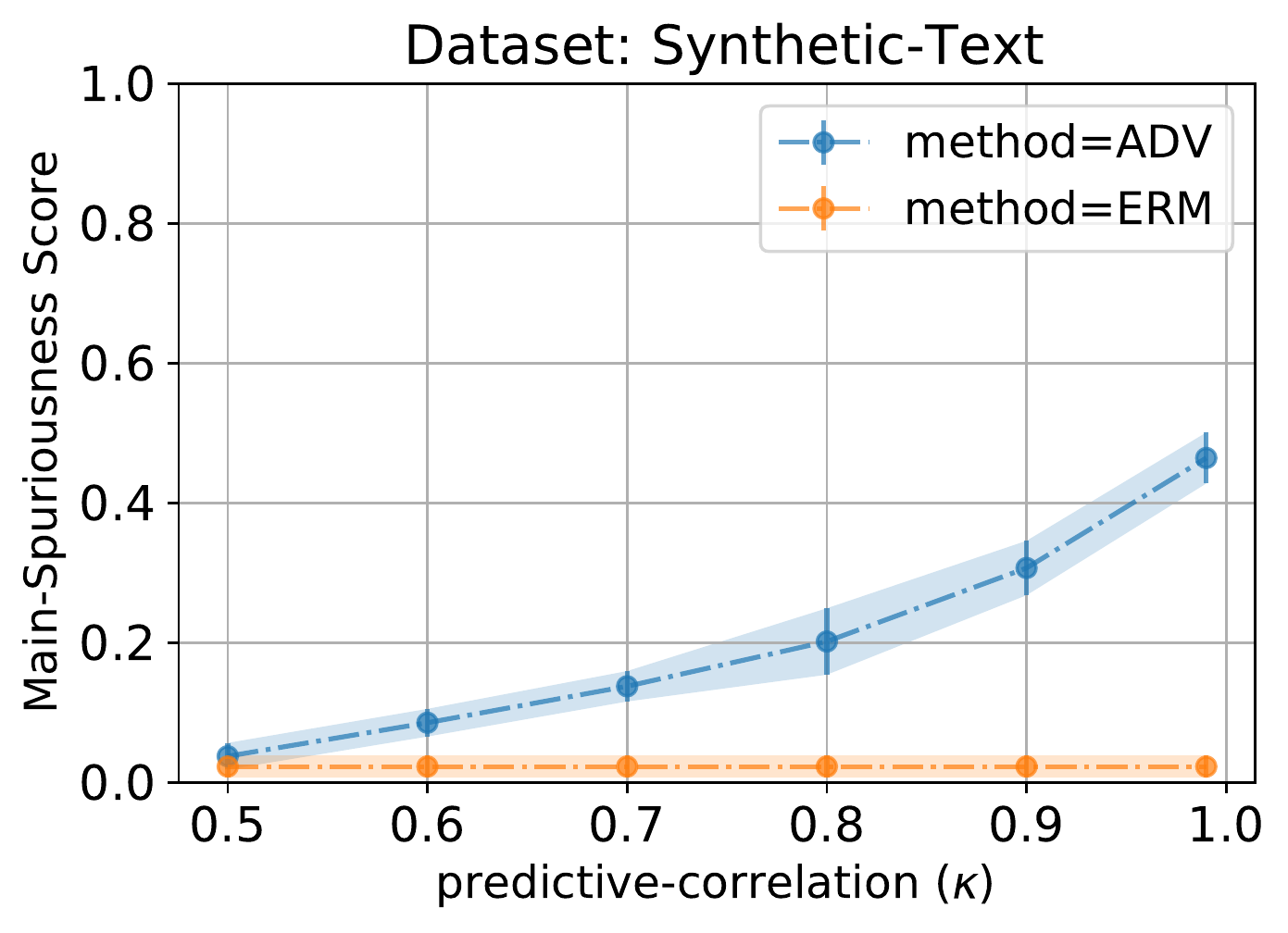}
    	\caption[]%
        {{\small 
        Main Spuriousness Score}}    
    	\label{fig:adv_syn_sp_score} 
    \end{subfigure}
\hfill

\caption{\textbf{Adversarial Removal Failure.} Top row explains  failure of the \ar method on \mnli. Bottom row shows the failure on \pan, \aae, and Synthetic-Text datasets. In each figure,  the x-axis shows different levels of predictive-correlation ($\kappa$) between the main task and \prop labels in the dataset used by \ar and the y-axis shows different evaluation metrics. Orange lines denote the ERM model and the blue lines denote the model trained using \ar. \textbf{(b), (d), (e), (f)} show that \ar is unable to completely remove the spurious concept from the  main task classifier. 
\textbf{(c)} shows a stronger failure where the \ar method introduces spuriousness into a clean input classifier.}
\label{fig:expt_adv_rem_failure}
\end{figure*}
\vspace{-0.5em}

\textbf{Cannot remove the spurious \prop fully.} 
For \mnli, \reffig{fig:adv_mnli_sp_score} shows the spuriousness score (\refdef{def:spurriousness_score}) of ERM and \ar classifiers  as we vary the predictive correlation ($\kappa$) between the main-task label and \sensitive concept label in the training dataset. 
While the spuriousness score for classifier trained using \ar (blue curve) is lower than that of ERM for all values of $\kappa$, it is substantially away from zero. Thus, the \ar method fails to  completely remove the spurious \prop completely from the latent representation. By inspecting the   \prop probing classifier accuracy for ERM and \ar in \reffig{fig:adv_mnli_probe_acc}, we obtain a possible explanation. The probe accuracy after adversarial training  doesn't decrease to 50\% but stops at accuracy proportional to the predictive correlation $\kappa$. This is expected since even if the \ar would have been successful in removing the concept-causal features, the main-task features would still be predictive of the \prop label by $\kappa$ due to the spurious correlation between them. 
However, the converse is not true: an accuracy of $\kappa$ does not imply that the concept is fully removed. The results substantiate the first statement of \reftheorem{theorem:adv_removal_failure}: given two representations where one (\emph{desired}) does not have \prop features while the other (\emph{undesired}) contains the \prop features, the undesired one may be better for the main task accuracy even as both may have the same probing accuracy. 
\reffig{fig:adv_pan_sp_score}, \ref{fig:adv_aae_sp_score} and \ref{fig:adv_syn_sp_score} show the spuriousness score of \ar in comparison to classifier trained with ERM on \pan, \aae and \syn datasets respectively. The failure of \ar is worse here: there is no significant reduction in spuriousness score for \ar in comparison to ERM. For the \syn dataset, ERM has zero spuriousness score but \ar  has non-zero score. We expand more on this observation and include  additional  results on adversarial removal in 
\refappendix{subsec:app_extended_adversarial_removal_results}. 

\textbf{\ar makes a clean classifier use the spurious concept. }  In \reffig{fig:adv_mnli_fairness_sp_score} we provide a clean main task classifier (see \refsec{subsec:app_extended_adversarial_removal_results} for training details) as input to \ar method. For all values of $\kappa$, the input classifier's spuriousness score is low (iteration 1-6). From iteration 7 onwards, the \ar method corrupts the clean classifier as shown by increasing spuriousness scores. 
 For more results,  see \reffig{fig:app_adv_mnli_unfair} in \refsec{subsec:app_extended_adversarial_removal_results}.


\para{Comparison with previous work.}
If post-removal the latent representation used by the main-task classifier is still predictive of the removed concept, \cite{AdvRemYoav} claimed it as a failure of the adversarial removal method. However, this claim may not be correct since a feature could be present in the latent space and yet not used by model \cite{ravichander-etal-2021-probing}. 
Our proposed spuriousness score metric avoids this limitation. 

\textbf{Ablations. }In Appendix,  we report results on using BERT instead of RoBERTa as the input encoder (\refappendix{subsec:app_extended_null_space_results}, \ref{subsec:app_extended_adversarial_removal_results}), the effect of using different modeling choices like loss-function, regularization, e.t.c. 
(\refappendix{subsec:app_extended_synthetic_results}), and the behavior of probing classifiers when \prop is not present in latent space (\refappendix{subsec:extended_probing_result}).

\vspace{-0.6em}
\section{Related work}
\vspace{-0.7em}

\textbf{\Prop removal methods.} When the removal of a \prop can be simulated in input space (e.g., in tabular data or simpler concepts), removing a concept directly using  data augmentation~\cite{kaushik2021learningthediff} or gradient regularization~\cite{rightfortherightreasonfinaledoshi,kancheti2022matching} can work. However, concept removal is non-trivial when change in a \prop cannot be propagated via change in  input tokens.   Combining the ideas of  null space and adversarial removal \cite{NullItOut:2020,ruslanINLP,AdvRemNeubig,AdvRemYoav}, methods like \cite{LinearConceptErasure,KernelConceptErasure} restrict the adversarial function to be a projection operation and derive a closed form solution. 
Other approaches use explanations of the classifier's prediction for concept removal~\cite{han-tsvetkov-2021-influence-tuning}. Our work highlights the difficulty of building an estimator for the features causally derived from a concept, as a general limitation for concept removal.

\textbf{Limitations of a probing classifier for model interpretability.}
 We also contribute to the growing literature on the limitations of a probing classifier's accuracy in capturing whether the main classifier is using a \prop~\cite{ProbingSurvey}.  
It is known that probing classifiers capture not just the concept but any other features that may be correlated with it~\cite{GroupDRO,IRM:2020,kaushik2021learningthediff,victorStressTest}. As a result, many improvements have been proposed to better estimate whether a concept is being used, including the use of control labels or datasets~\cite{hewitt-liang-2019-control,ravichander-etal-2021-probing}. Parallelly, new causality-inspired probing methods~\cite{AmnesicProbing}  compare the main task accuracy on a representation without the concept constructed using the null space removal method. The hope is that such improvements can make probing more robust. 
Our results question this direction. To demonstrate the fundamental unreliability of probing classifier, we construct a setup that is most favorable for learning only the concept's features and still find that 
learned probing classifier includes non-zero weight for other features, limiting effectiveness of any interpretation method based on it. 

\section{Conclusion}

\vspace{-0.5em}
Our theoretical and empirical evaluations show that it is difficult to create a probing-based \new{explainability and } removal method due to the fundamental limitation of learning a ``clean'' probing classifier. 
We  recommend two tests for validating removal methods. 
First, we provide a sanity-check: any reasonable removal method should not modify a ``clean'' classifier that does not use any spurious features to produce a final classifier that uses those features. 
Second, we propose a spuriousness score that can be used to evaluate the dependence of any classifier on spurious features. 
As a future step, we encourage the community to develop more such 
sanity-check tests to evaluate proposed methods. 

Alternatively, we point attention to other approaches that may provide better guarantees for concept removal. An example is extending data augmentation techniques like counterfactual data augmentation (\cite{kaushik2021learningthediff,dash2022evaluating}) to non-trivial concepts. For a given training point that may include a spurious correlation, a new data point is generated that breaks the correlation but keeps the semantics of the rest of the text identical (hence the name, ``counterfactual''). This can be done by human labeling or handcrafted rules for modifying text (e.g., Checklists \cite{checklist}). Then the main classifier is regularized to have the same representation for such pairs of inputs (\cite{counterfactual_reg,mahajan2021domain}). By construction, with good quality pairs, such a method will not remove task-relevant features and will satisfy the sanity checks listed above. 
That said, a limitation is that the removal quality will depend on the diversity of the counterfactual examples generated and whether they capture all aspects of the spurious concept. 
Another direction is to take inspiration from the algorithmic fairness literature \cite{fairness_lit_future_hardt,fairness_survey_future_mehrabi} and focus on the predictions of the classifier rather than the representation. Compared to removal in latent space, enforcing certain fairness properties on model predictions is a more well-formed task,  more interpretable, and definitely more relevant if the final goal is fair decision making.

\textbf{Limitations. }
\label{sec:limitation_of_our_work}
A limitation of our theoretical work is assuming frozen or non-trainable latent representation which makes the analysis of task-classifier trained on top of them relatively easier. We address this limitation in our empirical work where we do not make such assumptions. Also, our work addresses failure modes of two popular methods, null space removal and adversarial removal.
We conjecture that any other method based on probing classifiers will lead to similar failure modes.
\bibliographystyle{plain} 
\bibliography{main_neurips_2022} 


\clearpage
\appendix
\section{Broader Impact and Ethical Consideration}
\label{sec:app_broader_impact}
Removal of spurious or sensitive concepts is an important problem to ensure that machine learning classifiers generalize better to new data and are fair towards all groups. 
We found multiple limitations with current removal methods and recommend caution against the use of these methods in practice.
\section{Probing and Main Classifier Failure Proofs}
\label{sec:app_probing}

\subsection{Notation and Setup: Max-margin Classifier}
\label{subsec:app_max_margin_setup}
We assume that encoder $h:\bm{X}\rightarrow \bm{Z}$, mapping the input to latent representation is frozen/non-trainable. Thus for every input $\bm{x}^{i}$ in the dataset $\mathcal{D}$, we have a corresponding latent representation $\bm{z}^{i}$ which is fixed. Also, the latent representation $\bm{Z}$ is disentangled i.e  $\bm{z}=[\bm{z}_{m},\bm{z}_{p}]$ where $\bm{z}_{m}$ are the main task features,  i.e.,  causally derived from the main task label and $\bm{z}_{p}$ are the concept-causal features, causally derived from the \prop label. Let $c_{p}(\bm{z})=\bm{w}_{p}\cdot \bm{z}_{p} + \bm{w}_{m}\cdot \bm{z}_{m}$ be the linear probing classifier which we train using max-margin objective. The hyperplane $c_{p}(\bm{z})=0$ is the decision boundary of this linear classifier. The points which fall on one side of the decision boundary ($c_{p}(\bm{z})>0$) are assigned one label (say positive label 1) and the rest are assigned another label (say negative label -1). The \emph{margin} $\mathcal{M}_{c_{p}}$ \abhinav{remove this symbol if it is not needed later} of this probing classifier ($c_{p}(\bm{z}$)) is the distance of the closest latent representation ($\bm{z}$) from the decision boundary. The points which are closest to the decision boundary are called the \emph{margin points}. The distance of a given latent representation $\bm{z}^i$ having class label $y^i$, where $y^i\in\{-1,1\}$, from the decision boundary is given by 
\begin{equation}
\label{eq:margin_distance}
    \mathcal{M}_{c_{p}}(\bm{z}^i) := \frac{m_{c_{p}}(\bm{z}^i)}{\norm{\bm{w}}} = \frac{y^{i}_p\cdot c_{p}(\bm{z}^i)}{\norm{\bm{w}}} = \frac{y^{i}_p\cdot (\bm{w}\cdot \bm{z}^{i} + b)}{\norm{\bm{w}}} 
\end{equation}
where $\norm{\bm{w}}$ is the L2 norm of parameters $\bm{w}=[\bm{w}_{p},\bm{w}_{m}]$ of the probing classifier $c_{p}(\bm{z})$.

\paragraph{Max-Margin (MM):}
Then the max-margin classifier is trained by optimizing the  following objective:
\begin{equation}
\label{eq:max_margin}
    \argmax_{\bm{w},b} \Big{\{} \min_{i} \mathcal{M}_{c_{p}}(\bm{z}^i)  \Big{\}}
\end{equation}
For ease of exposition we convert this objective into multiple equivalent forms. To do this we observe that scaling the parameters of $c_{p}(\bm{z})$ by a positive scalar $\gamma$ i.e $\bm{w} \rightarrow \gamma \bm{w}$ and $b \rightarrow \gamma b$ does not change the distance of the point ($\mathcal{M}_{c_{p}}(\bm{z}^i)$) from the decision boundary.

\paragraph{MM-Denominator Version:}
We can use this freedom of scaling the parameters to set $m_{c_{p}}(\bm{z}^i)=1$ for the closest point of any given probing classifier $c_{p}(\bm{z})$ , thus all the data points will satisfy the constraint,
\begin{equation}
\label{eq:max_margin_denominator_constraint}
    m_{c_{p}}(\bm{z}^i) = y^i\cdot c_{p}(\bm{z}^i) \geq 1
\end{equation}
 giving us the final max-margin objective:
\begin{equation}
\label{eq:max_margin_denominator}
    \argmax_{\bm{w}} \Big{\{}  \frac{1}{\norm{\bm{w}}}  \Big{\}}
\end{equation}
under the constraint $m_{c_{p}}(\bm{z}^i)\geq 1$ corresponding to all the points in the dataset.

\paragraph{MM-Numerator Version:}
Alternatively, one can choose $\gamma$ such that $\norm{\bm{w}} = c$ where $c\in \mathbb{R}$ is some constant value.  The the modified objective becomes:
\begin{equation}
\label{eq:max_margin_numerator}
    \argmax_{\bm{w},b} \Big{\{}  m_{c_{p}}(\bm{z}^i)  \Big{\}}
\end{equation}
under constraint $\norm{\bm{w}} = c$ which is usually set to $1$.

We will use one of these  formulations in our proofs  based on the ease of exposition and give a clear indication when we do so. One can refer to Chapter $7$, Section $7.1$ of \cite{PatternRecognitionBook:2006} for further details about max-margin classifiers and different formulations of the  max-margin objective.

\subsection{Problem with learning a \emph{clean} main-task classifier}
\label{subsec:app_generalized_probing_result}

In this section, we will restate the assumptions and results of \reflemma{lemma:sufficient_condition} for the main-task classifier (instead of the probing classifier) and show  that the same results will hold.

\refassm{assm:disentagled_latent} remains the same since it is made on the latent-representation being disentangled and frozen/non-trainable. Next, parallel to \refassm{assm:fully_pred_inv}, we show that even when main-task feature is 100\% predictive of main-task and is linearly separable, the trained main-task classifier will also use the \prop-causal features. Formally,

\begin{assumption}[main-task feature Linear Separability]
\label{assm:fully_pred_inv_main_task}
The main-task features ($\bm{z}_{m}$) of the latent representation ($\bm{z}$) for every point are linearly separable/fully predictive for the main-task  labels $y_{m}$, i.e $y_{m}^{i}\cdot (\hat{\bm{\epsilon}}_{m}\cdot \bm{z}_{m}^{i} + b_{m}) > 0$ for all datapoints $(\bm{x}^{i},y_{m}^{i})$ for some \new{ $\hat{\bm{\epsilon}}_{m}\in \mathbb{R}^{d_{m}}$ and $b_m\in \mathbb{R}$}.  For the case of zero-centered latent space, we have $b_{m}=0$.
\end{assumption}

Next similar to \refassm{assm:spurious_linear_separably}, we define the spurious correlation between main-task and \prop label: a function using only $\bm{z}_{p}$ may also be able to classify correctly on some non-empty subset of points w.r.t. main-task label ($y_{m}$).

\begin{assumption}[Main-Task Spurious Correlation] 
\label{assm:spurious_linear_separably_main_task}
For a subset of training points $\mathcal{S} \subset \mathcal{D}_m$, main-task label $y_{m}$ is linearly separable using $\bm{z}_p$  i.e $y_m^{i}\cdot(\hat{\bm{\epsilon}}_{p}\cdot \bm{z}^i_{p} + b_{p}) > 0$ for some $\hat{\bm{\epsilon}}_{p}\in \mathbb{R}^{d_{p}}$ and $b_{p}\in \mathbb{R}$. For the case of zero-centered latent space we have $b_{p}=0$.
\end{assumption}

Next we rephrase \reflemma{lemma:sufficient_condition} which shows that for only a \emph{few special} points if the \prop-causal features $\bm{z}_{p}$ are linearly-separable w.r.t. to main task classifier $y_{m}$ (\refassm{assm:spurious_linear_separably_main_task}), then the main-task classifier $c_{m}(\bm{z})$ will use those features.

\begin{lemma}[Sufficient Condition for Main-task Classifier]
\label{lemma:sufficient_condition_main_task}
Let the latent representation be frozen and disentangled such that $\bm{z}=[\bm{z}_{m},\bm{z}_{p}]$ (\refassm{assm:disentagled_latent}), where main-task-features $\bm{z}_{m}$ be fully predictive  (\refassm{assm:fully_pred_inv_main_task}). Let $c_{m}^{*}(\bm{z})=\bm{w}_{m}\cdot \bm{z}_{m}$ be the desired/clean linear main-task classifier trained using max-margin objective (\refappendix{subsec:app_max_margin_setup}) which only uses $\bm{z}_m$ for its prediction. Let $\bm{z}_{p}$ be the spurious feature s.t. for the margin points of $c_{m}^{*}(\bm{z})$, $\bm{z}_{p}$ be linearly-separable w.r.t. task label $y_{m}$ (\refassm{assm:spurious_linear_separably_main_task}).  Then, assuming the latent space is centered around $\bm{0}$ \new{(i.e. $b_m=0$ and $b_p=0$)}, the main-task classifier trained using max-margin objective will be of form $c_m(\bm{z})=\bm{w}_{m}\cdot \bm{z}_{m} + \bm{w}_{p}\cdot \bm{z}_{p}$ where $\bm{w}_{p}\neq \bm{0}$.    
\end{lemma}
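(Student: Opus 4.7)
The statement of \reflemma{lemma:sufficient_condition_main_task} is exactly \reflemma{lemma:sufficient_condition} with the roles of $\bm{z}_m$ and $\bm{z}_p$ (and of the assumptions \ref{assm:fully_pred_inv}/\ref{assm:fully_pred_inv_main_task} and \ref{assm:spurious_linear_separably}/\ref{assm:spurious_linear_separably_main_task}) swapped. I would therefore mirror the proof of \reflemma{lemma:sufficient_condition} verbatim under this renaming: argue by contradiction, assuming the max-margin main-task classifier takes the ``clean'' form $c_m^{*}(\bm{z}) = \bm{w}_m \cdot \bm{z}_m$ with $\bm{w}_p = \bm{0}$, and construct a perturbation that achieves a strictly larger minimum margin.

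Concretely I would pass to the MM-Numerator formulation (\refeqn{eq:max_margin_numerator}) so that $\lVert \bm{w}\rVert$ is held fixed at $\lVert \bm{w}_m\rVert$, and introduce the one-parameter family $c(\bm{z};\eta) = \alpha(\eta)\,\bm{w}_m \cdot \bm{z}_m + \eta\, \hat{\bm{\epsilon}}_p \cdot \bm{z}_p$, where $\hat{\bm{\epsilon}}_p$ is the separating direction from \refassm{assm:spurious_linear_separably_main_task} and $\alpha(\eta)$ is chosen so the norm constraint is preserved. Since $\alpha(0)=1$ and $\alpha'(0)=0$ (the first-order norm change is quadratic in $\eta$), the signed margin of each training point $m^i(\eta) = \alpha(\eta)\, y_m^i(\bm{w}_m \cdot \bm{z}_m^i) + \eta\, y_m^i(\hat{\bm{\epsilon}}_p \cdot \bm{z}_p^i)$ has derivative $\tfrac{d m^i}{d\eta}\big|_{\eta = 0^+} = y_m^i(\hat{\bm{\epsilon}}_p \cdot \bm{z}_p^i)$. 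For every margin point $i \in \mathcal{S}$ this derivative is strictly positive by \refassm{assm:spurious_linear_separably_main_task}, so every margin point gains margin for small $\eta > 0$.

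For the non-margin points $j \notin \mathcal{S}$, the initial signed margin $m^j(0)$ strictly exceeds $M^{*} := \min_i m^i(0)$, and $m^j(\eta)$ is continuous in $\eta$; since the training set is finite, there is a uniform $\eta^{*} > 0$ such that for every $\eta \in (0, \eta^{*})$ the non-margin points stay strictly above the updated margin points. Combined with the previous paragraph, $\min_i m^i(\eta) > M^{*}$ for such $\eta$, which contradicts the max-margin optimality of $c_m^{*}$. Hence the optimizer must in fact have $\bm{w}_p \neq \bm{0}$, i.e., use the \prop-causal features.

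\textbf{Main obstacle.} The delicate technical point is handling the tension between the norm-preserving rescaling $\alpha(\eta) < 1$ (which costs margin on every point at second order in $\eta$) and the gain $\eta\, y_m^i(\hat{\bm{\epsilon}}_p \cdot \bm{z}_p^i)$ on margin points (which is first order in $\eta$); I would have to verify that for $\eta$ in a sufficiently small right-neighborhood of $0$, the first-order gain on margin points dominates the second-order loss, and simultaneously that adversarial non-margin points (those with $y_m^j(\hat{\bm{\epsilon}}_p \cdot \bm{z}_p^j) < 0$) do not fall below $M^{*}$. Both follow from a finite-dataset compactness argument, and the centering assumption $b_m = b_p = 0$ keeps the family purely linear so no bias term enters the perturbation analysis.
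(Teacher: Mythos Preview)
Your proposal is correct and follows essentially the same approach as the paper: both construct a norm-preserving perturbation of the clean classifier along the spurious direction $\hat{\bm{\epsilon}}_p$, then show margin points strictly gain (via \refassm{assm:spurious_linear_separably_main_task}) while non-margin points retain enough slack. The only cosmetic difference is that the paper parameterizes the family explicitly as $c_\alpha(\bm{z}) = \alpha(\bm{w}_m\cdot\bm{z}_m) + \lVert\bm{w}_m\rVert\sqrt{1-\alpha^2}(\hat{\bm{\epsilon}}_p\cdot\bm{z}_p)$ and derives closed-form lower bounds on $\alpha$ for each case, whereas you reach the same conclusion via the differential observation $\alpha'(0)=0$ plus finite-dataset compactness; these are equivalent realizations of the same perturbation argument.
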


The proof of \reflemma{lemma:sufficient_condition_main_task} is identical to \reflemma{lemma:sufficient_condition} and is  provided in \refappendix{subsec:app_proof_sufficient_condition}.  


\subsection{Proof of Sufficient Condition:  \reflemma{lemma:sufficient_condition} and \reflemma{lemma:sufficient_condition_main_task}}
\label{subsec:app_proof_sufficient_condition}

\probthm*

In this section we prove that, given the assumption in \reflemma{lemma:sufficient_condition} is satisfied, they are sufficient for a probing classifier $c_{p}(\bm{z})$ to use the spuriously correlated main-task feature $\bm{z}_{m}$. See \refsec{subsec:app_max_margin_setup} for detailed setup and max-margin training objective. Also, we could use the same line of reasoning to prove a similar result for the main-task classifier i.e. when conditions in \reflemma{lemma:sufficient_condition_main_task} are satisfied, the main-task classifier will use the spuriously correlated \prop-causal feature $\bm{z}_{p}$. To keep the proof general for both the lemmas, we prove the result for a general classifier $c(\bm{z})$ trained to predict a task label $y$. Here the latent representation $\bm{z}$ be of form $\bm{z}=[\bm{z}_{inv},\bm{z}_{sp}]$ where $\bm{z}_{inv}$ are the features which are causally-derived from the \new{task concept} (``invariant'' features) and $\bm{z}_{sp}$ be the features spuriously correlated to the task label $y$. With respect to probing classifier $c_{p}(\bm{z})$ in  \reflemma{lemma:sufficient_condition} $\bm{z}_{inv}\coloneqq \bm{z}_{p}$ and $\bm{z}_{sp}\coloneqq \bm{z}_{m}$. Similarly, for the main-task classifier in \reflemma{lemma:sufficient_condition_main_task}, $\bm{z}_{inv}\coloneqq \bm{z}_{m}$ and $\bm{z}_{sp}\coloneqq \bm{z}_{p}$. For ease of exposition, we define two categories of classifiers based on which features they use:

\begin{definition}[Purely-Invariant Classifier] 
\label{def:purely_invariant}
A linear classifier of form $c(\bm{z}) = \bm{w}_{inv}\cdot \bm{z}_{inv} + \bm{w}_{sp} \cdot \bm{z}_{sp} + b$ is called \emph{"purely-invariant"} if it does not use the spurious features $\bm{z}_{sp}$ i.e.,  $\bm{w}_{sp}=\bm{0}$.
\end{definition}

\begin{definition}[Spurious-Using Classifier]
\label{def:spurious_using}
A linear classifier of form $c(\bm{z}) = \bm{w}_{inv}\cdot \bm{z}_{inv} + \bm{w}_{sp} \cdot \bm{z}_{sp} + b$ is called \emph{"spurious-using"} if it  uses the spurious features $\bm{z}_{sp}$ i.e. $\bm{w}_{sp}\neq \bm{0}$. 
\end{definition}

\begin{proof}[Proof of \reflemma{lemma:sufficient_condition} and \ref{lemma:sufficient_condition_main_task}]
Let $c_{inv}(\bm{z})=\bm{w}_{inv}\cdot \bm{z}_{inv}$  be the \emph{clean}/purely-invariant classifier trained using the max-margin objective using the \emph{MM-Denominator} formulation given in \refeqn{eq:max_margin_denominator} such that $\bm{w}_{inv}\neq \bm{0}$. The classifier $c_{inv}(\bm{z})$ is 100\% predictive of the task labels $y$ (from \refassm{assm:fully_pred_inv} for the probing task  or \refassm{assm:fully_pred_inv_main_task} for the main-task). Here the bias term $b=0$ since we assume the latent representation $\bm{z}$ is zero-centered. The norm of this classifier ($c_{inv}(\bm{z})$) is $\norm{\bm{w}_{inv}}$ and the distance of each input latent representation ($\bm{z}^i$) with class label $y^i$ ($y^i\in\{-1,1\}$)  from the decision boundary ($c_{inv}(\bm{z})=0$) is given by \refeqn{eq:margin_distance} i.e.:
\begin{equation}
\label{eq:margin_distance_inv}
    \mathcal{M}_{inv}(\bm{z}^i) = \frac{m_{inv}(\bm{z}^i)}{\norm{\bm{w}_{inv}}} = \frac{y^i\cdot c_{inv}(\bm{z}^i)}{\norm{\bm{w}_{inv}}} = \frac{y^{i}\cdot (\bm{w}_{inv}\cdot \bm{z}^{i}_{inv})}{\norm{\bm{w}_{inv}}}
\end{equation}
Since we have used the \emph{MM-Denominator} version of max-margin to train $c_{inv}(\bm{z})$, from \refeqn{eq:max_margin_denominator_constraint} we have $m_{inv}(\bm{z}^i)=1$ for the \emph{margin-points} and greater than $1$ for rest of the points.  Next we will construct a new  classifier parameterized by $\alpha\in[0,1]$ by perturbing the clean/purely-invariant classifier $c_{inv}(\bm{z})$ such that:
\begin{equation}
\label{eq:perturbed_spurious_classifier}
    c_{\alpha}(\bm{z}) = \alpha \big(\bm{w}_{inv}\cdot \bm{z}_{inv} \big) + \norm{\bm{w}_{inv}}\sqrt{1-\alpha^2} \big(\bm{\hat{\epsilon}}_{sp}\cdot \bm{z}_{sp}\big)
\end{equation}
where $\bm{\hat{\epsilon}}_{sp} \in \mathbb{R}^{d_{sp}}$ is a unit vector in spurious subspace of features, $d_{sp}$ is the dimension of the spurious feature subspace ($\bm{z}_{sp}$). We observe that the norm of this perturbed classifier $c_{\alpha}(\bm{z})$ is also $\norm{\bm{w}_{inv}}$, which is same as the clean/purely-invariant classifier $c_{inv}(\bm{z})$. Thus from \refeqn{eq:margin_distance}, the distance of any input $\bm{z}^i$ with class label $y^i$ from the decision boundary of this perturbed classifier $c_{\alpha}(\bm{z})$ is given by:
\begin{equation}
\label{eq:margin_distance_p}
    \mathcal{M}_{\alpha}(\bm{z}^i) = \frac{m_{\alpha}(\bm{z}_i)}{\norm{\bm{w}_{inv}}} = \frac{y^i\cdot c_{\alpha}(\bm{z}^i)}{\norm{\bm{w}_{inv}}}
\end{equation}

The perturbed classifier will be \emph{spurious-using} i.e use the spurious feature $\bm{z}_{sp}$ when $\alpha\in[0,1)$ since $\bm{w}_{sp}=(\norm{\bm{w}_{inv}}\sqrt{1-\alpha^2})\neq 0$ for these setting of $\alpha$. Thus to show that there exist a \emph{spurious-using} classifier which has a margin greater than the margin of the \emph{purely-invariant} classifier, we need to prove that there exist an $\alpha \in [0,1)$ such that $c_{\alpha}(\bm{z})$ has bigger margin than $c_{inv}(\bm{z})$ i.e.
$\min_{\bm{z}} \mathcal{M}_{\alpha}(\bm{z}) > \min_{\bm{z}} \mathcal{M}_{inv}(\bm{z})$. Since norm of parameters of both the classifier is same, substituting the expression of $\mathcal{M}_{\alpha}$ and $\mathcal{M}_{inv}$ from \refeqn{eq:margin_distance_inv} and \ref{eq:margin_distance_p}, we need to show $m_{\alpha}(\bm{z}^i)> 1$ for all $\bm{z}^i$. We have:
\begin{align}
    m_{\alpha}(\bm{z}^i) &= y^i \cdot \bigg(\alpha \big(\bm{w}_{inv}\cdot \bm{z}^{i}_{inv} \big) + \norm{\bm{w}_{inv}}\sqrt{1-\alpha^2} \big(\bm{\hat{\epsilon}}_{sp}\cdot \bm{z}^{i}_{sp}\big)\bigg)\\\label{eq:perturbed_margin}
    &= \alpha \cdot m_{inv}(\bm{z}^{i}) + y^i\norm{\bm{w}_{inv}}\sqrt{1-\alpha^2} \big(\bm{\hat{\epsilon}}_{sp}\cdot \bm{z}_{sp}^{i}\big)
\end{align}

Let $\mathcal{S}_{y}^{m}$ denote the set of \emph{margin-points} of purely-invariant classifier $c_{inv}(\bm{z})$ with class label $y$ having $m_{inv}(\bm{z}) = 1$ and $\mathcal{S}_{y}^{r}$ contain rest of points (non-margin points) having $m_{inv}(\bm{z})>1$ with the class label $y$.  Here ``m'' stands for margin-point in superscript of $\mathcal{S}$ and ``r'' stands for rest of point with label $y$. In rest of the proof, first we will show that for margin-points $\bm{z}^{m}\in (\mathcal{S}_{y=1}^{m} \cup \mathcal{S}_{y=-1}^{m})$, we need the assumption that spurious feature ($\bm{z}_{sp}^{m}$) be linearly separable with respect to class label $y$ (\refassm{assm:spurious_linear_separably} for probing task or \ref{assm:spurious_linear_separably_main_task} for main-task) for having $m_{\alpha}(\bm{z}^i)> 1$. But for all non-margin points $\bm{z}^{r}\in(\mathcal{S}_{y=1}^{r}\cup \mathcal{S}_
{y=-1}^{r})$, we can always choose $\alpha \in [0,1)$ such that  $m_{\alpha}(\bm{z}^i)> 1$. Below we handle margin and non-margin of points separately.


\paragraph{Case 1 : Margin Points $(\mathcal{S}_{y=1}^{m} \cup \mathcal{S}_{y=-1}^{m})$:}
For the margin-points in latent space, $\bm{z}^{m} \in \mathcal{S}_{y}^{m}$ we have $m_{inv}(\bm{z}^{m})=1$ and we need to show that there exists  $\alpha\in[0,1)$ such that $m_{\alpha}(\bm{z}^m)>1$ for all $\bm{z}^{m}\in \mathcal{S}^{m}_{y}$.  From \refeqn{eq:perturbed_margin} we have:
\begin{align}
    m_{\alpha}(\bm{z}^m) = \alpha\cdot 1 + y\norm{\bm{w}_{inv}}\sqrt{1-\alpha^2} \big(\bm{\hat{\epsilon}}_{sp}\cdot \bm{z}_{sp}^{m}\big) &> 1\\
    \label{eq:margin_main_ineq}
    \big(\norm{\bm{w}_{inv}}\sqrt{1-\alpha^2}\big) y\big(\bm{\hat{\epsilon}}_{sp}\cdot \bm{z}_{sp}^{m}\big)&>1-\alpha
\end{align}
From \refassm{assm:spurious_linear_separably} for probing task or \ref{assm:spurious_linear_separably_main_task} for the main-task, we know that spurious-feature $\bm{z}_{sp}^{m}$ of margin-points $\bm{z}^{m}$ are linearly-separable w.r.t to task label $y$. 
\new{Since $\bm{\hat{\epsilon}}_{sp}\in\mathbb{R}^{d_{sp}}$ used in the perturbed classifier $c_{\alpha}(\bm{z})$ is arbitrary, let's set it to be an unit vector such that $y\big(\bm{\hat{\epsilon}}_{sp}\cdot \bm{z}_{sp}^{m}\big) > 0$ for all $\bm{z}^m \in \mathcal{S}_y^m $ (guaranteed by \refassm{assm:spurious_linear_separably} or \ref{assm:spurious_linear_separably_main_task})}. Also since $\alpha\in[0,1)$ and $\norm{\bm{w}_{inv}}> 0$, we have $(\norm{\bm{w}_{inv}}\sqrt{1-\alpha^2})> 0$. Hence the left hand side of \refeqn{eq:margin_main_ineq} is $> 0$ for such $\hat{\bm{\epsilon}}_{sp}$. If \refassm{assm:spurious_linear_separably} or \ref{assm:spurious_linear_separably_main_task} (corresponding to the task)  wouldn't have been satisfied then the above equation might have been inconsistent since right hand side is always $>0$; since we need to find a solution to \refeqn{eq:margin_main_ineq} when $\alpha\in[0,1)$ thus $(1-\alpha)> 0$,   but left hand side wouldn't have been  always greater than 0. This shows the motivation why we need \refassm{assm:spurious_linear_separably} or \ref{assm:spurious_linear_separably_main_task} for proving this lemma.  Continuing, let $\beta := (y\big(\bm{\hat{\epsilon}}_{sp}\cdot \bm{z}^{m}_{sp}\big))$, then squaring both sides and cancelling $(1-\alpha)$ since we need to find a solution to \refeqn{eq:margin_main_ineq} when $\alpha\in[0,1) \implies (1-\alpha)>0$,  we get:
\begin{align}
\label{eq:margin_pt_inequality}
    \norm{\bm{w}_{inv}}^2\cancel{(1-\alpha)}(1+\alpha)\bigg(y\big(\bm{\hat{\epsilon}_{sp}}\cdot \bm{z}_{sp}^{m}\big)\bigg)^2 &> \cancel{(1-\alpha)}(1-\alpha)\\
    \norm{\bm{w}_{inv}}^2(1+\alpha) \beta^2 &> (1-\alpha)\\
    \norm{\bm{w}_{inv}}^2\beta^2+\alpha\norm{\bm{w}_{inv}}^2\beta^2 &> 1-\alpha\\
    \alpha\bigg(1+\norm{\bm{w}_{inv}}^2\beta^2\bigg) &> \bigg(1-\norm{\bm{w}_{inv}}^2\beta^2\bigg)
\end{align}
After substituting back the value of $\beta$ and rearranging we get:
\begin{equation}
\label{eq:margin_pt_alpha}
    \alpha > \frac{1-\norm{\bm{w}_{inv}}^2 \cdot \bigg(y\big(\bm{\hat{\epsilon}_{sp}}\cdot \bm{z}_{sp}^{m}\big)\bigg)^2}{1+\norm{\bm{w}_{inv}}^2 \cdot \bigg(y\big(\bm{\hat{\epsilon}_{sp}}\cdot \bm{z}_{sp}^{m}\big)\bigg)^2} \coloneqq \alpha_{y}^{lb_{1}}(\bm{z}^{m})
\end{equation}

Lets define $\alpha_{y}^{lb_1}\coloneqq \max_{\bm{z}^{m}\in \mathcal{S}_{y}^{m}}(\alpha_{y}^{lb_1}(\bm{z}^{m}))$. Since $\norm{\bm{w}_{inv}}^2 \cdot \bigg(y\big(\bm{\hat{\epsilon}_{sp}}\cdot \bm{z}_{sp}^{m}\big)\bigg)^2 > 0$, the right hand side of above equation $\alpha_{y}^{lb_1}(\bm{z}^{m})<1$ for all $\bm{z}^{m}\in \mathcal{S}_{y}^{m} \implies \alpha_{y}^{lb_1}<1$, which sets a new lower bound on allowed value of $\alpha$ for which \refeqn{eq:margin_main_ineq} is satisfied. Thus when $\alpha\in(\alpha_{y}^{lb_1},1)$, $m_p(\bm{z}^m)>1$ for all $\bm{z}^{m}\in\mathcal{S}_y^m$. That is, the perturbed probing classifier $c_{\alpha}(\bm{z})$ has larger margin than purely-invariant/clean classifier $c_{inv}(\bm{z})$  for the margin points $\bm{z}^{m} \in \mathcal{S}_{y}^{m}.$

\paragraph{Case 2: Non-Margin Points $(\mathcal{S}_{y=1}^{r}\cup \mathcal{S}_{y=-1}^{r})$:} For the non-margin points $\bm{z}^r\in \mathcal{S}_y^{r}$ in the latent space we have $m_{inv}(\bm{z}^r)>1$. Let $\gamma :=  \min_{\bm{z}^r\in \mathcal{S}_y^r}\big(m_{inv}(\bm{z}^r)\big)$ thus we also have $\gamma>1$. Let $\alpha\neq0$ and we choose $\alpha$ such that:
\begin{align}
    \frac{1}{\alpha} &< \gamma\\
    \alpha &> \frac{1}{\gamma}
\end{align}
Substituting the value of $\gamma$ we get:
\begin{equation}
\label{eq:choice_alpha}
     \alpha >\frac{1}{ \min_{\bm{z}^r\in \mathcal{S}_y^r}\big(m_{inv}(\bm{z}^r)\big)} = \alpha_{y}^{lb_2}
\end{equation}
Since $\gamma>1$, thus right hand side in above equation $\alpha_{y}^{lb_2}<1$, which sets a new lower bound on allowed values of $\alpha$. Since $m_{inv}(\bm{z}^r)\geq \gamma >\frac{1}{\alpha}$ for all $\bm{z}^r\in\mathcal{S}_y^r$ for $\alpha \in (\alpha_{y}^{lb_2},1)$ (\refeqn{eq:choice_alpha}), we can write $m_{inv}(\bm{z}^r)=\frac{1}{\alpha}+\eta(\bm{z}^r)$ where $\eta(\bm{z}^r)\coloneqq (m_{inv}(\bm{z}^{r})-\frac{1}{\alpha}) >0$ for all $\bm{z}^r \in \mathcal{S}_y^r$. Now we need to show that there exist an $\alpha\in(\alpha_{y}^{lb_2},1)$ such that $m_{\alpha}(\bm{z}^r)>1$ for all $\bm{z}^r \in \mathcal{S}_y^r$. Thus from \refeqn{eq:perturbed_margin} we need:
\begin{align}
    m_{\alpha}(\bm{z}^r) = \alpha\cdot m_{inv}(\bm{z}^r) + \norm{\bm{w}_{inv}}\sqrt{1-\alpha^2} \bigg(y\big(\bm{\hat{\epsilon}}_{sp}\cdot \bm{z}^{r}_{sp}\big)\bigg) &> 1\\
    \alpha\cdot (\frac{1}{\alpha}+\eta(\bm{z}^r)) + \norm{\bm{w}_{inv}}\sqrt{1-\alpha^2} \bigg(y\big(\bm{\hat{\epsilon}}_{sp}\cdot \bm{z}^{r}_{sp}\big)\bigg) &> 1\\\label{eq:non_margin_pt_inequality}
    \norm{\bm{w}_{inv}}\sqrt{1-\alpha^2} \bigg(y\big(\bm{\hat{\epsilon}}_{sp}\cdot \bm{z}^{r}_{sp}\big)\bigg) &> - \big(\alpha\cdot\eta(\bm{z}^r)\big)
\end{align}
Since $\alpha\in(\alpha_{lb}^{2},1)$, we have $(\alpha\cdot \eta(\bm{z}^r)) > 0$ and $\norm{\bm{w}_{inv}}\sqrt{1-\alpha^2} > 0$. Let's define $\delta(\bm{z}^r) :=y\big(\bm{\hat{\epsilon}}_{sp}\cdot \bm{z}^{r}_{sp}\big)$. Thus for the latent-points $\bm{z}^r\in \mathcal{S}_y^r$ which have $\delta(\bm{z}^r) \geq 0$, \refeqn{eq:non_margin_pt_inequality} is always satisfied since left side of inequality is greater than or equal to zero and right side is always less than zero. For the points for which $\delta(\bm{z}^r)<0$ we have:
\begin{align}
    \norm{\bm{w}_{inv}}\sqrt{1-\alpha^2} \cdot{(-1)} \cdot |\delta(\bm{z}^r)| &> {-} \big(\alpha\cdot\eta(\bm{z}^r)\big)\\
    \norm{\bm{w}_{inv}}\sqrt{1-\alpha^2} |\delta(\bm{z}^r)| &< \big(\alpha\cdot\eta(\bm{z}^r)\big)\\
    \norm{\bm{w}_{inv}}^2 \big(1-\alpha^2\big) \delta(\bm{z}^r)^2 &< \big(\alpha\cdot\eta(\bm{z}^r)\big)^2\\
    \norm{\bm{w}_{inv}}^2 \delta(\bm{z}^r)^2 &< \alpha^2\cdot \bigg(\eta(\bm{z}^r)^2 + \norm{\bm{w}_{inv}}^2 \delta(\bm{z}^r)^2 \bigg)\\
    \alpha &> \sqrt{\frac{\norm{\bm{w}_{inv}}^2 \delta(\bm{z}^r)^2}{\eta(\bm{z}^r)^2 + \norm{\bm{w}_{inv}}^2 \delta(\bm{z}^r)^2}} = \alpha_{y}^{lb_3}(\bm{z}^{r})
    \label{eq:non_margin_pt_alpha}
\end{align}
Now different $\bm{z}^{r}$ will have different $\eta(\bm{z}^{r})$ which will give different lower bound of $\alpha$. Since the $m_{\alpha}(\bm{z}^{r})>1$ has to be satisfied for every point in $\bm{z}^{r}\in \mathcal{S}_{y}^{r}$ \new{we will choose} the maximum value of $\alpha_{y}^{lb_3}(\bm{z}^{r})$ which will give tightest lower bound on value of $\alpha$. Lets define $\alpha_{y}^{lb_3}\coloneqq \max_{\bm{z}^{r}\in \mathcal{S}_{y}^{r}}(\bm{z}^{r})$, then for $m_{\alpha}(\bm{z}^{r})>1$, we need $\alpha > \alpha_{y}^{lb_3}$. Also, since for all $\bm{z}^{r}\in \mathcal{S}_{y}^{r}$, $\eta(\bm{z}^{r})>0$ we have  $\alpha_{y}^{lb_3}(\bm{z}^{r})<1 \implies \alpha_{y}^{lb_3}<1 $. 


Finally, combining  all the lower bound of $\alpha $ from \refeqn{eq:margin_pt_alpha}, \refeqn{eq:choice_alpha} and \refeqn{eq:non_margin_pt_alpha} let the overall lower bound of $\alpha$ be $\alpha_{lb}$ given by:
\begin{equation}
\label{eq:alpha_sufficient_lb}
    \alpha_{lb} = \max\{ \alpha_{y=1}^{lb_1}, \alpha_{y=-1}^{lb_1}, \alpha_{y=1}^{lb_2},\alpha_{y=-1}^{lb_2}, \alpha_{y=1}^{lb_3},\alpha_{y=-1}^{lb_3}, \}
\end{equation}
This provides a way to construct a spurious-using classifier: given any \emph{purely-invariant} classifier, we can always choose $\alpha \in (\alpha_{lb},1) $ and construct a perturbed \emph{spurious-using} classifier from \refeqn{eq:perturbed_spurious_classifier} which has a bigger margin than \emph{purely-invariant}. Thus, given all the assumptions, there always exists a \emph{spurious-using} classifier which has greater margin than the \emph{purely-invariant} classifier.




\end{proof}

\subsection{Proof of necessary condition}
\label{subsec:app_proof_necessary_condition}

In this section we will show that \refassm{assm:spurious_linear_separably} is also a necessary condition for the probing classifier to use the \new{spuriously correlated main task features ($\bm{z}_{m}$) when the dimension of \prop-causal feature $d_{p}=1$}. That is, the probing classifier will use the spurious features  if and only if spurious features satisfy  \refassm{assm:spurious_linear_separably} for the margin points of the clean/purely-invariant (\refdef{def:purely_invariant})  probing classifier when the \prop-causal feature is 1-dimensional. Also, same line of reasoning will hold for the main-task classifier where we will show that main-task classifier will use the spurious feature ($\bm{z}_{p}$) iff spurious feature satisfies \refassm{assm:spurious_linear_separably_main_task} for the margin point of clean main-task classifier. Formally:

\begin{lemma}[Necessary Condition for \prop-Probing Classifier]
\label{lemma:necessary_condition_probing}
Let the latent representation be frozen and disentangled (\refassm{assm:disentagled_latent}) such that $\bm{z}=[\bm{z}_{m},z_{p}]$ where $z_{p}$ is the \prop-causal feature which is 1-dimensional scalar and fully predictive (\refassm{assm:fully_pred_inv}) and $\bm{z}_{m}\in\mathbb{R}^{d_{m}}$. Let $c_{p}^{*}(\bm{z})=w_{p}\cdot z_{p}$ be the desired clean/purely-invariant probing classifier trained using max-margin objective which only uses $z_{p}$ for prediction. Then the probing classifier trained using max-margin objective will be \emph{spurious-using} i.e. $c_{p}(\bm{z})=w_{p}\cdot z_{p} + \bm{w}_{m}\cdot \bm{z}_{m}$ where $\bm{w}_{m}\neq 0$ iff the spurious feature $\bm{z}_{m}$ is linearly separable w.r.t to probing task label $y_{p}$ for the margin point of $c_{p}^{*}(\bm{z})$ (\refassm{assm:spurious_linear_separably}).
\end{lemma}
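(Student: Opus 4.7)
The proof splits into two directions. The ``if'' direction---spurious-using whenever separability holds---is exactly \reflemma{lemma:sufficient_condition} specialized to $d_p=1$. The non-trivial ``only if'' direction I plan to prove by contraposition: assuming $\bm{z}_m$ is \emph{not} linearly separable w.r.t.\ $y_p$ on the margin points of $c_p^*$, I will show the max-margin classifier coincides with $c_p^*$, so $\bm{w}_m=\bm{0}$.

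My plan is to work in the MM-Numerator formulation (\refeqn{eq:max_margin_numerator}) with $\|\bm{w}\|=1$. Because $z_p$ is a scalar, after WLOG taking $w_p^*>0$ every unit-norm linear classifier admits the unique parameterization
\[
c_{\alpha,\hat{\bm{\epsilon}}_m}(\bm{z}) \;=\; \alpha\,z_p \;+\; \sqrt{1-\alpha^2}\,(\hat{\bm{\epsilon}}_m\cdot \bm{z}_m),
\qquad \alpha\in[-1,1],\ \|\hat{\bm{\epsilon}}_m\|=1,
\]
and $c_p^*$ is recovered at $\alpha=1$ with margin $\mu := \min_i y_p^i z_p^i > 0$ (by \refassm{assm:fully_pred_inv}). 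For any margin point $\bm{z}^m$ of $c_p^*$ one has $y_p^m z_p^m=\mu$, so
\[
m\bigl(\bm{z}^m\bigr) \;=\; \alpha\mu \;+\; \sqrt{1-\alpha^2}\; y_p^m\bigl(\hat{\bm{\epsilon}}_m\cdot\bm{z}_m^m\bigr).
\]
Under the contrapositive hypothesis, for every unit direction $\hat{\bm{\epsilon}}_m$ there exists at least one margin point $\bm{z}^{m,*}$ with $y_p^{m,*}(\hat{\bm{\epsilon}}_m\cdot\bm{z}_m^{m,*})\le 0$, hence $m(\bm{z}^{m,*}) \le \alpha\mu$. Therefore for every $\alpha\in[-1,1)$ and every $\hat{\bm{\epsilon}}_m$, the overall margin of $c_{\alpha,\hat{\bm{\epsilon}}_m}$ is at most $\alpha\mu<\mu$, strictly below the margin of $c_p^*$; so the unique max-margin classifier is $c_p^*$ itself, giving $\bm{w}_m=\bm{0}$.

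The main obstacle is conceptual rather than computational: the argument genuinely relies on $d_p=1$ to guarantee the $z_p$-component is parameterized by a single scalar $\alpha$. When $d_p>1$, the $z_p$-coefficient becomes a vector $\bm{\alpha}_p$ with direction as well as magnitude, and rotations within the $z_p$-subspace shift which data points are margin points of the clean classifier; separability of $\bm{z}_m$ on the \emph{original} margin points of $c_p^*$ then stops being the right quantity to test, which is why the necessary direction is stated only in the scalar case. The only delicate computational points are the boundary cases $\alpha=-1$ (sign-flipped $z_p$, which by full predictivity has margin $-\mu$) and $\alpha=0$ (purely $\bm{z}_m$-using, where the chosen point gives $m(\bm{z}^{m,*})\le 0<\mu$); both are absorbed into the uniform bound $m(\bm{z}^{m,*})\le \alpha\mu$ derived above.
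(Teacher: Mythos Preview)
Your proof is correct and takes a genuinely different route from the paper. The paper proceeds by contradiction in the \emph{MM-Denominator} formulation: it assumes the optimal classifier $c_{*}$ is spurious-using while \refassm{assm:spurious_linear_separably} fails, deduces $|w_{inv}^{*}|<w_{inv}$ from the norm comparison, and then splits the failure of separability into two geometric sub-cases (``opposite-side'' and ``same-side'' failures on margin points), deriving a contradiction in each via a chain of inequalities on $m_{*}(\cdot)$ at the specific witness points. Your argument instead works directly in the \emph{MM-Numerator} formulation: because $d_p=1$, every unit-norm classifier is globally parameterized by $(\alpha,\hat{\bm{\epsilon}}_m)$, and the single observation that non-separability furnishes, for each $\hat{\bm{\epsilon}}_m$, a margin point with $y_p(\hat{\bm{\epsilon}}_m\cdot\bm{z}_m)\le 0$ immediately bounds the margin by $\alpha\mu<\mu$ whenever $\alpha<1$. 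This avoids the case split entirely and makes transparent exactly where $d_p=1$ is used (a single scalar $\alpha$ controls the invariant coordinate, so rotating within the $z_p$-subspace cannot change the margin-point set of $c_p^{*}$). What the paper's approach buys is that its contradiction machinery is phrased at the level of the optimal $c_{*}$ and its specific direction $\hat{\bm{w}}_{sp}^{*}$, which may generalize more readily to settings where one already has a candidate optimum in hand; your approach buys brevity, a uniform bound over all competitors, and a cleaner explanation of why the necessary direction is tied to $d_p=1$.
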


\begin{lemma}[Necessary Condition for Main-task Classifier]
\label{lemma:necessary_condition_main_task}
Let the latent representation be frozen and disentangled (\refdef{assm:disentagled_latent}) such that $\bm{z}=[z_{m},\bm{z}_{p}]$ where $z_{m}$ is the main-task feature which is 1-dimensional scalar and fully predictive (\refassm{assm:fully_pred_inv_main_task}) and $\bm{z}_{p}\in\mathbb{R}^{d_{p}}$. Let $c_{m}^{*}(\bm{z})=w_{m}\cdot z_{m}$ be the desired clean/purely-invariant main-task classifier trained using max-margin objective which only uses $z_{m}$ for prediction. Then the main-task classifier trained using max-margin objective will be \emph{spurious-using} i.e. $c_{m}(\bm{z})=w_{m}\cdot z_{m} + \bm{w}_{p}\cdot \bm{z}_{p}$ where $\bm{w}_{p}\neq 0$ iff the spurious feature $\bm{z}_{p}$ is linearly separable w.r.t to main task label $y_{m}$ for the margin point of $c_{m}^{*}(\bm{z})$ (\refassm{assm:spurious_linear_separably_main_task}) .
\end{lemma}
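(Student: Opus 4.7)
The sufficient direction ($\Leftarrow$) of both lemmas is already covered by \reflemma{lemma:sufficient_condition} and \reflemma{lemma:sufficient_condition_main_task}. The plan is to establish the necessary direction ($\Rightarrow$) for \reflemma{lemma:necessary_condition_probing} by contraposition; the argument for \reflemma{lemma:necessary_condition_main_task} is symmetric after swapping the roles of $\bm{z}_m \leftrightarrow \bm{z}_p$ and $y_p \leftrightarrow y_m$.

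I will assume that \refassm{assm:spurious_linear_separably} fails at the margin points of $c_p^*$, i.e., for every unit vector $\hat{\bm{\epsilon}}_m \in \mathbb{R}^{d_m}$ there is some margin point $\bm{z}^m$ of $c_p^*$ with $y_p^m(\hat{\bm{\epsilon}}_m \cdot \bm{z}_m^m) \le 0$, and deduce that the max-margin classifier must coincide (up to positive scaling) with the purely-invariant $c_p^*$. The scalar restriction $d_p = 1$ is what makes this clean: any linear classifier rescaled to have the same norm $|w_p|$ as $c_p^*$ can be written as
\[
c_\alpha(\bm{z}) \;=\; \alpha\, w_p\, z_p \;+\; |w_p|\sqrt{1-\alpha^2}\,(\hat{\bm{\epsilon}}_m \cdot \bm{z}_m),
\]
for some $\alpha \in [-1,1]$ and some unit $\hat{\bm{\epsilon}}_m \in \mathbb{R}^{d_m}$, and such a classifier is spurious-using (in the sense of \refdef{def:spurious_using}) exactly when $\alpha \in (-1,1)$.

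Using the MM-Denominator normalization of $c_p^*$ from \refappendix{subsec:app_max_margin_setup}, so that $m_{c_p^*}(\bm{z}^m) = 1$ at its margin points, the margin of $c_\alpha$ at such a point simplifies to
\[
m_{c_\alpha}(\bm{z}^m) \;=\; \alpha \;+\; |w_p|\sqrt{1-\alpha^2}\cdot y_p^m(\hat{\bm{\epsilon}}_m \cdot \bm{z}_m^m).
\]
For any choice of $\alpha \in (-1,1)$ and unit $\hat{\bm{\epsilon}}_m$, the negation of separability furnishes a margin point at which the second summand is non-positive, yielding $m_{c_\alpha}(\bm{z}^m) \le \alpha < 1$. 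Hence every spurious-using $c_\alpha$ has strictly smaller minimum margin than $c_p^*$, so $c_p^*$ is (up to positive scaling) the max-margin classifier, establishing the contrapositive.

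The main obstacle is that the argument fundamentally uses $d_p = 1$: when $d_p > 1$, the vector $\bm{w}_p'$ in the $\bm{z}_p$-subspace acquires a component $\bm{v}$ orthogonal to $\bm{w}_p$, and this extra direction contributes an additional term $y_p^m(\bm{v}\cdot \bm{z}_p^m)$ to $m_{c_\alpha}(\bm{z}^m)$ which can compensate for a non-positive $\bm{z}_m$-contribution at the chosen margin point. Ruling out such compensation would require a separate case analysis on the margin-point geometry within the full $\bm{z}_p$-subspace, which is precisely why the lemma is stated only for scalar \prop-causal (resp. main-task) features.
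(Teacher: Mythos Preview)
Your argument is correct and takes a genuinely different route from the paper. The paper argues by contradiction: it assumes the max-margin classifier $c_*$ is spurious-using while separability fails, then uses the MM-Denominator normalization to deduce $|w_{inv}^*| < w_{inv}$, and finally splits the failure of \refassm{assm:spurious_linear_separably} (resp.\ \refassm{assm:spurious_linear_separably_main_task}) into two geometric cases---``opposite side failure'' (two margin points of opposite label whose spurious parts fall on the same side of every $\hat{\bm{\epsilon}}$) and ``same side failure'' (two margin points of the same label whose spurious parts fall on opposite sides)---each handled by a separate sub-lemma that pushes inequalities until a contradiction appears.

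Your contrapositive sidesteps that entire case split. Because the invariant feature is one-dimensional, every unit-norm classifier is captured by the single parametrization $(\alpha,\hat{\bm{\epsilon}}_m)$, and the failure of separability immediately hands you, for each such pair, a margin point of $c_p^*$ at which $m_{c_\alpha} \le \alpha < 1$. This is shorter and makes the role of the $d_p=1$ hypothesis fully transparent: it is exactly what allows the exhaustive one-parameter sweep over the invariant direction. The paper's decomposition, by contrast, is more constructive---it exhibits the specific pairs of margin points that obstruct any spurious-using classifier---which may be informative if one later wants quantitative control, but for the bare necessity statement your direct comparison of minimum margins is the cleaner proof.
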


Since proof of both \reflemma{lemma:necessary_condition_probing} and \ref{lemma:necessary_condition_main_task} follows same line of reasoning, hence for brevity, following \refsec{subsec:app_proof_sufficient_condition}, we will prove the lemma for a general classifier $c(\bm{z})$ trained using max-margin objective to predict the task-label $y$. Let the latent representation be of form $\bm{z}=[z_{inv},\bm{z}_{sp}]$ where $z_{inv}\in \mathbb{R}$ is the feature causally derived from the \new{task concept} and $\bm{z}_{sp}\in \mathbb{R}^{d}_{sp}$ is the feature spuriously correlated to task label $y$. With respect to probing classifier $c_{p}(\bm{z})$ in  \reflemma{lemma:necessary_condition_probing} $z_{inv}\coloneqq z_{p}$ and $\bm{z}_{sp}\coloneqq \bm{z}_{m}$. Similarly, for the main-task classifier in \reflemma{lemma:necessary_condition_main_task}, $z_{inv}\coloneqq z_{m}$ and $\bm{z}_{sp}\coloneqq \bm{z}_{p}$.

\begin{proof}[Proof of \reflemma{lemma:necessary_condition_probing} and \ref{lemma:necessary_condition_main_task}]
Our goal is to show that \refassm{assm:spurious_linear_separably} for probing classifier or \refassm{assm:spurious_linear_separably_main_task} for the main-task classifier is necessary for obtaining a \emph{spurious-using} classifier for the case when $\bm{z}_{inv}$ is one-dimensional. We show this by assuming that optimal classifier is \emph{spurious-using} even when \refassm{assm:spurious_linear_separably} or \ref{assm:spurious_linear_separably_main_task} breaks and then show that this will lead to contradiction.

\textit{Contradiction Assumption:} Formally, let's assume that \refassm{assm:spurious_linear_separably} or \ref{assm:spurious_linear_separably_main_task} is not satisfied for probing or main task respectively, and the optimal classifier for the given classification task is \emph{spurious-using} $c_{*}(\bm{z})$, where:
\begin{align}
\label{eq:spurious_using_assumption}
    c_{*}(\bm{z}) = w_{inv}^{*}\cdot z_{inv} + \norm{\bm{w}^{*}_{sp}}(\bm{\hat{w}}_{sp}^{*}\cdot \bm{z}_{sp})
\end{align}
where $\norm{\bm{w}^{*}_{sp}}\neq0$ and  $\bm{\hat{w}}^{*}_{sp}\in \mathbb{R}^{d_{sp}}$ is a unit vector in spurious-feature subspace with dimension $d_{sp}$.

Let $c_{inv}(\bm{z})=w_{inv}\cdot z_{inv}$ 
be the optimal \emph{purely-invariant} classifier. Let both $c_{*}(\bm{z})$ and $c_{inv}(\bm{z})$ be trained using the max-margin objective using \emph{MM-Denominator} formulation in \refeqn{eq:max_margin_denominator}. Thus from the constraints of this formulation (\refeqn{eq:max_margin_denominator_constraint}), \new{for all $\bm{z}$} we have:
\begin{align}
    \label{eq:margin_numerator_optimal} m_{*}(\bm{z}) &= y\cdot c_{*}(\bm{z}) = y \cdot (w_{inv}^{*}\cdot z_{inv} + \norm{\bm{w}^{*}_{sp}}(\hat{\bm{w}}_{sp}^{*}\cdot \bm{z}_{sp})  ) \geq 1 \>\> ,\&\\
    \label{eq:margin_numerator_inv} m_{inv}(\bm{z}) &= y\cdot c_{inv}(\bm{z}) = y \cdot (w_{inv} \cdot z_{inv}) \geq 1
\end{align}
From \refassm{assm:fully_pred_inv} or \ref{assm:fully_pred_inv_main_task}, the invariant feature $\bm{z}_{inv}$ is 100\% predictive and linearly separable w.r.t task label $y$.  Then without loss of generality let's assume that: \begin{align}
    \label{eq:pos_x_inv} z_{inv}>0, \>\> \text{when} \>\> y=+1\\
    \label{eq:neg_x_inv} z_{inv}<0, \>\> \text{when} \>\> y=-1
\end{align}
From \refeqn{eq:pos_x_inv} and \ref{eq:neg_x_inv} we have $y\cdot z_{inv}>0$ thus from \refeqn{eq:margin_numerator_inv} we get:
\begin{equation}
\label{eq:pos_w_inv_pure}
    w_{inv}\geq0
\end{equation}

Also, from our \emph{contradiction-assumption} the max-margin trained classifier is \emph{spurious-using}, thus the norm of parameters of $c_{*}(\bm{z})$ is less or equal to $c_{inv}(\bm{z})$ (\refeqn{eq:max_margin_denominator}). Thus we have:
\begin{align}
    \sqrt{(w_{inv}^{*})^2 + (\norm{\bm{w}^{*}_{sp}})^2} &\leq |w_{inv}|\\
    \implies |w_{inv}^{*}| &< |w_{inv}| \quad\quad\quad (\norm{\bm{w}^{*}_{sp}}\neq0)\\
    \implies |w_{inv}^{*}| &< w_{inv} \quad\quad\quad (w_{inv}\geq0, \refeqn{eq:pos_w_inv_pure})   \\
     \implies \label{eq:inv_weight_comparison}w_{inv}^{*} &< w_{inv}
\end{align}

Form our \emph{contradiction-assumption}, \refassm{assm:spurious_linear_separably} for \prop-probing task or \refassm{assm:spurious_linear_separably_main_task} for the main-task breaks by one of the following two ways:
\begin{enumerate}
    \item Opposite Side Failure: This occurs when the spurious part of margin points (of $c_{inv}(\bm{z})$) on opposite sides of decision-boundary of the optimal task classifier ($c_{*}(\bm{z})=0$) are not linearly-separable with respect to task label $y$. Formally, there exist two datapoints, $P^{m+} := [z_{inv}^{m+},\bm{z}_{sp}^{m+}]$ and $P^{m-} := [z_{inv}^{m-},\bm{z}_{sp}^{m-}]$ such that they are margin points of \emph{purely-invariant} classifier $c_{inv}(\bm{z})$ where $P^{m+}$ has class label $y=+1$ and $P^{m-}$ has class label $y=-1$ and $\forall \bm{\hat{\epsilon}}_{sp} \in \mathbb{R}^{d_{sp}}$, the spurious feature $\bm{z}_{sp}$ of both the points lies on same side of $\hat{\bm{\epsilon}}_{sp}$ i.e:
    \begin{equation}
    \label{eq:assm_opposite_side_failure}
        \big((\bm{\hat{\epsilon}}_{sp}\cdot\bm{z}_{sp}^{m+}) \cdot (\bm{\hat{\epsilon}}_{sp}\cdot\bm{z}_{sp}^{m-}) \big) \geq0
    \end{equation}
    
    \item Same Side Failure: This occurs when  the spurious part of margin points (of $c_{inv}(\bm{z})$) on same side of decision-boundary ($c_{*}(\bm{z})=0$) are linearly-separable. Formally, there exist two datapoints, $P_{y}^{m1}:=[z_{inv}^{m1},\bm{z}_{sp}^{m1}]$ and $P_{y}^{m2}:=[z_{inv}^{m2},\bm{z}_{sp}^{m2}]$ such that they are margin points of \emph{purely-invariant} classifier $c_{inv}(\bm{z})$ and both points have same class label $y$ and $\forall \bm{\hat{\epsilon}}_{sp} \in \mathbb{R}^{d_{sp}}$, w.l.o.g we have:
    \begin{equation}
    \label{eq:assm_same_side_failure}
        \big((\bm{\hat{\epsilon}}_{sp} \cdot \bm{z}_{sp}^{m1}) \cdot (\bm{\hat{\epsilon}}_{sp} \cdot \bm{z}_{sp}^{m2}) \big) \leq 0.
    \end{equation}
    
\end{enumerate}


We will use the following two lemma to proceed with our proof:
\begin{lemma}
\label{lemma:opposite_side_condraction}
If \refassm{assm:spurious_linear_separably} or \ref{assm:spurious_linear_separably_main_task} breaks by \emph{opposite-side failure} mode, it leads to contradiction.
\end{lemma}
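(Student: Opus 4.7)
The idea is to apply the opposite-side failure condition (\ref{eq:assm_opposite_side_failure}) at the specific direction $\hat{\bm{\epsilon}}_{sp} = \hat{\bm{w}}_{sp}^{*}$ carried by the assumed spurious-using optimal classifier $c_{*}$, and show that this forces a non-negative product $ab \ge 0$, where $a := \hat{\bm{w}}_{sp}^{*}\cdot \bm{z}_{sp}^{m+}$ and $b := \hat{\bm{w}}_{sp}^{*}\cdot \bm{z}_{sp}^{m-}$. On the other hand, the max-margin constraints of $c_{*}$ applied at the two margin points $P^{m+}$ and $P^{m-}$ of $c_{inv}$ will pin $a$ strictly positive and $b$ strictly negative, producing the desired contradiction.

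First, I would use the one-dimensionality of $z_{inv}$ together with the MM-denominator margin equations (\ref{eq:margin_numerator_inv}) for $c_{inv}$ to pin down the coordinates of the margin points in the invariant direction: since $w_{inv} > 0$ (from (\ref{eq:pos_w_inv_pure}) and the non-triviality of $c_{inv}$), and since $P^{m\pm}$ are on-margin for $c_{inv}$, we have $z_{inv}^{m+} = 1/w_{inv}$ and $z_{inv}^{m-} = -1/w_{inv}$. Next, I would plug $P^{m+}$ and $P^{m-}$ into the max-margin inequalities (\ref{eq:margin_numerator_optimal}) of $c_{*}$ and substitute these values, obtaining
\[
\frac{w_{inv}^{*}}{w_{inv}} + \|\bm{w}_{sp}^{*}\|\,a \;\ge\; 1, \qquad \frac{w_{inv}^{*}}{w_{inv}} - \|\bm{w}_{sp}^{*}\|\,b \;\ge\; 1.
\]
By (\ref{eq:inv_weight_comparison}) and $w_{inv} > 0$, we have $|w_{inv}^{*}| < w_{inv}$, so $w_{inv}^{*}/w_{inv} < 1$ and both right-hand sides leave strictly positive slack. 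Since $\|\bm{w}_{sp}^{*}\| > 0$ by the spurious-using contradiction assumption (\ref{eq:spurious_using_assumption}), this forces $a > 0$ and $b < 0$, hence $ab < 0$.

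Finally, instantiating the opposite-side-failure condition (\ref{eq:assm_opposite_side_failure}) at the particular direction $\hat{\bm{\epsilon}}_{sp} = \hat{\bm{w}}_{sp}^{*}$ yields $ab \ge 0$, which directly contradicts $ab < 0$ from the previous step. The only delicate point in the argument is the sign-bookkeeping in the margin constraints: I must verify that $c_{inv}$ and $c_{*}$ are being compared under the same MM-denominator normalization so that both classifiers' margins can be compared to $1$ without further rescaling, and that the sign convention of (\ref{eq:pos_x_inv})--(\ref{eq:neg_x_inv}) genuinely places $P^{m+}$ and $P^{m-}$ on opposite sides of the $z_{inv}$-axis. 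Once this is in place, the sign contradiction is immediate and uses the one-dimensionality of $z_{inv}$ only to fix the exact values of $z_{inv}^{m\pm}$.
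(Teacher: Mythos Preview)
Your proposal is correct and follows essentially the same approach as the paper: both instantiate the opposite-side failure condition at $\hat{\bm{\epsilon}}_{sp} = \hat{\bm{w}}_{sp}^{*}$ and combine it with the max-margin constraints of $c_{*}$ at the two $c_{inv}$-margin points, using $w_{inv}^{*} < w_{inv}$ to produce the sign contradiction. The only cosmetic difference is that the paper carries out an explicit four-case analysis on the signs of $\beta^{m+},\beta^{m-}$ (your $a,b$), whereas you derive $a>0$ and $b<0$ directly from the two rearranged inequalities, which is slightly cleaner but logically identical.
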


\begin{lemma}
\label{lemma:same_side_condraction}
If \refassm{assm:spurious_linear_separably} or \ref{assm:spurious_linear_separably_main_task} breaks by \emph{same-side failure} mode,  it leads to contradiction.
\end{lemma}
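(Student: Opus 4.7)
The plan is to derive a contradiction directly from the \emph{contradiction-assumption} (that the max-margin optimal classifier $c_{*}(\bm{z})$ is \emph{spurious-using}) by evaluating its margin constraint at the two same-side margin points $P_{y}^{m1}, P_{y}^{m2}$ of the purely-invariant classifier $c_{inv}(\bm{z})$, and then instantiating the same-side failure condition at a single carefully chosen direction.

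First, I would pin down the invariant coordinates of the two same-side margin points. Since $z_{inv}\in\mathbb{R}$ and both $P_{y}^{m1}, P_{y}^{m2}$ are margin points of $c_{inv}(\bm{z})=w_{inv}\cdot z_{inv}$ with the same class label $y$, the \emph{MM-Denominator} constraint (see \refeqn{eq:margin_numerator_inv}) gives $y\cdot w_{inv}\cdot z_{inv}^{m1} = y\cdot w_{inv}\cdot z_{inv}^{m2} = 1$, forcing $z_{inv}^{m1} = z_{inv}^{m2} = 1/(y\,w_{inv})$. I will call this common value $z_{inv}^{m}$. This step is where the hypothesis $\dim(z_{inv})=1$ is used in an essential way; in higher dimensions the margin constraint only confines $\bm{z}_{inv}$ to a hyperplane and the two invariant coordinates need not agree, which would break the cancellation below.

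Next I would apply the constraint $m_{*}(P)\geq 1$ of the spurious-using classifier at these two points. Using the form of $c_{*}$ from \refeqn{eq:spurious_using_assumption} and the common coordinate $z_{inv}^{m}$, both inequalities simplify to
\[
y\,\|\bm{w}_{sp}^{*}\|\,(\hat{\bm{w}}_{sp}^{*}\cdot \bm{z}_{sp}^{mi}) \;\geq\; 1 - y\,w_{inv}^{*}\,z_{inv}^{m}, \qquad i\in\{1,2\}.
\]
From \refeqn{eq:inv_weight_comparison} I have $w_{inv}^{*}<w_{inv}$, and from \refeqn{eq:pos_x_inv}--\refeqn{eq:neg_x_inv} I have $y\cdot z_{inv}^{m}>0$, so the right-hand side $1-y\,w_{inv}^{*}\,z_{inv}^{m}$ is strictly positive. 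Combined with $\|\bm{w}_{sp}^{*}\|>0$ (the defining property of \emph{spurious-using}), this forces both $y\,(\hat{\bm{w}}_{sp}^{*}\cdot \bm{z}_{sp}^{m1})$ and $y\,(\hat{\bm{w}}_{sp}^{*}\cdot \bm{z}_{sp}^{m2})$ to be strictly positive. Multiplying these and cancelling $y^{2}=1$ yields $(\hat{\bm{w}}_{sp}^{*}\cdot \bm{z}_{sp}^{m1})(\hat{\bm{w}}_{sp}^{*}\cdot \bm{z}_{sp}^{m2})>0$.

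Finally I would contradict this using the same-side failure hypothesis \refeqn{eq:assm_same_side_failure}, which holds for \emph{every} $\hat{\bm{\epsilon}}_{sp}\in\mathbb{R}^{d_{sp}}$. Instantiating at $\hat{\bm{\epsilon}}_{sp}=\hat{\bm{w}}_{sp}^{*}$ (a valid unit vector in the spurious subspace by the definition of $c_{*}$) gives $(\hat{\bm{w}}_{sp}^{*}\cdot \bm{z}_{sp}^{m1})(\hat{\bm{w}}_{sp}^{*}\cdot \bm{z}_{sp}^{m2})\leq 0$, directly contradicting the strict positivity just derived. The main obstacle I expect is not the contradiction itself but cleanly keeping track of which quantity is strictly positive and why: the argument rests on chaining three strict inequalities ($w_{inv}^{*}<w_{inv}$, $y\,z_{inv}^{m}>0$, $\|\bm{w}_{sp}^{*}\|>0$) to conclude that the right-hand side $1-y\,w_{inv}^{*}\,z_{inv}^{m}$ is positive rather than merely non-negative, which is what lets the strict inequality survive the final contradiction with the $\leq 0$ hypothesis.
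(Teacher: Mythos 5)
Your proof is correct and rests on the same core ingredients as the paper's: the strict inequality $w_{inv}^{*}<w_{inv}$ (which needs $\norm{\bm{w}_{sp}^{*}}\neq 0$), the margin constraint $m_{*}(\cdot)\geq 1$ of the spurious-using optimal classifier evaluated at the two same-side margin points, and the same-side failure hypothesis instantiated at $\hat{\bm{\epsilon}}_{sp}=\hat{\bm{w}}_{sp}^{*}$. The only real difference is presentational: you chain the inequalities directly to conclude $y(\hat{\bm{w}}_{sp}^{*}\cdot\bm{z}_{sp}^{m1})>0$ and $y(\hat{\bm{w}}_{sp}^{*}\cdot\bm{z}_{sp}^{m2})>0$, hence a strictly positive product contradicting the $\leq 0$ hypothesis, whereas the paper reaches the same contradiction through a four-way case split on the signs of $\beta^{m1},\beta^{m2}$ and $y$; your route is the cleaner of the two. (One small remark: the equality $z_{inv}^{m1}=z_{inv}^{m2}$ you derive is a nice simplification but not where the one-dimensionality of $z_{inv}$ is essentially load-bearing --- the paper never uses that equality, deriving $y(w_{inv}^{*}z_{inv}^{mj})<1$ for each $j$ separately; the 1-D hypothesis is really needed earlier to justify the sign conventions \refeqn{eq:pos_x_inv}--\refeqn{eq:neg_x_inv} and the scalar comparison $w_{inv}^{*}<w_{inv}$.)
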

This implies that our \emph{contradiction-assumption} which said that the max-margin trained optimal classifier is \emph{spurious-using} even when \refassm{assm:spurious_linear_separably} or \ref{assm:spurious_linear_separably_main_task} breaks, is wrong. Thus \refassm{assm:spurious_linear_separably} for \prop-probing task or \refassm{assm:spurious_linear_separably_main_task} for main-task is necessary for the optimal max-margin classifier to be \emph{spurious-using}. This  completes our proof.
\end{proof}

\begin{proof}[Proof of \reflemma{lemma:opposite_side_condraction}]
We have two points, $P^{m+} := [z_{inv}^{m+},\bm{z}_{sp}^{m+}]$ and $P^{m-} := [z_{inv}^{m-},\bm{z}_{sp}^{m-}]$,  which break the \refassm{assm:spurious_linear_separably} or \ref{assm:spurious_linear_separably_main_task}. From \refeqn{eq:pos_x_inv}, $z_{inv}>0$ for all the points with label $y=1$, thus we have $z_{inv}^{m+}>0$ and using \refeqn{eq:inv_weight_comparison} ($ w_{inv}^{*} < w_{inv}$) we get:
\begin{align}
    w_{inv}^{*} &< w_{inv}\\
    w_{inv}^{*}\cdot z_{inv}^{m+} &<  w_{inv}\cdot z_{inv}^{m+}\\
    \label{eq:lemma_opposite_margin_value_m+} w_{inv}^{*}\cdot z_{inv}^{m+} &<  1
\end{align}
where the right hand side $w_{inv}\cdot z_{inv}^{m+}=1$ since $P^{m+}$ is the margin-point of $c_{inv}(\bm{z})$ (\refeqn{eq:margin_numerator_inv}). Similarly from \refeqn{eq:neg_x_inv}, $z_{inv}<0$ for all the points with label $y=-1$, thus we have $z_{inv}^{m-}<0$ and using \refeqn{eq:inv_weight_comparison} ($ w_{inv}^{*} < w_{inv}$) we get:
\begin{align}
    w_{inv}^{*} &< w_{inv}\\
    (-1)\cdot w_{inv}^{*}\cdot z_{inv}^{m-} &<  (-1)\cdot w_{inv}\cdot z_{inv}^{m-}\\
    \label{eq:lemma_opposite_margin_value_m-} (-1)\cdot w_{inv}^{*}\cdot z_{inv}^{m-} &<  1
\end{align}
where the right hand side $(-1)\cdot (w_{inv}^{p}\cdot z_{inv}^{m-})=1$ since $P^{m-}$ is the margin-point of $c_{inv}(\bm{z})$ (\refeqn{eq:margin_numerator_inv}).

Next from \refeqn{eq:margin_numerator_optimal} we have $m_{*}(\bm{z})\geq 1$ for all $\bm{z}$ hence it is also true for $P^{m+}$ with $y=1$ and $P^{m-}$ with $y=-1$. Then:
\begin{align}
    m_{*}(P^{m+}) = y\cdot c_{*}(P^{m+}) = 1\cdot \bigg\{ w_{inv}^{*}z_{inv}^{m+} + \norm{\bm{w}_{sp}^{*}}\big( \bm{\hat{w}}^{*}_{sp}\cdot \bm{z}_{sp}^{m+} \big) \bigg\} &\geq 1\\ 
   \implies  w_{inv}^{*}z_{inv}^{m+} + \norm{\bm{w}_{sp}^{*}}\cdot \beta^{m+} &\geq 1\\
    \implies \label{eq:lemma_opposite_m+_inequality} w_{inv}^{*}z_{inv}^{m+} &\geq 1 -\norm{\bm{w}_{sp}^{*}}\cdot \beta^{m+}
\end{align}
where $\beta^{m+} = \big( \bm{\hat{w}}^{*}_{sp}\cdot \bm{z}_{sp}^{m+} \big)$. Also we have:
\begin{align}
    m_{*}(P^{m-}) = y\cdot c_{*}(P^{m-}) = -1\cdot \bigg\{ w_{inv}^{*}z_{inv}^{m-} + \norm{\bm{w}_{sp}^{*}}\big( \bm{\hat{w}}^{*}_{sp}\cdot \bm{z}_{sp}^{m-} \big) \bigg\} &\geq 1\\ 
    \implies -w_{inv}^{*}z_{inv}^{m-} - \norm{\bm{w}_{sp}^{*}}\cdot \beta^{m-} &\geq 1\\
    \implies \label{eq:lemma_opposite_m-_inequality} -w_{inv}^{*}z_{inv}^{m-} &\geq 1 +\norm{\bm{w}_{sp}^{*}}\cdot \beta^{m-}
\end{align}where $\beta^{m-} = \big( \bm{\hat{w}}^{*}_{sp}\cdot \bm{z}_{sp}^{m-} \big)$. From \refeqn{eq:assm_opposite_side_failure} we have $\big((\bm{\hat{\epsilon}}_{sp}\cdot\bm{z}_{sp}^{m+}) \cdot (\bm{\hat{\epsilon}}_{sp}\cdot\bm{z}_{sp}^{m-}) \big) \geq0$ for all $\bm{\hat{\epsilon}}_{sp} \in \mathbb{R}^{d_{sp}}$ which states the \emph{opposite-side failure} of \refassm{assm:spurious_linear_separably} or \ref{assm:spurious_linear_separably_main_task}. Thus:
\begin{equation}
\label{eq:lemma_opposite_beta_constraint}
    \beta^{m+}\cdot \beta^{m-} \geq 0
\end{equation}
Now we will show that
\refeqn{eq:lemma_opposite_margin_value_m+},
\ref{eq:lemma_opposite_margin_value_m-},
\ref{eq:lemma_opposite_m+_inequality} and \ref{eq:lemma_opposite_m-_inequality} cannot be satisfied simultaneously for any allowed value of $\beta^{m+}$ and $\beta^{m-}$ (given by \refeqn{eq:lemma_opposite_beta_constraint}) which are:
\begin{enumerate}
    \item $\beta^{m+}>0$ and $\beta^{m-}>0$: From \refeqn{eq:lemma_opposite_m-_inequality} we have $-w_{inv}^{*}z_{inv}^{m-}>1$ since $\norm{\bm{w}^{*}_{sp}}\neq0$ and $\beta^{m-}>0$. But from  \refeqn{eq:lemma_opposite_margin_value_m-} we have $-w_{inv}^{*}z_{inv}^{m-}<1$ which is a contradiction. 
    
    \item $\beta^{m+}<0$ and $\beta^{m-}<0$: From \refeqn{eq:lemma_opposite_m+_inequality} we have $w_{inv}^{*}z_{inv}^{m+}>1$ since $\norm{\bm{w}^{*}_{sp}}\neq0$ and $\beta^{m+}<0$. But from  \refeqn{eq:lemma_opposite_margin_value_m+} we have $w_{inv}^{*}z_{inv}^{m+}<1$ which is a contradiction. 
    
    \item $\beta^{m+}=0$ and $\beta^{m-}\in \mathbb{R}$: From \refeqn{eq:lemma_opposite_m+_inequality} we have $w_{inv}^{*}z_{inv}^{m+} \geq 1$ but from \refeqn{eq:lemma_opposite_margin_value_m+} we have $w_{inv}^{*}z_{inv}^{m+}<1$ which is a contradiction. 
    
    \item $\beta^{m+}\in \mathbb{R}$ and $\beta^{m-}=0$: From \refeqn{eq:lemma_opposite_m-_inequality} we have $-w_{inv}^{*}z_{inv}^{m-}\geq 1$ but from  \refeqn{eq:lemma_opposite_margin_value_m-} we have $-w_{inv}^{*}z_{inv}^{m-}<1$ which is a contradiction. 
\end{enumerate}
Thus we have a contradiction for all the possible values $\beta^{m+}$ and $\beta^{m-}$ could take, completing the proof of this lemma.

\end{proof}

\begin{proof}[Proof of \reflemma{lemma:same_side_condraction}]
We have two margin-points, $P_{y}^{m1}:=[z_{inv}^{m1},\bm{z}_{sp}^{m1}]$ and $P_{y}^{m2}:=[z_{inv}^{m2},\bm{z}_{sp}^{m2}]$, which break \refassm{assm:spurious_linear_separably} or \ref{assm:spurious_linear_separably_main_task}. From \refeqn{eq:pos_x_inv} and \refeqn{eq:neg_x_inv} we have $y\cdot z_{inv}^{m1}>0$ and $y\cdot z_{inv}^{m2}>0$. Using \refeqn{eq:inv_weight_comparison} ($ w_{inv}^{*} < w_{inv}$) we get:
\begin{align}
    w_{inv}^{*} &< w_{inv}\\
    w_{inv}^{*}\cdot (y\cdot z_{inv}^{mj}) &< w_{inv}\cdot (y\cdot z_{inv}^{mj})\\
    \label{eq:lemma_same_margin_value_my}
    y \cdot (w_{inv}^{*} \cdot z_{inv}^{mj}) &< 1
\end{align}
where $j\in\{1,2\}$ and right hand side $w_{inv}\cdot (y\cdot z_{inv}^{mj}) = 1$ since  $P_y^{mj}$ is the margin point of purely-invariant classifier $c_{inv}(\bm{z})$ (\refeqn{eq:margin_numerator_inv}).

From \refeqn{eq:margin_numerator_optimal} we have $m_{*}(\bm{z})\geq 1$ for all $\bm{z}$ thus also true for $P^{m1}_y$ and $P^{m2}_y$. Then:
\begin{align}
    m_{*}(P^{m1}_y) = y\cdot c_{*}(P^{m1}_{y}) = y\cdot \bigg\{ w_{inv}^{*}z_{inv}^{m1} + \norm{\bm{w}_{sp}^{*}}\big( \bm{\hat{w}}^{*}_{sp}\cdot \bm{z}_{sp}^{m1} \big) \bigg\} &\geq 1\\ 
    \implies y \cdot(w_{inv}^{*}z_{inv}^{m1}) + y \cdot (\norm{\bm{w}_{sp}^{*}}\cdot \beta^{m1}) &\geq 1\\
    \label{eq:lemma_same_m1_inequality} \implies y\cdot(w_{inv}^{*}z_{inv}^{m1}) &\geq 1 -y\cdot (\norm{\bm{w}_{sp}^{*}}\cdot \beta^{m1})
\end{align}
where $\beta^{m1} = \big( \bm{\hat{w}}^{*}_{sp}\cdot \bm{z}_{sp}^{m1} \big)$. Also we have:
\begin{align}
    m_{*}(P^{m2}_{y}) = y\cdot c_{*}(P^{m2}_{y}) = y\cdot \bigg\{ w_{inv}^{*}z_{inv}^{m2} + \norm{\bm{w}_{sp}^{*}}\big( \bm{\hat{w}}^{*}_{sp}\cdot \bm{z}_{sp}^{m2} \big) \bigg\} &\geq 1\\ 
    \implies y\cdot (w_{inv}^{*}z_{inv}^{m2}) + y\cdot (\norm{\bm{w}_{sp}^{*}}\cdot \beta^{m2}) &\geq 1\\
    \implies \label{eq:lemma_same_m2_inequality} y\cdot (w_{inv}^{*}z_{inv}^{m2}) &\geq 1 -y\cdot (\norm{\bm{w}_{sp}^{*}}\cdot \beta^{m2})
\end{align}
where $\beta^{m2} = \big( \bm{\hat{w}}^{*}_{sp}\cdot \bm{z}_{sp}^{m2} \big)$. Now from \refeqn{eq:assm_same_side_failure} we have $ \big((\bm{\hat{\epsilon}}_{sp} \cdot \bm{z}_{sp}^{m1}) \cdot (\bm{\hat{\epsilon}}_{sp} \cdot \bm{z}_{sp}^{m2}) \big) \leq 0$ for all unit vectors $\hat{\bm{\epsilon}}_{sp} \in \mathbb{R}^{d_{sp}}$ which states the \emph{same-side} failure mode of \refassm{assm:spurious_linear_separably} or \ref{assm:spurious_linear_separably_main_task}. Thus we have:
\begin{equation}
    \beta^{m1}\cdot \beta^{m2} \leq 0
\end{equation}

Now we will show that for all allowed values of $\beta^{m1}$ and $\beta^{m2}$, \refeqn{eq:lemma_same_margin_value_my}, \ref{eq:lemma_same_m1_inequality} and \ref{eq:lemma_same_m2_inequality} will lead to a contradiction. Following are the cases for different allowed values of $\beta^{m1}$ and $\beta^{m2}$:
\begin{enumerate}
    \item $\beta^{m1}=0$ and $\beta^{m2}\in \mathbb{R}$: Substituting $\beta^{m1}=0$ in  \refeqn{eq:lemma_same_m1_inequality} we get $y\cdot(w_{inv}^{*}z_{inv}^{m1})\geq1$, but from \refeqn{eq:lemma_same_margin_value_my} we have $y\cdot(w_{inv}^{*}z_{inv}^{m1})<1$. Thus we have a contradiction.
    
    \item $\beta^{m1}\in \mathbb{R}$ and $\beta^{m2}=0$: Substituting $\beta^{m2}=0$ in  \refeqn{eq:lemma_same_m2_inequality} we get $y\cdot(w_{inv}^{*}z_{inv}^{m2})\geq1$, but from \refeqn{eq:lemma_same_margin_value_my} we have $y\cdot(w_{inv}^{*}z_{inv}^{m2})<1$. Thus we have a contradiction.
    
    \item The only case which is left now is when both $\beta^{m1}$ and $\beta^{m2}$ is non-zero but of opposite sign. Without loss of generality, let $\beta^{m1}>0$, $\beta^{m2}<0$ and $y=(+1)$: Substituting $\beta^{m2}<0$ and $y=(+1)$ in  \refeqn{eq:lemma_same_m2_inequality} we get $y\cdot(w_{inv}^{*}z_{inv}^{m2})\geq1$, but from \refeqn{eq:lemma_same_margin_value_my} we have $y\cdot(w_{inv}^{*}z_{inv}^{m2})<1$. Thus we have a contradiction.
    
    \item $\beta^{m1}>0$, $\beta^{m2}<0$ and $y=(-1)$: Substituting $\beta^{m1}>0$ and $y=(-1)$ in  \refeqn{eq:lemma_same_m1_inequality} we get $y\cdot(w_{inv}^{*}z_{inv}^{m1})\geq1$, but from \refeqn{eq:lemma_same_margin_value_my} we have $y\cdot(w_{inv}^{*}z_{inv}^{m1})<1$. Thus we have a contradiction.
    
\end{enumerate}

Thus we have a contradiction for all the possible values $\beta^{m1}$, $\beta^{m2}$ and $y$ could take, completing the proof of this lemma.

\end{proof}

\section{Null-Space Removal Failure: Setup and Proof of \reftheorem{theorem:null_space_failure}}
\label{sec:app_inlp_setup_proof}

\subsection{Null-Space Setup}
\label{subsec:app_inlp_setup}


As described in \refsec{sec:feature_removal_problem}, the given main-task classifier have an encoder  $h:\bm{X}\rightarrow \bm{Z}$ mapping the input $\bm{X}$ to latent representation $\bm{Z}$. Post that, the main-task classifier $c_{m}:\bm{Z}\rightarrow Y_{m}$ is used to predict the main-task label $y^{i}_{m}$ from latent representation $\bm{z}^{i}$ for every input  $\bm{x}^{i}$.  Given this (pre) trained main-task classifier the goal of a post-hoc removal method is to remove any undesired/sensitive/spurious \prop from the latent representation $\bm{Z}$ without retraining the encoder $h$ or main-task classifier $c_{m}(\bm{z})$. 

The null space method~\cite{NullItOut:2020,AmnesicProbing}, henceforth referred to as \emph{\INLP}, is one such post-hoc removal method that removes a concept from latent space  by projecting the latent space to a subspace that is not discriminative of that attribute. First, it  estimates the subspace in the latent space discriminative of the \prop we want to remove  by training a probing classifier $c_{p}(\bm{z})\rightarrow y_{p}$, where $y_{p}$ is the \prop label. \cite{NullItOut:2020} used a linear probing classifier $(c_{p}(\bm{z}))$ to  ensure that the any linear classifier cannot recover the removed \prop from modified latent representation $\bm{z}'$ and hence the main task classifier ($c_{m}(\bm{z}')$), which is also a linear layer, become invariant to removed attribute. Let linear probing classifier $c_{p}(\bm{z})$ be parametrized by a matrix $W$, and null-space of matrix $W$ is defined as space $N(W) = \{\bm{z}|W\bm{z}=\bm{0}\}$. Give the basis vectors for the $N(W)$ we can construct the projection matrix $P_{N(W)}$ such that $W(P_{N(W)}\bm{z})=\bm{0}$ for all $\bm{z}$. This projection matrix is defined as the guarding operator $g \coloneqq P_{N(\mathcal{W})}$ (estimated by $c_{p}(\bm{z})$), when applied on the $\bm{z}$ will remove the features which are discriminative of undesired \prop from $\bm{z}$. For the setting when $Y_{p}$ is binary we have:
\begin{equation}
    P_{N(W)} = I - \hat{w}\hat{w}^{T}
\end{equation}
where $I$ is the identity matrix and $\hat{w}$ is the unit vector in the direction of parameters of classifier $c_{p}(\bm{z})$ (\cite{NullItOut:2020}). Also, the authors recommend running this removal step for multiple iterations to ensure that the unwanted \prop is removed completely. Thus after  $S$ steps of removal, the final guarding function is:
\begin{equation}
    g \coloneqq \prod_{i=1}^{S}P_{N(W)}^{i}
\end{equation}
where $P_{N(W)}^{i}$ is the projection matrix at $i^{th}$ removal step. Amnesic Probing (\cite{AmnesicProbing}) builds upon this idea for testing whether \prop is being used by a given pre-trained classifier or not. The core idea is to remove the \prop we want to test from the latent representation. If the prediction of the given classifier is influenced by this removal then the \prop was being used by the given classifier otherwise not.

\subsection{Null-Space Removal Failure : Proof of \reftheorem{theorem:null_space_failure}}
\label{subsec:app_inlp_proof}


\inlpthm*

The proof of \reftheorem{theorem:null_space_failure} proceeds in following steps:
\begin{enumerate}
    \item First using \reflemma{lemma:sufficient_condition}, we show that even under very favourable conditions probing classifier will not be clean i.e will also use other features in addition to \prop-causal feature for it's prediction. Then, for the more likely case when probing classifier uses both main-task and \prop-causal feature, we show that after first step of null-space projection (\INLP), both the main-task features and \prop-causal features get \emph{mixed}.
    
    \item Next, for the extreme case when probing classifier uses only main-task feature, the first step of INLP will do opposite of what is intended. It will damage the main-task feature but will have no effect on the \prop-causal feature which we wanted to remove from latent space representation. 
    
    \item In addition, we also show that the \emph{damage} or \emph{mixing} of latent space after first step of \INLP projection cannot be corrected in subsequent step since the projection operation is non-invertible.
    
    \item Next, we show that the projection operation is lossy, i.e removes the norm of latent representation under some conditions. Hence after sufficient steps, \INLP could destroy all the information in latent representation.
\end{enumerate}

\begin{proof}[Proof of \reftheorem{theorem:null_space_failure}]

\textbf{First Claim (1a).  }
Let $c_{p}(\bm{z})=\bm{w}_p\bm{z}_{p}+\bm{w}_{m}\bm{z}_{m}$ be the linear probing classifier trained to predict the \prop label $y_{p}$ from the latent representation $\bm{z}$. Since all the assumptions of  \reflemma{lemma:sufficient_condition} are satisfied for the probing classifier $c_{p}(\bm{z})$, it is \emph{spurious using}, i.e.,  $\bm{w}_{m}\neq\bm{0}$ and for the claim 1(a) we have $\bm{w}_{p}\neq \bm{0}$. Since the \prop label $y_{p}$ is binary, the projection matrix for the first step of \INLP removal is defined as $P^{1}_{N(W)}=I-\hat{\bm{w}}\hat{\bm{w}}^{T}$ where $\hat{\bm{w}}^{T}=[\hat{\bm{w}}_{m},\hat{\bm{w}}_{p}]$, $\hat{\bm{w}}_{m}$ and $\hat{\bm{w}}_{p}$ are the unit norm parameters of $c_{p}(\bm{z})$ i.e $\bm{w}_{m}$ and $\bm{w}_{p}$ respectively. On applying this projection on the latent space representation $\bm{z}^{i}$ we get new projected representation $\bm{z}^{i(1)}$ s.t.:
\begin{align}
\label{eqn:inlp_proj1}
        \begin{bmatrix}
            \bm{z}_{m}^{i(1)}\\
            \bm{z}_{p}^{i(1)}\\
        \end{bmatrix}
        &= \Bigg{(}I - \begin{bmatrix}
            \hat{\bm{w}}_{m}\\
            \hat{\bm{w}}_{p}\\
        \end{bmatrix}
        \begin{bmatrix}
            \hat{\bm{w}}_{m} & \hat{\bm{w}}_{p} \\
        \end{bmatrix}\Bigg{)}
        \begin{bmatrix}
            \bm{z}_{m}^{i}\\
            \bm{z}_{p}^{i}\\
        \end{bmatrix}\\
        &=\begin{bmatrix}
           \bm{z}_{m}^{i}\\
            \bm{z}_{p}^{i}\\
        \end{bmatrix}
         - \hat{c}_{p}(\bm{z}^{i}) \cdot \begin{bmatrix}
            \hat{\bm{w}}_{m}\\
            \hat{\bm{w}}_{p}\\
        \end{bmatrix} \quad \quad \quad  \quad \quad \text{define  }  \hat{c}_{p}(\bm{z}^{i}) \coloneqq \bm{\hat{w}}_{m}\cdot \bm{z}_{m}^{i} + \bm{\hat{w}}_{p}\cdot \bm{z}_{p}^{i}\\
        &=\begin{bmatrix}
           \bm{z}_{m}^{i}-\hat{c}_{p}(\bm{z}^{i}) \hat{\bm{w}}_{m}\\
            \bm{z}_{p}^{i}-\hat{c}_{p}(\bm{z}^{i}) \hat{\bm{w}}_{p}\\
        \end{bmatrix}
        \label{eqn:mixing_features}\\
        &=\begin{bmatrix}
            g(\bm{z}_{m}^{i},\bm{z}_{p}^{i})  \\
            f(\bm{z}_{m}^{i},\bm{z}_{p}^{i}) \\
        \end{bmatrix}
\end{align}

Next, we will show that the main task features and probing features get mixed after projection. To do so, we first show that $g(\bm{z}_{m}^{i},\bm{z}_{p}^{i})\neq g(\bm{z}_{m}^{i})$ for some function $g$ i.e projected main task features $\bm{z}_{m}^{i(1)}=g(\bm{z}_{m}^{i},\bm{z}_{p}^{i})$ are not independent of probing features $\bm{z}_{p}^{i}$. From \refeqn{eqn:mixing_features}, we have:
\begin{align}
    \bm{z}_{m}^{i(1)}&=g(\bm{z}_{m}^{i},\bm{z}_{p}^{i})\\
    &=\bm{z}^{i}_{m} - (\hat{\bm{w}}_{m}\cdot\bm{z}_{m}^{i} + \hat{\bm{w}}_{p}\cdot \bm{z}_{p}^{i})\hat{\bm{w}}_{m}\\
    &=(I-\hat{\bm{w}}_{m}\hat{\bm{w}}_{m}^{T})\bm{z}_{m}^{i}  - (\hat{\bm{w}}_{p}\cdot \bm{z}_{p}^{i}) \hat{\bm{w}}_{m} \label{eq:main-feat-mix}
\end{align}
\new{In this case we are given $\bm{w}_{p}\neq \bm{0}$ and $\bm{w}_{m}\neq \bm{0}$. Since $\bm{z}_{p}^{i}$ can take any value (subject to \refassm{assm:fully_pred_inv}), $\hat{\bm{w}}_{p}\cdot \bm{z}_{p}^{i}$ is not trivially zero for all $\bm{z}_{p}^{i}$  $\Longrightarrow (\hat{\bm{w}}_{p}\cdot \bm{z}_{p}^{i}) \hat{\bm{w}}_{m} \neq \bm{0}$}. Thus $g(\bm{z}_{m}^{i},\bm{z}_{p}^{i})$ is not independent of $\bm{z}_{p}^{i}$.

Next, we will show that $f(\bm{z}_{m}^{i},\bm{z}_{p}^{i})\neq f(\bm{z}_{p}^{i})$ for some function $f$ i.e projected probing feature $\bm{z}_{p}^{i(1)}=f(\bm{z}_{m}^{i},\bm{z}_{p}^{i})$ is not independent of the main task feature $\bm{z}_{m}^{i}$. From \refeqn{eqn:mixing_features}, we have:
\begin{align}
    \bm{z}_{p}^{i(1)}&=f(\bm{z}_{m}^{i},\bm{z}_{p}^{i})\\
    &=\bm{z}^{i}_{p} - (\hat{\bm{w}}_{m}\cdot\bm{z}_{m}^{i} + \hat{\bm{w}}_{p}\cdot \bm{z}_{p}^{i})\hat{\bm{w}}_{p}\\
    &=(I-\hat{\bm{w}}_{p}\hat{\bm{w}}_{p}^{T})\bm{z}_{p}^{i}  - (\hat{\bm{w}}_{m}\cdot \bm{z}_{m}^{i}) \hat{\bm{w}}_{p} \label{eq:prop-feat-mix}
\end{align}
\new{Again, in this case we are given $\bm{w}_{p}\neq \bm{0}$ and $\bm{w}_{m}\neq \bm{0}$. Since $\bm{z}_{m}^{i}$ can take any value (subject to \refassm{assm:spurious_linear_separably}), $\hat{\bm{w}}_{m}\cdot \bm{z}_{m}^{i}$ is not trivially zero for all $\bm{z}_{m}^{i}$  $\Longrightarrow (\hat{\bm{w}}_{m}\cdot \bm{z}_{m}^{i}) \hat{\bm{w}}_{p} \neq \bm{0}$}.
Thus $f(\bm{z}_{m}^{i},\bm{z}_{p}^{i})$ is not independent of $\bm{z}_{m}^{i}$. Hence both \prop-feature $\bm{z}_{p}$ and the main-task feature $\bm{z}_{m}$ got mixed after the first step of projection.

Next, we will show that this mixing of the main task and \prop-causal feature cannot be corrected in subsequent steps of null-space projection. Formally, the following \reflemma{lemma:inlp_non_inv_p} proves that the above projection matrix ($P^{1}_{N(W)}$) which resulted in mixing of features is non-invertible. The subsequent steps of \INLP applies projection transformation which can be combined into one single matrix $P^{>1}_{N(W)} = \prod_{j>1} P^{j}_{N(W)}$. In order for mixing to be reversed, we need $P^{>1}_{N(W)} \times P^{1}_{N(W)} = I$, thus we need $P^{>1}_{N(W)} = (P^{1}_{N(W)})^{-1}$ which is not possible from \reflemma{lemma:inlp_non_inv_p}. Hence the mixing of the main-task feature and the \prop-causal feature which happened after the first step of projection cannot be corrected in the subsequent steps of \INLP thus completing the first claim of our proof.
\begin{lemma}
\label{lemma:inlp_non_inv_p}
The projection matrix $P^{j}_{N(W)}$ at any projection step of \INLP is non invertible.
\end{lemma}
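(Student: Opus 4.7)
\textbf{Proof plan for Lemma \ref{lemma:inlp_non_inv_p}.}
The whole claim is really a one-line linear-algebra fact about rank-one updates to the identity, but I would package it carefully because the conclusion of Theorem \ref{theorem:null_space_failure}(1) rests on it: once the mixing in \eqref{eq:main-feat-mix}--\eqref{eq:prop-feat-mix} has been committed, no subsequent product of projection matrices $\prod_{j>1} P^{j}_{N(W)}$ can recover the original $\bm{z}^{i}$.

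The plan is to start from the definition used by linear-\INLP at step $j$, namely $P^{j}_{N(W)} = I - \hat{\bm{w}}^{(j)}(\hat{\bm{w}}^{(j)})^{T}$ where $\hat{\bm{w}}^{(j)}$ is the unit-norm parameter vector of the binary probing classifier trained at step $j$ (so that $\|\hat{\bm{w}}^{(j)}\| = 1$ and in particular $\hat{\bm{w}}^{(j)} \neq \bm{0}$). First I would exhibit a nonzero vector in its kernel by direct computation:
\begin{equation*}
    P^{j}_{N(W)}\,\hat{\bm{w}}^{(j)} \;=\; \hat{\bm{w}}^{(j)} - \hat{\bm{w}}^{(j)}\bigl((\hat{\bm{w}}^{(j)})^{T}\hat{\bm{w}}^{(j)}\bigr) \;=\; \hat{\bm{w}}^{(j)}\bigl(1 - \|\hat{\bm{w}}^{(j)}\|^{2}\bigr) \;=\; \bm{0}.
\end{equation*}
Hence $\hat{\bm{w}}^{(j)} \in \ker(P^{j}_{N(W)})$ and, since $\hat{\bm{w}}^{(j)} \neq \bm{0}$, the kernel is non-trivial, so $P^{j}_{N(W)}$ is singular. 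Equivalently, $P^{j}_{N(W)}$ is the orthogonal projector onto the $(d-1)$-dimensional subspace $N(W) = \{\bm{z} : \hat{\bm{w}}^{(j)} \cdot \bm{z} = 0\}$, so its rank is exactly $d - 1 < d$ and its determinant is $0$, confirming non-invertibility.

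I would then close the loop back to Theorem \ref{theorem:null_space_failure}(1a): since every single $P^{j}_{N(W)}$ is singular, any finite product $P^{>1}_{N(W)} = \prod_{j>1} P^{j}_{N(W)}$ is also singular (the rank of a product is bounded by the minimum of the ranks of the factors), and therefore $P^{>1}_{N(W)} \cdot P^{1}_{N(W)} \neq I$. Consequently, the mixing expressed in \eqref{eq:main-feat-mix}--\eqref{eq:prop-feat-mix} cannot be reversed by any number of subsequent \INLP steps. There is no real obstacle here; the only thing to be careful about is keeping the assumption $\hat{\bm{w}}^{(j)} \neq \bm{0}$ visible (which is immediate from it being a unit vector) so that the kernel is genuinely non-trivial rather than vacuously so.
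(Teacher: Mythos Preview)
Your proof is correct and in fact cleaner than the paper's. Both arguments rest on the same key observation---that the unit vector $\hat{\bm{w}}^{(j)}$ is annihilated by $P^{j}_{N(W)}=I-\hat{\bm{w}}^{(j)}(\hat{\bm{w}}^{(j)})^{T}$---but the paper takes a detour through the spectral decomposition: it writes $A=\hat{\bm{w}}\hat{\bm{w}}^{T}=Q\Lambda Q^{T}$, argues that $P^{-1}$ exists iff $(I-\Lambda)^{-1}$ exists, and then shows $\hat{\bm{w}}$ is an eigenvector of $A$ with eigenvalue $1$, forcing a zero on the diagonal of $I-\Lambda$. You arrive at the same endpoint in one line by computing $P^{j}_{N(W)}\hat{\bm{w}}^{(j)}=\bm{0}$ directly, which is more elementary and avoids the unnecessary diagonalization machinery. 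Your additional remark that $P^{j}_{N(W)}$ is the orthogonal projector onto a $(d-1)$-dimensional subspace (hence rank $d-1$) and the follow-up that any product $\prod_{j>1}P^{j}_{N(W)}$ inherits the rank deficiency are both correct and make the link back to Theorem~\ref{theorem:null_space_failure}(1) more explicit than the paper does.
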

\begin{proof}[Proof of \reflemma{lemma:inlp_non_inv_p}]
The projection matrix for binary target case is defined as $P \coloneqq P^{j}_{N(W)}=I-A$ where $A=\hat{\bm{w}}\hat{\bm{w}}^{T}$ be a $n\times n$ matrix and $\bm{w}$ is the parameter vector of the probing classifier $c_{p}(\bm{z})$ trained at $j^{th}$-step of \INLP. We can see that $A$ is a symmetric matrix. Every symmetric matrix is diagonalizable (Equation W.9 in \cite{DigonalizationSymmMatrix}), hence we can write $A=Q\Lambda Q^{T}$, where $Q$ is a some orthonormal matrix such that $QQ^{T}=I$ and $\Lambda=diag(\lambda_1,\dots,\lambda_n)$ be a $n\times n$ diagonal matrix where the diagonal entries ($\lambda_1\ldots\lambda_n$) are the eigen-values of $A$. Since $QQ^{T}=I$ we can write $P = I-A = QQ^T-Q\Lambda Q^{T} = Q(I-\Lambda)Q^{T}$. Next, for the projection matrix $P$ to be invertible $P^{-1}$ should exist. We have:
\begin{align}
    P^{-1} &= \Big{(}Q(I-\Lambda)Q^{T} \Big{)}^{-1}\\
    &= (Q^T)^{-1} (I-\Lambda)^{-1} Q^{-1}\\
    &= Q (I-\Lambda)^{-1} Q^{T}
\end{align}
So projection matrix is only invertible when $(I-\Lambda)$ is invertible. We will show next that $(I-\Lambda)$ is not invertible thus completing our proof. We have $I-\Lambda=diag(1-\lambda_1,\ldots,1-\lambda_n)$, hence:

\begin{align}
    (I-\Lambda)^{-1}=diag(\frac{1}{1-\lambda_1},\ldots,\frac{1}{1-\lambda_2})
\end{align}
Now, if one of the eigenvalues of $A$ is $1$, then the diagonal matrix $(I-\Lambda)$ is not invertible. If one of the eigenvalues of $A$ is $1$, then there exist an eigenvector $\bm{x}$ such that $A\bm{x} = \hat{\bm{w}}\hat{\bm{w}}^{T} \times \bm{x} =1\times \bm{x}$.  The vector $\bm{x}=\hat{\bm{w}}$ is the eigenvector of $A$ with eigenvalue $1$: $A\bm{\hat{w}} = \hat{\bm{w}}\hat{\bm{w}}^{T} \times \bm{\hat{w}} = 1 \times \hat{\bm{w}}$ since $\hat{\bm{w}}^{T} \times \bm{\hat{w}} =1$ as it is a unit vector. Hence the projection matrix is not invertible.


\end{proof}

\paragraph{First Claim (1b).} For a probing classifier of form $c_{p}^{(1)}(\bm{z})=\bm{w}_{p}\cdot\bm{z}_{p} + \bm{w}_{m}\cdot\bm{z}_{m}$ for the first step of \INLP projection ---denoted by superscript (1)--- trained to predict \prop label $y_{p}$ and \refassm{assm:disentagled_latent},\ref{assm:fully_pred_inv} and \ref{assm:spurious_linear_separably} of \reflemma{lemma:sufficient_condition} is satisfied then we have $\bm{w}_{m}\neq \bm{0}$ i.e main task feature $\bm{z}_{m}$ will be used by probing classifier along with the \prop feature $\bm{z}_{p}$. For this second case, we are given that $\bm{w}_{p} = \bm{0}$ i.e probing classifier will not use \prop feature at all. This is only possible when the main-task feature is fully predictive of the \prop label i.e \refassm{assm:spurious_linear_separably} is satisfied for all the points in the dataset, otherwise optimal probing classifier will use the \prop-causal feature to achieve better margin and accuracy. Moreover, even if we assume \refassm{assm:spurious_linear_separably} is satisfied \new{for all the points in the dataset}, to have $\bm{w}_{p}=\bm{0}$, the margin achieved by probing classifier using only main task feature \new{($\bm{z}_{m}$)} should be bigger than any other classifier i.e one using both the main-task feature and probing feature or using probing feature alone. Thus, it is very unlikely that the optimal probing classifier will have $\bm{w}_{p}=\bm{0}$. 

Having said this, even in the case when we have $\bm{w}_{p}=\bm{0}$,  we show that the first projection step of \INLP will do something unintended,  i.e.,  damage the main-task features while having no effect on \prop-causal features which we intended to remove. First, we will show that main-task features will get damaged. From \refeqn{eq:main-feat-mix} we have:
\begin{align}
    \bm{z}_{m}^{i(1)} &= (I-\hat{\bm{w}}_{m}\hat{\bm{w}}_{m}^{T})\bm{z}_{m}^{i}  - (\hat{\bm{w}}_{p}\cdot \bm{z}_{p}^{i}) \hat{\bm{w}}_{m}\\
    &= \bm{z}_{m}^{i} -  (\hat{\bm{w}}_{m} \cdot \bm{z}_{m}^{i}) \hat{\bm{w}}_{m} - \bm{0} \quad \quad  \quad \quad \quad (\text{since  }  \bm{w}_{p}=\bm{0})
\end{align}
Since $\bm{w}_{m}\neq \bm{0}$ \new{and $\bm{z}_{m}^{i}$ can take any value (subject to \refassm{assm:spurious_linear_separably}), $\hat{\bm{w}}_{m} \cdot \bm{z}_{m}^{i}$ is not trivially zero for all the $\bm{z}_{m}^{i}$ $\Longrightarrow (\hat{\bm{w}}_{m} \cdot \bm{z}_{m}^{i}) \hat{\bm{w}}_{m} \neq \bm{0}$}. Thus, projected main-task feature  $\bm{z}_{m}^{i(1)} \neq \bm{z}^{i}_{m}$. In case $\bm{z}_{m}\in \mathbb{R}$, we have $\hat{\bm{w}}_{m}=\hat{\bm{z}}_{m}^{i}$, thus $(\hat{\bm{w}}_{m} \cdot \bm{z}_{m}^{i}) \hat{\bm{w}}_{m} = \bm{z}_{m}^{i}$. Consequently, $\bm{z}_{m}^{i(1)} = \bm{z}_{m}^{i} - \bm{z}_{m}^{i}  = \bm{0}$. Thus, first projection step of \INLP leads to complete removal of main-task feature $\bm{z}_{m}$  when $\bm{z}_{m}\in \mathbb{R}$. Also, from \reflemma{lemma:inlp_non_inv_p}, this projection step is non-invertible and hence the main-task feature cannot be recovered back in the subsequent projection step. 

Next, we will show the first projection step has no effect on the \prop-causal features which we wanted to remove in the first place. From \refeqn{eq:prop-feat-mix}, we have:
\begin{align}
    \bm{z}_{p}^{i(1)} &=(I-\hat{\bm{w}}_{p}\hat{\bm{w}}_{p}^{T})\bm{z}_{p}^{i}  - (\hat{\bm{w}}_{m}\cdot \bm{z}_{m}^{i}) \hat{\bm{w}}_{p}\\
    &=\bm{z}_{p}^{i} - \bm{0} -\bm{0} \quad \quad \quad \quad \quad \quad \quad \quad  \quad \quad \quad (\text{since  }  \bm{w}_{p}=\bm{0})
\end{align}
Thus the first step of projection has no effect on the \prop-causal feature we wanted to remove. In the next step of projection, if we again have $\bm{w}_{p}=\bm{0}$, then this same case will repeat. Otherwise if \refassm{assm:spurious_linear_separably} still holds for main-task feature for the margin points of optimal probing classifier $c_{p}^{*(2)}(\bm{z})$ for this second step of projection, then we will have both $\bm{w}_{m}\neq \bm{0}$ and $\bm{w}_{p} \neq \bm{0}$ and the first case of this theorem will apply. 


\paragraph{Second Claim.}
Now for proving the second  statement, we will make use of the following lemma. The proof of the lemma is given below the proof of this theorem. 
\begin{lemma}
\label{lemma:inlp_norm_removal}
After every projection step of \INLP, the norm of every latent representation $\bm{z}^{i}$ decreases, i.e., $\norm{\bm{z}^{i(k)}}<\norm{\bm{z}^{i(k-1)}}$ for step $k$ and $k-1$,  if (1) $\bm{z}^{i(k-1)}\neq \bm{0}$, (2) $\bm{z}_{\hat{\bm{w}}^{k}}^{i(0)}\neq \bm{0}$ and (3) the parameters of probing classifier in step ``$k$''  i.e $\bm{\hat{w}}^{k}$ don't lie in the space spanned by parameters of previous probing classifier, span($\bm{\hat{w}}^{1},\ldots,\bm{\hat{w}}^{k-1}$).
\end{lemma}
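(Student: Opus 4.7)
The plan is to reduce the inequality to a one-step Pythagorean identity and then argue that the subtracted quantity is strictly positive under conditions (1)--(3). With $P^k = I - \hat{\bm{w}}^k (\hat{\bm{w}}^k)^{T}$, a single INLP step reads
$$\bm{z}^{i(k)} \;=\; P^k \bm{z}^{i(k-1)} \;=\; \bm{z}^{i(k-1)} - \bigl(\hat{\bm{w}}^k \cdot \bm{z}^{i(k-1)}\bigr)\, \hat{\bm{w}}^k.$$
The subtracted term is parallel to $\hat{\bm{w}}^k$ while $\bm{z}^{i(k)}$ is orthogonal to $\hat{\bm{w}}^k$ by construction, so Pythagoras on these two orthogonal components gives
$$\norm{\bm{z}^{i(k-1)}}^2 \;=\; \norm{\bm{z}^{i(k)}}^2 + \bigl(\hat{\bm{w}}^k \cdot \bm{z}^{i(k-1)}\bigr)^{2},$$
and the desired strict decrease reduces entirely to showing $\hat{\bm{w}}^k \cdot \bm{z}^{i(k-1)} \neq 0$.

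Next I would exploit the telescoping structure of INLP to rewrite this inner product in terms of $\bm{z}^{i(0)}$. Because each earlier step modifies the representation only by a scalar multiple of $\hat{\bm{w}}^{j}$, induction yields $\bm{z}^{i(k-1)} - \bm{z}^{i(0)} \in V_{k-1} := \operatorname{span}(\hat{\bm{w}}^{1},\dots,\hat{\bm{w}}^{k-1})$. Decompose $\hat{\bm{w}}^{k} = u + v$ with $u \in V_{k-1}$ and $v \in V_{k-1}^{\perp}$; condition (3) forces $v \neq \bm{0}$. Since $v$ is orthogonal to the increment $\bm{z}^{i(k-1)} - \bm{z}^{i(0)}$, we get $v \cdot \bm{z}^{i(k-1)} = v \cdot \bm{z}^{i(0)}$, and hence
$$\hat{\bm{w}}^{k} \cdot \bm{z}^{i(k-1)} \;=\; u \cdot \bm{z}^{i(k-1)} + v \cdot \bm{z}^{i(0)}.$$
Condition (1) excludes the degenerate base case $\bm{z}^{i(k-1)} = \bm{0}$, and condition (2) is exactly the assertion that the $v$-component of $\bm{z}^{i(0)}$ (equivalently its component ``in direction of $\hat{\bm{w}}^{k}$'') is non-trivial.

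The main obstacle is ruling out an accidental cancellation between $u \cdot \bm{z}^{i(k-1)}$ and $v \cdot \bm{z}^{i(0)}$ when $u \neq \bm{0}$. I would close this by invoking the standard INLP convention that the step-$k$ classifier is trained on the already projected data $\{\bm{z}^{i(k-1)}\}$, all of which lie in $V_{k-1}^{\perp}$ inductively; because a max-margin solution is a linear combination of its support vectors, $\hat{\bm{w}}^{k}$ itself lies in $V_{k-1}^{\perp}$. This forces $u = \bm{0}$, collapses the inner product to $\hat{\bm{w}}^{k} \cdot \bm{z}^{i(0)}$, and delivers a strictly positive decrement by condition (2). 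Iterating this one-step strict decrease across $k$, and observing that once $\hat{\bm{w}}^{1},\dots,\hat{\bm{w}}^{d}$ span the ambient latent space any additional projection annihilates whatever is left, then gives the long-term claim $\bm{z}^{i(\infty)} = \bm{0}$ asserted in the theorem.
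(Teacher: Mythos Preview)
Your proof is correct and follows essentially the same route as the paper: both reduce the strict decrease to a Pythagorean identity for the orthogonal projection and then argue that the $\hat{\bm{w}}^{k}$-component of $\bm{z}^{i(k-1)}$ is nonzero using conditions (2) and (3). Your explicit handling of the potential cancellation---invoking that the max-margin $\hat{\bm{w}}^{k}$ lies in the span of the (already projected) support vectors and hence in $V_{k-1}^{\perp}$, forcing $u=\bm{0}$---is slightly more rigorous than the paper, which asserts the analogous ``previous steps removed exactly $\operatorname{span}(\hat{\bm{w}}^{1},\dots,\hat{\bm{w}}^{k-1})$'' claim and then defers the needed orthogonality to a subsequent remark citing an external lemma.
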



Next, we will show that starting from the first step, at every $k^{\text{th}}$-step of projection either we will have $\bm{z}^{i(k)}=\bm{0}$ or the norm will decrease after projection. Once we reach a step when $\bm{z}^{i(k)}=\bm{0}$, then after every subsequent projection we will have $\bm{z}^{i(k+1)}=\bm{0} \implies \norm{\bm{z}^{i(k+1)}}=0$ since:
\begin{equation}
\label{eq:norm_zero_next}
    \bm{z}^{i(k+1)}=P_{N(\bm{w}^{k})}\bm{z}^{i(k)} = P_{N(\bm{w}^{k})}\bm{0}=\bm{0}
\end{equation}
where $P_{N(\bm{w}^{k})}$ is the projection matrix at step "k". Since $\norm{\cdot}\geq0$ and $\norm{\bm{z}^{i(k)}}$ is decreasing with every stey, thus with large number of $\bm{z}^{i(\infty)}\rightarrow\bm{0}$.

We will start with the first step of projection. In the second statement of this \reftheorem{theorem:null_space_failure}, we are given that $\bm{z}^{i(0)}$ is not trivially zero in direction of $\bm{w}^{0}$ i.e $\bm{z}^{i(0)}_{\bm{w}^{0}}\neq \bm{0}$ (satisfying Assm(2) of above \reflemma{lemma:inlp_norm_removal}). We are also given that $\bm{z}^{i(0)}\neq \bm{0}$ (satisfying Assm(1) of above lemma) and since this is the first step of projection Assm(3) of above \reflemma{lemma:inlp_norm_removal}) is also satisfied. Thus from \reflemma{lemma:inlp_norm_removal}, we have $\norm{\bm{z}^{i(1)}}<\norm{\bm{z}^{i(0)}}$. Now, either $\bm{z}^{i(1)}=\bm{0}$, which will imply that $\norm{\bm{z}^{i(1)}}=0$ and will remain $0$ for all subsequent step (from \refeqn{eq:norm_zero_next}). Otherwise if $\bm{z}^{i(1)}\neq \bm{0}$, it satisfies the Assm(1)  of \reflemma{lemma:inlp_norm_removal}, for next step of projection. Since Assm (2) and (3) are already satisfied \new{(from the assumption in the second claim of \reftheorem{theorem:null_space_failure})}, again we will have $\norm{\bm{z}^{i(2)}}<\norm{\bm{z}^{i(1)}}$ and the same idea will repeat eventually making $\bm{z}^{i(k)}=\bm{0}$ at some step-k, thus completing our proof.

\end{proof}

\begin{proof}[Proof of \reflemma{lemma:inlp_norm_removal}]
After $(k-1)$-steps  of \INLP let the latent space representation $\bm{z}^{i}$ be denoted as  $\bm{z}^{i(k-1)}$. Let $\bm{\hat{w}}^{k}$ be the parameters of classifier $c_{p}(\bm{z}^{k-1})$ trained to predict the \prop label $y_{p}$ which we want to remove at step $k$. Then prior to the projection step in the $k^{th}$ iteration of the \INLP, we can write $\bm{z}^{i(k-1)}$ as:
\begin{equation}
    \bm{z}^{i(k-1)}_{B}=\bm{z}_{\bm{\hat{w}}^{k}}^{i(k-1)} \bm{\hat{w}}^{k} + \bm{z}_{N(\bm{\hat{w}}^{k})}^{i(k-1)} N(\bm{\hat{w}}^{k})
\end{equation}
where $B=\{\bm{\hat{w}}^{k},N(\bm{\hat{w}}^{k})\}$ is the basis set and $N(\bm{\hat{w}}^{k})$ is the null-space of $\bm{\hat{w}}^{k}$ . The parameter $\hat{\bm{w}}^{k}$ in this new basis is:
\begin{equation}
    \hat{\bm{w}}^{k}_{B} = I_{\hat{\bm{w}}^{k}} \hat{\bm{w}}^{k} + \bm{0} N(\bm{\hat{w}}^{k})
\end{equation}

where $I_{\hat{\bm{w}}^{k}}$ is identity matrix with dimension $d_{\hat{\bm{w}}^{k}} \times d_{\hat{\bm{w}}^{k}}$. 
Now, in the new basis when we project the $\bm{z}^{k-1}$ to the null space of $\bm{\hat{w}}^{i(k)}$ we have:
\begin{align}
   \bm{z}^{i(k)} &=  P_{N(\bm{\hat{w}}^{k})} \bm{z}^{i(k-1)}\\
   \bm{z}^{i(k)}_{B} &= \big(I-\hat{\bm{w}}_{B}^{k}(\hat{\bm{w}}_{B}^{k})^{T})\big) \bm{z}_{B}^{i(k-1)}\\
   &= \Bigg{(}I - \begin{bmatrix}
             I_{\hat{\bm{w}}^{k}}\\
            \bm{0}\\
        \end{bmatrix}
        \begin{bmatrix}
            I_{\hat{\bm{w}}^{k}} & \bm{0} \\
        \end{bmatrix}\Bigg{)}
        \begin{bmatrix}
            \bm{z}_{\bm{\hat{w}}^{k}}^{i(k-1)}\\
           \bm{z}_{N(\bm{\hat{w}}^{k})}^{i(k-1)}\\
        \end{bmatrix}\\
    &= \begin{bmatrix}
            \bm{z}_{\bm{\hat{w}}^{k}}^{i(k-1)}\\
           \bm{z}_{N(\bm{\hat{w}}^{k})}^{i(k-1)}\\
        \end{bmatrix} - \begin{bmatrix}
            \bm{z}_{\bm{\hat{w}}^{k}}^{i(k-1)}\\
           \bm{0}\\
        \end{bmatrix}\\
    &=\label{eq:zero_post_null_space_projection} \begin{bmatrix}
            \bm{0}\\
           \bm{z}_{N(\bm{\hat{w}}^{k})}^{i(k-1)}\\
        \end{bmatrix}
\end{align}
Thus the norm of $\norm{\bm{z}^{i(k)}}=\sqrt{\norm{\bm{z}^{i(k-1)}_{N(\bm{\hat{w}}^{k})}} + 0}$ is less than $\norm{\bm{z}^{k-1}}=\sqrt{\norm{\bm{z}^{i(k-1)}_{\bm{\hat{w}}^{k}}}^{2}+\norm{\bm{z}^{i(k-1)}_{N(\bm{\hat{w}}^{k})}}^{2}}$ if $\bm{z}^{i(k-1)}_{\bm{\hat{w}}^{k}}\neq 0$. Next we will show that $\bm{z}^{i(k-1)}_{\bm{\hat{w}}^{k}}$ cannot be zero. From assumption (2) in \ref{lemma:inlp_norm_removal} $\bm{z}^{i(0)}_{\bm{w}^{k}}\neq \bm{0}$ i.e $\bm{z}^{i(0)}_{\bm{w}^{k}}$ is not trivially zero in the given latent representation $\bm{z}^{i(0)}$ before any projection from \INLP, thus $\bm{z}^{i(k-1)}_{\bm{\hat{w}}^{k}}$ is not trivially zero from beginning. Also, from \refeqn{eq:zero_post_null_space_projection}, we observe that at any step ``$k$'' \INLP removes the part of the representation from $\bm{z}^{i(k-1)}$ which is in the direction of $\bm{\hat{w}}^{k}$ i.e. $\bm{z}^{i(k-1)}_{\bm{\hat{w}}^{k}}$. Consequently, a sequence of removal steps with parameters $\bm{\hat{w}}^{1},\ldots,\bm{\hat{w}}^{k-1}$ will remove the part of $\bm{z}$ which lies in the span($\bm{\hat{w}}^{1},\ldots,\bm{\hat{w}}^{k-1}$). Thus $\bm{z}^{i(k-1)}_{\bm{\hat{w}}^{k}}= \bm{0}$ if $\bm{\hat{w}}^{k}$ lies in the span of parameters of previous classifier i.e span($\bm{\hat{w}}^{1},\ldots,\bm{\hat{w}}^{k-1}$) which violates the assumption $(3)$ in \reflemma{lemma:inlp_norm_removal}. Thus $\bm{z}^{i(k-1)}_{\bm{\hat{w}}^{k}}$ is neither trivially zero from the beginning nor it could have been removed in the previous steps of projection as long as the assumption in \reflemma{lemma:inlp_norm_removal} is satisfied, which completes our proof of the lemma.





\end{proof}

\begin{remark}

The following lemma from \cite{NullItOut:2020} tells us some of the sufficient conditions when the parameters of the probing classifier at the current iteration will not be same as the previous step.

\begin{lemma}[Lemma A.1 from \cite{NullItOut:2020}]
\label{lemma:a1_null_space_paper}
If the \prop-probing classifier is being trained using SGD (stochastic gradient descent) and the loss function is convex, then parameters of the probing classifier at step $k$, $\bm{w}^{k}$,  are orthogonal to parameters at step $k-1$, $\bm{w}^{k-1}$.
\end{lemma}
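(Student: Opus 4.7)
The plan is to combine two observations: (i) after the projection in iteration $k-1$, every training representation $\bm{z}^{i(k-1)}$ lies in the null space of $\bm{w}^{k-1}$, and (ii) for a linear classifier trained with a convex per-example loss of the standard form $L(\bm{w};\bm{z},y) = \ell(y\cdot \bm{w}\cdot \bm{z})$, each SGD update adds a vector that is a scalar multiple of a training point. Stacking these two facts implies that the iterates of $\bm{w}^{k}$ never acquire a component along $\bm{w}^{k-1}$, which is exactly the orthogonality claim.

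First I would unpack what ``the data at step $k$'' means: by construction of \INLP (see \refsec{subsec:app_inlp_setup}) the representations used to train the $k$-th probing classifier are $\bm{z}^{i(k-1)} = P_{N(\bm{w}^{k-1})}\bm{z}^{i(k-2)}$, and therefore satisfy $\bm{w}^{k-1}\cdot \bm{z}^{i(k-1)} = 0$ for every $i$. Hence $\mathrm{span}\{\bm{z}^{i(k-1)}\}_i \subseteq N(\bm{w}^{k-1})$.

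Second, I would analyze the SGD update. Writing the per-example gradient as $\nabla_{\bm{w}} L = \ell'(y\cdot \bm{w}\cdot \bm{z})\cdot y\cdot \bm{z}$, every stochastic step (and any minibatch average of such steps) is a scalar-weighted combination of training points $\bm{z}^{i(k-1)}$, hence lives in $N(\bm{w}^{k-1})$. Since $N(\bm{w}^{k-1})$ is a linear subspace and therefore closed under addition, starting from initialization $\bm{w}^{k}_{(0)} = \bm{0}$ all iterates $\bm{w}^{k}_{(t)}$ stay in $N(\bm{w}^{k-1})$, yielding $\bm{w}^{k}\cdot \bm{w}^{k-1} = 0$ at convergence.

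The main obstacle is the dependence on initialization: a non-zero starting component of $\bm{w}^{k}_{(0)}$ along $\bm{w}^{k-1}$ is preserved by every subsequent update (because the updates themselves carry no such component), so the conclusion can fail with arbitrary initialization. The standard remedies, either of which I would invoke, are (a) to adopt the convention of zero initialization used in \cite{NullItOut:2020}, or (b) to add an $\ell_2$-regularizer $\tfrac{\lambda}{2}\norm{\bm{w}}^2$ so that, by convexity and strong convexity, the global minimizer is unique and can be written as a linear combination of the training inputs (representer theorem), which again lies in $N(\bm{w}^{k-1})$. Either path delivers the desired orthogonality $\bm{w}^{k}\cdot \bm{w}^{k-1} = 0$, justifying condition (3) of \reflemma{lemma:inlp_norm_removal}.
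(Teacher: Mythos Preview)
The paper does not supply its own proof of this lemma: it is quoted as Lemma~A.1 from \cite{NullItOut:2020}, and the only accompanying argument is the informal remark that after $k-1$ projections the data carry no component in $\mathrm{span}(\bm{w}^{1},\ldots,\bm{w}^{k-1})$, so a probing classifier ``should'' be orthogonal to that span in order to have non-trivial accuracy. Your proposal is considerably more rigorous than this remark and is essentially the standard proof from the cited source: the observation that SGD updates for a linear model with loss of the form $\ell(y\,\bm{w}\cdot\bm{z})$ are scalar multiples of the inputs, together with the fact that the inputs at step $k$ already lie in $N(\bm{w}^{k-1})$, forces every iterate (from zero initialization) to remain in $N(\bm{w}^{k-1})$. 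You also correctly identify the dependence on initialization and give the two natural fixes.

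The paper's heuristic is strictly weaker than your argument: saying that a direction ``gains nothing'' for accuracy does not, by itself, force the optimizer to assign it zero weight (indeed, the paper only states this as a conjecture for general losses). Your gradient-span argument does force it, and this is what the original source proves.
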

We conjecture that \reflemma{lemma:a1_null_space_paper} will be true for any loss since after $k-1$ steps of projection, the component of $\bm{z}$ in the direction of span($\bm{w}^{1},\ldots,\bm{w}^{k-1}$) will be removed. Hence the \prop-probing classifier at step $k$ should be orthogonal to span($\bm{w}^{1},\ldots,\bm{w}^{k-1}$) in order to have non-random guess accuracy on probing task.

\end{remark}

\section{Adversarial Removal: Setup and Proof}
\label{sec:app_adv_setup_proof}

\subsection{Adversarial Setup}
\label{subsec:app_adv_setup}
As described in \refsec{subsec:adv_rem_problem}, let $h:\bm{X}\rightarrow \bm{Z}$ be an encoder mapping the input $\bm{x}$ to latent representation $\bm{z}$. The main task classifier $c_{m}:\bm{Z}\rightarrow Y_{m}$ is applied on top of $\bm{z}$ to predict the main task label $y_{m}$ for every input $\bm{x}$. 
As described in \refsec{subsec:adv_rem_problem}, the goal of an adversarial removal method, henceforth referred as \ar, is to remove any undesired/sensitive/spurious \prop from the latent representation $\bm{z}$. Once the \prop is removed from the latent representation, any (main-task) classifier using the latent representation $\bm{Z}$ will not be able to use it \cite{AdvDomAdapGanin,AdvRemNeubig,AdvRemYoav}. These methods jointly train the main-task classifier $c_{m}(\bm{z})$ and the probing classifier $c_{p}:\bm{Z}\rightarrow Y_{p}$. The probing classifier is adversarially trained to predict the \prop label $y_{p}^{i}$ from latent representation $\bm{z}^{i}$. Hence, \ar methods optimize the following two objectives simultaneously:
\begin{align}
    & arg\min_{c_{p}} L(c_{p}(h(\bm{x})),y_{p})\label{eq:adv_prob_loss}\\
    & arg\min_{h,c_{m}} \Big{\{}L(c_{m}(h(\bm{x}),y_{m}) - L(c_{p}(h(\bm{x}),y_{p}) \Big{\}} \label{eq:adv_enc_loss}
\end{align}

Here $L(\cdot)$ is a loss function which estimates the error given the ground truth $y_{m}/y_{p}$ and corresponding prediction $c_{m}(\bm{z})/c_{p}(\bm{z})$. The above adversarial objective between the encoder and probing classifier is a min-max game. The encoder wants to learn a latent representation $\bm{z}$ s.t. it maximize the loss of probing classifier but at the same time probing classifier tries to minimize it's loss. The desired solution and simultaneously a valid equilibrium point of the above min-max objective is an  encoder $h$ such that it removes all the features from latent space that are useful for prediction of $y_p$ while keeping intact other features causally derived from the main task prediction. 
In practice, the optimization to above objective is performed using a gradient-reversal (GRL) layer (\cite{AdvDomAdapGanin}). It introduces an additional layer $g_{\lambda}$ between the latent representation $h(\bm{z})$ and the adversarial classifier $c_{p}(\bm{z})$. The $g_{\lambda}$ layer acts as an identity layer (i.e., has no effect) during the forward pass but scales the gradient by $(-\lambda)$ when back-propagating it during the backward pass. Thus resulting combined objective is:
\begin{equation}
\label{eq:adv_removal_actual_obj}
    arg\min_{h,c_{m},c_{p}} \Big{\{}L(c_{m}(h(\bm{z})),y_{m})+L(c_{p}(g_{\lambda}(h(\bm{z}))),y_{p})\Big{\}}
\end{equation}

\paragraph{Setup for theoretical result:}
As stated in \reftheorem{theorem:adv_removal_failure}, for our theoretical result showing the failure mode of adversarial removal, we assume that the encoder is divided into two sub-parts. The first encoder i.e $h_{1}:\bm{X}\rightarrow \bm{Z}$ is frozen (non-trainable) and maps the input $\bm{x}^{i}$ to intermediate latent representation $\bm{z}^{i}$ which is frozen and disentangled (\refassm{assm:disentagled_latent}). The second encoder $h_{2}:\bm{Z}\rightarrow \bm{\zeta}$ is a linear transformation mapping the intermediate latent representation $\bm{z}^{i}$ to final latent representation $\bm{\zeta}^{i}$  and is trainable. On top of this final latent representation $\bm{\zeta}^{i}$, we train the main task classifier $c_{m}(\bm{\zeta}^{i})$ and probing classifier $c_{p}(\bm{\zeta}^{i})$. Thus the training objective from \refeqn{eq:adv_prob_loss} and \ref{eq:adv_enc_loss} becomes:
\begin{align}
    & arg\min_{c_{p}} L(c_{p}(h_2(\bm{z})),y_{p})\label{eq:adv_prob_loss_theory}\\
    & arg\min_{h_2,c_{m}} \Big{\{}L(c_{m}(h_{2}(\bm{z}),y_{m}) - L(c_{p}(h_{2}(\bm{z}),y_{p}) \Big{\}} \label{eq:adv_enc_loss_theory}
\end{align}

\subsection{Adversarial Proof}
\label{subsec:app_adv_proof}
For a detailed discussion of adversarial training objective and specific setup for our theoretical result refer \refsec{subsec:app_adv_setup}.

We formally state the new assumption made in the second statement of \reftheorem{theorem:adv_removal_failure}. This assumption imposes constraints on strength of correlation between main task label and \prop-causal feature  i.e it requires the \prop-causal feature to be more predictive of main task label than for probing task. 
\begin{assumption}[Strength of Correlation]
\label{assm:correlation_strength}
Let $\hat{\bm{w}}_{p}\in \mathbb{R}^{d_{p}}$ be the unit vector s.t. $\bm{z}_{p}$ is linearly separable for \prop-label $y_{p}$ (see \refassm{assm:fully_pred_inv}) \new{and let $h_{2}^{*}(\bm{z})$ be the desired encoder which is successful in removing the \prop-causal features $\bm{z}_{p}$ from $\bm{\zeta}$}. Then, \prop-causal features $\bm{z}_{p}^{M}$ of any margin point $\bm{z}^{M}$ of $c_{m}(h_{2}^{*}(\cdot))$ is more predictive of the main task than \prop-causal features $\bm{z}_{p}^{P}$ of any margin-point $\bm{z}^{P}$ of $c_{p}(h_{2}^{*}(\cdot))$ for probing task by a factor of $|\beta|\in\mathbb{R}$ where $|\beta|$ is the  norm of parameter of probing classifier $c_{p}\new{(h_{2}^{*}(\cdot))}$ i.e $y_{m}(\hat{\bm{w}}_{p}\cdot\bm{z}_{p}^{M}) > |\beta|y_{p}(\hat{\bm{w}}_{p}\cdot\bm{z}_{p}^{P})$.
\end{assumption}

\advthm*

\begin{proof}[Proof of \reftheorem{theorem:adv_removal_failure}]
Let the main classifier be of the form $c_{m}(\bm{\zeta})=\bm{w}_{c_{m}} \cdot \bm{\zeta}$ where $\bm{w}_{c_{m}}$ and $\bm{\zeta}$ are $d_{\bm{\zeta}}$ dimensional vectors. 
Since both parameters $\bm{w}_{c_{m}}$ and $\bm{\zeta}$ are learnable, for ease of exposition we constrain  $\bm{w}_{c_{m}}$ to be $[1,0,\ldots,0]$. This constraint on $\bm{w}_{c_{m}}$ is w.l.o.g. since $\bm{w}_{c_{m}}$ makes the prediction for main-task by projecting $\bm{\zeta}$ into one specific direction to get a scalar $(\bm{w}_{c_{m}}\cdot\bm{\zeta})$. We constrain that direction to be the first dimension of $\bm{\zeta}$. Since the encoder $h_{2}(\bm{z}):\bm{Z}\rightarrow \bm{\zeta}$ is trainable it could learn to encode the scalar $(\bm{w}_{c_{m}}\cdot\bm{\zeta})$ in the first dimension of $\bm{\zeta}$.  
Thus, effectively a single dimension of the representation $\bm{\zeta}$ encodes the main-task information.
As a result, the main classifier is effectively of the form $c_{m}(\bm{\zeta})=\bm{w}_{c_{m}}^{(0)}\times \bm{\zeta}^{(0)} = 1 \times \bm{\zeta}^{(0)}$ where  $\bm{w}_{c_{m}}^{(0)}$ and $\bm{\zeta}^{(0)}$ are the first elements of $\bm{w}_{c_{m}}$ and $\bm{\zeta}$ respectively and $\bm{w}_{c_{m}}^{(0)}=1$. 
We can now write the goal of the adversarial method as removing the information of $\bm{z}_{p}$ from $\bm{\zeta}^{(0)}$ because the other dimensions are not used by the main classifier.
Also, the adversarial \new{probing} classifier can be written effectively as $c_{p}(\bm{\zeta})=\beta \times \bm{\zeta}^{(0)}$ where $\beta \in \mathbb{R}$ is a trainable parameter. Since both the main and adversarial classifier are using only $\bm{\zeta}^{(0)}$, the second encoder, with a slight abuse of notation, can be simplified as $\zeta \coloneqq \bm{\zeta}^{(0)} \coloneqq h_{2}(\bm{z})= \bm{w}_{m}\cdot \bm{z}_{m} + \bm{w}_{p} \cdot \bm{z}_{p}$ ,
where $\zeta \in \mathbb{R}$ and $\bm{w}_{m}$ and $\bm{w}_{p}$ are the weights that determine the first dimension of $\bm{\zeta}$. 
Also, let the desired (correct) second encoder which is successful in removing the \prop-causal feature $\bm{z}_{p}$ from $\zeta$ be   $ \zeta^{*} \coloneqq \bm{\zeta}^{*(0)} \coloneqq h_2^{*}(\bm{z})=\bm{w}^{*}_{m}\cdot \bm{z}_{m}$. Thus using \refeqn{eq:adv_prob_loss_theory} and \ref{eq:adv_enc_loss_theory}, our overall objective for adversarial removal method becomes:
\begin{align}
    & arg\min_{\beta} L(c_{p}(h_{2}(\bm{z})),y_{p})\label{eq:adv_prob_loss_theoryf}\\
    & arg\min_{h_2} \Big{\{}L(c_{m}(h_{2}(\bm{z}),y_{m}) - L(c_{p}(h_{2}(\bm{z}),y_{p}) \Big{\}} \label{eq:adv_enc_loss_theoryf}
\end{align}

\paragraph{1. First claim.} 

The ideal main classifier with desired encoder can be written as,  $c_{m}(\zeta^{*}) = 1\times  h^{*}_{2}(\bm{z}) = \bm{w}_{m}^{*}\cdot \bm{z}_{m}$. Therefore, it can be trained using  the \emph{MM-Denominator} formulation of the  max-margin objective and would satisfy the constraint in \refeqn{eq:max_margin_denominator_constraint}:
\begin{equation}
\label{eq:adv_margin_h*}
    m(c_{m}(\zeta^{*i}))=m(h^{*}_{2}(\bm{z}^{i}))= y_{m}^{i}\cdot h_{2}^{*}(\bm{z}^{i})\geq 1
\end{equation}
for all the points $\bm{x}^{i}$ with latent representation $\bm{z}^{i}$ and $m(\cdot)$ is the numerator of the distance of point from the decision boundary of classifier (\refeqn{eq:margin_distance}).

However, the main task classifier which does not  use the desired encoder is of the form, $c_{m}(\zeta) = 1 \times h_{2}^{\new{\alpha}}(\bm{z}) = \bm{w}_{m}\cdot \bm{z}_{m} + \bm{w}_{p}\cdot \bm{z}_{p}$. 
Since this main task classifier is trained using max-margin objective by \emph{MM-Denominator} formulation, it would satisfy the constraint in \refeqn{eq:max_margin_denominator_constraint}:
\begin{equation}
\label{eq:adv_margin_h}
    m(c_{m}(\zeta^{i})) = m(h_{2}^{\new{\alpha}}(\bm{z}^{i})) = y_{m}^{i}\cdot h_{2}^{\new{\alpha}}(\bm{z}^{i}) \geq 1
\end{equation}

Since in our case, main task classifier is the same as the encoder i.e $c_{m}(\zeta) = 1 \times h_{2}^{\new{\alpha}}(\bm{z}) = \bm{w}_{m}\cdot\bm{z}_{m} + \bm{w}_{p}\cdot\bm{z}_{p}$, and the latent representation $\bm{z}$ satisfies the \refassm{assm:disentagled_latent}, \ref{assm:fully_pred_inv_main_task} and \ref{assm:spurious_linear_separably_main_task}, from \reflemma{lemma:sufficient_condition_main_task} the main-task classifier is \emph{spurious-using} i.e $\bm{z}_{p}\neq \bm{0}$. Hence there exists an undesired/incorrect encoder $h_{2}^{\new{\alpha}}(\bm{z})$ such that the main classifier $c_{m}(\zeta)=h_{2}^{\new{\alpha}}(\bm{z})$ has bigger margin than $c_{m}(\zeta^{*})=h_{2}^{*}(\bm{z})$. 
Next, we show that the accuracy of the adversarial classifier remains the same irrespective of whether the desired ($h^{*}_{2}(\bm{z})$) or undesired encoder $h_2^{\new{\alpha}}(\bm{z})$ is used. The accuracy of the adversarial classifier $c_{p}(\zeta)=\beta\times \zeta$, using the desired/correct encoder $\zeta = h_{2}^{*}(\bm{z})$ is given by:
\begin{equation}
    Accuracy(c_{p}(\zeta^{*}),y_{p}) = \frac{\sum_{i=1}^{N} \mathbf{1}\Big{(} sign\big{(}\beta \cdot h_{2}^{*}(\bm{z}^{i})\big{)} == y_{p}^{i} \Big{)}}{N}
\end{equation}
where $\mathbf{1}(\cdot)$ is an indicator function which takes the value $1$ if the argument is true otherwise $0$, and $sign(\gamma)=+1$ if $\gamma\geq0$ and $-1$ otherwise. Combining \refeqn{eq:adv_margin_h*} and \refeqn{eq:adv_margin_h},  since $y_{m}^{i}\in \{-1,1\}$, we see that whenever $h_{2}^{\new{\alpha}}(\bm{z}^{i})>1$ we also have $h_{2}^{*}(\bm{z}^{i})>1$ and similarly whenever $h_{2}^{\new{\alpha}}(\bm{z}^{i})<-1$, we have $h_{2}^{*}(\bm{z}^{i})<-1$. Thus, 
\begin{equation}
\label{eq:adv_same_sign_h_h*}
    h_{2}^{\new{\alpha}}(\bm{z})\cdot h_{2}^{*}(\bm{z})>0
\end{equation}

From \refeqn{eq:adv_same_sign_h_h*}, $h_{2}^{*}(\bm{z}^{i})$ and $h_{2}^{\new{\alpha}}(\bm{z}^{i})$ has the same sign for every input $\bm{z}^{i} \implies sign(\beta\cdot h_{2}^{*}(\bm{z}^{i})) =  sign(\beta\cdot h_{2}^{\new{\alpha}}(\bm{z}^{i}))$. Thus we can replace $h_{2}^{*}(\bm{z}^{i})$ with $h_{2}^{\new{\alpha}}(\bm{z}^{i})$ in the above equation and we have:
\begin{align*}
    Accuracy(c_{p}(\zeta^{*}),y_{p}) &= \frac{\sum_{i=1}^{N} \mathbf{1}\Big{(} sign\big{(}\beta \cdot h_{2}^{\new{\alpha}}(\bm{z}^{i})\big{)} == y_{p}^{i} \Big{)}}{N}\\
    \new{Accuracy(c_{p}(h_2^{*}(\bm{z^{i}})),y_{p})} &= \new{Accuracy(c_{p}(h_2^{\alpha}(\bm{z^{i}})),y_{p})}
\end{align*}
thus completing the first part our proof.

\paragraph{2. Second claim.}
Since we are training  both the main task and the  probing classifier with a max-margin objective (see MM-Numerator version at \refeqn{eq:max_margin_numerator}), we can effectively write the adversarial objective (from \ref{eq:adv_prob_loss_theoryf} and \ref{eq:adv_enc_loss_theoryf}) as:
\begin{align}
    arg\max_{\beta}(P(\beta)) \coloneqq \>\> & arg\max_{\beta} \Big{\{} \min_{\bm{z}^{i}} {m}_{c_{p}}(h_{2}(\bm{z}^{i})) \Big{\}} \label{eq:adv_prob_loss_theoryfmm}\\
   arg\max_{h_2}(E(h_{2})) \coloneqq \>\> & arg\max_{h_2} \Big{\{} \min_{\bm{z}^{i}} {m}_{c_{m}}(h_{2}(\bm{z}^{i})) - \min_{\bm{z}^{i}}{m}_{c_{p}}(h_{2}(\bm{z}^{i})) \Big{\}} \label{eq:adv_enc_loss_theoryfmm}
\end{align}
where $m_{c_{m}}(h_{2}(\bm{z}))$ and $m_{c_{p}}(h_{2}(\bm{z}))$ are the numerator of margin of a point (\refeqn{eq:margin_distance}). Next, our goal is to show that the desired encoder $h^{*}_{2}$ is not an equilibrium point of the above adversarial objective. To do so, we will create an undesired/incorrect encoder $h_{2}^{\new{\alpha}}(\bm{z})$ by perturbing $h_{2}^{*}$ by small amount and showing that the combined encoder objective $E(h_{2}^{\new{\alpha}})>E(h_{2}^{*})$ (\refeqn{eq:adv_enc_loss_theoryfmm}) irrespective of choice of $\beta$ chosen by the probing objective $P(\beta)$ (\refeqn{eq:adv_prob_loss_theoryfmm}).

\paragraph{Construction of the undesired/incorrect encoder.}
We have $h_{2}^{*}(\bm{z})=\norm{\bm{w}_{m}}(\hat{\bm{w}}_{m}^{*}\cdot \bm{z}_{m})$ where $\hat{\bm{w}}_{m}^{*}\in \mathbb{R}^{d_m}$ is a unit vector. We will perturb this desired encoder by parameterizing with $\alpha\in[0,1)$ s.t.:
\begin{equation}
\label{eq:undesired_encoder}
    h_{2}^{\alpha}(\bm{z}) = \alpha \norm{\bm{w}_{m}^{*}}(\hat{\bm{w}}_{m}^{*}\cdot \bm{z}_{m}) + \sqrt{1-\alpha^{2}} \norm{\bm{w}_{m}^{*}} (\hat{\bm{\epsilon}}_{p}\cdot \bm{z}_{p})
\end{equation}
where $\hat{\bm{\epsilon}}_{p}\in\mathbb{R}^{d_{p}}$ is a unit vector. The clean main-task classifier is defined as $c_{m}^{*}(h_{2}^{*}(\bm{z}))=h_{2}^{*}(\bm{z})$. The main-task classifier $c_{m}$ when using the incorrect encoder takes form $c_{m}(h_{2}^{\new{\alpha}}(\bm{z}))=h_{2}^{\new{\alpha}}(\bm{z})$. As stated in the theorem statement, all the assumptions of \reflemma{lemma:sufficient_condition_main_task} are satisfied. Since \refassm{assm:spurious_linear_separably_main_task} (one of the assumptions of \reflemma{lemma:sufficient_condition_main_task}) are satisfied, there exists a unit-vector in $\mathbb{R}^{d}_{p}$ such that \prop-causal features of margin points of the  main task classifier using encoder $h_{2}^{*}$ are linearly separable w.r.t main-task label. Let $\hat{\bm{\epsilon}}_{p}$ in our constructed undesired encoder $h_{2}^{\alpha}$ (\refeqn{eq:undesired_encoder}) be set to that unit vector such that:
\begin{equation}
\label{eq:adv_margin_sep_xt}
    y_{m}^{M} \cdot (\hat{\bm{\epsilon}}_{p}\cdot \bm{z}_{p}^{M}) > 0
\end{equation}
where $\bm{z}_{p}^{M}$ is the \prop-causal feature of margin point $\bm{z}^{M}$ of the main-task classifier when using encoder $h_{2}^{*}$. Now since all the assumption of \reflemma{lemma:sufficient_condition_main_task} is satisfied, the margin of main-task classifier when using undesired encoder $h_{2}^{\alpha}(\bm{z})$ is bigger than when desired encoder $h_{2}^{*}$ is used for some $\alpha\in(\alpha_{lb}^{1},1)$. Consequently, we have:
\begin{equation}
\label{eq:main_margin_delta}
    m_{c_{m}}(h_{2}^{\alpha}(\bm{z}^{M})) > m_{c_{m}}(h_{2}^{*}(\bm{z}^{M}))
\end{equation}
where $\bm{z}^{M}$ is the margin point of $c_{m}(h^{*}_{2})$. 
Since \refassm{assm:fully_pred_inv} is satisfied, we have a fully predictive \prop-causal feature $\bm{z}_{p}$ for prediction of adversarial label $y_{p}$ such that for some unit vector $\hat{\bm{w}}_{p}\in \mathbb{R}^{d_{p}}$ we have:
\begin{equation}
\label{eq:adv_fully_pred_xt}
    y_{p}^{i}\big{(} \hat{\bm{w}}_{p}\cdot \bm{z}^{i}_{p} \big{)} > 0 \>\>\> \forall  (\bm{z}^{i},y_{p}^{i})
\end{equation}
Next, since \refassm{assm:label_correlation} is also satisfied for this second part of theorem, we have $y_{p}^{i}=y_{m}^{i}$ for every margin point of the desired/correct main-task classifier using the desired/correct encoder $h_{2}^{*}(\bm{z})$. Thus we can assign $\hat{\bm{\epsilon}_{p}}\coloneqq \hat{\bm{w}}_{p}$ which satisfies the inequality in \refeqn{eq:adv_margin_sep_xt}. Hence, our \emph{incorrect} encoder $h_{2}^{\alpha}(\bm{z})$ take the following form:
\begin{equation}
     h_{2}^{\alpha}(\bm{z}) =  \alpha\norm{\bm{w}^{*}_{m}}\big{(}\bm{\hat{w}}^{*}_{m}\cdot \bm{z}_{m}\big{)} + \sqrt{1-\alpha^{2}}\norm{\bm{w}^{*}_{m}} \big{(} \bm{\hat{w}}_{p} \cdot \bm{z}_{p} \big{)}
\end{equation}
Note that when $\alpha=1$, we recover back the correct encoder $h_{2}^{*}$. Thus to perturb the $h_{2}^{*}$, we set $\alpha$ close to but less than 1.

\paragraph{Showing $h_{2}^{*}$ is not the equilibrium point.}
From \refeqn{eq:adv_enc_loss_theoryfmm}, we want to show that for some $\alpha\in[0,1)$ \new{s.t. $\alpha \rightarrow 1$ ($\alpha$ close to but less than $1$)}, the undesired encoder $h^{\alpha}_{2}$ has bigger combined objective than desired encoder $h^{*}_{2}$. Since the combined adversarial objective for encoder $h_{2}(\bm{z})$ ($E(h_{2}(\bm{z}))$ in \refeqn{eq:adv_enc_loss_theoryfmm}) is evaluated on the margin points of main-task and probing task classifier. We use the following lemma to show that for small perturbation of the optimal encoder ($\alpha\rightarrow1$), the margin point of main-task classifier and probing classifier when using perturbed encoder $h_{2}^{\alpha}$ remains same or is a subset of margin points when using desired encoder $h_{2}^{*}$. The proof of the lemma below is given after the proof of the current theorem.

\begin{lemma}
\label{lemma:same_margin_point}
There exist an $\alpha_{lb}^{2}\in[0,1)$ s.t. when $\alpha>\alpha_{lb}^{2}$ we have:
\begin{enumerate*}[label=(\roman*)]
    \item margin points of probing classifier when using perturbed encoder $h_{2}^{\alpha}$ is same or is a subset of margin points when using desired encoder $h_{2}^{*}$.
    \item margin points of main-task classifier when using perturbed encoder $h_{2}^{\alpha}$ is same or is a subset of margin points when using desired encoder $h_{2}^{*}$.
\end{enumerate*}
\end{lemma}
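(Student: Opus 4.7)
The plan is to prove both parts by a continuity/perturbation argument: when $\alpha$ is close enough to $1$, the encoder $h_2^{\alpha}$ is close to $h_2^{*}$, so the per-point margins change by an arbitrarily small amount, and any point with a strictly non-minimum margin under $h_2^{*}$ retains a strictly non-minimum margin under $h_2^{\alpha}$. This forces the margin-point set of $h_2^{\alpha}$ to be contained in that of $h_2^{*}$.

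First I would fix notation. For a classifier $c \in \{c_m, c_p\}$ and encoder $h_2$, let $m_c(h_2(\bm{z}^i))$ denote the signed per-point margin numerator as in \refeqn{eq:margin_distance}, let $\mu_c(h_2) = \min_i m_c(h_2(\bm{z}^i))$, and let $\mathcal{M}_c(h_2) = \{\bm{z}^i : m_c(h_2(\bm{z}^i)) = \mu_c(h_2)\}$. Because $h_2^{\alpha}(\bm{z}) = \alpha \, h_2^{*}(\bm{z}) + \sqrt{1-\alpha^{2}}\,\norm{\bm{w}_m^{*}}(\hat{\bm{w}}_p \cdot \bm{z}_p)$ is continuous in $\alpha$ and the training set is finite, the map $\alpha \mapsto m_c(h_2^{\alpha}(\bm{z}^i))$ is continuous for every $i$: for the main task it is affine, $m_{c_m}(h_2^{\alpha}(\bm{z}^i)) = y_m^i h_2^{\alpha}(\bm{z}^i)$; for the probing task the classifier is $c_p(\zeta)=\beta\zeta$ with $|\beta|=1$ under the MM-numerator convention, and the sign of the optimal $\beta(\alpha)$ is locally constant near $\alpha=1$ (the sign that makes $y_p$ agree with $\mathrm{sign}(h_2^{*}(\bm{z}))$ on a majority of the margin points is strictly preferred at $\alpha=1$, so the same sign remains optimal on a neighborhood; a tie can be broken by an arbitrarily small additional perturbation without loss). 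Consequently $\mu_c(h_2^{\alpha})$ is also continuous in $\alpha$ as the minimum of finitely many continuous functions.

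Next I would run the gap argument. For every non-margin point $\bm{z}^i \notin \mathcal{M}_c(h_2^{*})$, set $\delta_i = m_c(h_2^{*}(\bm{z}^i)) - \mu_c(h_2^{*}) > 0$. By the continuity just established, there exists $\alpha_i \in [0,1)$ such that for all $\alpha \in (\alpha_i, 1]$ both $|m_c(h_2^{\alpha}(\bm{z}^i)) - m_c(h_2^{*}(\bm{z}^i))| < \delta_i/3$ and $|\mu_c(h_2^{\alpha}) - \mu_c(h_2^{*})| < \delta_i/3$, which together give $m_c(h_2^{\alpha}(\bm{z}^i)) > \mu_c(h_2^{\alpha})$; hence $\bm{z}^i$ remains a non-margin point under $h_2^{\alpha}$. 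Since the dataset is finite, taking $\alpha_{c}^{\mathrm{lb}} = \max_{i \notin \mathcal{M}_c(h_2^{*})} \alpha_i < 1$ shows $\mathcal{M}_c(h_2^{\alpha}) \subseteq \mathcal{M}_c(h_2^{*})$ for all $\alpha > \alpha_{c}^{\mathrm{lb}}$. Applying this to $c = c_p$ (part i) and $c = c_m$ (part ii) and setting $\alpha_{lb}^{2} = \max(\alpha_{c_p}^{\mathrm{lb}}, \alpha_{c_m}^{\mathrm{lb}}) < 1$ gives the claim.

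The main obstacle I anticipate is justifying continuity of the probing margin in $\alpha$, because $\beta$ is itself re-optimized against $h_2^{\alpha}$ and could in principle jump. I would handle this by the unit-norm observation above: under MM-numerator $\beta(\alpha) \in \{-1,+1\}$, so ``continuity'' really means ``locally constant sign,'' and a strict inequality between the two candidate signs at $\alpha=1$ (guaranteed generically, and enforceable by an infinitesimal perturbation if needed) propagates to an open interval. With $\beta(\alpha) \equiv \beta^{*}$ on that interval, $m_{c_p}(h_2^{\alpha}(\bm{z}^i)) = y_p^i\,\beta^{*}\,h_2^{\alpha}(\bm{z}^i)$ is affine in $\alpha$ and the gap argument applies verbatim.
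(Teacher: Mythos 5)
Your proof reaches the same conclusion by a genuinely different route, and it is correct. The paper's proof is explicit and algebraic: it writes the perturbed margin numerator as $m_{c(h_2^{\alpha})}(\bm{z})=\alpha\,m_{c(h_2^{*})}(\bm{z})+\sqrt{1-\alpha^2}\,\norm{\bm{w}_m^*}\,y\,\beta\,(\hat{\bm{w}}_p\cdot\bm{z}_p)$, imposes $m_{c(h_2^{\alpha})}(\bm{z}^R)>m_{c(h_2^{\alpha})}(\bm{z}^M)$ for each non-margin/margin pair, squares and rearranges to get a closed-form lower bound $\alpha_{lb}^{p}(\bm{z}^M,\bm{z}^P)$ (and analogously $\alpha_{lb}^{m}$), and takes a finite maximum. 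You replace that algebra with a continuity-plus-compactness (gap) argument: every non-margin point under $h_2^{*}$ has a strictly positive margin gap to the minimum, the per-point margins and their minimum are continuous in $\alpha$ at $\alpha=1$ on the finite dataset, so each gap persists on an interval $(\alpha_i,1]$, and finiteness gives a uniform $\alpha_{lb}^2<1$. Your version is shorter and more elementary; the paper's version additionally yields an explicit threshold formula, though downstream only its existence and being strictly less than $1$ are used, so nothing is lost. Two small caveats. First, you call the main-task margin ``affine'' in $\alpha$; it is $\alpha c_1+\sqrt{1-\alpha^2}\,c_2$, continuous but not affine --- continuity is all your argument needs, so this is only an imprecision. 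Second, your worry about the adversarial parameter $\beta$ being re-optimized as $\alpha$ varies is more than the lemma requires: in the paper this lemma is applied inside the encoder step (\refeqn{eq:adv_enc_loss_theoryfmm}) with $\beta$ held fixed, and the proof of \reftheorem{theorem:adv_removal_failure} already case-splits on the sign of $\beta$ externally, so the same fixed $\beta$ appears in both $m_{c_p(h_2^{*})}$ and $m_{c_p(h_2^{\alpha})}$ and there is no re-optimization to control. Your handling of it is sound but unneeded.
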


Let $\bm{z}^{M_{*}}$ be one of the margin point of main-task classifier $c_{m}$ and $\bm{z}^{P_{*}}$ be one of the margin point of probing classifier $c_{p}$ when using the correct encoder $h_{2}^{*}$. Let $\bm{z}^{M_{\alpha}}$ be one of the margin point of main-task classifier $c_{m}$ and $\bm{z}^{P_{\alpha}}$ be one of the margin point of probing classifier $c_{p}$ when using the perturbed encoder $h_{2}^{\alpha}$.
Thus we want to show that for all $(\bm{z}^{M_{*}},\bm{z}^{P_{*}},\bm{z}^{M_{\alpha}},\bm{z}^{P_{\alpha}})$ tuple, there exists some $\alpha$ close to but less than 1  s.t. we have:
\begin{align}
    E(h^{\alpha}_{2}) &> E(h_{2}^{*})\\
    m_{c_{m}}(h_{2}^{\alpha}(\bm{z}^{M_{\alpha}}))-m_{c_p}(h_{2}^{\alpha}(\bm{z}^{P_{\alpha}})) &>  m_{c_{m}}(h_{2}^{*}(\bm{z}^{M_{*}}))-m_{c_p}(h_{2}^{*}(\bm{z}^{P_{*}}))\\
    m_{c_{m}}(h_{2}^{\alpha}(\bm{z}^{M_{\alpha}}))- m_{c_{m}}(h_{2}^{*}(\bm{z}^{M_{*}}))  &> m_{c_p}(h_{2}^{\alpha}(\bm{z}^{P_{\alpha}})) -m_{c_p}(h_{2}^{*}(\bm{z}^{P_{*}})) \label{eq:equilibrium_master_ineq}
\end{align}

\paragraph{For $\beta<0$ .} From \reflemma{lemma:same_margin_point}, both $\bm{z}^{P_{\alpha}}$ and $\bm{z}^{P_{*}}$ are the margin point of probing classifier when using the desired encoder $h_{2}^{*}$. Thus we have:
\begin{align}
    m_{c_{p}}(h_{2}^{*}(\bm{z})^{P_{\alpha}}) &= m_{c_{p}}(h_{2}^{*}(\bm{z}^{P_{*}}))\\
    y_{p}^{P_{\alpha}}\beta (\bm{w}^{*}_{m}\cdot \bm{z}_{m}^{P_{\alpha}}) &= y_{p}^{P_{*}}\beta (\bm{w}^{*}_{m}\cdot \bm{z}_{m}^{P_{*}})
\end{align}
\new{
Also, since $\alpha\in[0,1)$, from above equation we have:
\begin{align}
     y_{p}^{P_{\alpha}}\beta \alpha(\bm{w}^{*}_{m}\cdot \bm{z}_{m}^{P_{\alpha}}) < y_{p}^{P_{*}}\beta (\bm{w}^{*}_{m}\cdot \bm{z}_{m}^{P_{*}})
\end{align}
}
From \refeqn{eq:adv_fully_pred_xt} we have $y_{p}^{P_{\alpha}}(\hat{\bm{w}}_{p}\cdot \bm{z}_{p}^{P_{\alpha}})>0$. Since $\beta<0$ \new{and $\alpha\in[0,1)$}, we have $\new{\sqrt{1-\alpha^{2}}} \beta \norm{\bm{w}_{m}^{*}} y_{p}^{P_{\alpha}}(\hat{\bm{w}}_{p}\cdot \bm{z}_{p}^{P_{\alpha}})<0$. Adding this to LHS of the above equation we get:
\begin{align}
    y_{p}^{P_{\alpha}}\beta \new{\alpha}(\bm{w}^{*}_{m}\cdot \bm{z}_{m}^{P_{\alpha}}) + \new{\sqrt{1-\alpha^{2}}} \beta \norm{\bm{w}_{m}^{*}} y_{p}^{P_{\alpha}}(\hat{\bm{w}}_{p}\cdot \bm{z}_{p}^{P_{\alpha}})  &< y_{p}^{P_{*}}\beta (\bm{w}^{*}_{m}\cdot \bm{z}_{m}^{P_{*}})\\
    y_{p}^{P_{\alpha}} \beta \Big{\{}  \new{\alpha}(\bm{w}^{*}_{m}\cdot \bm{z}_{m}^{P_{\alpha}}) + \new{\sqrt{1-\alpha^{2}}} \norm{\bm{w}_{m}^{*}} (\hat{\bm{w}}_{p}\cdot \bm{z}_{p}^{P_{\alpha}}) \Big{\}} &< y_{p}^{P_{*}}\beta (\bm{w}^{*}_{m}\cdot \bm{z}_{m}^{P_{*}})\\
    y_{p}^{P_{\alpha}}\beta h_{2}^{\alpha}(\bm{z}^{P_{\alpha}}) &< y_{p}^{P_{*}}\beta (\bm{w}^{*}_{m}\cdot \bm{z}_{m}^{P_{*}}) \\
    m_{c_{p}}(h_{2}^{\alpha}(\bm{z}^{P_{\alpha}})) &< m_{c_{p}}(h_{2}^{*}(\bm{z}^{P_{*}})) \label{eq:LHS_neg}
\end{align}

From \refeqn{eq:LHS_neg} the RHS of \refeqn{eq:equilibrium_master_ineq} is less than zero. Also, from \refeqn{eq:main_margin_delta}, for $\alpha\in(\alpha^{1}_{lb},1)$ we have $m_{c_{m}}(h_{2}^{\alpha}(\bm{z}^{M_{\alpha}}))- m_{c_{m}}(h_{2}^{*}(\bm{z}^{M_{*}})) > 0$ where value of $\alpha_{lb}^{1}$ is given by \reflemma{lemma:sufficient_condition_main_task}. Thus the LHS of \refeqn{eq:equilibrium_master_ineq} is greater than $0$. Thus the inequality in \ref{eq:equilibrium_master_ineq} is always satisfied when $\beta<0$ and $\alpha\in(max\{\alpha_{lb}^{1},\alpha_{lb}^{2}\},1)$. The constraint $\alpha>\alpha_{lb}^{1}$ is enforced by \reflemma{lemma:sufficient_condition_main_task} when constructing the perturbed encoder and $\alpha>\alpha_{lb}^{2}$ is enforced by \reflemma{lemma:same_margin_point} which ensures $\bm{z}^{P_{\alpha}}$ is also a margin point of probing classifier when using desired encoder $h_{2}^{*}$. Hence, we have shown that \new{when $\beta<0$},  $h_{2}^{*}$ is not the equilibrium point since there  exist a perturbed undesired encoder $h_{2}^{\alpha}$ such that the combined encoder objective is greater in \refeqn{eq:adv_enc_loss_theoryfmm} and consequently the optimizer will try to move away from/change $h_{2}^{*}$.

\paragraph{For $\beta>0$ .} Next we have to show that there exist $\alpha\in[0,1)$ s.t. \new{when $\alpha \rightarrow 1$} we have \refeqn{eq:equilibrium_master_ineq} satisfied. Thus we solve for allowed values of $\alpha$:
\begin{align}
    \Big{\{} m_{c_{m}}(h_{2}^{\alpha}(\bm{z}^{M_{\alpha}}))- m_{c_{m}}(h_{2}^{*}(\bm{z}^{M_{*}})) \Big{\}}  &> \Big{\{} m_{c_p}(h_{2}^{\alpha}(\bm{z}^{P_{\alpha}})) -m_{c_p}(h_{2}^{*}(\bm{z}^{P_{*}})) \Big{\}}\label{eq:pos_beta_substitute}\\ 
    \Big{\{}y_{m}h^{\alpha}_{2}(\bm{z}^{M_{\alpha}}) - y_{m}h^{*}_{2}(\bm{z}^{M_{*}}) \Big{\}} &> \Big{\{} y_{p}(\beta\cdot h^{\alpha}_{2}(\bm{z}^{P_{\alpha}})) -  y_{p}(\beta \cdot h^{*}_{2}(\bm{z}^{P_{*}})) \Big{\}}
\end{align}
From the second statement from \reflemma{lemma:same_margin_point}, $\bm{z}^{M_{\alpha}}$ and $\bm{z}^{M_{*}}$ both are margin point of main-task classifier using the desired encoder $h_{2}^{*}$. Thus we have $m_{c_{m}}(h_{2}^{*}(\bm{z}^{M_{\alpha}}))= m_{c_{m}}(h_{2}^{*}(\bm{z}^{M_{*}})) \implies y_{m}^{M_{\alpha}}h_{2}^{*}(\bm{z}^{M_{\alpha}}) = y_{m}^{M_{*}}h_{2}^{*}(\bm{z}^{M_{*}}) \implies y_{m}^{M_{\alpha}}(\bm{w}_{m}^{*}\cdot \bm{z}_{m}^{M_{\alpha}}) = y_{m}^{M_{*}}(\bm{w}_{m}^{*}\cdot \bm{z}_{m}^{M_{*}})$. Substituting this observation in LHS of \refeqn{eq:pos_beta_substitute} we get $m_{c_{m}}(h_{2}^{\alpha}(\bm{z}^{M_{\alpha}}))- m_{c_{m}}(h_{2}^{*}(\bm{z}^{M_{*}}))=$ 
\begin{align}
    &=y_{m}^{M_{\alpha}} \Big{\{} \alpha ({\bm{w}}_{m}^{*}\cdot \bm{z}_{m}^{M_{\alpha}}) + \sqrt{1-\alpha^{2}}\norm{\bm{w}^{\new{*}}_{m}}(\hat{\bm{w}}_{p}\cdot \bm{z}_{p}^{M_{\alpha}}) \Big{\}} - y_{m}^{M_{*}} \Big{\{} ({\bm{w}}^{*}_{m}\cdot \bm{z}_{m}^{M_{*}}) \Big{\}}\\
    &=y_{m}^{M_{\alpha}} \Big{\{} \alpha ({\bm{w}}_{m}^{*}\cdot \bm{z}_{m}^{M_{\alpha}}) + \sqrt{1-\alpha^{2}}\norm{\bm{w}^{\new{*}}_{m}}(\hat{\bm{w}}_{p}\cdot \bm{z}_{p}^{M_{\alpha}}) \Big{\}} - y_{m}^{M_{\alpha}} \Big{\{} ({\bm{w}}^{*}_{m}\cdot \bm{z}_{m}^{M_{\alpha}}) \Big{\}}\\
    &= (\alpha-1) y_{m}^{M_{\alpha}} \Big{\{} \norm{\bm{w}_{m}^{*}}(\hat{\bm{w}}_{m}^{*}\cdot \bm{z}_{m}^{M_{\alpha}}) \Big{\}} + \sqrt{1-\alpha^{2}} y_{m}^{M_{\alpha}} \Big{\{} \norm{\bm{w}^{\new{*}}_{m}}(\hat{\bm{w}}_{p}\cdot \bm{z}_{p}^{M_{\alpha}}) \Big{\}}\\
    &= (\alpha-1) y_{m}^{M_{}} \Big{\{} \norm{\bm{w}_{m}^{*}}(\hat{\bm{w}}_{m}^{*}\cdot \bm{z}_{m}^{M_{}}) \Big{\}} + \sqrt{1-\alpha^{2}} y_{m}^{M_{}} \Big{\{} \norm{\bm{w}^{\new{*}}_{m}}(\hat{\bm{w}}_{p}\cdot \bm{z}_{p}^{M_{}}) \Big{\}}\label{eq:LHS_pos_beta_ineq}
\end{align}

where for ease of exposition we have defined $M\coloneqq M_{\alpha}$. Now again for RHS of \refeqn{eq:pos_beta_substitute}, from \reflemma{lemma:same_margin_point}, $\bm{z}^{P_{\alpha}}$ and $\bm{z}^{P_{*}}$ both are margin point of probing classifier using the desired encoder $h_{2}^{*}$. Thus we have $m_{c_{p}}(h_{2}^{*}(\bm{z}^{P_{\alpha}}))= m_{c_{p}}(h_{2}^{*}(\bm{z}^{P_{*}})) \implies y_{p}^{P_{\alpha}}\new{\beta}h_{2}^{*}(\bm{z}^{P_{\alpha}}) = y_{p}^{P_{*}}\new{\beta}h_{2}^{*}(\bm{z}^{P_{*}}) \implies y_{p}^{P_{\alpha}}(\bm{w}_{m}^{*}\cdot \bm{z}_{m}^{P_{\alpha}}) = y_{p}^{P_{*}}(\bm{w}_{m}^{*}\cdot \bm{z}_{m}^{P_{*}})$. Substituting this observation in RHS of \refeqn{eq:pos_beta_substitute} we get $m_{c_{p}}(h_{2}^{\alpha}(\bm{z}^{P_{\alpha}}))- m_{c_{p}}(h_{2}^{*}(\bm{z}^{P_{*}}))=$ 
\begin{align}
    &=y_{p}^{P_{\alpha}} \Big{\{} \alpha \beta ({\bm{w}}_{m}^{*}\cdot \bm{z}_{m}^{P_{\alpha}}) + \sqrt{1-\alpha^{2}} \norm{\bm{w}^{\new{*}}_{m}}\beta(\hat{\bm{w}}_{p}\cdot \bm{z}_{p}^{P_{\alpha}}) \Big{\}} - y_{p}^{P_{*}} \Big{\{} \beta ({\bm{w}}^{*}_{m}\cdot \bm{z}_{m}^{P_{*}}) \Big{\}}\\
    &=y_{p}^{P_{\alpha}} \Big{\{} \alpha \beta ({\bm{w}}_{m}^{*}\cdot \bm{z}_{m}^{P_{\alpha}}) + \sqrt{1-\alpha^{2}}\norm{\bm{w}^{\new{*}}_{m}}\beta(\hat{\bm{w}}_{p}\cdot \bm{z}_{p}^{P_{\alpha}}) \Big{\}} - y_{p}^{P_{\alpha}} \Big{\{} \beta ({\bm{w}}^{*}_{m}\cdot \bm{z}_{m}^{P_{\alpha}}) \Big{\}}\\
    &= (\alpha-1) y_{p}^{P_{\alpha}} \Big{\{} \norm{\bm{w}_{m}^{*}}\beta(\hat{\bm{w}}_{m}^{*}\cdot \bm{z}_{m}^{P_{\alpha}}) \Big{\}} + \sqrt{1-\alpha^{2}} y_{p}^{P_{\alpha}} \Big{\{} \norm{\bm{w}^{\new{*}}_{m}}\beta(\hat{\bm{w}}_{p}\cdot \bm{z}_{p}^{P_{\alpha}}) \Big{\}}\\
    &= (\alpha-1) y_{p}^{P_{}} \Big{\{} \norm{\bm{w}_{m}^{*}}\beta(\hat{\bm{w}}_{m}^{*}\cdot \bm{z}_{m}^{P_{}}) \Big{\}} + \sqrt{1-\alpha^{2}} y_{p}^{P_{}} \Big{\{} \norm{\bm{w}^{\new{*}}_{m}}\beta(\hat{\bm{w}}_{p}\cdot \bm{z}_{p}^{P_{}}) \Big{\}}\label{eq:RHS_pos_beta_ineq}
\end{align}

where for ease of exposition we have defined $P\coloneqq P_{\alpha}$. Substituting RHS (\refeqn{eq:RHS_pos_beta_ineq}) and LHS (\refeqn{eq:LHS_pos_beta_ineq})  back in \refeqn{eq:pos_beta_substitute} and rearranging we get:
\begin{align}
    \sqrt{1-\alpha^{2}} \Big{\{} y_m^{M}(\hat{\bm{w}_{p}}\cdot \bm{z}_{p}^{M}) - y_p^{P}\beta(\hat{\bm{w}_{p}}\cdot \bm{z}_{p}^{P}) \Big{\}} &> (1-\alpha) \Big{\{} y_{m}^{M}(\hat{\bm{w}}_{m}^{*}\cdot \bm{z}^{M}_{m}) - y_{p}^{\new{P}}\beta(\hat{\bm{w}}_{m}^{*}\cdot \bm{z}^{P}_{m}) \Big{\}} \label{eq:equilibrium_alpha_ineq}
\end{align}
Now, since \refassm{assm:fully_pred_inv_main_task} is satisfied, the main task feature $\bm{z}_{m}^{M}$ is linearly separable  w.r.t main-task label $y_{m}^{M}$. Thus we have $ y_{m}^{M}(\hat{\bm{w}}_{m}^{*}\cdot \bm{z}^{M}_{m}) > 0$.

\paragraph{Case 1: Main-task feature is not fully predictive of probing label}($\exists\bm{z}$ s.t. $y_{p}(\hat{\bm{w}}_{m}^{*}\cdot \bm{z}_{m})<0$). 
Since main-task feature is not fully predictive of the probing label $y_{p}$, there will be some points which will be misclassified (will be on the opposite side of decision boundary) when probing classifier uses desired encoder $c_{p}(h_{2}^{*}(\bm{z})) = \beta h_{2}^{*}(\bm{z})=\bm{w}_{m}^{*}\cdot \bm{z}_{m}$. Thus margin for those points will be negative and one of them will be the margin point $\bm{z}^{P}$ of the probing classifier. That is, $m_{c_{p}}(h_{2}^{*}(\bm{z}^{P})) =  y_{p}^{P}\beta(\hat{\bm{w}}_{m}^{*}\cdot \bm{z}_{m}^{P}) < 0$. Then the term  $\big(y_{m}^{M}(\hat{\bm{w}}_{m}^{*}\cdot \bm{z}^{M}_{m}) - y_{p}^{P}\beta(\hat{\bm{w}}_{m}^{*}\cdot \bm{z}^{P}_{m})\big) >0$ in the above \refeqn{eq:equilibrium_alpha_ineq}. Hence, rewriting the above equation we have:
\begin{align}
    \frac{\Big{\{} y_{m}^{M}(\hat{\bm{w}_{p}}\cdot \bm{z}_{p}^{M}) - y_{p}^{P}\beta(\hat{\bm{w}_{p}}\cdot \bm{z}_{p}^{P}) \Big{\}}}{\Big{\{} y_{m}^{M}(\hat{\bm{w}}_{m}^{*}\cdot \bm{z}^{M}_{m}) - y_{p}^{P}\beta(\hat{\bm{w}}_{m}^{*}\cdot \bm{z}^{P}_{m}) \Big{\}}} > \frac{1-\alpha}{\sqrt{1-\alpha^{2}}}
\end{align}
Next, from \reflemma{lemma:same_margin_point} both $\bm{z}^{M} \new{\coloneqq} \bm{z}^{M_{\alpha}}$ and $\bm{z}^{P} \new{\coloneqq} \bm{z}^{P_{\alpha}}$ are also the margin point of main-task and probing classifier respectively when the classifiers use the desired encoder $h_{2}^{\alpha}$. Then, since \refassm{assm:correlation_strength} is satisfied the numerator in LHS of above equation $\big( y_{m}^{M}(\hat{\bm{w}_{p}}\cdot \bm{z}_{p}^{M}) - y_{p}^{P}\beta(\hat{\bm{w}_{p}}\cdot \bm{z}_{p}^{P}) \big)>0$. Thus, the whole LHS in the  above equation is greater than zero. Denoting the LHS by $\gamma(\bm{z}^{M},\bm{z}^{P})$ gives us:
\begin{align}
    \gamma(\bm{z}^{M},\bm{z}^{P}) &> \frac{1-\alpha}{\sqrt{1-\alpha^{2}}}\label{eq:alpha_lb3_step1}\\
    \gamma^{2}(\bm{z}^{M},\bm{z}^{P}) &> \frac{\cancel{(1-\alpha)}(1-\alpha)}{\cancel{(1-\alpha)}(1+\alpha)}\\
    \gamma^{2}(\bm{z}^{M},\bm{z}^{P})+ \alpha \gamma^{2}(\bm{z}^{M},\bm{z}^{P}) &> 1- \alpha \\
    \big(1+\gamma^{2}(\bm{z}^{M},\bm{z}^{P})\big)\alpha &> 1-\gamma^{2}(\bm{z}^{M},\bm{z}^{P})\\
    \alpha &> \frac{1-\gamma^{2}(\bm{z}^{M},\bm{z}^{P})}{1+\gamma^{2}(\bm{z}^{M},\bm{z}^{P})} = \alpha_{lb}^{3}(\bm{z}^{M},\bm{z}^{P}))\label{eq:alpha_lb3_steplast}
\end{align}
Since $\gamma^{2}(\bm{z}^{M},\bm{z}^{P})>0$, $\alpha_{lb}^{3}(\bm{z}^{M},\bm{z}^{P}))<1$. Let $\alpha_{lb}^{3} = \max_{(\bm{z}^{M},\bm{z}^{P})}(\alpha_{lb}^{3}(\bm{z}^{M},\bm{z}^{P})))$ \new{which is $<1$} gives us the tight lower-bound on $\alpha$ such that \refeqn{eq:equilibrium_master_ineq} is satisfied for any pair of margin point $\bm{z}^{M}$ and $\bm{z}^{P}$. 

\paragraph{Case 2: Main-task is fully predictive of probing label. }
($\forall\bm{z}$, $y_{p}(\hat{\bm{w}}_{m}^{*}\cdot \bm{z}_{m})>0$). Since \refassm{assm:fully_pred_inv_main_task} (from \reflemma{lemma:sufficient_condition_main_task}) is satisfied, we have that main-task features are fully predictive of main-task label i.e $y_{m}(\hat{\bm{w}}_{m}^{*}\cdot \bm{z}_{m})>0$ for all $\bm{z}$. Thus for this case $y_{m}(\hat{\bm{w}}_{m}^{*}\cdot \bm{z}_{m})>0$ and $y_{p}(\hat{\bm{w}}_{m}^{*}\cdot \bm{z}_{m})>0 \implies y_{m}=y_{p}$ for all $\bm{z}$. Also, for this case, there will be no misclassified points for the probing classifier when using the desired encoder $h_{2}^{*}$. Thus the margin point for both the main and the probing classifier is same i.e $\bm{z}^{M}=\bm{z}^{P}$. Since \refassm{assm:correlation_strength} is satisfied, $y_{m}=y_{p}$ for all $\bm{z}$, $y_{p}(\hat{\bm{w}}_{p}\cdot \bm{z})>0$ for all $\bm{z}$ from \refassm{assm:fully_pred_inv} and $\bm{z}^{P}=\bm{z}^{M}$ we have:
\begin{align}
    y_m^{M}(\hat{\bm{w}_{p}}\cdot \bm{z}_{p}^{M}) &> y_p^{P}\beta(\hat{\bm{w}_{p}}\cdot \bm{z}_{p}^{P}) \quad \quad \quad (\refassm{assm:correlation_strength})\\
     1\cdot \cancel{(y_p^{P}(\hat{\bm{w}_{p}}\cdot \bm{z}_{p}^{P}))} &> \beta \cancel{(y_p^{P}(\hat{\bm{w}_{p}}\cdot \bm{z}_{p}^{P}))}\\
     \beta &<1 \label{eq:beta_less_1}
\end{align}
Thus, in this case the RHS in \refeqn{eq:equilibrium_alpha_ineq}, could be simplified to : $y_{m}^{M}(\hat{\bm{w}}_{m}^{*}\cdot \bm{z}^{M}_{m}) - y_{p}^{P}\beta(\hat{\bm{w}}_{m}^{*}\cdot \bm{z}^{P}_{m}) = y_{m}^{M}(\hat{\bm{w}}_{m}^{*}\cdot \bm{z}^{M}_{m}) - \beta y_{m}^{M}(\hat{\bm{w}}_{m}^{*}\cdot \bm{z}^{M}_{m}) = (1-\beta)y_{m}^{M}(\hat{\bm{w}}_{m}^{*}\cdot \bm{z}^{M}_{m})>0$ since $0<\beta<1$ from above \refeqn{eq:beta_less_1} and $y_{\new{m}}(\hat{\bm{w}}_{m}^{*}\cdot \bm{z}^{M}_{m})>0$ from \refassm{assm:fully_pred_inv_main_task}. Thus we can rewrite \refeqn{eq:equilibrium_alpha_ineq} as:
\begin{align}
    \frac{\Big{\{} y_m^{M}(\hat{\bm{w}_{p}}\cdot \bm{z}_{p}^{M}) - y_p^{P}\beta(\hat{\bm{w}_{p}}\cdot \bm{z}_{p}^{P}) \Big{\}}}{\Big{\{} y_{m}^{M}(\hat{\bm{w}}_{m}^{*}\cdot \bm{z}^{M}_{m}) - y_{p}^{P}\beta(\hat{\bm{w}}_{m}^{*}\cdot \bm{z}^{P}_{m}) \Big{\}}} > \frac{1-\alpha}{\sqrt{1-\alpha^{2}}}
\end{align}
Again, from \reflemma{lemma:same_margin_point} both $\bm{z}^{M} \new{\coloneqq} \bm{z}^{M_{\alpha}}$ and $\bm{z}^{P} \new{\coloneqq} \bm{z}^{P_{\alpha}}$ are also the margin point of main-task and probing classifier respectively when the classifiers use the desired encoder $h_{2}^{\alpha}$.
Thus from \refassm{assm:correlation_strength}, we have numerator of LHS in above equation greater than 0, thus we can follow the same steps from \refeqn{eq:alpha_lb3_step1} to \ref{eq:alpha_lb3_steplast} to get the $\alpha_{lb}^{3}$ for this case.

So far, we have three lower bounds on $\alpha$ needed for this proof, so lets define $\alpha_{lb}=max\{\alpha_{lb}^{1},\alpha_{lb}^{2},\alpha_{lb}^{3}\}$, where $\alpha_{lb}^{1}$ is enforced by \reflemma{lemma:sufficient_condition_main_task} on undesired encoder $h_{2}^{\alpha}$ construction, $\alpha_{lb}^{2}$ is enforced by \reflemma{lemma:same_margin_point} and $\alpha_{lb}^{3}$ is enforced by \refeqn{eq:equilibrium_master_ineq}. Thus, when $\alpha \in (\alpha_{lb},1]$ we have a bigger combined objective (\refeqn{eq:adv_enc_loss_theoryfmm}) for $h_{2}^{\alpha}$ than $h_{2}^{*}$. Thus, we can always perturb the desired encoder $h_{2}^{\alpha}$ by choosing $\alpha\in(\alpha_{lb},1]$ close to but less than 1 to create $h_{2}^{\alpha}$ which will have better combined encoder objective. Hence any optimizer will prefer to change the desired encoder $h_{2}^{*}$ and it is not an equilibrium solution to the overall adversarial objective.

\end{proof}

\begin{proof}[Proof of \reflemma{lemma:same_margin_point}]
First, we will prove the statement for the probing classifier. Let $\bm{z}^{M}$ be one of the margin points of the probing classifier when using the desired encoder $h_{2}^{*}$ and let $\bm{z}^{R}$ be any other (non-margin) points. Then we have to show that the margin-point of the probing classifier when using perturbed encoder $h_{2}^{\alpha}$ cannot be $\bm{z}^{R}$. This will imply that the margin points for probing classifier when using $h_{2}^{\alpha}$ has to be the same or a subset of margin points when using $h_{2}^{*}$. Since norm of parameters of both $c_{p}(h_{2}^{\alpha}(\bm{z}))=\beta h_{2}^{\alpha}(\bm{z})$ and  $c_{p}(h_{2}^{*}(\bm{z}))=\beta h_{2}^{*}(\bm{z})$ is the same and margin-point of a classifier is the point which have minimum margin, we have to show that $m_{c_{p}(h_{2}^{\alpha})}(\bm{z}^{R})>m_{c_{p}(h_{2}^{\alpha})}(\bm{z}^{M})$ \new{for some $\alpha \in [0,1)$}. We have:
\begin{align}
    m_{c_{p}(h_{2}^{\alpha})}(\bm{z}) &= \alpha y_{p} \Big{\{} \beta(\bm{w}^{*}_{m}\cdot \bm{z}_{m}) \Big{\}} + \sqrt{1-\alpha^{2}}\norm{\bm{w}_{m}^{*}} y_{p}\Big{\{} \beta(\hat{\bm{w}}_{p}\cdot \bm{z}_{p}) \Big{\}}\\
    &=\alpha m_{c_{p}(h_{2}^{*})}(\bm{z}) + \sqrt{1-\alpha^{2}}\norm{\bm{w}_{m}^{*}} y_{p}\Big{\{} \beta(\hat{\bm{w}}_{p}\cdot \bm{z}_{p}) \Big{\}}
\end{align}

Thus we have to find an \new{$\alpha \in[0,1)$} s.t.:
\begin{align*}
    \alpha m_{c_{p}(h_{2}^{*})}(\bm{z}^{R}) + \sqrt{1-\alpha^{2}}\norm{\bm{w}_{m}^{*}} y_{p}^{R}\Big{\{} \beta(\hat{\bm{w}}_{p}\cdot \bm{z}^{R}_{p}) \Big{\}} >& \\
    \alpha m_{c_{p}(h_{2}^{*})}(\bm{z}^{M})& + \sqrt{1-\alpha^{2}}\norm{\bm{w}_{m}^{*}} y_{p}^{M}\Big{\{} \beta(\hat{\bm{w}}_{p}\cdot \bm{z}^{M}_{p}) \Big{\}}
\end{align*}

Rearranging we get:
\begin{align}
\label{eq:prob_same_mp_master}
    \alpha \Big{\{} m_{c_{p}(h_{2}^{*})}(\bm{z}^{R}) -  m_{c_{p}(h_{2}^{*})}(\bm{z}^{M}) \Big{\}} > \sqrt{1-\alpha^{2}}\norm{\bm{w}_{m}^{*}} \beta \Big{\{}  y_{p}^{M}(\hat{\bm{w}}_{p}\cdot \bm{z}^{M}_{p}) - y_{p}^{R} (\hat{\bm{w}}_{p}\cdot \bm{z}^{R}_{p}) \Big{\}}
\end{align}

Since $\bm{z}^{M}$ is the margin point of the probing classifier when using $h_{2}^{*}$, we have $ m_{c_{p}(h_{2}^{*})}(\bm{z}^{R}) > m_{c_{p}(h_{2}^{*})}(\bm{z}^{M})$. Now, if $\beta \Big{\{} y_{p}^{M}(\hat{\bm{w}}_{p}\cdot \bm{z}^{M}_{p}) - y_{p}^{R}(\hat{\bm{w}}_{p}\cdot \bm{z}^{R}_{p}) \Big{\}} \leq 0$, then above equation is trivially satisfied for all values of $\alpha\in(0,1)$, since RHS of above equation is greater than 0 and LHS is less than 0. For the case when $\beta \Big{\{} y_{p}^{M}(\hat{\bm{w}}_{p}\cdot \bm{z}^{M}_{p}) - y_{p}^{R}(\hat{\bm{w}}_{p}\cdot \bm{z}^{R}_{p}) \Big{\}} > 0$ we need:
\begin{align}
    \frac{\alpha}{\sqrt{1-\alpha^{2}}} &> \frac{\norm{\bm{w}_{m}^{*}} \beta \Big{\{} y_{p}^{M}(\hat{\bm{w}}_{p}\cdot \bm{z}^{M}_{p}) - y_{p}^{R}(\hat{\bm{w}}_{p}\cdot \bm{z}^{R}_{p}) \Big{\}}}{\Big{\{} m_{c_{p}(h_{2}^{*})}(\bm{z}^{R}) -  m_{c_{p}(h_{2}^{*})}(\bm{z}^{M}) \Big{\}}} \coloneqq \gamma(\bm{z}^{M},\bm{z}^{P})>0\\
    \frac{\alpha^{2}}{1-\alpha^{2}} &> \gamma^{2}(\bm{z}^{M},\bm{z}^{P})\\
    \alpha^{2}(1+\gamma^{2}(\bm{z}^{M},\bm{z}^{P}))&>\gamma^{2}(\bm{z}^{M},\bm{z}^{P})\\
    \alpha &> \sqrt{\frac{\gamma^{2}(\bm{z}^{M},\bm{z}^{P})}{1+\gamma^{2}(\bm{z}^{M},\bm{z}^{P})}} \coloneqq \alpha_{lb}^{p}(\bm{z}^{M},\bm{z}^{P})\label{eq:prob_same_mp_alpha}
\end{align}
Since we have $\gamma>0 \implies \alpha_{lb}^{p}(\bm{z}^{M},\bm{z}^{P})<1$. Lets define $\alpha_{lb}^{p} \coloneqq \max_{(\bm{z}^{M},\bm{z}^{\new{P}})}(\alpha_{lb}^{p}(\bm{z}^{M},\bm{z}^{\new{P}}))<1$, which gives the tightest lower bound on $\alpha$ s.t. when $\alpha\in(\alpha_{lb}^{p},1)$, the margin point of the probing classifier when using the perturbed encoder is same or is a subset of margin point when using desired encoder $h_{2}^{*}$. This completes the first part of the proof.

Next, we prove the second part of this lemma for the main-task classifier. Let $\bm{z}^{M}$ be one of the margin points of the main-task classifier when using the desired encoder $h_{2}^{*}$ and let $\bm{z}^{R}$ be any other (non-margin) point. Then we have to show that the margin-point of the main-task classifier when using perturbed encoder $h_{2}^{\alpha}$ cannot be $\bm{z}^{R}$. Since norm of parameter of both $c_{m}(h_{2}^{\alpha}(\bm{z}))=h_{2}^{\alpha}(\bm{z})$ and $c_{m}(h_{2}^{*}(\bm{z}))=h_{2}^{*}(\bm{z})$ is same and margin-point of a classifier is the point which have minimum margin, we have to show that $m_{c_{m}(h_{2}^{\alpha})}(\bm{z}^{R})>m_{c_{m}(h_{2}^{\alpha})}(\bm{z}^{M})$ \new{for some $\alpha\in[0,1)$}. We have:
\begin{align}
    m_{c_{m}(h_{2}^{\alpha})}(\bm{z}) &= \alpha y_{m} \Big{\{} (\bm{w}^{*}_{m}\cdot \bm{z}_{m}) \Big{\}} + \sqrt{1-\alpha^{2}}\norm{\bm{w}_{m}^{*}} y_{m}\Big{\{} (\hat{\bm{w}}_{p}\cdot \bm{z}_{p}) \Big{\}}\\
    &=\alpha m_{c_{m}(h_{2}^{*})}(\bm{z}) + \sqrt{1-\alpha^{2}}\norm{\bm{w}_{m}^{*}} y_{m}\Big{\{} (\hat{\bm{w}}_{p}\cdot \bm{z}_{p}) \Big{\}}
\end{align}
Thus we have find an $\alpha$ s.t.:
\begin{align*}
    \alpha m_{c_{m}(h_{2}^{*})}(\bm{z}^{R}) + \sqrt{1-\alpha^{2}}\norm{\bm{w}_{m}^{*}} y_{m}^{R}\Big{\{} (\hat{\bm{w}}_{p}\cdot \bm{z}^{R}_{p}) \Big{\}} >& \\
    \alpha m_{c_{m}(h_{2}^{*})}(\bm{z}^{M})& + \sqrt{1-\alpha^{2}}\norm{\bm{w}_{m}^{*}} y_{m}^{M}\Big{\{} (\hat{\bm{w}}_{p}\cdot \bm{z}^{M}_{p}) \Big{\}}
\end{align*}
Rearranging we get:
\begin{align}
    \alpha \Big{\{} m_{c_{m}(h_{2}^{*})}(\bm{z}^{R}) -  m_{c_{m}(h_{2}^{*})}(\bm{z}^{M}) \Big{\}} > \sqrt{1-\alpha^{2}}\norm{\bm{w}_{m}^{*}} \Big{\{} y_{m}^{M}(\hat{\bm{w}}_{p}\cdot \bm{z}^{M}_{p}) - y_{m}^{R}(\hat{\bm{w}}_{p}\cdot \bm{z}^{R}_{p}) \Big{\}}
\end{align}
Since $\bm{z}^{M}$ is the margin point of the probing classifier when using $h_{2}^{*}$, we have $ m_{c_{m}(h_{2}^{*})}(\bm{z}^{R}) > m_{c_{m}(h_{2}^{*})}(\bm{z}^{M})$. We notice that, apart from $y_{p}$ being set to $y_{m}$ and $\beta$ being set to $1$, the above equation is identical to \refeqn{eq:prob_same_mp_master}. Since our argument (from \refeqn{eq:prob_same_mp_master} to \ref{eq:prob_same_mp_alpha}) to derive the allowed value of $\alpha$ doesn't depend on $y_{p}$ and $\beta$, we could follow the same argument to get a lower bound $\alpha_{lb}^{m}$ s.t. the main-task classifier has the same \new{or subset of} the margin points when using \new{the perturbed encoder as it has when using the desired encoder.} 

Let us define $\alpha_{lb}^{2} = \max\{\alpha_{lb}^{p},\alpha_{lb}^{m}\}$. Thus when $\alpha\in(\alpha_{lb}^{2},1)$, both the statements of this lemma are satisfied thus completing our proof.

\end{proof}
\section{Experimental Setup}
\label{sec:app_expt_setup}

\subsection{Dataset Description}
\label{subsec:app_dataset_desc}
As described in \refsec{sec:feature_removal_expt_main}, we demonstrate the failure of Null-Space Removal (\refsec{subsec:expt_null_space_failure}) and Adversarial Removal (\refsec{subsec:expt_adv_failure}) in removing the undesired \prop from the latent representation on three real-world datasets: \mnli\cite{mnliDataset}, \pan\cite{pan16Dataset} and \aae\cite{aaeDataset}; and a synthetic dataset, \syn. The detailed generation and evaluation strategies for each dataset are given below. 

\paragraph{\mnli Dataset. }
In the MultiNLI dataset, given two sentences---premise and hypothesis---the main task is to predict whether the hypothesis \emph{entails}, \emph{contradicts} or is \emph{neutral} to the premise. As described in \refsec{sec:feature_removal_expt_main}, we simplify it to a binary task of predicting whether a hypothesis \emph{contradicts} the premise. The binary main-task label, $y_{m}=1$ when a given hypothesis \emph{contradicts} the premise otherwise it is -1. That is, we relabel the MNLI dataset by assigning label $y_{m}=1$ to examples with contradiction labels and $y_{m}=-1$ to the example with neutral or entailment label. It has been reported that the \emph{contradiction} label is spuriously correlated with the negation words like \emph{nobody, no, never} and \emph{nothing}\cite{GururanganMNLINegationArtifact}. Thus, we created a `negation' \prop denoting the presence of these words in the hypothesis of a given (hypothesis, premise) pair. The \prop-label $y_{p}=1$ when the \emph{negation} \prop is present in the hypothesis otherwise it is $-1$.

The standard MultiNLI dataset \footnote{MultiNLI dataset and its license could be found online at: \url{https://cims.nyu.edu/~sbowman/multinli/}} has approximately 90\% of data points in the training set, 5\% as publicly available development set and the rest of $5\%$ in a separate held-out validation set accessible through online competition leader-board not accessible to the public. Thus, we create our own train and test split by subsampling $10k$ examples from the initial training set, converting it into binary contradiction vs. non-contradiction labels, labeling the negation-\prop label, and splitting them into 80-20 train and test split. For pre-training a clean classifier that does not use the spurious-\prop, we create a special training set following the method described in \refsec{subsec:app_clean_dataset}. For evaluating the robustness of both null-space and adversarial removal methods, we create multiple datasets with different \emph{predictive-correlation} as described in \refsec{subsec:app_controlling_sp_corr} .

\paragraph{\pan Dataset. }
In \pan dataset \cite{pan16Dataset}, following  \cite{AdvRemYoav}, given a tweet,  the main task is to predict whether it contains  a mention of another user or not. The dataset contains manually annotated binary Gender information (i.e Male or Female) of 436 Twitter users with at least 1k tweets each. The Gender annotation was done by assessing the name and photograph of the LinkedIn profile of each user \cite{AdvRemYoav}. The unclear cases were discarded in this process. We consider ``Gender'' as a sensitive concept that should not be used for main-task prediction. The dataset contains 160k tweets for training and 10k tweets for the test. We merged the full dataset, subsampled 10k examples, and created an 80-20 train and test split. For pre-training a clean classifier, we create a special training set  following the method described in \refsec{subsec:app_clean_dataset}. To generate datasets with different predictive correlation, we follow the method from \ref{subsec:app_controlling_sp_corr}. The dataset is acquired and processed using the code\footnote{The code for \pan and \aae dataset acquisition is available at: \url{https://github.com/yanaiela/demog-text-removal} } made available by the \cite{AdvRemYoav}. According to Twitter's policy, one has to download tweets from a personal account using Twitter Academic Research access and cannot be released to the public or used for commercial purposes. We also adhere to this policy and don't release any data to the public or use it elsewhere.

\paragraph{\aae Dataset. }
In \aae dataset \cite{aaeDataset}, again following \cite{AdvRemYoav},  the main task is to predict a binary sentiment (Positive or Negative) from a given tweet. The dataset contains 59.2 million tweets by 2.8 million users. Each tweet is associated with ``race'' information of the user which is labeled based on  both words in the tweet and the geo-location of the user.  We consider ``race'' as the sensitive \prop which should not be used for the main task of sentiment prediction. We use the AAE (African America English) and SAE (Standard American English) as a proxy for non-Hispanic blacks and non-Hispanic whites automatically labeled using code made available by \cite{AdvRemYoav}.
Again, we subsampled 10k examples with 80-20 split from the dataset and followed the method described in \refsec{subsec:app_clean_dataset} and \ref{subsec:app_controlling_sp_corr} to generate a clean dataset for pre-training a clean classifier and datasets with different predictive correlation respectively. The dataset is made publicly available online\footnote{TwitterAAE dataset could be found online at: \url{http://slanglab.cs.umass.edu/TwitterAAE/}} only for research-purpose.

\paragraph{Synthetic Dataset. }
To accurately evaluate the whether a classifier is using the spurious \prop or not, we introduce a \syn dataset where it is possible to change the text input based on the change in concept (thus implementing \refdef{def:causally_derived_feature}). The main-task is to predict whether a sentence contains a numbered word (e.g. \emph{one, fifteen} etc) or not, and the spurious \prop is the length of the sentence which is correlated with the main task label. To create a sentence with numbered words, we randomly sample 10 words from the following set and combine  them to form  the sentence.

\begin{align*}
    \text{Numbered Words} =  
     &\text{  one, two, three, four, five, six, seven, eight, }\\
    & \text{  nine, ten, eleven, twelve, thirteen, fourteen, }\\
    & \text{  fifteen, sixteen, seventeen, eighteen, twenty, }\\
    & \text{  thirty, forty, fifty, sixty, seventy, eighty, }\\
    & \text{  ninety, hundred, thousand} \\
\end{align*}
Otherwise, a sentence is  created by adding 10 non-numbered words randomly sampled from the following set. 
\begin{align*}
    \text{Non-Numbered Words} = & \text{   nice, device, try, picture, signature, trailer, }\\
    & \text{   harry, potter, malfoy, john, switch, taste, }\\
    & \text{   glove, balloon,  dog,  horse, switch,  watch, }\\
    & \text{   sun,  cloud,  river,  town,  cow,  shadow, }\\
    & \text{   pencil,  eraser}\\
\end{align*}
Next, we introduce the spurious \prop (length) by increasing the length of the sentences which contain numbered words. We do so by adding a special word ``pad'' 10 times. In our experiments,  we use 1k examples created using the above method and create an 80-20 split for the train and test set. Again, we follow the method described in \refsec{subsec:app_clean_dataset} and \ref{subsec:app_controlling_sp_corr} to generate a clean dataset for training a clean classifier and to generate datasets with different predictive correlations respectively. \new{To simulate a real-world setting,  we also introduce noise in the main-task and the probing label. To introduce noise (denoted by $n=x$) in the labels, we randomly flip $100x$\% of the main-task and probing label in the dataset. Wherever applicable, we will explicitly mention the amount of noise we add in the labels.}

\subsection{Creating a ``clean'' dataset with no spurious correlation with main-label}
\label{subsec:app_clean_dataset}
\new{Unless otherwise specified}, to construct a new dataset with no spurious correlation between the main-task and the \prop label, we subsample only those examples from the the given dataset which have a fixed value of the spurious-\prop label $(y_{p})$. Thus, if we train main-task classifier using this dataset, it cannot use the spurious-\prop since they are not discriminative of the main task label \cite{ravichander-etal-2021-probing}.

In \mnli dataset,  we select only those examples which have no \emph{negation} words in the sentence for creating a clean dataset. Similarly, for \pan dataset, we only select those examples which have gender label $y_{p}=-1$ (Female)  in the processed dataset. 
And for \aae dataset, we only select those examples which have \emph{non-Hispanic whites} race label.


\subsection{Creating datasets with spurious correlated main and \prop label}
\label{subsec:app_controlling_sp_corr}
In our experimental setup, both the main-task label ($y_{m}$) and \prop label ($y_{p}$) are binary ($-1$ or $1$). This creates $2\times2$ subgroups for each combination of ($y_{m},y_{p}$). In \mnli dataset, the contradiction label $(y_{m}=1)$ is correlated with the presence of negation words $y_{p}=1$, this implies that the not-contradiction label $y_{m}=-1$ is also correlated with \emph{absence} of negation words in the sentence $y_{p}=-1$. Thus, the input example with $(y_{m}=1,y_{p}=1)$ and $(y_{m}=-1,y_{p}=-1)$ form the majority group, henceforth referred as $S_{maj}$ while groups $(y_{m}=1,y_{p}=-1)$ and $(y_{m}=-1,y_{p}=1)$ forms the minority group $S_{min}$. To evaluate the robustness of the removal methods, we create multiple datasets with different \emph{predictive correlation} ($\kappa$) between the two labels $y_{m}$ and $y_{p}$ where $\kappa=P(y_{m}\cdot y_{p})>0$ as defined in \refsec{sec:feature_removal_expt_main}. In other words, to create a dataset with a particular predictive correlation $\kappa$, we vary the size of $S_{maj}$ and $S_{min}$. More precisely, the predictive correlation can be equivalently defined in terms of the size of the these groups as:
\begin{equation}
    \kappa = \frac{|S_{maj}|}{|S_{maj}|+|S_{min}|}
\end{equation}

Similarly for \pan, \aae, and \syn datasets, we create datasets with different levels of spurious correlation between $y_{m}$ and $y_{p}$ by creating the $S_{maj}$ and $S_{min}$ to have the desired predictive correlation ($\kappa$).

\subsection{Encoder for real datasets}
\label{subsec:app_real_encoder}
For all the experiments on real datasets in \refsec{sec:feature_removal_expt_main} we used RoBERTa as default encoder $h$. In \refsec{sec:app_additional_results}, we report the results when using BERT instead of  RoBERTa as input encoder. 
\paragraph{RoBERTa}
We use the Hugging Face\cite{HuggingFace} \emph{transformers} implementation of RoBERTa\cite{roberta19} \emph{roberta-base} model, starting with pretrained weights for encoding the text-input to latent representation.  We use a default tokenizer and model configuration in our experiment. 

\paragraph{BERT}
We use the Hugging Face\cite{HuggingFace} \emph{transformers} implementation of BERT\cite{devlin-etal-2019-bert} \emph{bert-base-uncased} model, starting with pretrained weights for encoding the text-input to latent representation. We use a default tokenizer and model configuration in our experiment.

For both BERT and RoBERTa, the parameters of the encoder were fine-tuned as a part of training the main-task classifier for null-space removal and then frozen. For adversarial removal, the encoder, main-task classifier and the adversarial probing classifier are trained jointly.
For both BERT and RoBERTa, we use the pooled output ($[CLS]$ token for BERT) from the the model, as the latent representation and is given to main-task and probing classifier. Main-task and probing classifier are a linear transformation layer followed by a softmax layer for prediction. \new{We use a batch size of $32$ samples for all training procedures that use BERT or RoBERTa for encoding the input.}

\subsection{Encoder for synthetic Dataset}
\label{subsec:app_syn_encoder}
\paragraph{nBOW: neural Bag of Word.}
For \syn dataset, we use sum of pretrained-GloVe embedding\cite{pennington2014glove} of the words in the sentence to encode the sentence into latent representation. We used Gensim \cite{rehurek2011gensim} library for acquiring the 100-dimensional GloVe embedding (\emph{glove-wiki-gigaword-100}). Throughout all our experiments, the word embedding was not trained. Post encoding, the latent representation were further passed through hidden layers consisting of a linear transformation layer followed by ReLU non-linearity. We will specify how many such hidden layers were used when discussing specific experiments in \refsec{sec:app_additional_results}. The hidden layer dimensions were fixed to $50$ dimensional space. \new{We use a batch size of $32$ samples for all training procedures that use nBOW for encoding the input.}

\subsection{Null-Space Removal  Experiment Setup}
\label{subsec:app_inlp_expt_setup}
For null-space removal (\INLP) experiment on both real and synthetic dataset the following procedure is followed:
\begin{enumerate}
    \item Pretraining Phase: A \emph{clean} pretrained main-task classifier is trained using the \emph{clean} dataset obtained by method described in \refappendix{subsec:app_clean_dataset}. This is to ensure that the main-task classifier does not use the spurious feature, so that the \INLP method doesn't have any effect on the main-task classifier. The main-task classifier is a linear-transformation on the latent-representation provided by encoder followed by softmax layer for prediction. Both the encoder and main-task classifier is fine-tuned during this process.
    
    \item Removal Phase: Both the encoder and main-task classifier is frozen (made non-trainable). Next, a probing classifier is trained from the latent representation of the encoder (refer \refappendix{subsec:app_real_encoder} and \ref{subsec:app_syn_encoder} for more details about encoder). The probing classifier is also a linear transformation layer followed by softmax layer for prediction. \new{For experiments on real-world datasets using BERT or RoBERTa as encoder, we train the the probing classifier for 1 epoch (one full pass though the probing dataset) before each projection step. For experiment on the \syn dataset,  we train the probing classifier for 10 epochs before each projection step.} \new{Note that, we also experiment with the setting when the main task classifier is also trained after every step of \INLP projection (see \refsec{subsec:app_extended_null_space_results} and \reffig{fig:app_inlp-head_retrain_mix} for results)}. \new{The main task classifier is a linear transformation layer followed by a softmax layer for prediction trained using cross-entropy objective to predict the main task label. The main task classifier is trained for 1 epoch for the real-world datasets and 10 epochs for the \syn dataset for the setting when we train the main task classifier in INLP removal phase. The encoder is frozen for both the setting (with or without main task classifier training)  though-out the INLP removal phase.}
\end{enumerate}

The main-task classifier and encoder in the pretraining phase and the probing classifier in the removal phase is trained using cross-entropy loss for both real and synthetic datasets. For the real dataset, a fixed learning rate of $1\times 10^{-5}$ is used when RoBERTa is used as encoder and $5\times 10^{-5}$ when using BERT as encoder. For synthetic experiments, a fixed learning rate of $5\times 10^{-3}$ is used when training both the nBOW encoder and main-task classifier in the pretraining stage and probing classifier in removal stage.

\subsection{Adversarial Removal Experiment Setup}
\label{subsec:app_adv_expt_setup_desc}
For adversarial removal (\ar) experiment,  for both real and synthetic datasets, first the input text is encoded to latent representation using the encoder (\refappendix{subsec:app_real_encoder} and \ref{subsec:app_syn_encoder}). Then for the main-task classifier, a linear transformation layer followed by a softmax layer is applied for the main-task prediction. The same latent representation output from the encoder is given to the probing classifier which is a separate linear transformation layer followed by a softmax layer. All components of the model, encoder, main-task classifier, and probing classifier are trained using the following modified objective from \refeqn{eq:adv_removal_actual_obj}:
\begin{equation*}
    arg\min_{h,c_{m},c_{p}} \Big{\{}L(c_{m}(h(\bm{z})),y_{m})+ \lambda L(c_{p}(g_{-1}(h(\bm{z}))),y_{p})\Big{\}}
\end{equation*}
where $h$ is the encoder, $c_{m}$ is the main task classifier, $c_{p}$ is the probing classifier, $g_{-1}$ is the gradient reversal layer with fixed reversal strength of $-1$. The first term in the objective is for training the main task classifier and the second term is the adversarial objective for training the probing classifier using gradient reversal method \cite{AdvDomAdapGanin,AdvRemYoav} . The hyperparameter $\lambda$ controls the strength of the adversarial objective. In our experiment we very $\lambda \in \{ 0.00001,0.0001,0.001,0.01,0.1,0.5,1.0,2.0 \}$. When describing the experimental results in \refsec{subsec:app_extended_adversarial_removal_results} we choose the $\lambda$ which performs the best for all datasets with different predictive correlation $\kappa$ in removing the undesired \prop from the latent representation.

\subsection{Metrics Description}
\label{subsec:app_metric_desc}
Analogous to spuriousness score (\refdef{def:spurriousness_score}) for main-task classifier we define the score for probing classifier below. 
\begin{definition}[Probe Spuriousness Score]
\label{def:spurriousness_score_probe}
Given a dataset, $\mathcal{D}_{m,p}=S_{min} \cup S_{maj}$ with  binary task label and binary \prop, let $Acc^{f}(S_{min})$ be the minority group accuracy of a given probing classifier ($f$) and $Acc^{*}(S_{min})$ be the minority group accuracy of a \emph{clean} probing classifier that does not use the main-task feature. Then  spuriousness score of $f$ is:
$    \psi(f) = {|}1 - Acc^{f}(S_{min})/Acc^{*}(S_{min}){|}$.
\end{definition}

For simplicity, in all our experiments we assume that both the main and the correlated attribute labels are binary. We measure the degree of spuriousness using the following two metrics:
\begin{enumerate}
    \item Spuriousness Score: As defined in \refsec{subsec:spurriousness_score}, this metric help us quantify, how much a classifier is using the spurious feature (see \refdef{def:spurriousness_score} and \ref{def:spurriousness_score_probe}).
    
    \item $\Delta$ Probability: In \syn dataset as described in \ref{subsec:app_dataset_desc}, we have the ability to change the input corresponding to the change in \prop label (thus implementing \refdef{def:causally_derived_feature}) . Thus we could measure if the main-classifier is using the spurious-\prop by changing the \prop in the input and measuring the corresponding change in the main-task classifier's prediction probability. The Higher the change in prediction probability higher the main-task classifier is dependent on spurious-\prop.

\end{enumerate}

\subsection{Compute and Resources}
\label{subsec:compute_resource}
We used an internal cluster of Nvidia P40 and P100 GPUs for all our experiment. Each experiment setting was run on three random seed and mean results with variance are reported in all the experiment.

\section{Additional Results}
\label{sec:app_additional_results}

\subsection{Probing classifier Quality}
\label{subsec:extended_probing_result}
\reffig{fig:app_prob_all} shows different failure modes of the probing classifier. In \reffig{fig:app_prob_syn_causal_same} and \ref{fig:app_prob_mnli_causal_same}, a \emph{clean} main-task classifier which doesn't use the \prop feature is trained on \syn and \mnli dataset respectively using the method described in \refsec{subsec:app_clean_dataset}. Thus the latent representation doesn't have the \prop feature. Then, to test the presence of \prop-causal feature in the latent representation we train a probing classifier to predict \prop-label.  The first row show the accuracy of the probing classifier for testing the presence of \prop in latent space. When $\kappa=0.5$ i.e no correlation between the main-task and the \prop label, the probing accuracy is approximately 50\% which correctly shows the absence of the \prop-causal feature in the latent representation. The accuracy increases as the  correlation $\kappa$ between the main and \prop-causal feature increases in dataset. This shows that even when \prop-causal feature is not present in the latent representation, probing classifier will still claim presence of \prop-causal feature if any correlated feature (main-task feature in this case) is present in the latent space. In \reffig{fig:app_prob_mnli_causal_rebalance}, the latent space contains the \prop-causal feature as shown by accuracy of approximately 94.5\% when $\kappa=0.5$. But as $\kappa$ increases the probing classifier's accuracy increases in the presence of correlated main-task feature which falsely increases the confidence of presence of the \prop-causal feature. The second row shows the spuriousness-score of \prop-probing classifier is increasing as the correlation between the main-task and \prop-causal feature increases which implies that the probing classifier is using relatively large \emph{amount} of correlated main-task feature for \prop-label prediction in all settings. 

For all the experiments \new{in this section (\refsec{subsec:extended_probing_result})} with \syn dataset,  \new{we didn't introduce any noise in the probing label (i.e. n=0.0)} and have 1 hidden layer when training the encoder (see \refsec{subsec:app_syn_encoder} for details). For the experiment on \mnli dataset, we use RoBERTa as the default encoder and rest of setup is same as described in \refsec{subsec:app_real_encoder}. 

\begin{figure*}[]
\begin{subfigure}[h]{0.32\textwidth}
    \centering
    \captionsetup{justification=centering}
    \includegraphics[width=\linewidth,height=1.0\linewidth]{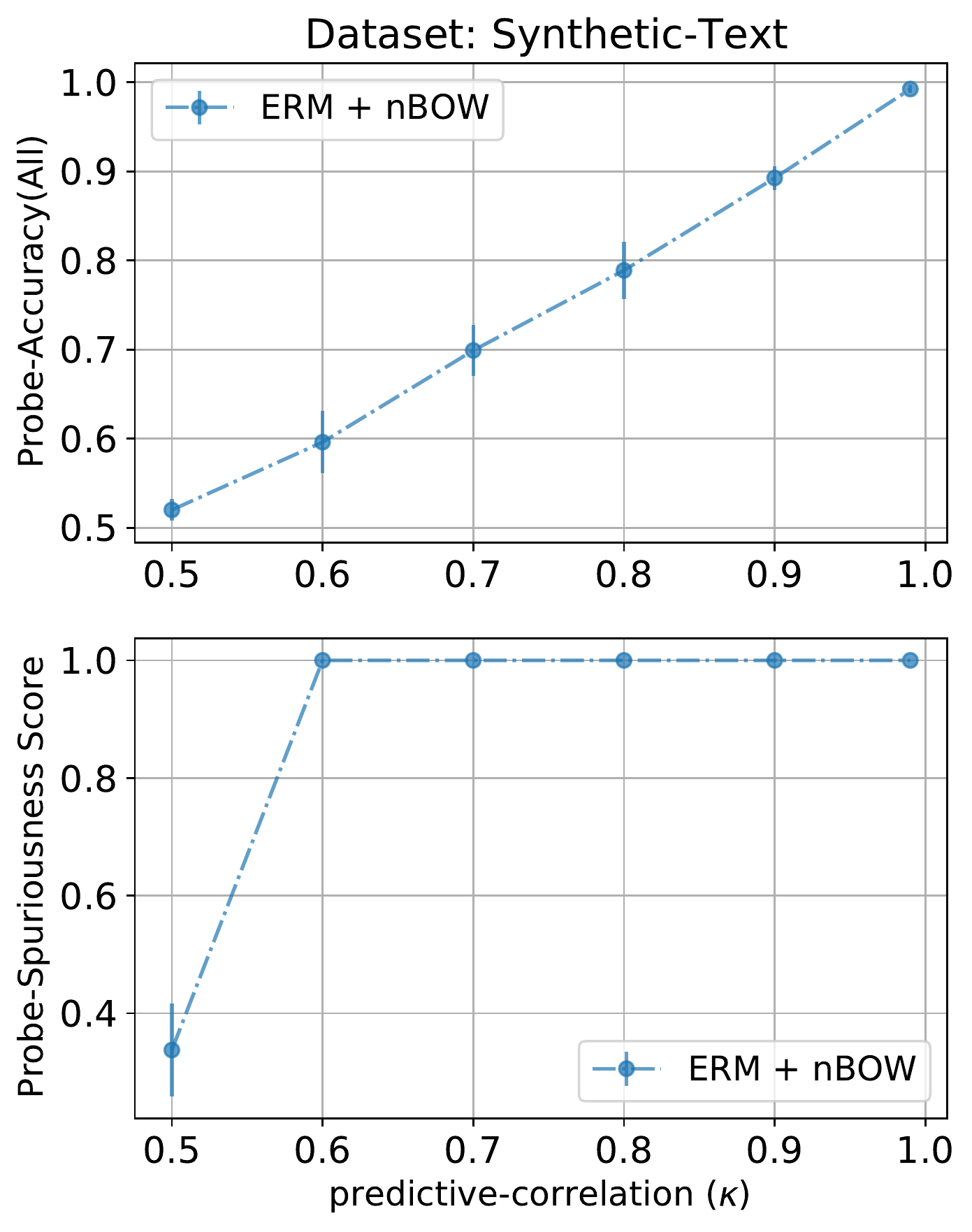}
    	\caption[]%
        {{\small  \syn + Probe Feature Absent}}    
    	\label{fig:app_prob_syn_causal_same} 
    \end{subfigure}
\hfill 
\begin{subfigure}[h]{.32\textwidth}
    \centering
    \captionsetup{justification=centering}
    \includegraphics[width=\linewidth,height=1.0\linewidth]{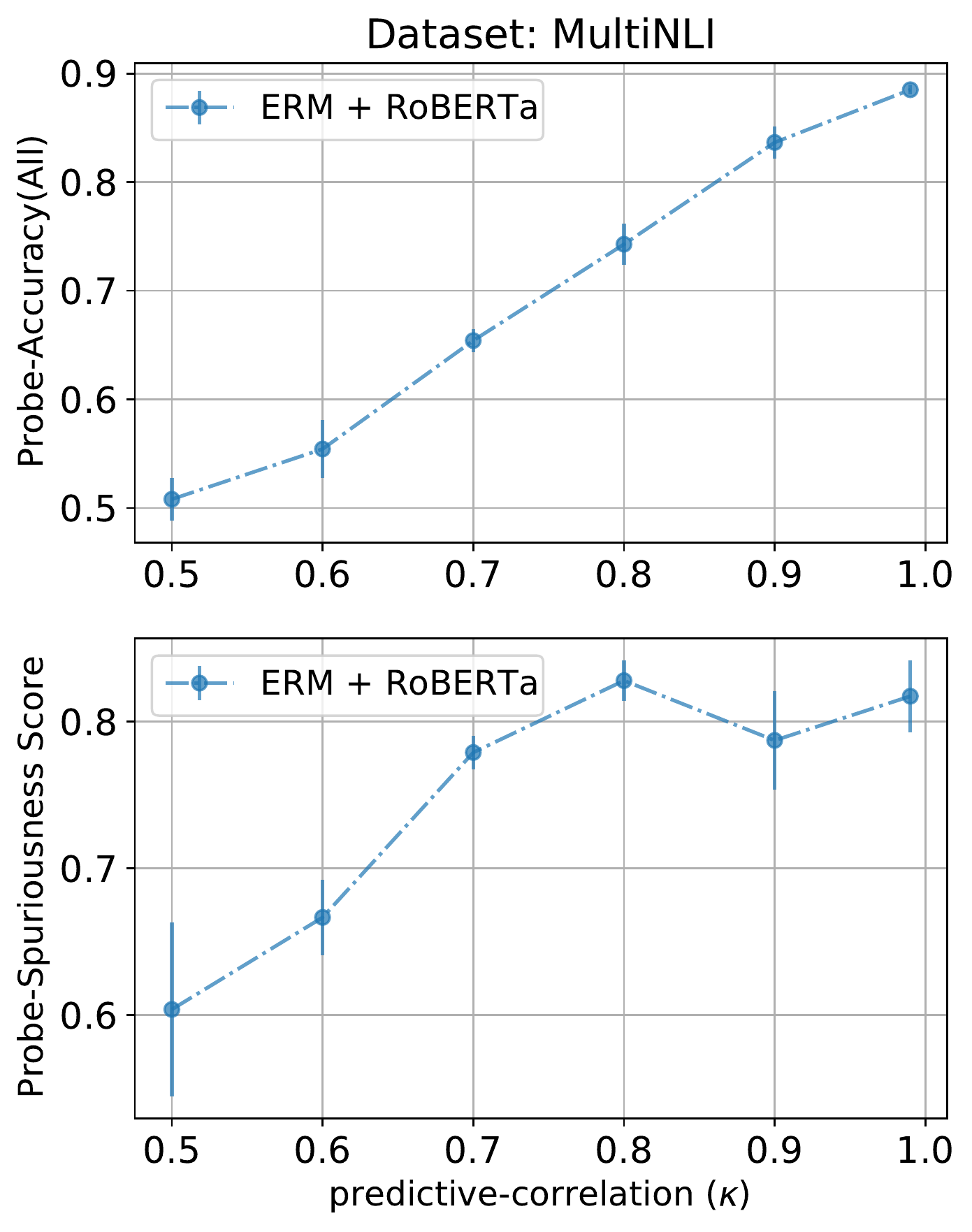}
    	\caption[]%
        {{\small \mnli + Probe Feature Absent}}    
    	\label{fig:app_prob_mnli_causal_same} 
    \end{subfigure}
\hfill 
\begin{subfigure}[h]{.32\textwidth}
    \centering
    \captionsetup{justification=centering}
    \includegraphics[width=\linewidth,height=1.0\linewidth]{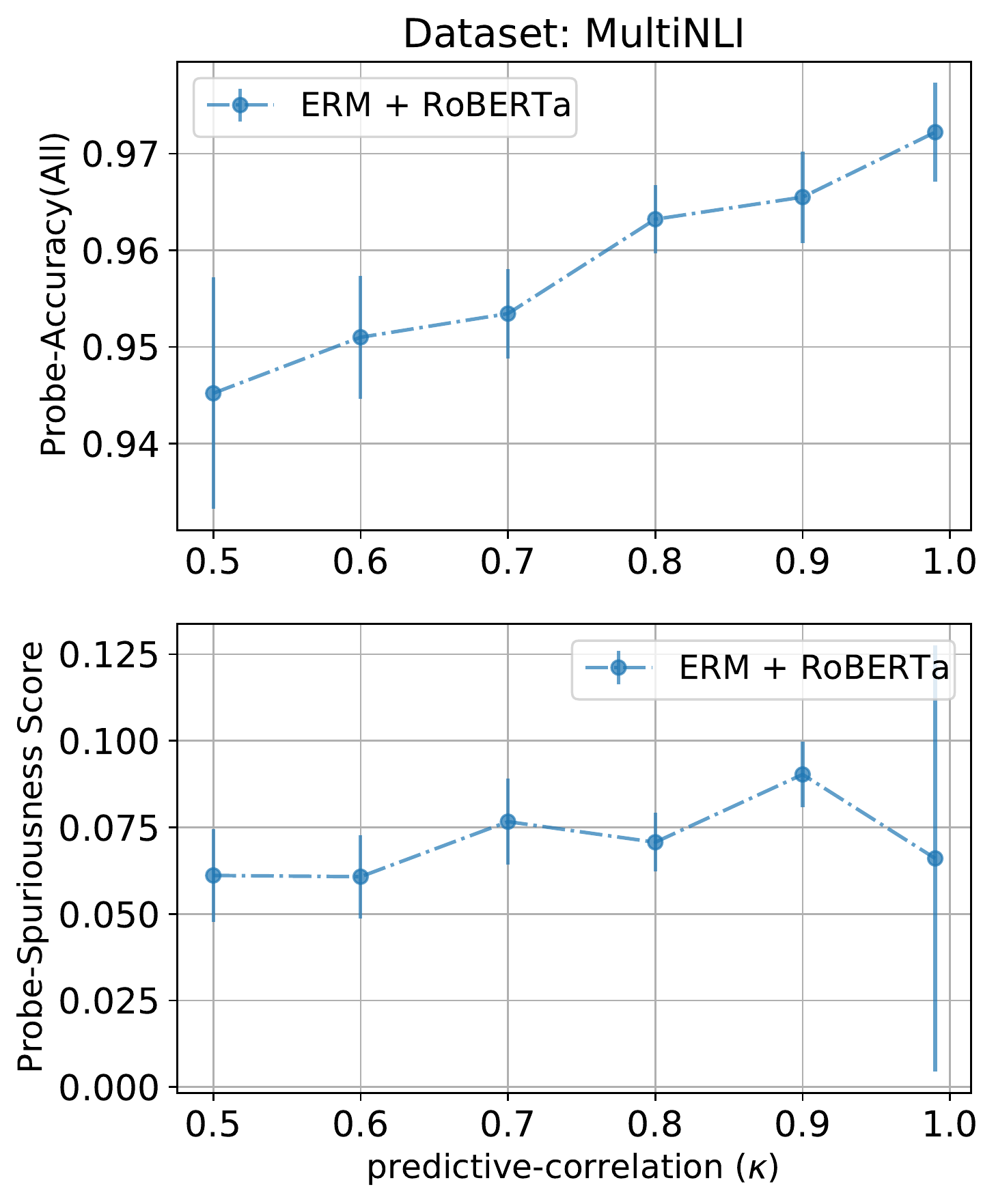}
    	\caption[]%
        {{\small \mnli + Probe Feature Present}}    
    	\label{fig:app_prob_mnli_causal_rebalance} 
    \end{subfigure}

\caption{\textbf{Failure Modes of Probing classifier:} The first row in \reffig{fig:app_prob_syn_causal_same} and \ref{fig:app_prob_mnli_causal_same} shows that even when the latent representation doesn't contain the probing \prop-causal feature, the probing classifier is still has >50\% accuracy when other correlated feature is present. The accuracy increases as the correlation $\kappa$ between the probing \prop-causal feature and other correlated features increases. The first row \reffig{fig:app_prob_mnli_causal_rebalance} shows  that presence of correlated features could increase the probing classifier's accuracy thus increasing the confidence in the presence of \prop-causal feature in latent representation. The second row of all the figures shows that the probing classifier is getting more spurious as the $\kappa$ increases thus implying that the probing classifier is using some other correlated feature than \prop-causal feature. For more discussion see \refappendix{subsec:extended_probing_result}.}

\label{fig:app_prob_all}
\end{figure*}

\subsection{Extended Null-Space Removal Results}
\label{subsec:app_extended_null_space_results}

\begin{figure*}[h]
\centering
\includegraphics[width=\textwidth,height=
\textwidth]{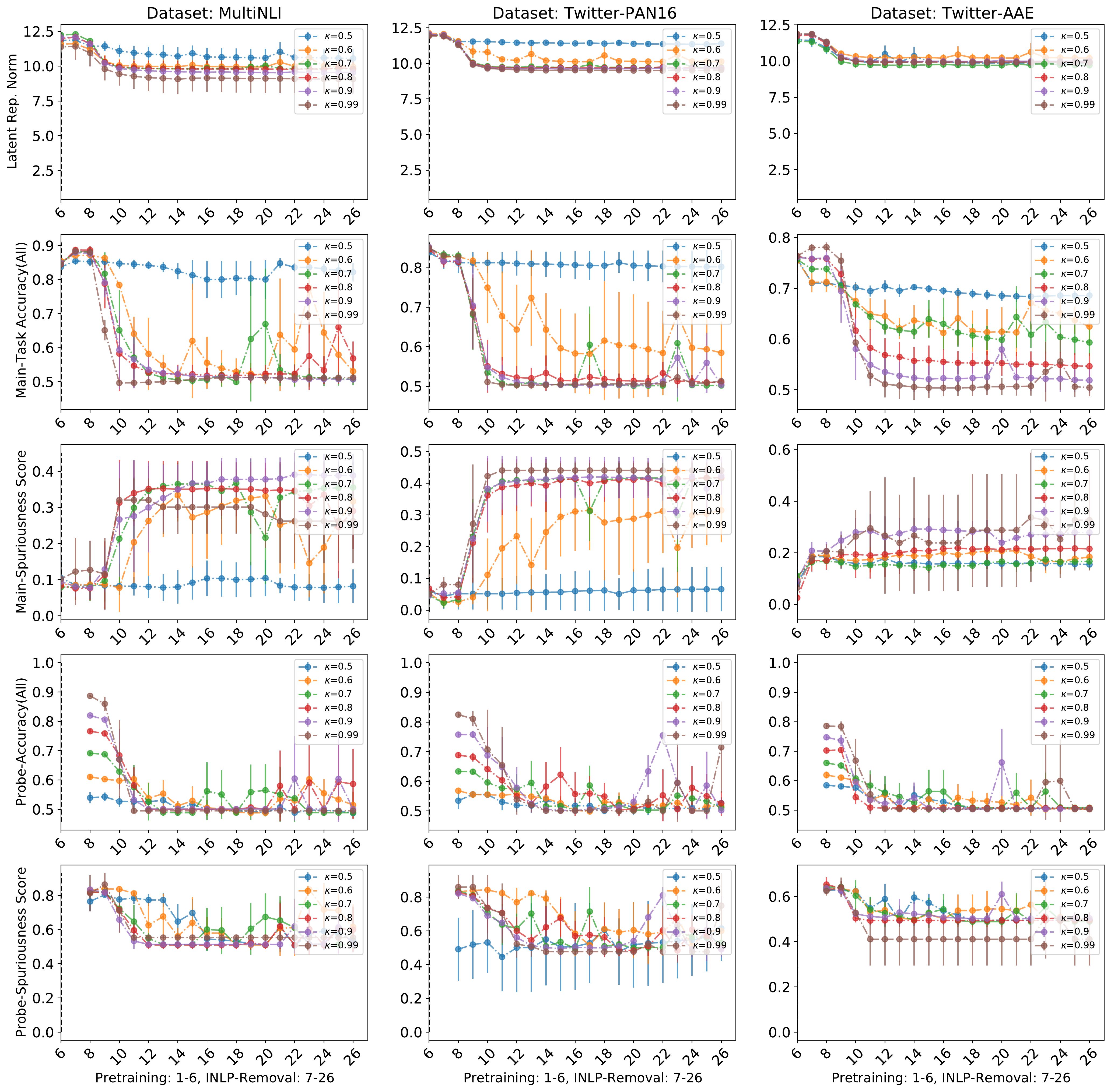}

\caption{\textbf{Failure of Null Space Removal when using RoBERTa as encoder:} Different columns of the figure are for three different real datasets --- \mnli, \pan, and \aae respectively. The x-axis from steps 8-26 is different \INLP removal steps. The y-axis shows different metrics to evaluate the main task and probing classifier. Different colored lines show the spurious correlation ($\kappa$) in the probing dataset used by \INLP for removal of spurious-\prop. The pretrained classifier is clean i.e. doesn't use the spurious \prop-causal feature, hence \INLP shouldn't have any effect on main classifier when removing \prop-causal feature from the latent space. Against our expectation, the second row shows that the main-task classifier's accuracy is decreasing even when it is not using the \prop-feature. The main reason for this failure to learn a clean \prop-probing classifier. This can be verified from the last row  which shows that the \prop-probing classifier has a high spuriousness score thus implying that it is using the main-task feature for \prop label prediction and hence during the removal step, wrongly removing the main-task feature which leads to a drop in main-task accuracy. For more discussion see \refappendix{subsec:app_extended_null_space_results}.\abhinav{At iter 7, pre projection, the probe acc is random guess. Why? Tried increasing the number of topic iterations. No effect. Removed currently.}}
\label{fig:app_expt_null_space_failure_roberta_all}
\end{figure*}

\begin{figure*}[h]
\centering
\includegraphics[width=\textwidth,height=
\textwidth]{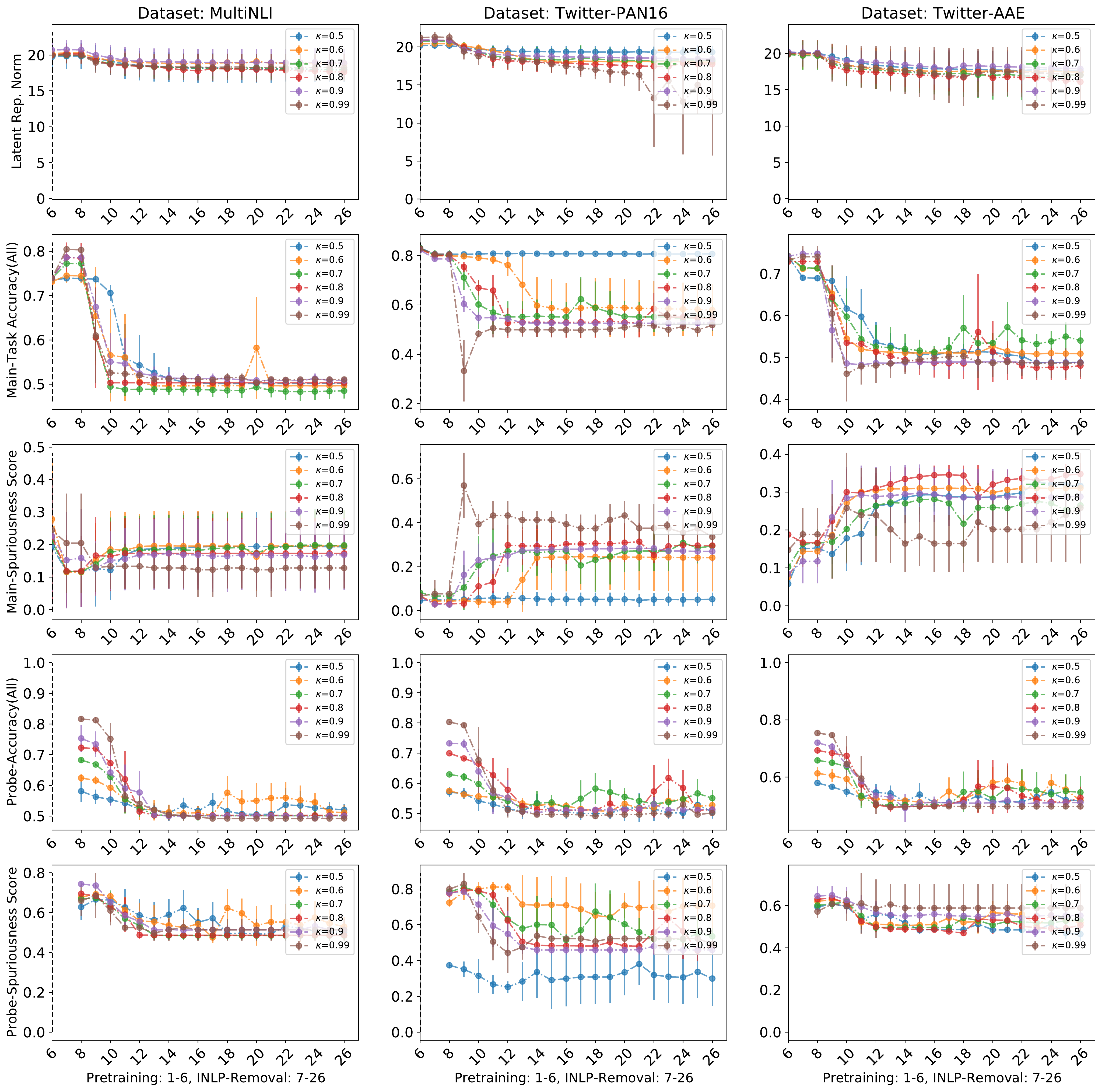}
\caption{\textbf{Failure of Null Space Removal when using BERT as encoder:} The observation is similar to the case when RoBERTa was used as encoder (see \reffig{fig:app_expt_null_space_failure_roberta_all}) . Different columns of the figure are for three different real datasets --- \mnli, \pan, and \aae respectively. The x-axis from steps 8-26 is different \INLP removal steps. The y-axis shows different metrics to evaluate the main task and probing classifier. Different colored lines show the spurious correlation ($\kappa$) in the probing dataset used by \INLP for removal of spurious-\prop. The pretrained classifier is clean i.e. doesn't use the spurious \prop-causal feature, hence \INLP shouldn't have any effect on main-classifier when removing \prop-causal feature from the latent space. Against our expectation, the second row shows that the main-task classifier's accuracy is decreasing even when it is not using the \prop-feature. The main reason for this failure to learn a clean \prop-probing classifier. This can be verified from the last row  which shows that the \prop-probing classifier has high spuriousness score thus implying that it is using the main-task feature for \prop label prediction and hence during the removal step, wrongly removing the main-task feature which leads to a drop in main-task accuracy. For more discussion see \refappendix{subsec:app_extended_null_space_results}.}
\label{fig:app_expt_null_space_failure_bert_all}
\end{figure*}

\reffig{fig:app_expt_null_space_failure_roberta_all} and \ref{fig:app_expt_null_space_failure_bert_all}, shows the failure mode of null-space removal (\INLP) in the real dataset when using RoBERTa and BERT as encoders respectively. Different columns of the figure are for three different real datasets --- \mnli, \pan, and \aae respectively. The x-axis from steps 8-26 is different \INLP removal steps. The y-axis shows different metrics to evaluate the main task and probing classifier. Different colored lines show the spurious correlation ($\kappa$) in the probing dataset used by \INLP for the removal of spurious-\prop. The pretrained classifier is clean, i.e., does not use the spurious \prop-causal feature; hence \INLP shouldn't have any effect on main-classifier when removing \prop-causal feature from the latent space. The first row shows that as the \INLP iteration progresses, the norm of latent representation, which is being \emph{cleaned} of \prop-causal feature, decreases. This indicates that some features are being removed. However, the results are against our expectation from the second statement of \reftheorem{theorem:null_space_failure}, which states that the norm of the classifier will tend to zero as the \INLP removal progresses. The possible reason is that from \reftheorem{theorem:null_space_failure} the norm of latent representation will go to zero when the latent representation only contains the spurious \prop-causal feature and the other features correlated to it. But, the encoder representation could have other features which are not correlated with \prop-label and hence not removed. Since, the pretrained classifier given for \INLP was \emph{clean} (using  method described in \refsec{subsec:app_clean_dataset}), we do not expect the \INLP to have any effect on the main-task classifier. 

The second row in \reffig{fig:app_expt_null_space_failure_roberta_all} and \ref{fig:app_expt_null_space_failure_bert_all} shows that the main classifier accuracy drops to random guess i.e 50\% except for the case when probing dataset have $\kappa=0.5$ i.e no correlation between the main and \prop label. Thus \INLP method corrupted a clean classifier and made it useless. The reason behind this could be observed from the fourth and fifth rows. The fourth row shows the accuracy of the probing classifier before the projection step. We can see that at step 8 on the x-axis $\kappa=0.5$, the probing classifier correctly has an accuracy of 50\% showing that the \prop-causal feature is not present in the latent representation. But for other values of $\kappa$, the probing classifier accuracy is proportional to the value of $\kappa$ implying that the probing classifier is using the main-task feature for its prediction. Hence at the time of removal, it removes the main-task feature which leads drop in the main-task accuracy. This can also be verified from the last row of \reffig{fig:app_expt_null_space_failure_roberta_all} and \ref{fig:app_expt_null_space_failure_bert_all}, which shows that the spuriousness score of probing classifier is high; thus it is using the main-task feature for its prediction. We observe similar results for \syn dataset when using \INLP in \reffig{fig:app_inlp-toy_all}. For all the \INLP experiment on \syn dataset, there were no hidden layers after the nBOW encoder (see \refsec{subsec:app_syn_encoder}).

\new{So far, we have kept the main-task classifier frozen when performing \INLP removal. Note that, we also experiment with the setting when the main task classifier is trained after every projection step of \INLP (see \refsec{subsec:app_inlp_expt_setup} for experimental setup and  \reffig{fig:app_inlp-head_retrain_mix} for a result description). We observe a similar drop in the main-task accuracy with prolonged removal using \INLP and early stopping leads to an even higher reliance on the spurious \prop-causal feature than it had at the beginning of \INLP. The rest of the experimental configurations were kept the same as the other \INLP experiments described above.}

\begin{figure*}[h]
\begin{subfigure}[h]{0.32\textwidth}
    \centering
    \includegraphics[width=\linewidth,height=2.9\linewidth]{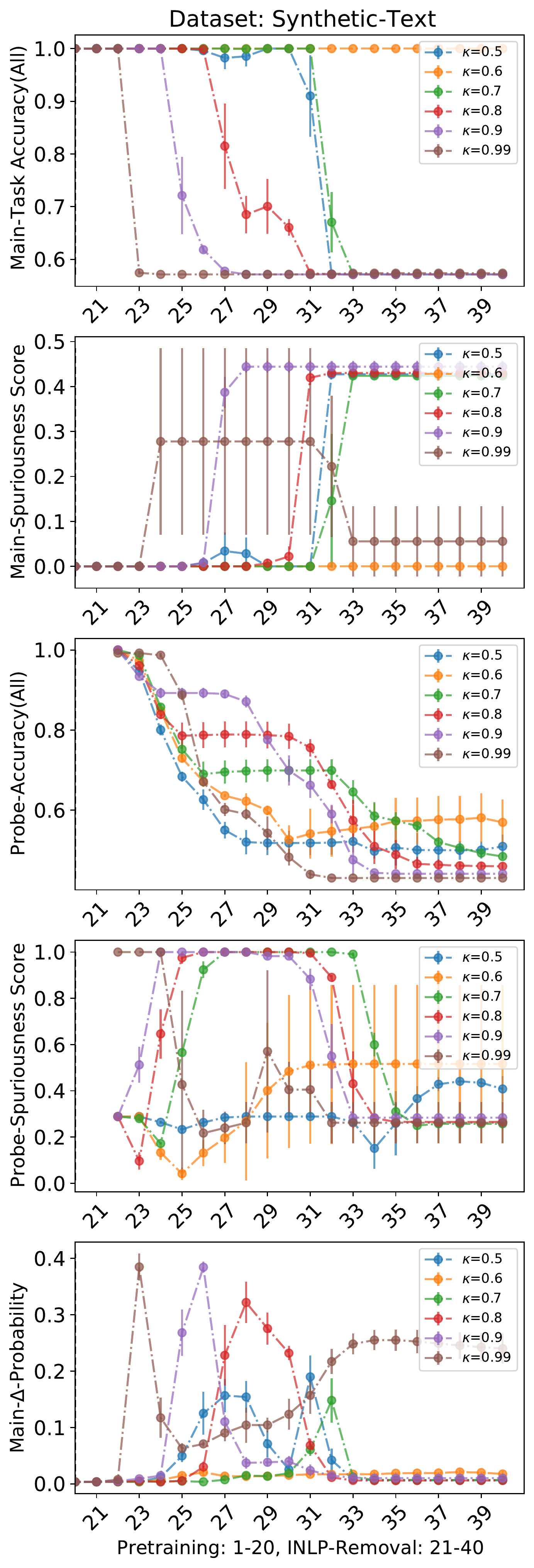}
    	\caption[]%
        {{\small  \syn + n=0.0}}    
    	\label{fig:app_inlp_syn_n0.0} 
    \end{subfigure}
\hfill 
\begin{subfigure}[h]{.32\textwidth}
    \centering
    \includegraphics[width=\linewidth,height=2.9\linewidth]{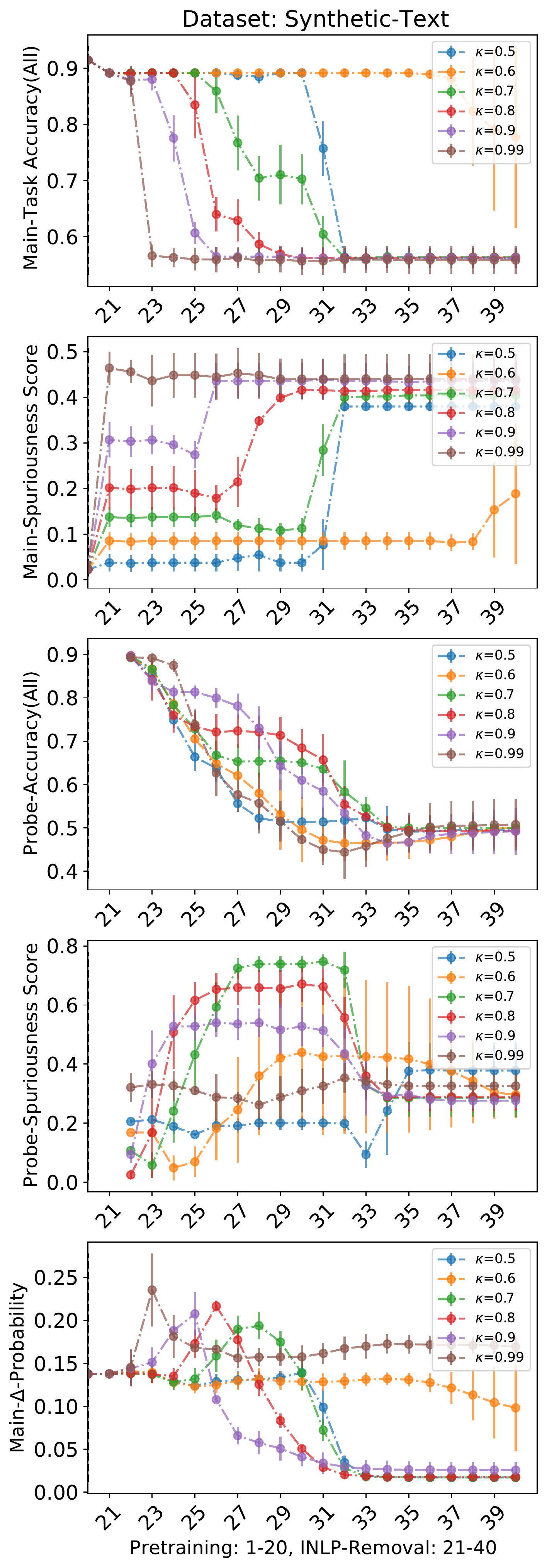}
    	\caption[]%
        {{\small \syn + n=0.1}}    
    	\label{fig:app_inlp_syn_n0.1} 
    \end{subfigure}
\hfill 
\begin{subfigure}[h]{.32\textwidth}
    \centering
    \includegraphics[width=\linewidth,height=2.9\linewidth]{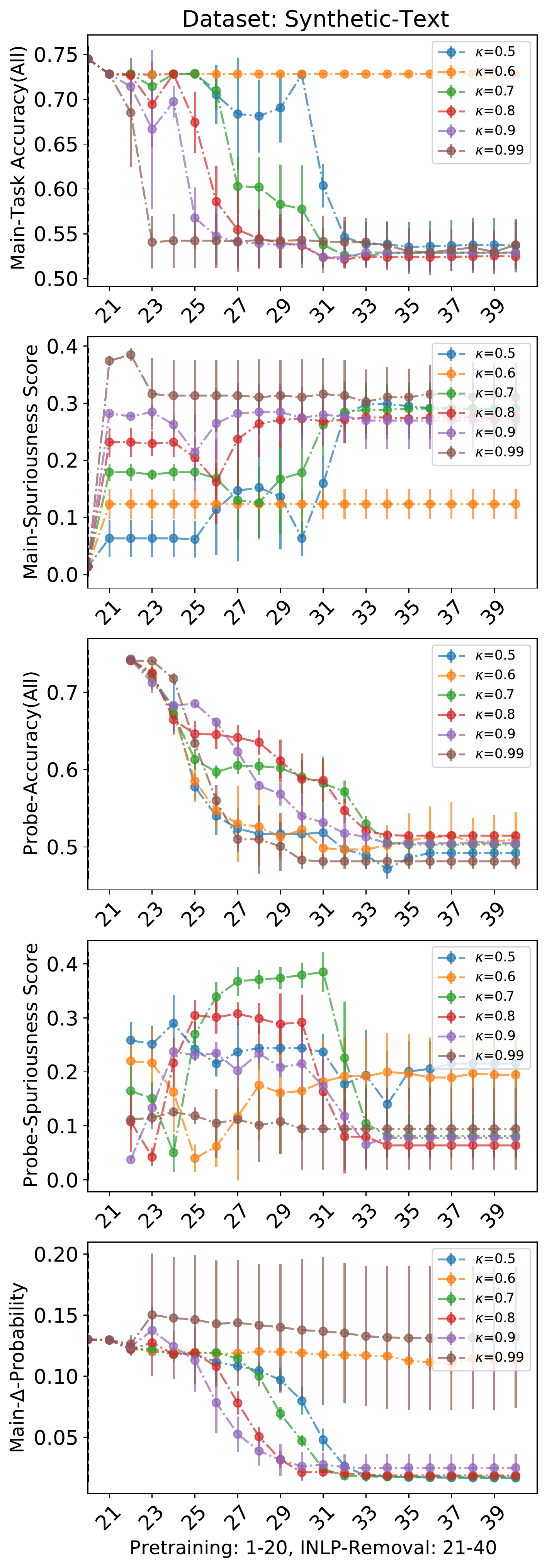}
    	\caption[]%
        {{\small \syn + n=0.3}}    
    	\label{fig:app_inlp_syn_n0.3} 
    \end{subfigure}
\caption{\textbf{Failure Mode of \INLP in \syn dataset}: Different columns of the figure are \syn dataset with different levels of noise in the main task and probing task label. \new{Here, n=0.0 means there is 0\% noise and n=0.3 means there is 30\% noise in the labels}. The x-axis from steps 22-40 is different \INLP removal steps. The y-axis shows different metrics to evaluate the main task and probing classifier. Different colored lines show the spurious correlation ($\kappa$) in the probing dataset used by \INLP for the removal of spurious-\prop. The pretrained classifier is clean i.e. doesn't use the spurious \prop-causal feature, hence \INLP shouldn't have any effect on main classifier when removing \prop-causal feature from the latent space. Contrary to our expectation, the first row shows main-task classifier accuracy drops as the \INLP progresses. Higher the correlation between the main-task and \prop label, faster the drop in the main task accuracy. The last row shows the change in prediction probability ($\Delta$-Prob) of main-task classifier when we change the input corresponding to \prop-label. This shows, how much sensitive the main task classifier is wrt. to \prop feature. We observe that the $\Delta$-Prob increases in the middle of \INLP showing that the main-classifier which was not using the \prop initially (as in iteration 21), started using the sensitive \prop because of \INLP removal. Thus stopping \INLP prematurely could lead to a more \emph{unclean} classifier than before whereas running \INLP longer removes all the correlated features and could make the classifier useless. For more discussion see \refappendix{subsec:app_extended_null_space_results}.}
\label{fig:app_inlp-toy_all}
\end{figure*}

\begin{figure*}[h]
\begin{subfigure}[h]{0.32\textwidth}
    \centering
    \includegraphics[width=\linewidth,height=2.9\linewidth]{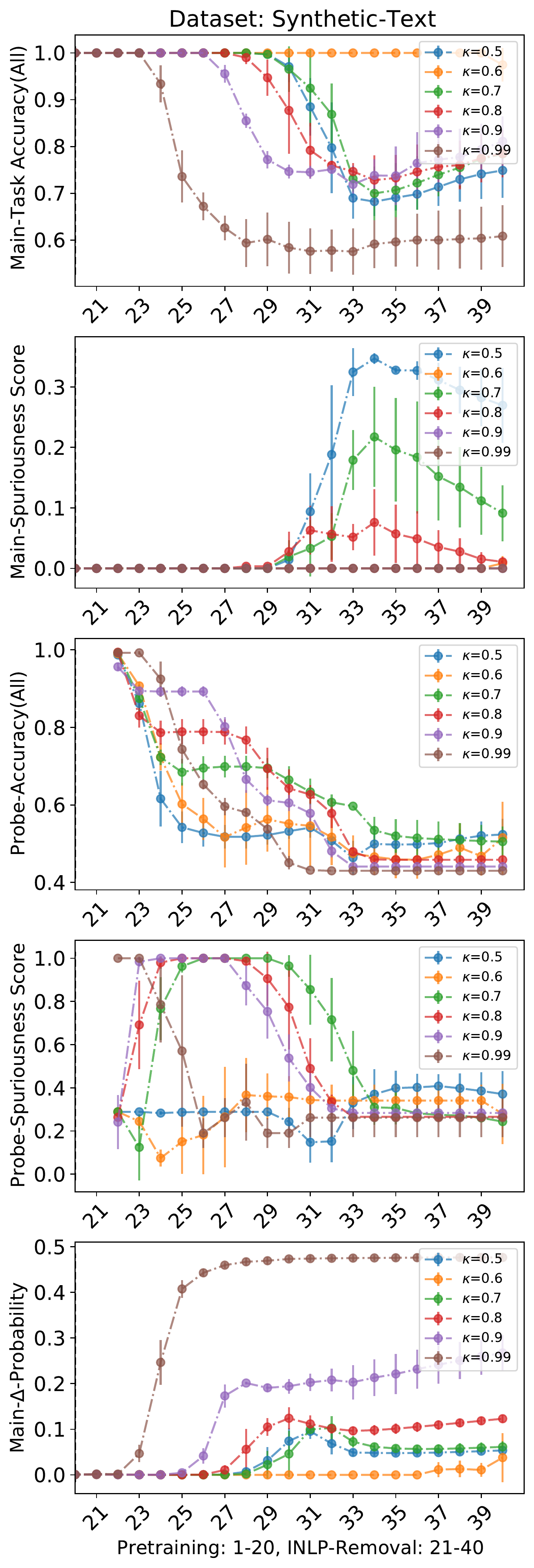}
    	\caption[]%
        {{\small  \syn + n=0.0}}    
    	\label{fig:app_inlp_hretrain_syn_n0.0} 
    \end{subfigure}
\hfill 
\begin{subfigure}[h]{.32\textwidth}
    \centering
    \includegraphics[width=\linewidth,height=2.9\linewidth]{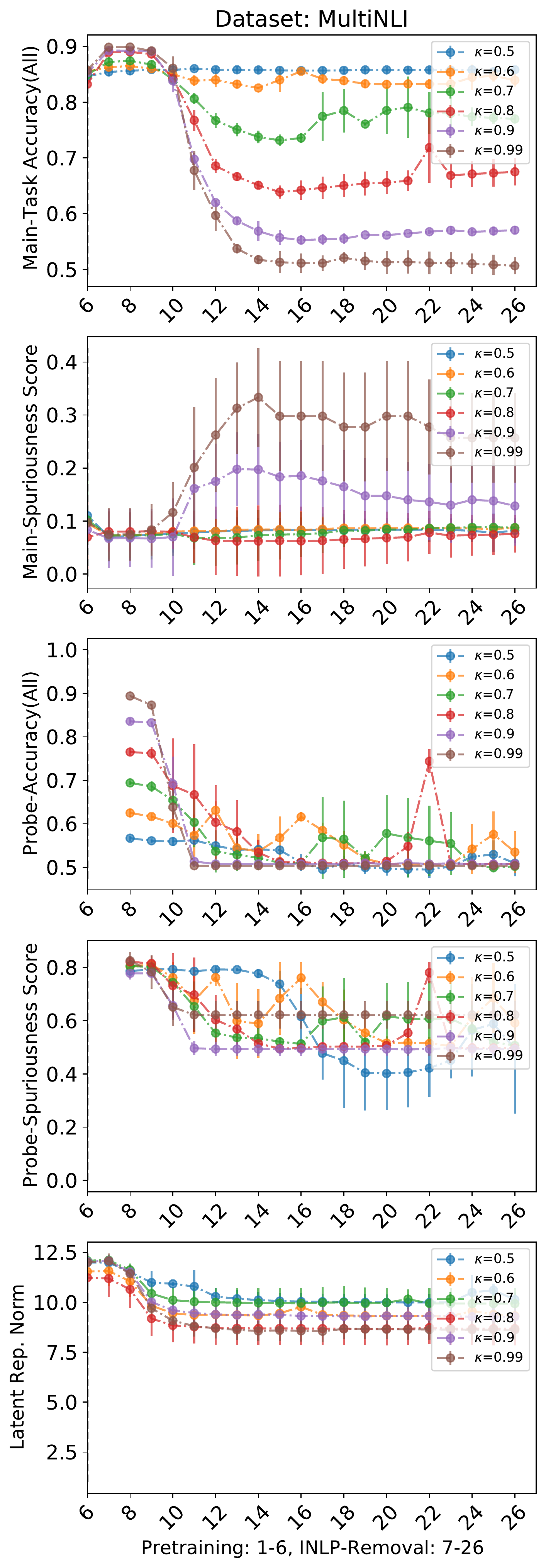}
    	\caption[]%
        {{\small \mnli + RoBERTa}}    
    	\label{fig:app_inlp_hretrain_mnli} 
    \end{subfigure}
\hfill 
\begin{subfigure}[h]{.32\textwidth}
    \centering
    \includegraphics[width=\linewidth,height=2.9\linewidth]{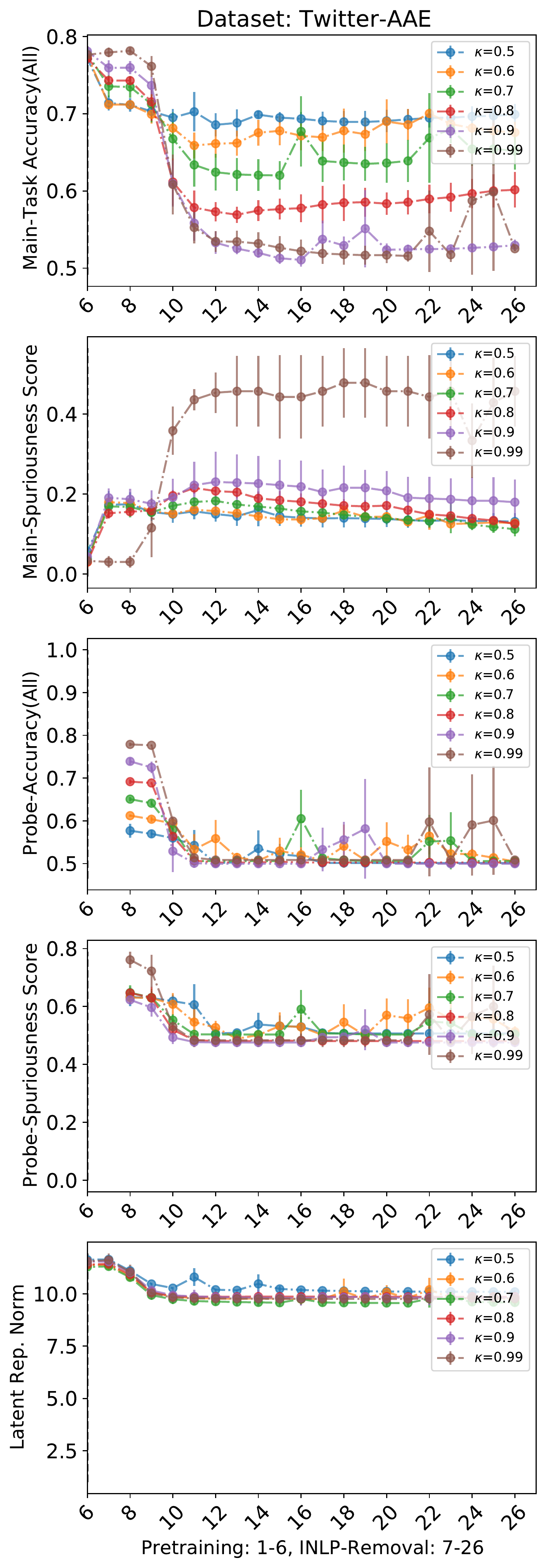}
    	\caption[]%
        {{\small \aae + RoBERTa}}    
    	\label{fig:app_inlp_hretrain_aae} 
    \end{subfigure}
\caption{\new{\textbf{Failure Mode of \INLP + Main Task Classifier head retraining}: Given a pretrained encoder and the main task classifier as input to \INLP for spurious \prop removal, in this experiment, we retrain the main task classifier after every step of null-space projection by \INLP. All the other experiment configurations for these experiments are kept the same as the case when we don't retrain the main-task classifier. The first, second, and third columns show the results for \syn, \mnli, and \aae datasets respectively. We observe a similar trend as the case when the main task classifier was not trained after each projection step (see \reffig{fig:app_expt_null_space_failure_roberta_all}, \ref{fig:app_expt_null_space_failure_bert_all} and \ref{fig:app_inlp-toy_all}). The main task classifier's accuracy drops as the null-space removal proceeds (iteration 21-40 for \syn and iteration 7-26 for \mnli and \aae datasets). Though the drop is not as severe as in the previous setting (when we didn't train the main task classifier), it is significant enough to impact the practical utility of the model (greater than 20\% drop in the accuracy when $\kappa>0.8$ for all the datasets above). Similar to previous setting, early-stopping of \INLP removal may lead to a classifier that has a higher reliance on the spurious \prop than it had before the \INLP removal. For example, for $\kappa=0.8$ in \syn dataset, the main-task classifier's performance drops for the first time at iteration 29 (a valid heuristic for early stopping), but it has high $\Delta$Prob $\approx 10\%$ as shown in the last row of the \syn dataset column of this figure. For discussion of the case when the main task classifier is not trained after every projection step, see  \refsec{subsec:app_extended_null_space_results} and \refsec{subsec:expt_null_space_failure}.}}
\label{fig:app_inlp-head_retrain_mix}
\end{figure*}

\subsection{Extended Adversarial Removal Results}
\label{subsec:app_extended_adversarial_removal_results}

\begin{figure*}[]
\centering
\includegraphics[width=\textwidth,height=0.8
\textwidth]{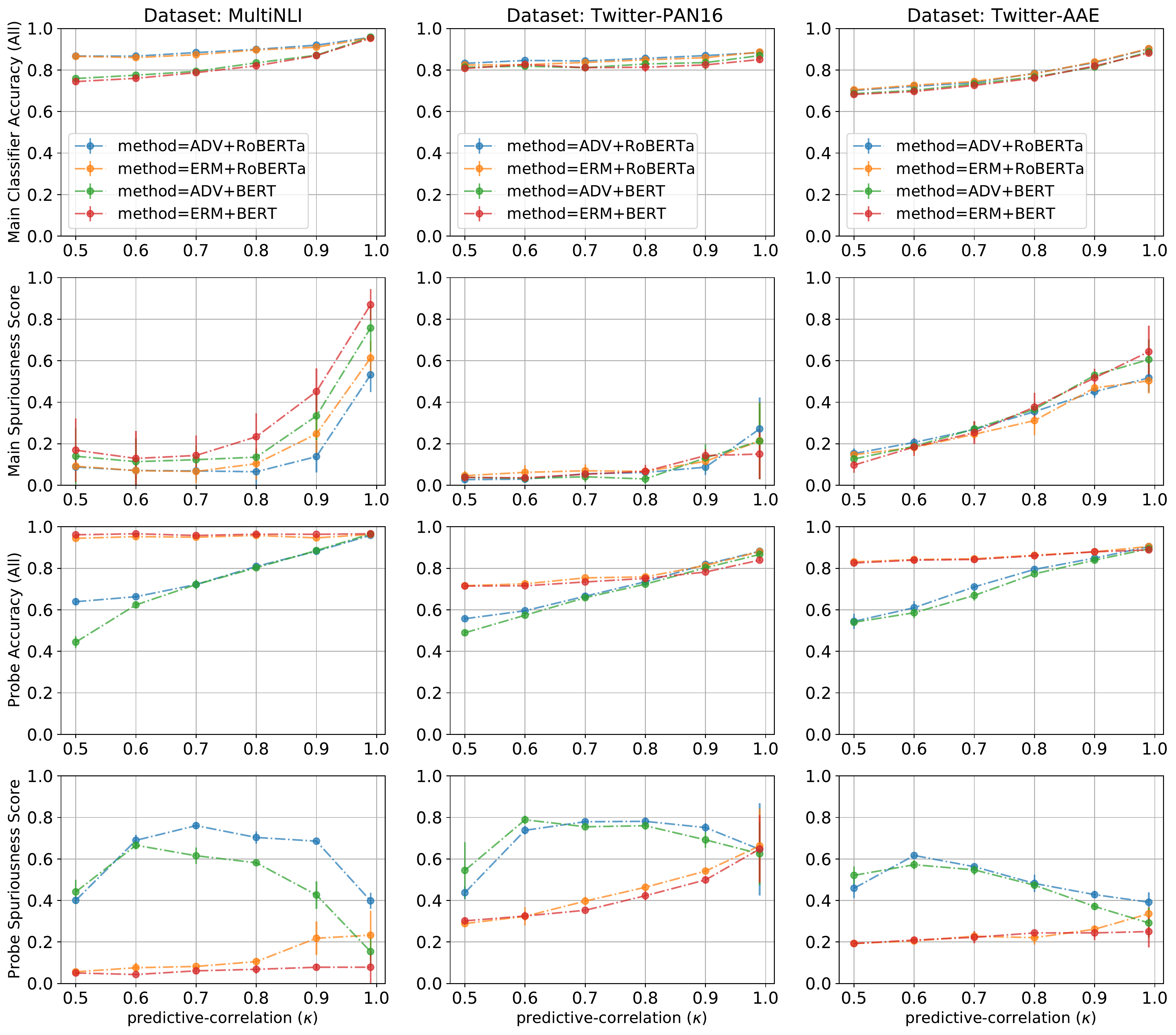}
\caption{\textbf{Failure Mode of Adversarial removal on real-dataset:} Different column shows the result on three different real datasets ---\mnli, \pan, and \aae respectively. The second row shows the accuracy of spuriousness score of the main-task classifier after \ar when the dataset contains different levels of spurious correlation between the main-task and unwanted-\prop label, denoted by $\kappa$ in the x-axis. When using RoBERTa as the encoder, the orange curve in second row shows the spuriousness score of the main-task classifier when trained using the ERM loss. The spuriousness score describes how much unwanted \prop-causal feature the main task classifier is using. The blue curve shows that the \ar method reduces the spuriousness of main-task though cannot completely remove it. When using BERT as encoder, the observation is same i.e green curve in second row shows \ar is able to reduce the spuriousness of main classifier than the red curve which is trained using ERM, but is not able to completely remove the spurious feature.  For more discussion see \refsec{subsec:app_extended_adversarial_removal_results}.}
\label{fig:app_adv_real}
\end{figure*}

\paragraph{Adversarial removal failure in real-world datasets.}
\reffig{fig:app_adv_real} shows the failure mode of adversarial removal \ar on real-world datasets. In the x-axis we vary the predictive correlation $\kappa$ between the main and the \prop-label in different datasets and measure the performance of \ar on different metrics on the y-axis. The second row shows the spuriousness score of the main-task classifier after \ar as we vary $\kappa$ on the x-axis. When using RoBERTa as the encoder, the orange curve in second row shows the spuriousness score of the main-task classifier when trained using the ERM loss. The spuriousness score describes how much unwanted \prop-causal feature the main-task classifier is using. The blue curve shows that the \ar method reduces the spuriousness of main-task though cannot completely remove it. The reason for this failure can be attributed to probing classifier. Even when \ar has successfully removed the unwanted \prop feature, the accuracy of \prop-probing classifier will be proportion to $\kappa$ due to presence of correlated main-task feature in the latent space. This can be seen in the third row of \reffig{fig:app_adv_real}. Thus we cannot be sure if the unwanted \prop-causal feature has been completely removed from the latent space or just became noisy enough to have accuracy proportional to $\kappa$ after \ar converges. In \reffig{fig:app_adv_real}, for each dataset and encoder, we manually choose the hyperparameter described $\lambda$ described in \refsec{subsec:app_adv_expt_setup_desc} which reduces the spuriousness score most for the main-task classifier while not hampering the main-task classifier accuracy. In \reffig{fig:app_adv_real_lambda_var}, we show the trend in spuriousness score is similar for all choices of hyperparameter $\lambda$ in our search. No value of $\lambda$ is able to completely reduce the spuriousness score to zero.

\begin{figure*}[]
\centering
\includegraphics[width=\textwidth,height=0.5
\textwidth]{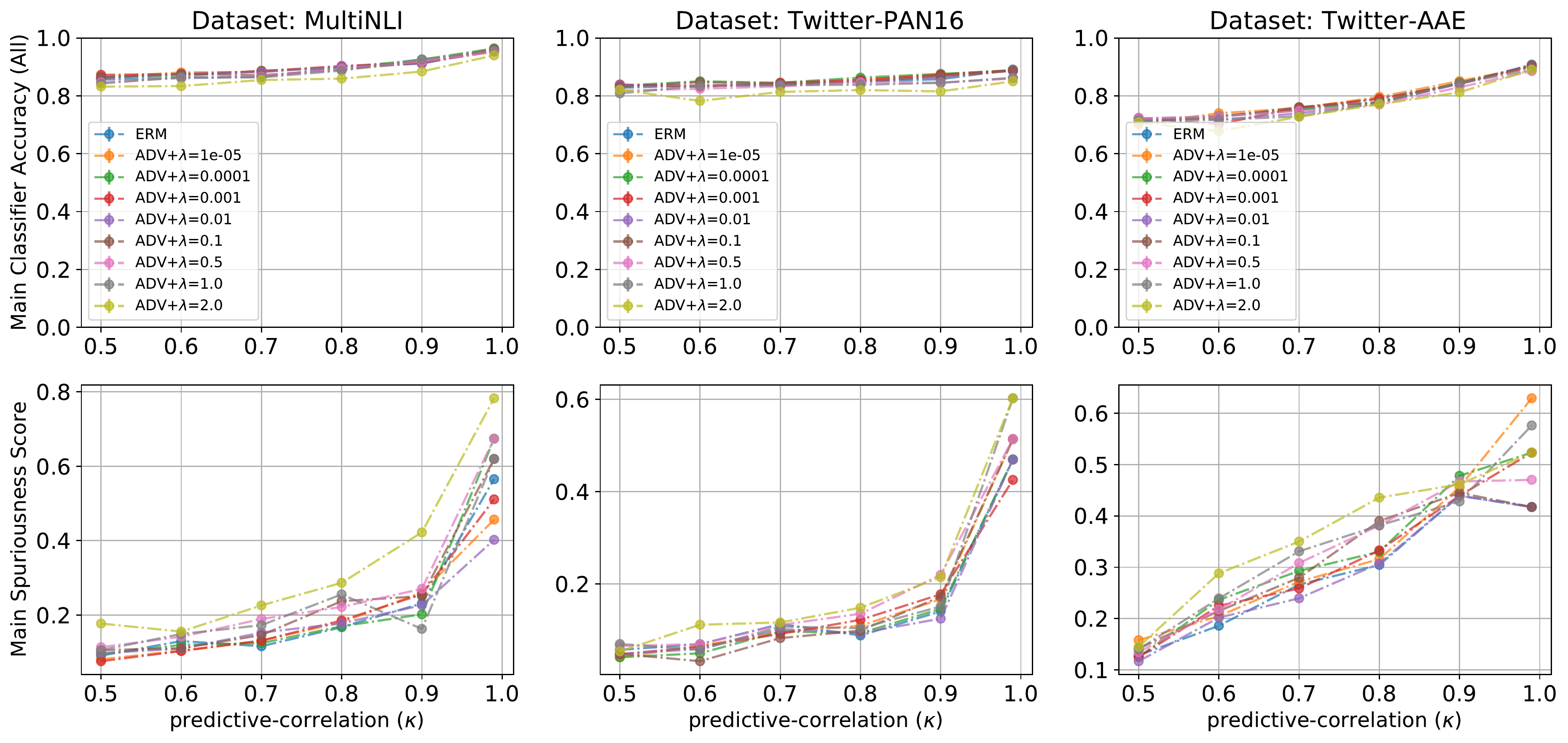}
\caption{\textbf{Choice of Adversarial Strength Parameter $\lambda$}: The second plot shows that trend in spuriousness score after \ar is similar for all the choices of hyperparameter $\lambda$ we have taken in our search. None of the settings of $\lambda$ is able to completely reduce the spuriousness score to zero. For more discussion see \refsec{subsec:app_extended_adversarial_removal_results}.}
\label{fig:app_adv_real_lambda_var}
\end{figure*}

\paragraph{Adversarial removal makes a classifier clean.} \reffig{fig:app_adv_unfair} shows that when the adversarial classifier is initialized with a clean main-task classifier that doesn't use unwanted-\prop causal features, it makes matters worse by making the main-task classifier use the unwanted-\prop feature. For the \syn dataset, since the word embeddings are non-trainable, one single hidden layer is applied after the nBOW encoder so that \ar methods could remove the unwanted-\prop feature from the new latent representation. We create a \emph{clean} \syn dataset by training a classifier (iteration 1-20) on dataset with predictive correlation $\kappa=0.5$ between the main-task and \prop label. $\kappa=0.5$ which implies there is no correlation between the labels thus  we can expect the main-task classifier to not use the \prop-causal feature. This can be seen from Main classifier spuriousness score in \reffig{fig:app_adv_syn_unfair} (2nd row) which is close to 0. We chose this method to create a clean classifier since this allows us to measure the spuriousness score for the main-task classifier. If we would have followed method described in \cite{ravichander-etal-2021-probing}, then we would have had only a single value of \prop label ($y_{p}$) in the dataset and couldn't have defined the majority and minority group required for calculation of spuriousness score (see \refdef{def:spurriousness_score}). For all our experiments on \syn dataset we use noise =0.3 and trained the main-task and probing classifier with 1 hidden layer.
Similarly for training a clean classifier for \mnli dataset (iteration 1-6) we again use a dataset with predictive correlation $\kappa=0.5$. Post training the clean classifier the \ar method is initialized with these clean classifiers for removal of \prop-causal features. Since \ar is initialized with clean classifier which doesn't use \prop-causal feature, we expect \ar to have no effect on the classifier. In contrast we observe that the spuriousness score of main-classifier for both \syn and \mnli dataset increases (2nd row in \reffig{fig:app_adv_syn_unfair} and \ref{fig:app_adv_mnli_unfair}) which shows that \ar when initialized \emph{clean}/fair classifier could make them unclean/unfair.

\begin{figure*}[]
\centering

\begin{subfigure}[h]{0.48\textwidth}
    \centering
    \includegraphics[width=\linewidth,height=1.9\linewidth]{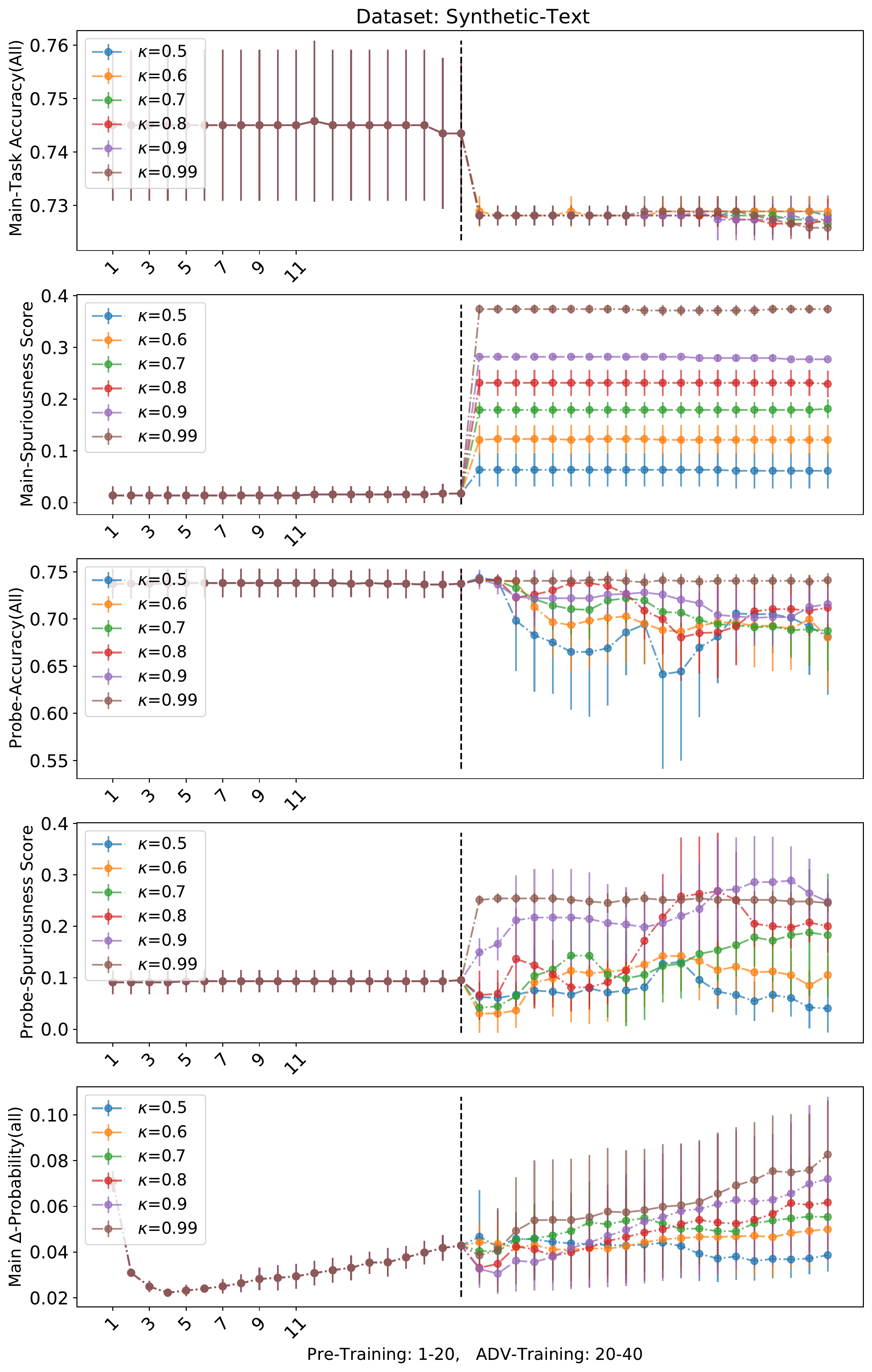}
    	\caption[]%
        {{\small  \syn + \new{n=0.3} + nBOW}}    
    	\label{fig:app_adv_syn_unfair} 
    \end{subfigure}
\hfill 
\begin{subfigure}[h]{.48\textwidth}
    \centering
    \includegraphics[width=\linewidth,height=1.9\linewidth]{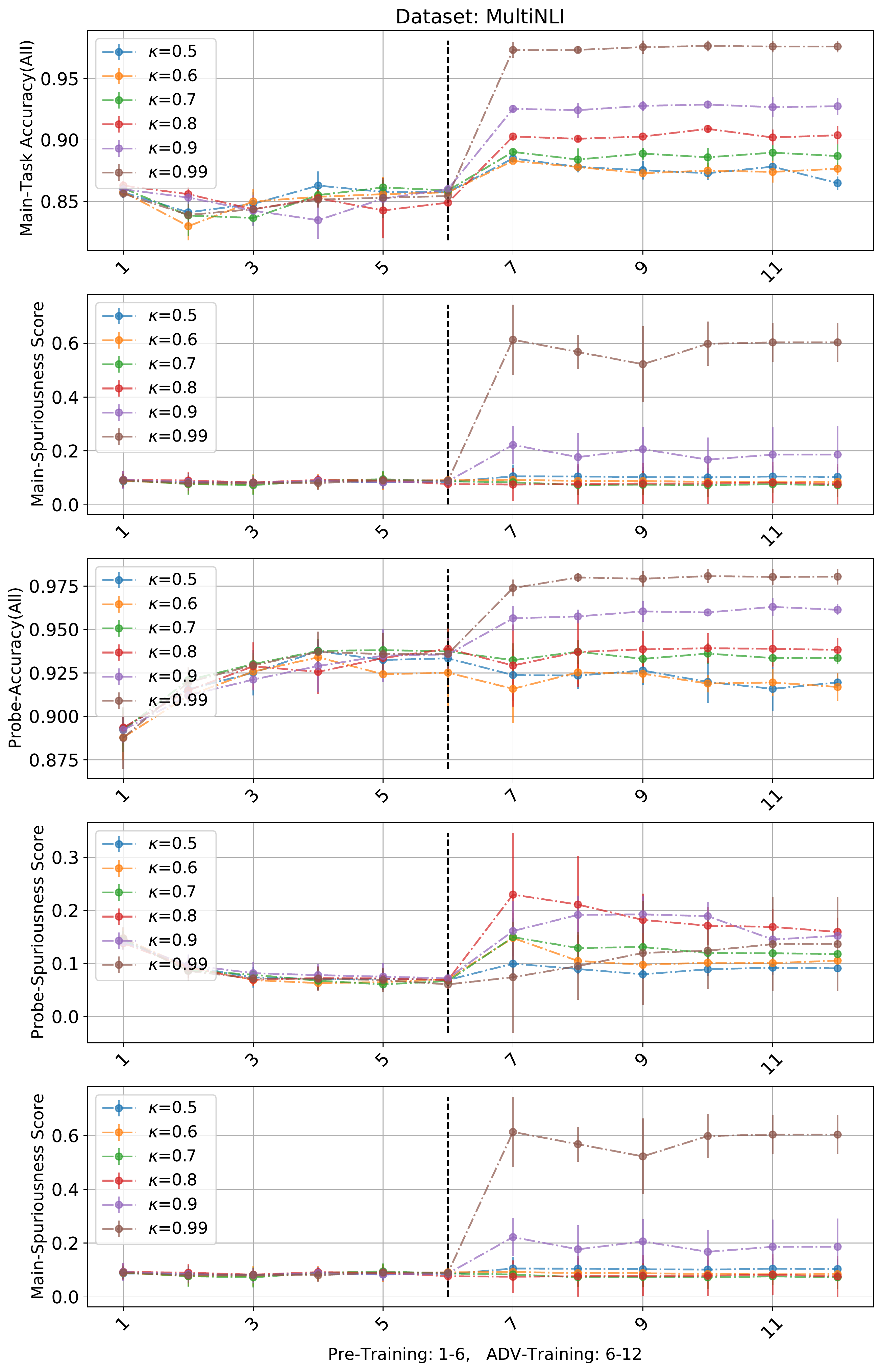}
    	\caption[]%
        {{\small \mnli + RoBERTa}}    
    	\label{fig:app_adv_mnli_unfair} 
    \end{subfigure}

\caption{\textbf{Adversarial Removal Makes a classifier unclean}: We test if the \ar method increases the spuriousness of a main-task classifier if initialized with a \emph{clean} classifier. In \ref{fig:app_adv_syn_unfair}, from iteration 1-20 in x-axis, a clean classifier is trained on \syn dataset (\new{with 30\% noise i.e n=0.3 in main-task and probing labels})  such that it doesn't uses the unwanted \prop-causal feature by training on a dataset with $\kappa=0.5$ (see \refsec{subsec:app_extended_adversarial_removal_results} for details). Then the classifier is given to \ar method for removing the unwanted \prop feature which makes the initially clean classifier unclean. This can be seen from the second row of the \ref{fig:app_adv_syn_unfair} which shows the spuriousness score of main-classifier is $0$ during 1-20 iteration but increases after the \ar start from 21-40. Also, the last row shows the $\Delta$-Prob of the main-task classifier on changing the unwanted-\prop in input which increases for datasets which have large $\kappa$ i.e correlation between the main and \prop label. A similar result can be seen for the \mnli dataset where a clean classifier is trained in iterations 1-6 (using a dataset with $\kappa=0.5$) which is made unclean by \ar. Second row again shows that spuriousness score of main-task classifier increases after \ar starts in iteration 7-12. For more discussion see \refsec{subsec:app_extended_adversarial_removal_results}.}
\label{fig:app_adv_unfair}
\end{figure*}

\subsection{\syn dataset Ablations}
\label{subsec:app_extended_synthetic_results}

\paragraph{Adversarial Removal Failure in \syn dataset:} Figure~\ref{fig:app_adv-toy_all} shows the failure of \ar on the synthetic dataset as we vary the noise in the main-task label and unwanted \prop-label. For the experiment, since the word embeddings are non-trainable, one single hidden layer is applied after the nBOW encoder so that \ar methods could remove the unwanted-\prop feature from the new latent representation. 

\begin{figure*}[]
\begin{subfigure}[h]{0.32\textwidth}
    \centering
    \includegraphics[width=\linewidth,height=2.9\linewidth]{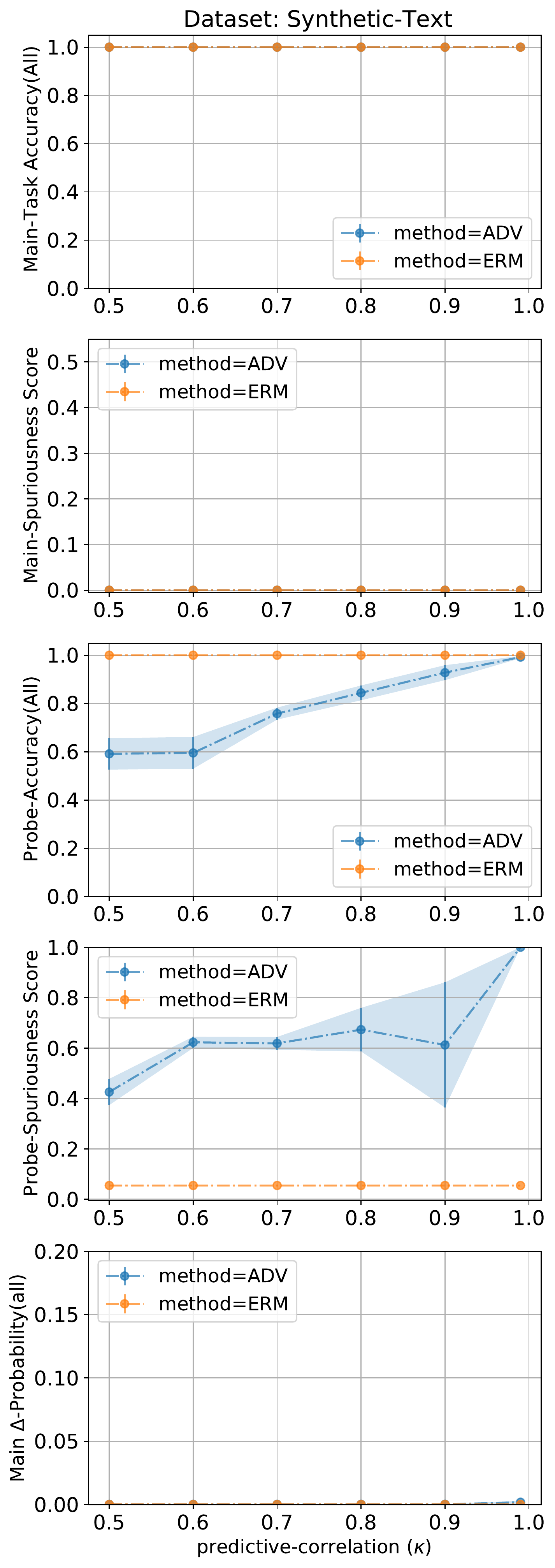}
    	\caption[]%
        {{\small  \syn + n=0.0}}    
    	\label{fig:app_adv_syn_n0.0} 
    \end{subfigure}
\hfill 
\begin{subfigure}[h]{.32\textwidth}
    \centering
    \includegraphics[width=\linewidth,height=2.9\linewidth]{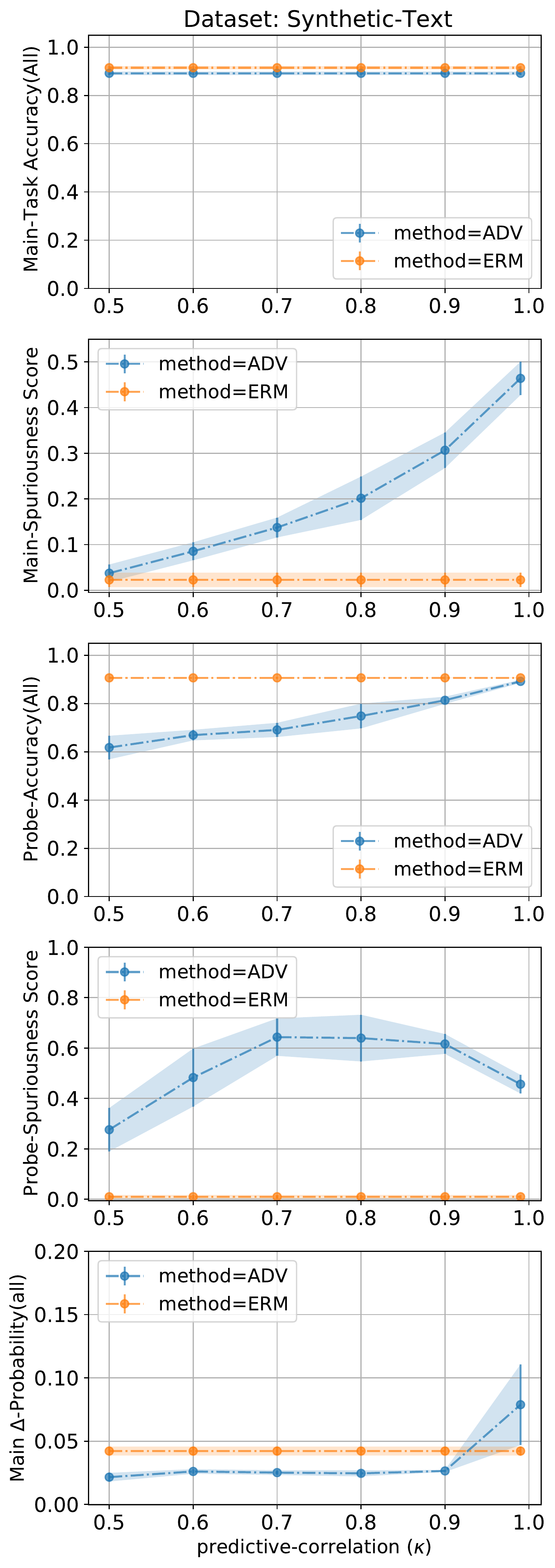}
    	\caption[]%
        {{\small \syn + n=0.1}}    
    	\label{fig:app_adv_syn_n0.1} 
    \end{subfigure}
\hfill 
\begin{subfigure}[h]{.32\textwidth}
    \centering
    \includegraphics[width=\linewidth,height=2.9\linewidth]{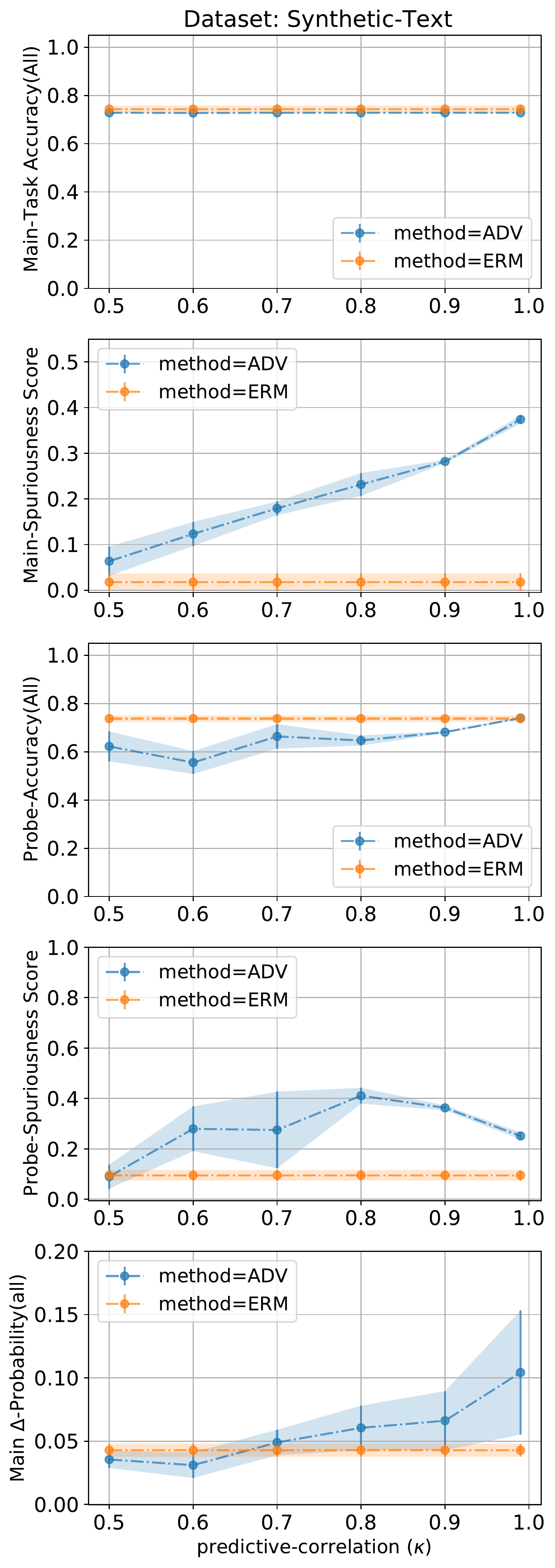}
    	\caption[]%
        {{\small \syn + n=0.3}}    
    	\label{fig:app_adv_syn_n0.3} 
    \end{subfigure}
\caption{\textbf{Failure of Adversarial Removal method on \syn dataset:} Different columns show the adversarial removal method on \syn dataset with different levels of noise in the main-task and \prop label. When there is no noise, from the second row in \reffig{fig:app_adv_syn_n0.0}, we see that both the classifier trained by ERM and \ar has zero-spuriousness score. But as we increase the noise to 10\% in \reffig{fig:app_adv_syn_n0.1}, we observe that the spuriousness score increases when \ar is applied in contrast to classifier trained by ERM which stays at 0. Also, higher the predictive correlation $\kappa$, higher the increase in spuriousness. This observation augments the observation in \reffig{fig:app_adv_unfair} which shows that using \ar makes a clean classifier unclean. Similarly in \reffig{fig:app_adv_syn_n0.3} when we increase the noise to 30\% we observe in second row, \ar is increased the spuriousness, unlike ERM which is at 0. For discussion see \refappendix{subsec:app_extended_synthetic_results}}
\label{fig:app_adv-toy_all}
\end{figure*}

\paragraph{Dropout Regularization Helps \ar method:}
Continuing on observation from  \reffig{fig:app_adv_syn_n0.3-drate0.0}, \ref{fig:app_adv_syn_n0.3-drate0.5} and \ref{fig:app_adv_syn_n0.3-drate0.9} shows the $\Delta$-Prob of the main-task classifier after we apply the \ar on \syn dataset (with noise=0.3) and how they changes as we increase the dropout regularization. As we increase the dropout (drate in the figure), the $\Delta$-Prob of the main classifier decreases showing that the regularization methods could help improve the removal methods.

\begin{figure*}[]
\begin{subfigure}[h]{0.32\textwidth}
    \centering
    \includegraphics[width=\linewidth,height=0.9\linewidth]{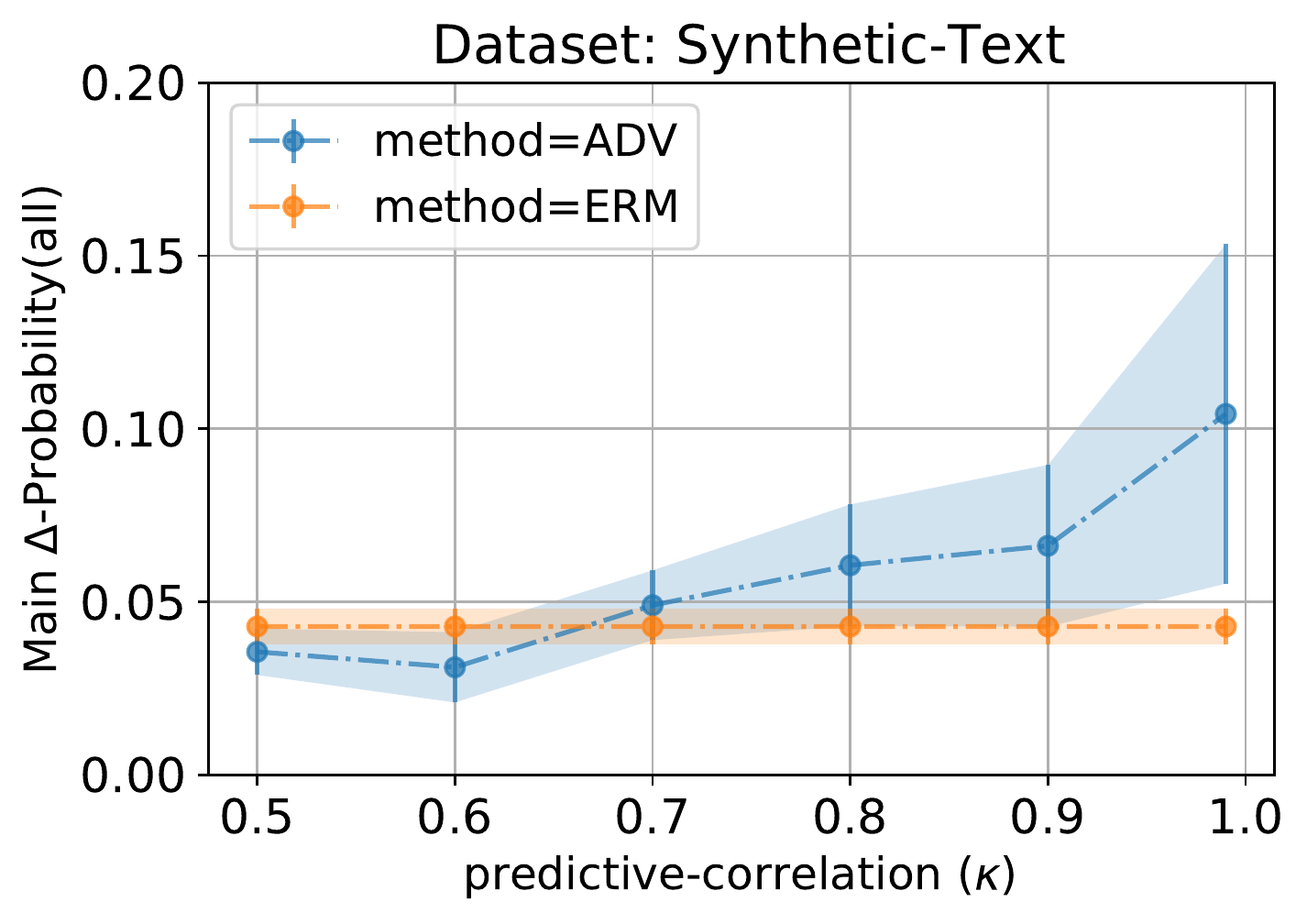}
    	\caption[]%
        {{\small  \syn + drate=0.0}}    
    	\label{fig:app_adv_syn_n0.3-drate0.0} 
    \end{subfigure}
\hfill 
\begin{subfigure}[h]{.32\textwidth}
    \centering
    \includegraphics[width=\linewidth,height=0.9\linewidth]{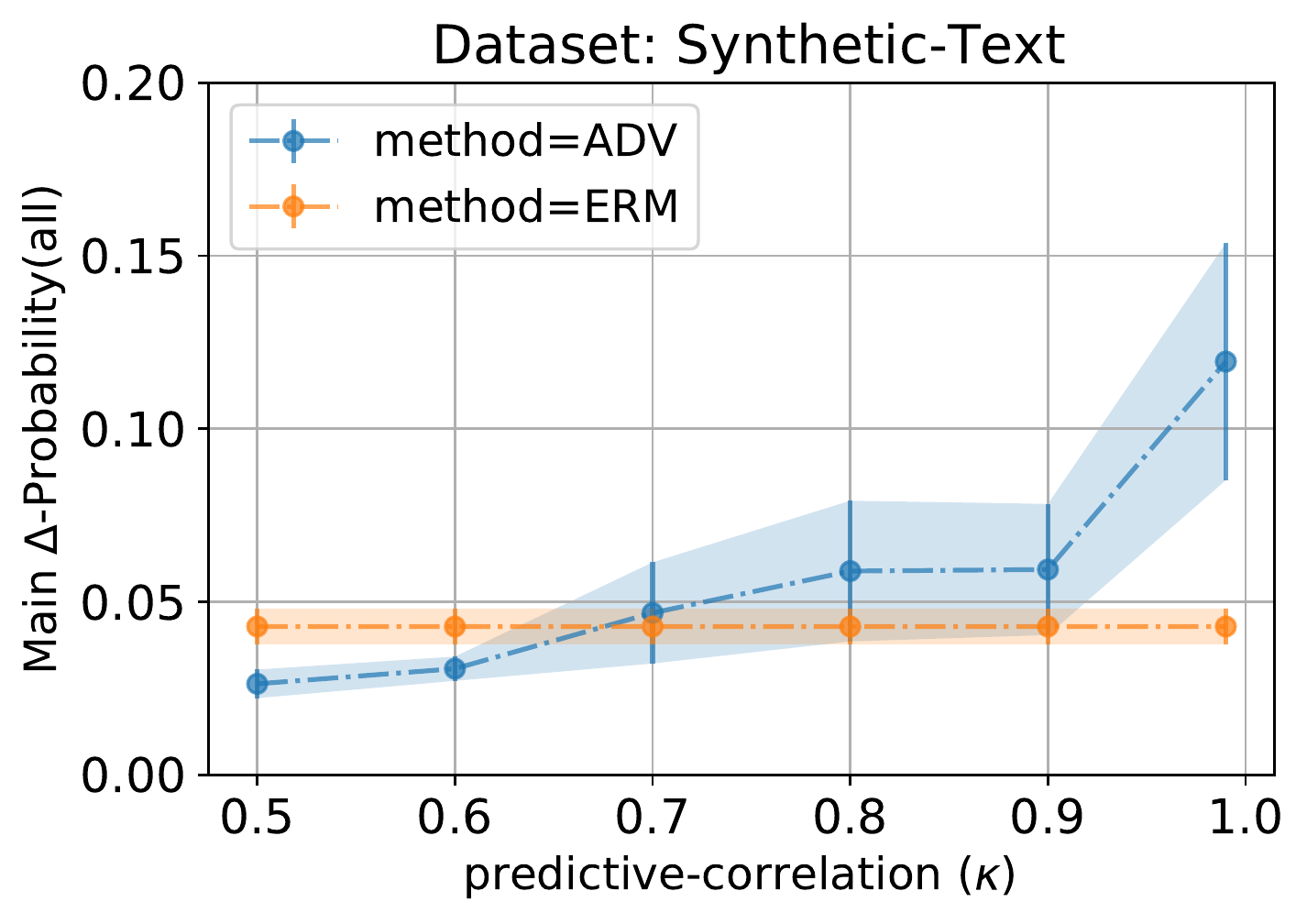}
    	\caption[]%
        {{\small \syn + drate=0.5}}    
    	\label{fig:app_adv_syn_n0.3-drate0.5} 
    \end{subfigure}
\hfill 
\begin{subfigure}[h]{.32\textwidth}
    \centering
    \includegraphics[width=\linewidth,height=0.9\linewidth]{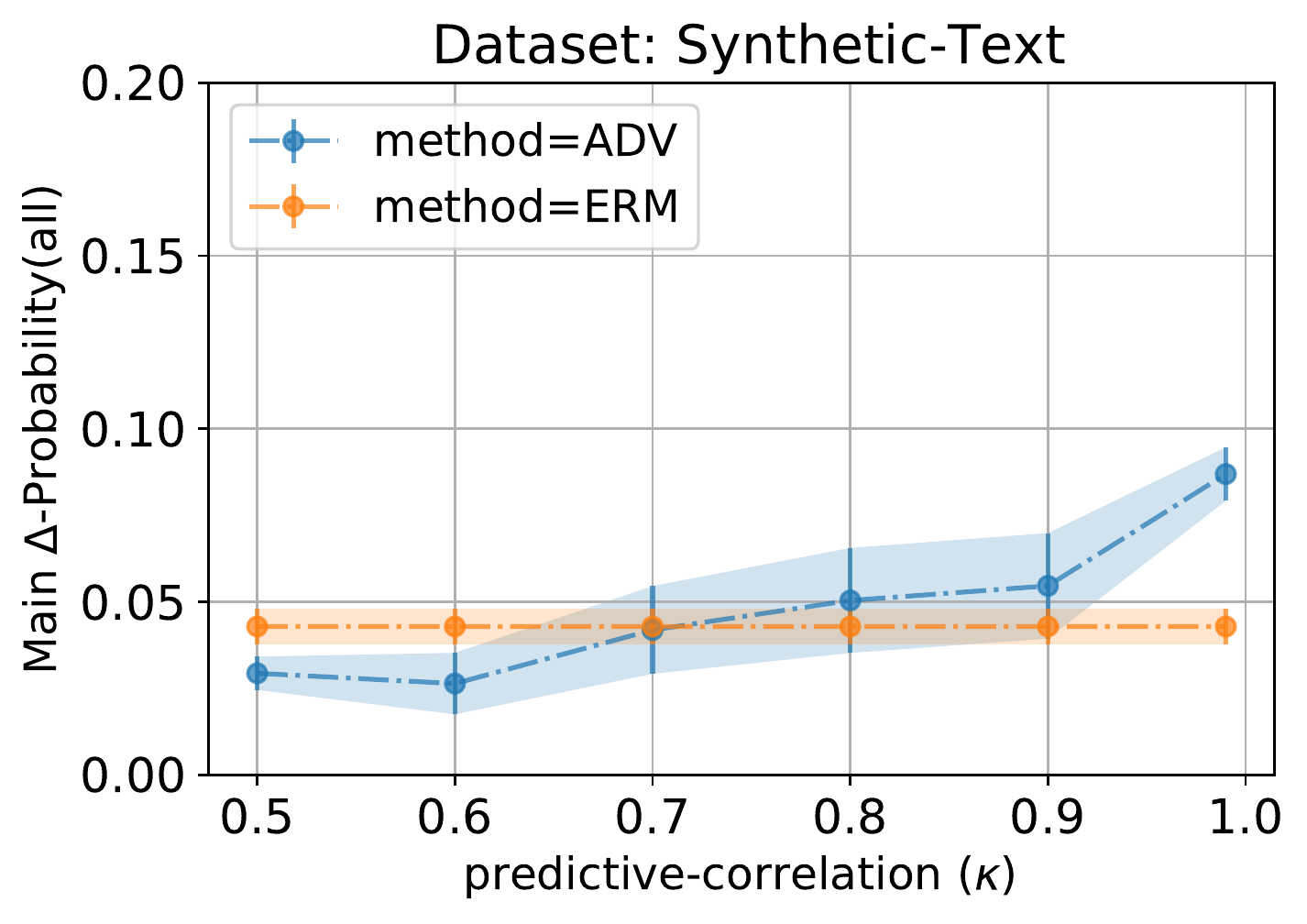}
    	\caption[]%
        {{\small \syn + drate=0.9}}    
    	\label{fig:app_adv_syn_n0.3-drate0.9} 
    \end{subfigure}
\caption{\textbf{Dropout Regularization helps in Adversarial Removal:} $\Delta$-Prob of the main-task classifier after we apply the \ar on \syn dataset (with noise=0.3) decreases as we increase the dropout regularization from 0.0 to 0.9. For discussion see \refappendix{subsec:app_extended_synthetic_results}.}
\label{fig:app_adv-toy_dropout}
\end{figure*}

\section{Comparison between Spuriousness Score and \texorpdfstring{$\Delta$} \texorpdfstring{}Prob }
\label{sec:app_comp_sp_score}


In this section we compare the Spuriousness Score proposed in \refsec{subsec:spurriousness_score} for measuring a classifier's use of a binary spurious feature with the ideal, ground-truth metric, $\Delta$Probability ($\Delta$Prob for short) defined in \refsec{subsec:app_metric_desc}. $\Delta$Prob measures the reliance on a spurious feature by changing the spurious feature in the input space (when possible) and measuring the change in the prediction probability of the given classifier. Hence $\Delta$Prob is a direct and intuitive measure of spuriousness in a given classifier. But changing the spurious feature is difficult in the input space for real-world data, thus we only evaluate this metric on the \syn dataset.  

To do so, we use the result from \reffig{fig:app_adv-toy_all} that showed failure of the adversarial removal method on the \syn dataset under various noise settings (refer \refappendix{subsec:app_extended_synthetic_results} for details). For the setting with noise $n=0.0$, both Spuriousness Score and $\Delta$Prob curve for Adversarial Removal (marked as ADV in \reffig{fig:app_adv-toy_all}) are identical (close to $0$ for all values of $\kappa$ with mean $=0.0$ and standard-deviation $=0.0$). For the other settings with non-zero noise, we compute the Pearson correlation between the Spuriousness score and $\Delta$Prob for the ADV curve. As \reftbl{tbl:app_sp_pdelta} shows,  we observe high Pearson correlation of $0.83$ and $0.95$ for the noise setting, $n=0.1$ and $n=0.3$ respectively. The third column in the table shows p-value ($<0.05$) assuming a null hypothesis that the two metrics are uncorrelated.   These results suggest that Spuriousness-Score can be a good approximation for the ideal $\Delta$Prob metric.

\begin{table}[]
\caption{
\textbf{Correlation between Spuriousness Score and $\Delta$Prob on \syn dataset:} Pearson-correlation between Spuriousness score and $\Delta$Prob; the two metrics for quantifying the dependence of a classifier on a spurious feature. We measure the correlation for adversarial-removal experiment over two different noise setting on \syn dataset. For more details,  see \refappendix{subsec:app_extended_synthetic_results}. The first column shows different experimental settings and the second column shows the Pearson correlation between the two metrics. The third column shows the p-value under the null hypothesis that the two metrics are uncorrelated. Both correlations are statistically significant since p-value for both the case is < 0.05.
}

\centering
\begin{tabular}{ c c c } 
 \hline
  & Pearson Correlation & p-value \\
 \hline
 \syn + n=0.1 & 0.83   & 0.0403  \\ 
 \syn + n=0.3 & 0.95 & 0.0033 \\ 
 \hline
\end{tabular}
\label{tbl:app_sp_pdelta}
\end{table}

\end{document}